\title[Second Order Methods for Bandit Optimization and Control]{Second Order Methods for  Bandit Optimization and Control}
\def\M{{\mathcal M}}
\newcommand{\F}{\mathbb{F}}
\newcommand{\uv}{\ensuremath{\mathbf u}}
\newcommand{\defeq}{\stackrel{\text{def}}{=}}
\def\Y{{\mathcal Y}}
\def\M{{\mathcal M}}
\newcommand{\real}{\mathbb{R}}
\newcommand{\RBB}{\mathbb{R}}
\newcommand{\EBB}{\mathbb{E}}
\newcommand{\FM}{\mathcal{F}}
\newcommand{\e}{\mathbf{e}}
\newcommand{\ynat}{\mathbf{y}^{K}}
\newcommand{\xnat}{\mathbf{x}^{K}}
\newcommand{\embed}{\mathfrak{e}}
\newcommand\ball{\mathbb{B}}
\newcommand\sphere{\mathbb{S}}
\newcommand{\smoothfBS}[1]{\Tilde{f}_{#1}^{\ball,\sphere}}
\newcommand{\smoothfSS}[1]{\Tilde{f}_{#1}^{\sphere,\sphere}}
\newcommand{\smoothfBB}[1]{\Tilde{f}_{#1}^{\ball,\ball}}
\newcommand{\A}{\mathcal{A}}
\newcommand{\y}{\ensuremath{\mathbf y}}
\newcommand{\K}{\ensuremath{\mathcal K}}
\def\C{{\mathcal C}}
\def\x{\mathbf{x}}
\def\y{\mathbf{y}}
\def\w{\mathbf{w}}
\def\regret{\mbox{{Regret}}}
\def\mregret{\mbox{{Memory-Regret}}}
\def\cregret{\mbox{{Control-Regret}}}
\newcommand{\ignore}[1]{}
\newcommand{\eh}[1]{\noindent{\textcolor{blue}{\{{\bf EH:} \em #1\}}}}
\newtheorem{assumption}{Assumption}
\newtheorem{observation}{Observation}
\DeclareMathAlphabet{\mathbfsf}{\encodingdefault}{\sfdefault}{bx}{n}
\DeclareMathOperator*{\argmin}{arg\,min}
\newcommand{\norm}[1]{\|#1\|}
\newcommand{\E}{\mathbb{E}}
\newcommand{\trace}{\mathrm{tr}}
\newcommand{\poly}{\mathrm{poly}}
\newcommand{\reals}{\mathbb{R}}
\newcommand{\eps}{\varepsilon}
\renewcommand{\leq}{~\le~}
\renewcommand{\geq}{~\ge~}
\let\oldtfrac\tfrac
\renewcommand{\tfrac}[2]{\smash{\oldtfrac{#1}{#2}}}
\let\nablaold\nabla
\renewcommand{\nabla}{\nablaold\mkern-2.5mu}
\newcommand{\inp}[2]{\left\langle #1,#2\right\rangle}
\thanks{Equal contribution, alphabetical order.}\Email{arunss@google.com}\\
\begin{document}

\maketitle

\begin{abstract}
Bandit convex optimization (BCO) is a general framework for online decision making under uncertainty.  While tight regret bounds for general convex losses have been established,  existing algorithms achieving these bounds have prohibitive computational costs for high dimensional data.

In this paper, we propose a simple and practical BCO algorithm inspired by the online Newton step algorithm. We show that our algorithm achieves optimal (in terms of horizon) regret bounds for a large class of convex functions that satisfy a condition we call $\kappa$-convexity. This class contains a wide range of practically relevant loss functions including linear losses, quadratic losses, and generalized linear models. In addition to optimal regret,  this method is the most efficient known algorithm for several well-studied applications including bandit logistic regression.

Furthermore, we investigate the adaptation of our second-order bandit algorithm to online convex optimization with memory. We show that for loss functions with a certain affine structure, the extended algorithm attains optimal regret. This leads to an algorithm with optimal regret for bandit LQ problem under a fully adversarial noise model, thereby resolving an open question posed in \citep{gradu2020non} and \citep{sun2023optimal}.

Finally, we show that the more general problem of BCO with (non-affine) memory is harder. We derive a  $\tilde{\Omega}(T^{2/3})$ regret lower bound, even under the assumption of smooth and quadratic losses.
\end{abstract}

\begin{keywords}%
Bandit Convex Optimization, Nonstochastic Control, Second Order Methods
\end{keywords}

\section{Introduction}
Bandit convex optimization (BCO) is a prominent framework for online decision-making. It can be described as an interactive game between a learner and an adversary. At time $t\in\mathbb{N}$, the learner must choose an action $x_t$ from a convex constraint set $\K\subset\mathbb{R}^d$. Once $x_t$ is chosen and played by the learner, the adversary reveals a convex loss function $f_t:\mathbb{R}^d\rightarrow\mathbb{R}$, to which the learner suffers loss $f_t(x_t)$. Unlike full-information settings where the learner observes the loss function $f_t$ at each round, bandit feedback provides only scalar feedback—the loss associated with the chosen action, \emph{i.e.,} the scalar $f_t(x_t)$. The learner's goal is to minimize \textit{regret} over a time horizon $T$, which is defined to be the difference between the total loss incurred by the learner and the best fixed action in $\K$ had the loss sequences were known ahead of the time:
\begin{align*}
\regret_T=\sum_{t=1}^T f_t(x_t)-\min_{x\in\K}\sum_{t=1}^T f_t(x).
\end{align*}
There is a long line of work in the online learning community that aims to attain optimal regret guarantees in bandit convex optimization, see e.g.  \cite{lattimore2020bandit}. The optimal regret for this setting in terms of the number of iterations, on the order of $O(\sqrt{T})$, was obtained in \cite{bubeck2017kernel}. However, the regret of this  method has high polynomial dependence on the dimension, and similarly the running time is polynomial in both dimension and number of iterations, rendering it impractical in many applications. Motivated by the need for more efficient methods, works by \cite{abernethy2009competing,hazan2014bandit,suggala2021efficient} obtained more practical algorithms, but for restrictive classes of loss functions. Namely, these latter works apply to linear, strongly-convex and smooth, and quadratic losses, respectively. 

One of the interesting remaining open problems in the area is to design an online algorithm that (1) works for a rich class of functions, (2) is computationally efficient, and (3) obtains $O(\sqrt{T})$ regret without heavy dependence on the dimension. We advance this research direction with an efficient second-order method with contributions summarized below.

\subsection{Our contributions to bandit convex optimization (BCO)}
\label{sec:contributions-bco}
Our first contribution is a new algorithm called Bandit Newton Step (Algorithm~\ref{alg:simple-bqo}), which is a natural adaptation of the Online Newton Step algorithm \citep{hazan2007logarithmic} to the bandit feedback setting.  Our algorithm is an improper learning technique, \emph{i.e.,} it plays actions that lie outside of the constraint set $\K$, but competes with the best point inside of $\K$. Such improper learning algorithms are applicable to the applications we consider, such as online regression and online control.  The guarantees of our algorithm apply to a broad class of convex functions that satisfy a curvature assumption called $\kappa$-convexity, which is formally defined as the following:
\begin{wrapfigure}{r}{0.5\textwidth}
\includegraphics[scale=0.3]{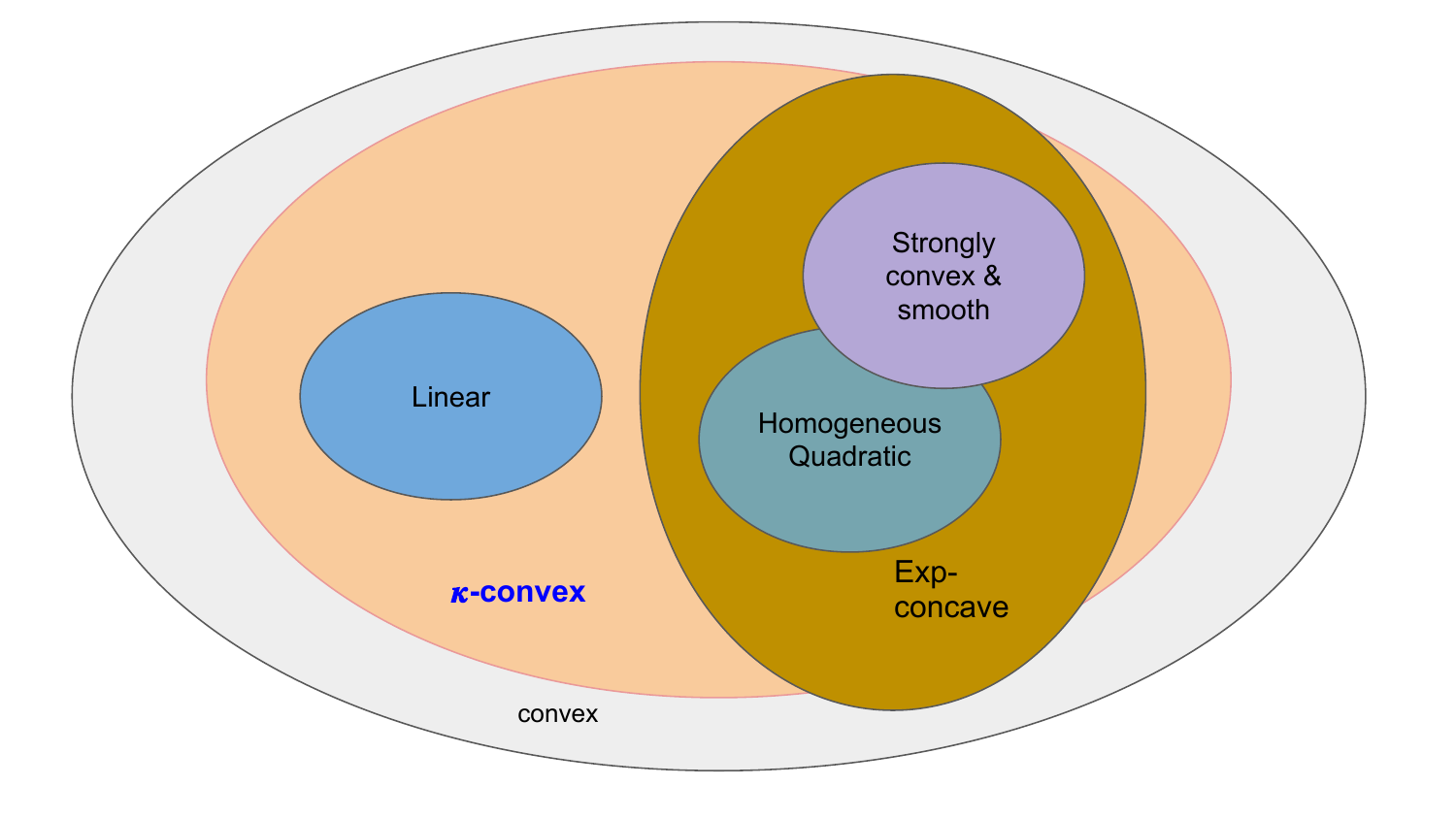} 
\end{wrapfigure}
\begin{definition} [$\kappa$-convexity]
\label{def:kappa-c}
A function $f$ is called $\kappa$-convex over domain $\K \subseteq \reals^d$  iff the following holds: $f$ is convex and twice differentiable almost everywhere, and moreover $\exists c, C>0$ and a PSD matrix $0 \preceq H \preceq I$ such that the Hessian of $f$ at any $x\in\K$ satisfies
\begin{align*}
cH\preceq \nabla^2 f(x)\preceq CH \ , \ \frac{C}{c} \leq \kappa. 
\end{align*}
\end{definition}
As shown in the diagram above, the class of $\kappa$-convex functions generalizes loss functions encountered in prior works~\citep{abernethy2009competing,hazan2014bandit,suggala2021efficient} and finds broad applicability in online learning. Notably, this class relaxes the requirements of strong convexity and smoothness assumptions, demanding them only along directions where the Hessian $H$ is nonzero. Consequently, it encompasses quadratic, (generalized) linear loss functions. A more detailed discussion on $\kappa$-convexity is included in \cref{sec:kappa-convex}. 

\begin{mdframed}
\textbf{Contribution 1.} Algorithm~\ref{alg:simple-bqo} guarantees for the class of \textbf{$\kappa$-convex} loss functions:  (1) $O(d^{2.5}\sqrt{T})$ regret upper bound, and (2) per-iteration computational complexity of $O(d^2)$.
\end{mdframed}

\begin{table*}[!htb]
	\centering
	{
        \resizebox{0.9\linewidth}{!}{
		\begin{tabular}{c|| c| c | c| c}
			\hline
			Paper & Losses & Advers.   & Regret & \begin{tabular}{c} Running time \end{tabular} 
			\\ [0.5ex]
			\hline\hline
            \cite{abernethy2009competing}  & linear  &   $\checkmark$ &  $\Tilde{O}(d\sqrt{T})$ & 
			$O(d^2)$  \\\hline
            \cite{hazan2014bandit}  & \begin{tabular}{c}strongly convex,\\ smooth\end{tabular}  &   $\checkmark$ &  $\Tilde{O}(d\sqrt{T})$ & 
			$O(d)$ \\\hline
			\cite{suggala2021efficient}  & convex quadratic  &  $\checkmark$  & $\Tilde{O}(d^{16}\sqrt{T})$ & $O(d^4)$
			\\\hline
            \cite{bubeck2021kernel}  & bounded convex  &   $\checkmark$ &  $\Tilde{O}(d^{10.5}\sqrt{T})$ & $\poly(d,T)$
			\\\hline
            \cite{lattimore2020improved}  & convex  & $\checkmark$ &  $\Tilde{O}(d^{2.5}\sqrt{T})$ & $\exp(d, T)$
			\\\hline
            \cite{lattimore2021improved}  & convex  & $\times$ &  $\Tilde{O}(d^{4.5}\sqrt{T})$ & $\poly(d)^\dagger$
			\\\hline
            \cite{lattimore2023second}  & Lipschitz convex  & $\times$ &  $\Tilde{O}(d^{1.5}\sqrt{T})$ & $O(d^3)$
			\\\hline
			\textbf{Theorem~\ref{thm:expected_regret_bound}} &  $\kappa$-convex & $\checkmark$  & $\tilde{O}(d^{2.5}\sqrt{T})$& $O(d^{2})$
			\\\hline
		\end{tabular}
	}}
 \caption{Comparison with relevant prior works that achieve $\tilde{O}(\sqrt{T})$ regret for bandit convex optimization. The last column presents the per-iteration runtime excluding projections. $\dagger$: this algorithm has $d^4$ operations that happen infrequently, thus running time over $T$ iterations is at least $\max\{d^4, T d\}$.} 
\label{tab:comparison_prior_works}
\end{table*}

The consequences of this result extend to some well studied problem settings with $\kappa$-convex losses, encompassing linear and logistic regression problems (Observation~\ref{obs:examples}). For these settings, our results imply optimal in $T$ regret, and computationally efficient bandit algorithms. A detailed discussion of these contributions is provided in Appendix~\ref{sec:applications}.


\begin{mdframed}
\textbf{Contribution 1.a.} Applied to bandit logistic regression problems, Algorithm~\ref{alg:simple-bqo}  guarantees regret $O(d^{2.5}e^{2D} \sqrt{T})$ and running time of $O(d^2)$, where $D$ is the diameter of the domain of the linear predictor.  

\paragraph{Contribution 1.b.} Applied to bandit linear regression problems, Algorithm~\ref{alg:simple-bqo}  guarantees regret $O(d^{2.5}\sqrt{T})$ and  running time of $O(d^2)$.
\end{mdframed}

\subsection{Our contributions to online control}

\begin{table*}[hbt!]
	\centering
	{
		\begin{tabular}{c|| c | c | c}
			\hline
			Paper & Noise & Observability  & Regret  
			\\ [0.5ex]
			\hline\hline
            \cite{gradu2020non}  &   adversarial & full & $\tilde{O}(T^{3/4})$
			\\\hline
			\cite{cassel2020bandit}  &  stochastic & full  & $\tilde{O}(\sqrt{T})$
			\\\hline
            \cite{cassel2020bandit}  &  adversarial & full  & $\tilde{O}(T^{2/3})$
			\\\hline
            \cite{sun2023optimal}  & semi-adversarial & partial & $\tilde{O}(\sqrt{T})$
			\\\hline
			\textbf{Theorem~\ref{thm:nbpc-regret}}  & adversarial  & partial & $\tilde{O}(\sqrt{T})$
			\\\hline
		\end{tabular}
	}
 \caption{Comparison with relevant prior works for bandit control of LQR problem.}
	\label{tab:control_comparison}
\end{table*}

Next, we proceed to studying the extension of our bandit algorithm to online control of linear dynamical systems. The setting of online control is more challenging and general than BCO, since each instantaneous loss depends on the system's current state, which then depends on the past controls chosen by the learner. Very often, online control algorithms are derived from standard online learning algorithms applied to loss functions with memory. In this setting (\cite{anava2015online}),  at each round $t$, the loss function $f_t:(\mathbb{R}^d)^m\rightarrow\mathbb{R}$ depends not only on the current decision of the learner but the most recent $m$ decisions  ($m$ is often referred to as the memory length). The learner's objective is to minimize regret, now defined as
\begin{align*}
\mregret_T=\sum_{t=m}^T f_t(x_{t-m+1},\dots,x_t)-\min_{x\in\K}\sum_{t=m}^T f_t(x,\dots,x). 
\end{align*}
The reason memory is important for control is that for stable linear dynamical systems, the effect of past states and control decays exponentially over time, allowing truncation and reduction to online learning with memory. 
The best known bound for bandit linear control with adversarial noise is $\tilde{O}(T^{2/3})$ \citep{cassel2020bandit}. The work of \citep{sun2023optimal} gave a $O(\sqrt{T})$ regret bound for online LQ problem~\footnote{Linear quadratic (LQ) problem refers to controlling of linear dynamical systems with quadratic costs. LQ problem is one of the most fundamental problems in control theory.}, but under a more restrictive semi-adversarial noise model. The model assumes that the noise contains a stochastic component, whose second moment admits a lower bound. It remained an open problem to resolve the optimal rates for bandit linear control and bandit convex optimization with memory (BCO-M) in general.


We address this question, and give new tight upper and lower bounds for these settings as follows. 

\begin{mdframed}
\textbf{Contribution 2.} For bandit LQ problem, we propose \texttt{NBPC}, an efficient controller algorithm that achieves $\tilde{O}(\sqrt{T})$ regret against fully adversarial noise model.
\end{mdframed}
This upper bound leverages the special memory structure, called affine memory, in linear control problems. To see the significance of the affine memory structure, we give a tight lower bound for BCO with general memory, as per below.  
\begin{mdframed}
\textbf{Contribution 3.} BCO-M with general quadratic and smooth loss functions has a $\tilde{\Omega}(T^{2/3})$ regret lower bound even when some degree of improper learning is allowed. 
\end{mdframed}
Our analysis establishes a clear distinction between the complexities of bandit linear control and BCO-M, indicating that structural properties of linear control problems are crucial for obtaining optimal regret bounds.

\subsection{Related works}

\paragraph{Bandit convex optimization.} 

The setting of BCO is the limited (zero-order) analogue of online convex optimization, see \citep{hazan2022oco,lattimore2020bandit} for an in-depth treatment of OCO, BCO, and other bandit settings. The first BCO algorithms for adversarial losses were gradient based \citep{flaxman2004online}, and generally applicable, but did not attain the optimal regret bound. A flurry of research followed with more general and efficient methods, culminating in an optimal regret bound and polynomial time algorithm \citep{bubeck2021kernel}. That still is not the end of the picture, as the regret bounds and running time for the latter algorithm have a large (polynomial) dependence on the dimension. 

Special cases of loss functions were addressed in a sequence of works. Notable in these is the optimal regret algorithm for bandit linear optimization \citep{abernethy2009competing}, which is the most important subcase. Further generalizations with improved runtimes and/or regret bounds include strongly convex + smooth~\citep{hazan2014bandit}, and quadratic losses~\citep{suggala2021efficient}. Another important research theme explores efficient bandit algorithms for stochastic losses. The second-order method of \citet{lattimore2023second} is the most relevant to our work; it develops a bandit Newton approach for minimizing Lipschitz convex losses over $\RBB^d$.

\paragraph{Online control.} Online nonstochastic control considers the problem of controlling a discrete-time linear dynamical system with an adversarially chosen perturbation and cost sequence. The goal is to compete with the best fixed policy in some benchmark policy class. \cite{agarwal2019online} is the first work that leverages the stability assumption on the system and proved reduction from nonstochastic control to online learning with memory, where the latter was studied in \citep{anava2015online}. The line of work is too extensive to include here, and we refer those who are interested to \citet{hazan2022introduction} for a survey. The most relevant works to ours are perhaps~\citet{cassel2020bandit, gradu2020non, sun2023optimal}, where the authors also studied nonstochastic control. A comparison of results is included in Table~\ref{tab:control_comparison}.

There are two other relevant works in the full information setting~\citep{simchowitz2020improper, simchowitz2020making}. \cite{simchowitz2020improper} proposed the state-of-art benchmark policy class of disturbance response controllers (DRCs), which both \citep{sun2023optimal} and we use as the benchmark policy class for regret evaluation. \cite{simchowitz2020making} leveraged the special structure of control loss and studied the subclass of OCO-M with memory problem which they referred to as OCO with Affine Memory (OCO-AM). This insight is crucial to our analysis. 

\paragraph{Lower bounds for bandit algorithms.} 


Bandit convex optimization (BCO) is strictly harder than its full information counterpart, and therefore in general the regret is lower bounded by $\Omega(\sqrt{T})$. \cite{shamir2013complexity} showed a tight $\Omega(\sqrt{T})$ lower bound on the regret that holds even for strongly convex and smooth loss functions. 


Bandit convex optimization with memory (BCO-M) is harder than BCO.  \cite{dekel2014bandits, koren2017multi} showed a $\tilde{\Omega}(T^{2/3})$-lower bound for multi-armed bandit problems with switching/moving cost. The addition of a switching cost is a form of loss function with memory and therefore inspires the question that whether bandit optimization with memory over compact and convex decision set with natural loss functions exhibits the same restriction. We answer this question by showing $\tilde{\Omega}(T^{2/3})$ regret lower bound for quadratic and smooth loss that is adaptive at time $t$ to the algorithm's decisions up to time $t-m$, where $m$ is the memory length. We emphasize that most if not all existing upper bound analysis for BCO and BCO-M holds for such adaptivity assumption. Table~\ref{tab:lower_bound_comparison} includes a summary of results in regret lower bounds for bandit algorithms.

\begin{table*}[hbt!]
	\centering
	{
		\begin{tabular}{c|| c  | c | c}
			\hline
			Paper & Decision set   & Regret &  Note  
			\\ [0.5ex]
			\hline\hline
			\cite{dekel2014bandits}  &  discrete & $\tilde{\Omega}(T^{2/3})$ &  $0$-$1$ moving cost
			\\\hline
            \cite{koren2017multi}  &  discrete & $\tilde{\Omega}(T^{2/3})$ &   metric moving cost
			\\\hline
            \cite{cesa2013online} &  continuous  & $\Omega(T^{2/3})$    & $0$-$1$ moving cost
			\\\hline
			\textbf{Theorem~\ref{thm:lower-bound}}  & continuous  & $\tilde{\Omega}(T^{2/3})$ & smooth, quadratic loss with memory
			\\\hline
		\end{tabular}
	}
 \caption{Comparison with relevant prior works on lower bounds for bandit algorithms.}
	\label{tab:lower_bound_comparison}
\end{table*}

\subsection{Paper organization}

In Section \ref{sec:preliminaries} we give details of the setting and notations considered. In Section \ref{sec:simple-bco} we give the simplest version of Bandit Newton Step (\texttt{BNS}) and main theorem statement. Following that we give the extension of \texttt{BNS} to the OCO with affine memory setting in Section~\ref{sec:bqo-AM}. Lower bounds for OCO with memory are given in Section~\ref{sec:bco-m}. Details of the application to control and proof details are left to the appendix. 

\subsection{Notations}
Denote $\mathbb{B}^d=\{x\in\mathbb{R}^d:\|x\|_2\le 1\}$ to be the unit ball in $\mathbb{R}^d$,  and  $\mathbb{S}^{d-1}=\{x\in\mathbb{R}^d:\|x\|_2= 1\}$ to be the unit sphere in $\mathbb{R}^d$. We consider the following matrix norms: $\|M\|_{\mathrm{op}}=\sup\{\|Mx\|_2:\|x\|_2=1\}$ denotes the operator norm. $\|M\|=\rho(M)$ denotes the largest singular value of $M$. For $u_1,\dots,u_n\in\mathbb{R}^d$, $u_{1:n}=(u_1,\dots,u_n)\in\mathbb{R}^{dn}$ denotes the concatenated vector. For matrix-vector products, $A_1,\dots,A_n\in\mathbb{R}^{m\times d}$, $A_{1:n}u_{1:n}=(A_1u_1,\dots,A_nu_n)\in\mathbb{R}^{mn}$ denotes the concatenated matrix-vector product. For $A\in\mathbb{R}^{d\times d}$ such that $A\preceq 0$, we denote the induced matrix norm on $\mathbb{R}^d$ as $\|v\|_{A}=\sqrt{v^{\top}Av}$ and its dual norm as $\|\cdot\|_A^{*}$. It can be checked that $\|\cdot\|_A^{*}=\|\cdot\|_{A^{-1}}$ if $A$ is invertible.

\section{Preliminaries} \label{sec:preliminaries}

\subsection{BCO}
\label{sec:improper-learning}

We consider the BCO problem over a convex and compact set $\K \subset \mathbb{R}^d$. At each round, the learner is allowed to play a point $y_t\in\mathbb{R}^d$, after which a loss $f_t(y_t)\in\mathbb{R}$ is revealed to the learner. If $y_t$ is restricted to lie in $\K$, then the learner is called a \emph{proper} learner. If $y_t$ can lie outside $\K$, then the learner is called \emph{improper}. In this work, we provide an improper learning algorithm for $\kappa$-convex losses. The performance of the learner is measured by regret, i.e. the difference between the loss incurred by the learner and the best single point $x^*\in\K$ had the loss functions were known ahead of the time, i.e.
\begin{align*}
\regret_T=\sum_{t=1}^T f_t(y_t) - \min_{x^* \in \K} \sum_{t=1}^T f_t(x^*) .
\end{align*}
We make the following assumptions on the sequence of loss functions $\{f_t\}_{t=1}^T$:
\begin{assumption} [Oblivious Adversary] 
\label{assumption:oblivious-adversary}
The sequence of losses $\{f_t\}_{t=1}^T$ are fixed ahead of the game (i.e. the sequence of loss unctions does not depend on the learner's decisions).
\end{assumption}
\begin{assumption} [$\kappa$-convex losses]
\label{assumption:curvature}
The sequence of losses $\{f_t\}_{t=1}^T$ are $\kappa$-convex. That is, $\exists c, C > 0,$ and PSD matrices $0 \preceq H_t \preceq I$ such that 
\[
\forall x \in \K+\ball^d, \  t\in [T], \quad cH_t\preceq \nabla^2 f_t(x)\preceq CH_t \ , \ \text{where } \frac{C}{c} \leq \kappa.
\]
\end{assumption}
\begin{assumption} [Bounded range and gradients] 
\label{assumption:bounded-range-and-grads}
$\exists B, L> 0$ such that
$\{f_t\}_{t=1}^T$ satisfies
\begin{align*}
\max_{1\le t\le T}\sup_{x\in\K} f_t(x)\le B, \ \ \ \max_{1\le t\le T}\sup_{x\in\K} \|\nabla f_t(x)\|_2\le L. 
\end{align*}
\end{assumption}



\subsection{Control of LQ problem}
\label{sec:control-prelims}
An important application of BCO-M is its reduction from nonstochastic LQ control. 
Consider the following partially observable linear dynamical system governed by dynamics $(A, B)$ and observation matrix $C$:
\begin{align}
\label{eq:lds}
\x_{t+1}=A\x_t+B\uv_t+\w_t, \ \ \ \y_t=C\x_t+\e_t,
\end{align}
where $\x_t, \uv_t$ are the state and control played at time $t$, respectively. $\w_t$ is the perturbation at time $t$. The system's state is not directly accessible. Instead, an observation, $\y_t$, which is usually a noisy low-dimension projection of the state $\x_t$ coupled with observation noise $\e_t$, is accessible. $A,B,C$ are of appropriate dimensions, i.e. $A\in\mathbb{R}^{d_{\x}\times d_{\x}}$ $B\in\mathbb{R}^{d_{\x}\times d_{\uv}}$, and $C\in\mathbb{R}^{d_{\y}\times d_{\x}}$, where $d_{\x},d_{\y},d_{\uv}$ are the dimensions of $\x_t,\y_t,\uv_t$. $\w_t\in\mathbb{R}^{d_{\x}}$, and $\e_t\in\mathbb{R}^{d_{\y}}$. 
In addition, we assume the system has bounded dynamics and permits a strongly stable linear policy:
\begin{assumption} [Strongly stabilizable with bounded dynamics]
\label{assumption:bounded-dynamics-and stability}
The matrices that govern the system dynamics and observations in \cref{eq:lds} are bounded: $\|A\|_{\mathrm{op}}\le \kappa_A$,  $\|B\|_{\mathrm{op}}\le \kappa_B$,  $\|C\|_{\mathrm{op}}\le \kappa_C$, and $\exists K\in\mathbb{R}^{d_{\uv}\times d_{\y}}$ with $A+BKC=HLH^{-1}$ for some $H\succ 0$ and $\max\{\|K\|, \|H\|\|H^{-1}\|\}\le \kappa$, $\|L\|\le 1-\gamma$ for some $\kappa>0$, $0<\gamma \le 1$. 
\end{assumption}
In the nonstochastic control setting, we make no stochastic assumption on the perturbation and noise sequence $\{\w_t, \e_t\}_{t=1}^T$, other than that they are bounded, i.e. 
\begin{align*}
\max_{t\in[T]}\{\max\{\|\w_t\|_2,\|\e_t\|_2\}\}\le R_{\w,\e}.
\end{align*}
Moreover, at time $t$, after a control $\uv_t$ is played by the learner, a time-varying quadratic cost 
\begin{align}
\label{eq:quad-control-cost}
c_t(\y_t, \uv_t)=\y_t^{\top} Q_t\y_t+\uv_t^{\top}R_t\uv_t
\end{align}
is revealed to the learner. We make the following assumption on the adversary.
\begin{assumption} [Oblivious adversary]
\label{assumption:adversary-adaptivity-control}
The adversary that chooses the cost and perturbation sequences is oblivious.
\end{assumption} 
Similar to \cite{simchowitz2020making, sun2023optimal}, we make strong convexity assumption on $c_t$:
\begin{assumption} [Strongly convex and smooth quadratic cost]
\label{assumption:control-loss-curvature}
$\exists \beta\ge 1\ge \alpha>0$ such that $Q_t, R_t\succeq \alpha I$ and $Q_t, R_t\preceq \beta I$.
\end{assumption}
In the bandit setting with partially observable systems, the learner only has access to the scalar loss $c_t(\y_t,\uv_t)$ in addition to the observation $\y_t$ at time $t$. 
The performance in nonstochastic control is to minimize \textit{regret}, defined by the total cost incurred by the learner $\A$ over a time horizon $T$ and the would-be cost incurred by the best policy in a benchmark policy class $\Pi$, had the costs been known ahead of the game, i.e.
\begin{align}
\label{eq:control-regret}
\cregret_T(\A)=\sum_{t=1}^T c_t(\y_t^{\A}, \uv_t^{\A})-\min_{\pi\in\Pi} \sum_{t=1}^T c_t(\y_t^{\pi}, \uv_t^{\pi}). 
\end{align}
Here, $(\y_t^{\A}, \uv_t^{\A})$ is the observation-control pair reached by learner $\A$, and $(\y_t^{\pi}, \uv_t^{\pi})$ is the would-be observation-control pair if the policy $\pi$ was carried from the beginning of the time. It is evident from the definition of regret that the strength of regret as a performance metric strongly depends on the richness of the policy class $\Pi$. The standard benchmark policy class used in literature (\cite{simchowitz2020improper, simchowitz2020making, sun2023optimal}) is the disturbance response controller (DRC) policy class, formally given by the following definition:
\begin{definition} [DRC policy class]
\label{def:dac}
(1) A disturbance response controller (DRC) is a policy $\pi_M$ of length $m$ parametrized by a sequence of $m$ matrices $M=(M^{[j]})_{j=0}^{m-1}$ of dimension $d_{\uv}\times d_{\y}$ such that the control at $t$ is given by $\uv_t^{\pi_M}=K\y_t+\sum_{j=0}^{m-1}M^{[j]}\ynat_{t-j}$, where $\ynat_t$ is the would-be observation had the linear policy $K$ been carried out from the beginning of the time.  

(2) A DRC policy class $\M(m,R_{\M})$ parametrized by $(m, R_{\M})$ is the set of all disturbance response controllers of length $m$ that obey the norm bound:
\begin{align*}
\|M\|_{\ell_1,\mathrm{op}}=\sum_{j=0}^{m-1}\|M^{[j]}\|_{\mathrm{op}}\le R_{\M}. 
\end{align*}
\end{definition}
\section{\texttt{BNS}: Bandit Newton Step for $\kappa$-Convex Losses}
\label{sec:simple-bco}
In this section we describe our bandit Newton algorithm \texttt{BNS} (see Algorithm~\ref{alg:simple-bqo}), which achieves $\tilde{O}(\sqrt{T})$ regret guarantee for the class of $\kappa$-convex functions (Definition~\ref{def:kappa-c}). At a high level, our algorithm tries to estimate the missing information (\emph{i.e.,} gradient and Hessian) about the unknown loss function and then takes a Newton step to compute the next action.  To estimate the gradient and Hessian at time step $t$, we employ the following randomized sampling scheme: We first sample a point uniformly from an ellipsoid centered at the current iterate $x_t$. The covariance matrix of this ellipsoid is constructed using prior Hessian estimates of the loss functions $\{f_s\}_{s=1}^{t-1}$. We then obtain one-point feedback from the adversary regarding the loss value at the sampled point.  This feedback enables gradient and Hessian estimation (see lines 3-5 of Algorithm~\ref{alg:simple-bqo}).
\begin{algorithm}
\caption{\texttt{BNS}: Bandit Newton Step}\label{alg:simple-bqo}
\begin{flushleft}
  {\bf Input:} convex compact set $\K\subset\mathbb{R}^d$, step size $\eta>0$, condition number $\kappa'>0$, time horizon $T\in\mathbb{N}$.
\end{flushleft}
\begin{algorithmic}[1]
\STATE Initialize: $x_1 \in\K$, $\tilde{A}_0 = I  $
\FOR{$t = 1, \ldots, T$}
\STATE Draw $v_{t,1}, v_{t,2}$ uniformly from $\sphere^{d-1} = \{x\in\real^d|\|x\|_2 = 1\}$, and let $y_t=x_t+\frac{1}{2} \tilde{A}_{t-1}^{-\frac{1}{2}}(v_{t,1}+v_{t,2})$.
\STATE Play $y_t$, observe $f_t(y_t)$.
\STATE Create gradient and Hessian estimates $\tilde{\nabla}_t,\tilde{H}_t$: \label{line:est}
\begin{align*}
\tilde{\nabla}_{t}=2df_t(y_t) \tilde{A}_{t-1}^{\frac{1}{2}}v_{t,1}, \ \ \ \tilde{H}_{t}=2d^2 f_t(y_t)\tilde{A}_{t-1}^{\frac{1}{2}}(v_{t,1}v_{t,2}^{\top}+v_{t,2}v_{t,1}^{\top})\tilde{A}_{t-1}^{\frac{1}{2}}.
\end{align*}
\STATE  Update $\tilde{A}_{t} = \tilde{A}_{t-1} + \frac{\eta}{\kappa'} \tilde{H}_t $. Compute $x_{t+1}= \prod_{\K}^{\tilde{A}_t} \left[ x_t - \eta \tilde{A}_t^{-1}  \tilde{\nabla}_t  \right]$, 
where $\prod_{\K}^{\tilde{A}_t}$ is the projection onto  $\K$ w.r.t the norm $\|\cdot\|_{\tilde{A}_t}$. \label{line:newton-update}
\ENDFOR 
\end{algorithmic}
\end{algorithm}

\begin{theorem}[\texttt{BNS} regret]
\label{thm:expected_regret_bound}
For $d,T\in\mathbb{N}$, suppose that $\{f_t\}_{t=1}^T$ and the convex compact set $\K\subset\mathbb{R}^d$ satisfy Assumptions~\ref{assumption:oblivious-adversary},\ref{assumption:curvature},\ref{assumption:bounded-range-and-grads}. 
Let $B^* \coloneqq B+\sqrt{2}(L+\sqrt{2}C)$. Then, \texttt{BNS} (\cref{alg:simple-bqo}) with input $(\K, \eta,\kappa',T)$ with $\kappa'\geq \kappa$, and $\eta\le \kappa'(24 d^{3/2}B^*\kappa\sqrt{T\log(dT^2)})^{-1}$ satisfies the following regret guarantee for any $x\in\K$:
$$   \E\left[\sum_{t=1}^T f_t(y_t)  -  f_t(x)\right] \le \frac{\mathrm{diam}(\K)^2  }{2\eta} + 3\eta d^2 (B^*)^2 T + \frac{ 2d\kappa\kappa'\log(1+ \eta CT/\kappa')} {\eta }.$$
For the case of $\mathrm{diam}(\K) = \sqrt{d}$, by setting $\kappa'=\kappa$, and $\eta= (24 d^{3/2}B^*\sqrt{T\log(dT^2)})^{-1}$, we have
\begin{align*}
\E[\regret_T]\le \tilde{O}(d^{2.5}\kappa^2 B^* \sqrt{T}).
\end{align*}
\end{theorem}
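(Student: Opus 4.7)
The plan is to reduce the regret analysis to an online-Newton-step argument on a smoothed surrogate of $f_t$ defined over the current sampling ellipsoid, then account for smoothing bias and estimator variance in expectation. The first step is to introduce the ball--sphere and ball--ball smoothings of $f_t$ relative to the ellipsoid $\tilde{A}_{t-1}^{-1/2}\ball^d$,
\[
\smoothfBS_t(x) \eqdef \E_{u\sim\ball^d,\, v\sim\sphere^{d-1}}\bigl[f_t\bigl(x+\tfrac{1}{2}\tilde{A}_{t-1}^{-1/2}(u+v)\bigr)\bigr],
\]
and $\smoothfBB_t$ defined analogously with both $u,v$ uniform on $\ball^d$. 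Iterating the divergence-theorem identity $\E_{u\sim\sphere^{d-1}}[g(x+\delta u)u]=(\delta/d)\nabla \hat g(x)$, where $\hat g$ is the ball-smoothed $g$, over the two independent sphere samples $v_{t,1},v_{t,2}$ shows, conditionally on the history, $\E[\tilde{\nabla}_t]=\nabla \smoothfBS_t(x_t)$ and $\E[\tilde{H}_t]=\nabla^2 \smoothfBB_t(x_t)$. Since $\tilde{A}_{t-1}\succeq I$, both smoothings are supported inside $\K+\ball^d$, so $\kappa$-convexity transfers: $cH_t\preceq \nabla^2 \smoothfBS_t(x),\nabla^2 \smoothfBB_t(x)\preceq CH_t$.

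Next I would pass from the surrogate back to $f_t$ with a negative-quadratic correction. A second-order Taylor expansion combined with Assumption~\ref{assumption:curvature} bounds the smoothing bias by $|\smoothfBS_t(x)-f_t(x)|\le \tfrac{C}{2}\mathrm{tr}(H_t\tilde{A}_{t-1}^{-1})$ (and similarly for $\smoothfBB_t$), which sums to a lower-order contribution because $\tilde{A}_t\succeq I$; the Lipschitz bound in Assumption~\ref{assumption:bounded-range-and-grads} handles $|f_t(y_t)-f_t(x_t)|$. The key consequence of the $\kappa$-convexity of $\smoothfBS_t$ is the strong-convexity-along-$H_t$ inequality
\[
\smoothfBS_t(x_t)-\smoothfBS_t(x^*)\le \bigl\langle \nabla \smoothfBS_t(x_t),\, x_t-x^*\bigr\rangle -\tfrac{c}{2}(x_t-x^*)^{\top} H_t(x_t-x^*),
\]
and because $\E[\tilde{H}_t]=\nabla^2\smoothfBB_t(x_t)\preceq CH_t$, this negative quadratic dominates $\tfrac{c}{2C}(x_t-x^*)^{\top}\E[\tilde{H}_t](x_t-x^*)$ in expectation. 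The choice $\kappa'\ge \kappa\ge C/c$ aligns this exactly with the algorithm's update $\tilde{A}_t-\tilde{A}_{t-1}=(\eta/\kappa')\tilde{H}_t$, so that the surrogate quadratic is precisely what telescopes with the Newton potential in the next step.

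The heart of the argument is an online-Newton-step analysis in the norm $\|\cdot\|_{\tilde{A}_t}$. Combining the generalized Pythagorean inequality for $\prod_\K^{\tilde{A}_t}$ with the expansion of $\|x_{t+1}-x^*\|_{\tilde{A}_t}^2$ yields the one-step bound
\[
2\eta\bigl\langle \tilde{\nabla}_t,\, x_t-x^*\bigr\rangle\le \|x_t-x^*\|_{\tilde{A}_t}^2-\|x_{t+1}-x^*\|_{\tilde{A}_t}^2+\eta^2\|\tilde{\nabla}_t\|_{\tilde{A}_t^{-1}}^2.
\]
Summing in $t$, telescoping via $\tilde{A}_t-\tilde{A}_{t-1}=(\eta/\kappa')\tilde{H}_t$ to absorb the negative quadratic from the previous step, and taking expectation gives a bound of the shape
\[
\E\Bigl[\sum_{t=1}^T f_t(y_t)-f_t(x^*)\Bigr]\le \frac{\mathrm{diam}(\K)^2}{2\eta}+\frac{\eta}{2}\sum_t\E\|\tilde{\nabla}_t\|_{\tilde{A}_t^{-1}}^2+\text{(smoothing bias + log-det)}.
\]
Because $\tilde{\nabla}_t=2df_t(y_t)\tilde{A}_{t-1}^{1/2}v_{t,1}$ with $\|v_{t,1}\|_2=1$, one has $\|\tilde{\nabla}_t\|_{\tilde{A}_{t-1}^{-1}}^2=4d^2 f_t(y_t)^2\le 4d^2(B^*)^2$, where the shifted range $B^*=B+\sqrt{2}(L+\sqrt{2}C)$ absorbs the fact that $y_t$ can step by up to $\sqrt{2}$ outside $\K$ via a Taylor estimate using $L$ and $C$. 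The log-det piece is controlled by the standard matrix-determinant potential lemma, $\sum_t\E\,\mathrm{tr}(\tilde{H}_t\tilde{A}_{t-1}^{-1})\le (\kappa' d/\eta)\log(1+\eta CT/\kappa')$, exactly matching the third term of the theorem; assembling these pieces and optimizing $\eta$ gives the claimed $\tilde O(d^{2.5}\kappa^2 B^*\sqrt{T})$ rate.

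The main technical obstacle is that $\tilde{H}_t$ is a rank-$2$, indefinite matrix and therefore $\tilde{A}_t$ is not monotone along sample paths; only the conditional expectation $\E[\tilde{H}_t]$ is PSD. The stepsize restriction $\eta\le \kappa'(24d^{3/2}B^*\kappa\sqrt{T\log(dT^2)})^{-1}$ is precisely what keeps $\tilde{A}_t$ uniformly invertible (with $\tilde{A}_t\succeq \tfrac12 I$, say) on a high-probability event, so that the Pythagorean step and the potential bound remain valid; conversely, the indefinite per-sample $\tilde{H}_t$ must be used only inside expectations, where it coincides with the PSD Hessian of $\smoothfBB_t$ and behaves as a standard Newton-style second-order correction.
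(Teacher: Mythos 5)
Your proposal follows essentially the same route as the paper: identify the gradient and Hessian estimators with the ball--sphere and ball--ball smoothings via Stokes' theorem, run an online-Newton-step potential argument on the smoothed surrogate (with $\kappa$-convexity supplying the negative quadratic that makes $\Delta_t(x)\ge 0$), control the smoothing/exploration bias with a log-det potential, and invoke the cumulative-Hessian concentration lemma to keep $\tilde A_t$ well-conditioned and justify the stepsize constraint. One small imprecision worth flagging: the paper uses the sphere--sphere smoothing $\smoothfSS{t}$ to express $\E[f_t(y_t)\mid\FM_{t-1}]$ and then bounds $\E[\smoothfSS{t}(x_t)-\smoothfBS{t}(x_t)]$ by the log-det potential (via $\smoothfBS{t}(x_t)\geq f_t(x_t)$ from convexity); your phrasing that "the Lipschitz bound handles $|f_t(y_t)-f_t(x_t)|$" is correct only for establishing the range bound $|f_t(y_t)|\le B^*$, not for the accumulated exploration bias, and your bound $\|\tilde{\nabla}_t\|_{\tilde A_{t-1}^{-1}}^2$ should be translated to $\|\tilde{\nabla}_t\|_{\tilde A_t^{-1}}^2$ using the concentration event — both of which your sketch implicitly accommodates but does not spell out.
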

\begin{proof}{(Sketch)} One of the key steps in the proof is to show that the cumulative Hessian estimator $\tilde{A}_{t-1}$ concentrates well around the true cumulative Hessian w.h.p (Lemma~\ref{lem:concentration-cumulative-hessian}). This result implies that the iterates generated by the algorithm are well defined. Another implication of this result is that the action $y_t$ chosen by the learner is at most distance $2$ (measured in $\ell_2$ norm) away from $\K$ w.h.p. Our regret analysis relies on the following two  functions: $\smoothfBS{t}(x) \coloneqq \E_{u\sim \ball, v \sim \sphere}[f_t(x + \frac{1}{2} \tilde{A}_{t-1}^{-1/2}(u+v))|\FM_{t-1}]$, $\smoothfSS{t}(x) \coloneqq \E_{u, v \sim \sphere}[f_t(x + \frac{1}{2} \tilde{A}_{t-1}^{-1/2}(u+v))|\FM_{t-1}]$, where $\sphere = \sphere^{d-1}$ and $\ball = \ball^d$. These two functions are different smoothed variants of $f_t$. A simple application of Stokes' theorem~\citep{flaxman2004online} shows that $\EBB[\tilde{\nabla}_t|\FM_{t-1}] = \nabla \smoothfBS{t}(x_t),$ and $\EBB[f_t(y_t)|\FM_{t-1}] = \smoothfSS{t}(x_t).$ 
In the rest of the proof we perform the following regret decomposition and bound each of the terms
\begin{align*}
    \sum_{t=1}^{T}\E[f_t(y_t) -  f_t(x)] & = \sum_{t=1}^{T}\E[\smoothfBS{t}(x_t) -  \smoothfBS{t}(x)]  +  \E[\smoothfBS{t}(x) -  f_t(x)]  +  \E[\smoothfSS{t}(x_t) - \smoothfBS{t}(x_t)].
\end{align*}
The second and third terms capture the penalty we pay for exploration. To bound the first term, we view our algorithm as performing stochastic Newton method on the smoothed losses $\smoothfBS{t}$ (Lemma~\ref{lem:stoc_ons_full_info_regret}).
\end{proof}

\noindent\textbf{Running time.} Most steps of the \texttt{BNS} algorithm can be implemented with only matrix-vector products, that  run in time $O(d^2)$ for dense matrices. In addition, its SVD can be maintained in $O(d^2)$ time using the techniques of \citep{golub-matrix,gu1994stable}, since we have only rank-2 updates each iteration. This allows us to compute both the inverse and inverse square-root in linear time.\vspace{0.03in}

\noindent\textbf{Comparison with relevant prior work.}
We take a moment to compare Algorithm~\ref{alg:simple-bqo} with that in \citep{suggala2021efficient, lattimore2023second}. While these works have also developed bandit Newton methods, our work differs from theirs in several key aspects: (a) unlike the method of \citet{suggala2021efficient}, which is restricted to quadratic losses, our algorithm is applicable to a broader class of loss functions. Furthermore, our approach is simple and avoids focus regions and restart conditions of \citet{bubeck2017kernel, suggala2021efficient}. The main reason for this is that $\kappa$-convexity helps us get an optimistic hessian estimate for the entire action space. Finally, the regret of our algorithm has a better dependence in $d$. (b) In contrast to \citet{lattimore2023second}, which focuses on stochastic losses in unconstrained domains, our algorithm is designed for the more general adversarial bandit setting in constrained domains.\vspace{0.03in}

\noindent\textbf{Improved dependence on $\kappa$.} Observe that the regret in Theorem~\ref{thm:expected_regret_bound} has a $\kappa^2$ dependence. In Appendix~\ref{sec:improved_kappa_dependence}, we provide an alternate analysis that improves this dependence to $\kappa$, but at the cost of worse dependence on $d$. In particular, this analysis leads to a regret of $\tilde{O}(d^3\kappa B^*\sqrt{T}),$ instead of $\tilde{O}(d^{2.5}\kappa^2 B^* \sqrt{T})$ stated in Theorem~\ref{thm:expected_regret_bound}. This improved $\kappa$ dependence is especially useful in problems with very large $\kappa$, \emph{e.g.,} logistic regression where $\kappa = e^{\text{diam}(\K)}$ (Appendix~\ref{sec:logistic-regression}). We believe a tighter concentration bound for the cumulative Hessian $\tilde{A}_t$ can improve the regret to $\tilde{O}(d^{1.5}\kappa B^*\sqrt{T})$.


\section{\texttt{NBPC}: Newton Bandit Perturbation Controller for LQ problem}
\label{sec:bqo-AM}

One extension of interest is the application of a BCO algorithm to online control of LQ problem under bandit feedback. Recall the linear time-invariant dynamical system defined in \cref{eq:lds} and the control regret metric in \cref{eq:control-regret} with quadratic costs of the structure in \cref{eq:quad-control-cost}. A natural question is that whether an optimal regret guarantee can be extended to bandit nonstochastic LQ problem. 

\paragraph{Challenges.} The possibility of an optimal rate in the bandit nonstochastic LQ problem remains unresolved. Optimal rates are established only for stochastic \citep{cassel2020bandit} or semi-adversarial \citep{sun2023optimal} settings via reduction to BCO-M. The transition from semi-adversarial to fully adversarial models is significant, especially since strong convexity, which generally improves regret bounds in online learning, does not guarantee the same for loss functions in nonstochastic control problems faced with adversarial disturbances. This is because the loss function reduced from the control problem does not necessarily inherit strong convexity in the presence of fully adversarial disturbances. 

Can optimal regret in bandit LQ problem be established for a fully adversarial noise model? We answer this question in two folds. On one hand, in \cref{sec:bco-m}, we will show the limitation of BCO-M for general convex loss functions by showing a lower bound of $\tilde{\Omega}(T^{2/3})$. However, in this section we will show that despite the hardness of BCO-M for general convex loss functions, an optimal $\tilde{O}(\sqrt{T})$ regret is still attainable if
we leverage additional structure of the control problem. \cite{simchowitz2020making} established that the reduction from nonstochastic LQ problem falls within a subclass of the OCO-M framework, where the the dependence on previous decisions assumes an affine structure. This discovery enabled the derivation of the first optimal logarithmic regret bounds for full-information (as opposed to bandit) LQ problem in the presence full adversarial disturbances. The challenge remains in extending the optimal regret bound to the bandit settings. With only scalar feedback, the learner must balance the bias and variance of the gradient estimator variance to obtain optimal regret. Without strong convexity and additional structural assumptions, this balance becomes harder to achieve. 

This section will be organized as the following: \cref{subsec:bqo-am-prelims} will introduce the preliminaries of the problem of bandit quadratic optimization with affine memory (BQO-AM) including important assumptions. \cref{subsec:BQO-AM-algo} describes \texttt{BNS-AM}, an algorithm that achieves optimal regret guarantee for the class of problems described in \cref{subsec:bqo-am-prelims}. \cref{subsec:reduction-control} will discuss the reduction from bandit LQ problem to the problem of BQO-AM and introduce the controller \texttt{NBPC} (Newton Bandit Perturbation Controller) which achieves the optimal $\tilde{O}(\sqrt{T})$ control regret. 

\subsection{BQO-AM preliminaries}
\label{subsec:bqo-am-prelims}

In the BQO-AM setting, similar to the general BCO-M setting, the learner is asked to play a decision $y_t\in\mathbb{R}^d$ at time $t$. After the decision is made, an adversary reveals a cost function with memory length $m$, $f_t:(\mathbb{R}^{d})^m\rightarrow\mathbb{R}$, that is evaluated the learner's $m$ most recent decisions. For the subclass of BQO-AM problems, $f_t$ admits the following structure:
\begin{assumption} [Affine memory structure]
\label{assumption:affine-memory}
The loss function $f_t:(\mathbb{R}^d)^m\rightarrow\mathbb{R}$ admits the structure
\begin{align}
\label{eq:bqo-am-function}
f_t(y_{t-m+1:t})&=q_{t}\left(B_t+\sum_{i=0}^{m-1}G^{[i]}Y_{t-i}y_{t-i}\right),
\end{align}
where $G=(G^{[i]})_{i=0}^{m-1}$ is a sequence of matrices of dimension $n\times p$, $Y_{t-m+1:t}$ are $m$ matrices of dimension $p\times d$, and $B_t$ is a vector of dimension $n$. $q_t:\mathbb{R}^n\rightarrow \mathbb{R}$ is a quadratic function with Hessian $Q_t$. We assume that $\alpha I\preceq Q_t \preceq \beta I$, for some $0<\alpha\le 1 \le \beta$. Additionally, we assume that the learner has access to the following quantities:
\begin{align*}
H_t =G_t^{\top}G_t \in \mathbb{R}^{d\times d}, \ \ \ G_t=\sum_{i=0}^{m-1}G^{[i]}Y_{t-i}\in \mathbb{R}^{n\times d}, \ \ \ \forall t\in[T]. 
\end{align*}
\end{assumption}
We note that the strong convexity assumption here does not translate to the strong convexity of $f_t$ without further assumptions on $Y_{t-m+1:t}$.
Let $\bar{f}_t$ denote the induced unary form of $f_t$: $\bar{f}_{t}(x)=f_t(x,\dots,x)$.
The Hessian $\nabla^2\bar{f}_{t}$ satisfy the following property: $\alpha H_t\preceq \nabla^2\bar{f}_t\preceq \beta H_t$ since $\nabla^2\bar{f}_t=G_t^{\top} Q_tG_t\in[\alpha H_t, \beta H_t]$.
We make the following regularity assumptions.
\begin{assumption} [Diameter and gradient bound] 
\label{assumption:diameter-grad-bound}
$\K\subset\mathbb{R}^d$ has Euclidean diameter bound $D$, i.e. $$\sup_{x,y\in\K} \|x-y\|_2 \le D.$$ 
The value of $f_t$ is bounded by $B$~\footnote{The assumption on bounded function value is non-standard. We add this assumption for simplicity. All of our results hold if substituting the value bound by $LD$.}, and gradient norm is bounded $L$ over $\K^m$, $\forall t$, i.e.
\begin{align*}
\max_{t\in[T]}\sup_{z\in\K^m} |f_t(z)|\le B, \ \ \ \max_{t\in[T]}\sup_{z\in\K^m} 
\|\nabla f_t(z)\|_2\le L. 
\end{align*}
\end{assumption}
In existing literature on BCO-M, the algorithm holds regret guarantee against $m$-step delayed adversaries (\cite{gradu2020non, sun2023optimal}). Here, we also allow the same adaptivity of the adversary:
\begin{assumption} [Adaptivity of the adversary] 
\label{assumption:adversary-adaptivity}
The adversary is allowed to be $(t-m)$-steps adaptive. In particular, if we denote as $\F_t$ the filtration generated by the algorithm's randomness up to time $t$, then the loss function $f_t$ supplied by the adversary at time $t$ is $\F_{t-m}$-measurable. 
\end{assumption}
Moreover, we make the following regularity assumptions on $G, Y_t$. In particular, it can be shown that when reducing from nonstochastic LQ problems satisfying Assumption~\ref{assumption:bounded-dynamics-and stability},\ref{assumption:adversary-adaptivity-control},\ref{assumption:control-loss-curvature}, the following assumption is satisfied.

\begin{assumption}[Exponential decay and positive convolution invertibility-modulus]
\label{assumption:convolution-invertibility-modulus}
$G, Y_t$ satisfy that for $m=\poly(\log T)$,
\begin{align*}
&\kappa(G)=\min\left\{1, \inf_{\sum_{n\ge 0}\|u_n\|_2^2=1}\left\{\sum_{n=0}^{\infty}\left\|\sum_{i=0}^nG^{[i]}u_{n-i}\right\|_2^2\right\}\right\}=\Omega(1),\\
&\|G\|_{\ell_1,\mathrm{op}}=\sum_{i=0}^{m-1}\|G^{[i]}\|_{\mathrm{op}}\le R_G=O(1), \ \ \ \sum_{i=m}^{\infty}\|G^{[i]}\|_{\mathrm{op}}\le \frac{R_G}{T},\\
&\max_{0\le i\le m-1} \|(G^{[i]})^{\top}G^{[i]}\|_2\le \tilde{R}_G=O(1), \ \ \ \max_{t\in[T]}\|Y_t\|_\mathrm{op}\le R_Y=O(1).
\end{align*}
\end{assumption}


\subsection{\texttt{BNS-AM}: Algorithm and Guarantees for BQO-AM problems}
\label{subsec:BQO-AM-algo}

We introduce the following \texttt{BNS-AM} algorithm for BQO-AM problems. The update rule resembles that of \texttt{BNS} (\cref{alg:simple-bqo}), except that the learner uses $H_t$ to update the preconditioner matrix in place of the Hessian estimator as in \texttt{BNS}. 

\begin{algorithm}
\caption{\texttt{BNS-AM}: Bandit Newton Step with Affine Memory}\label{alg:bqo-memory}
\begin{flushleft}
  {\bf Input:} convex compact set $\K\subset \mathbb{R}^d$, step size $\eta>0$, memory parameter $m\in\mathbb{N}$, time horizon $T\in\mathbb{N}$, curvature parameter $\alpha>0$. 
\end{flushleft}
\begin{algorithmic}[1]
\STATE Initialize: $x_{1}=\dots=x_{m}\in\K$, $\tilde{g}_{0:m-1} = \mathbf{0}_{d}$, $ \hat{A}_{0:m-1} = mI$.
\STATE Sample $u_t\sim \sphere^{d-1}$ i.i.d. uniformly at random for $t=1,\dots,m$. 
\STATE Set $y_t = x_t + \hat{A}_{t-1}^{-\frac{1}{2}}u_t$, $t=1,\dots,m$. 
\FOR{$t = m, \ldots, T$}
\STATE Play $y_t$, observe $f_t(y_{t-m+1:t})$, receive $H_t$.
\STATE Update $\hat{A}_t=\hat{A}_{t-1}+\frac{\eta \alpha}{2}H_t$. 
\STATE Create gradient estimate: $\tilde{g}_{t}=df_t(y_{t-m+1:t}) \sum_{j=0}^{m-1}\hat{A}_{t-1-j}^{\frac{1}{2}}u_{t-j}\in\mathbb{R}^{d}$. \label{line:grad-est-m} 
\STATE Update
$ x_{t+1} = \prod_{\K}^{\hat{A}_{t-m+1}} \left[ x_{t} - \eta \hat{A}_{t-m+1}^{-1}  \tilde{g}_{t-m+1}  \right]$.
\STATE Sample $u_{t+1}\sim \sphere^{d-1}$ uniformly at random, independent of previous steps.  
\STATE Set $y_{t+1}=x_{t+1}+\hat{A}_{t-m+1}^{-\frac{1}{2}}u_{t+1}$. \label{line:y-m}
\ENDFOR
\end{algorithmic}
\end{algorithm}

The affine memory structure and access to $H_t$ in \cref{assumption:affine-memory} provides upper and lower bounds on the Hessian of the loss functions, addressing the challenge in creating Hessian estimators in high dimensions for loss functions with memory. This allows \texttt{BNS-AM} to achieve optimal regret for BQO-AM problems. 

\begin{theorem} [\texttt{BNS-AM} regret]
\label{thm:bqo-m-regret}
For $d,T\in\mathbb{N}$, suppose that the sequence of loss functions $\{f_t\}_{t=1}^T$ and the convex compact set $\K\subset\mathbb{R}^d$ satisfy Assumptions~\ref{assumption:affine-memory},\ref{assumption:diameter-grad-bound},\ref{assumption:adversary-adaptivity},\ref{assumption:convolution-invertibility-modulus}. Then, \texttt{BNS-AM} (\cref{alg:bqo-memory}) with inputs $(\K, \eta, m, T, \alpha)$ with $m=\poly(\log T)$, $\eta=\tilde{\Theta}\left(\frac{1}{\alpha\sqrt{T}}\right)$, $\alpha$ given by \cref{assumption:affine-memory} satisfies the following regret guarantee against any $x\in\K$:
\begin{align*}
\E[\mregret_T(x)]\le \tilde{O}\left(\frac{\beta}{\alpha}B^*\sqrt{T}\right), 
\end{align*}
with $B^*=B+(L+\beta\sqrt{m})\sqrt{m}$.
\end{theorem}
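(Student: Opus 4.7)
The plan is to follow the same high-level template as the proof of Theorem~\ref{thm:expected_regret_bound}, lifted to the memory setting via the standard reduction of \cite{anava2015online}, while exploiting two features of the affine-memory structure. First, direct access to $H_t$ in Assumption~\ref{assumption:affine-memory} makes the preconditioner $\hat A_t$ deterministic given the past, removing the need for any Hessian concentration argument. Second, the quadraticity of $q_t$ lets one compute $\E[\tilde g_t \mid \F_{t-m}]$ in closed form. The first step is to decompose
\begin{align*}
    \E[\mregret_T] &= \underbrace{\sum_t \E[f_t(y_{t-m+1:t}) - \bar f_t(y_t)]}_{\text{(I) memory cost}} + \underbrace{\sum_t \E[\bar f_t(y_t) - \bar f_t(x_t)]}_{\text{(II) exploration cost}} \\
    &\quad + \underbrace{\sum_t \E[\bar f_t(x_t) - \bar f_t(x^*)]}_{\text{(III) unary regret}},
\end{align*}
where $\bar f_t(x) \defeq f_t(x,\dots,x)$ and $x^* \in \K$ is any comparator.

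For (I), $L$-Lipschitzness of $f_t$ gives a bound of $L\sum_{j=1}^{m-1}\|y_{t-j}-y_t\|$, and each term splits into an exploration perturbation (bounded since $\hat A_0=mI$ implies $\|\hat A_\cdot^{-1/2}\|\le 1/\sqrt m$) and an iterate drift $\|x_{t-j}-x_t\|\le \eta \sum_s \|\hat A_{s-m+1}^{-1}\tilde g_{s-m+1}\|$. For (II), a Taylor expansion of $\bar f_t$ combined with $\nabla^2 \bar f_t\preceq \beta H_t$ and $\E[u_t]=0$ (which kills the linear term) yields $\E[\bar f_t(y_t)-\bar f_t(x_t)]\le \tfrac{\beta}{2d}\tr(\hat A_{t-1}^{-1}H_t)$; summing using the log-det telescoping associated with $\hat A_t-\hat A_{t-1}=\tfrac{\eta\alpha}{2}H_t$ gives $\mathrm{(II)}\le \tilde O(\beta/(\eta\alpha))$.

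For (III), the bulk of the analysis, the first move is to compute the conditional mean of $\tilde g_t$. Expanding $f_t(y_{t-m+1:t}) = q_t\bigl(B_t + \sum_j G^{[j]} Y_{t-j}(x_{t-j}+\hat A_{t-1-j}^{-1/2}u_{t-j})\bigr)$ as a polynomial in the independent spherical samples and using $\E[u_{t-j}]=0$ and $\E[u_{t-j}u_{t-j}^\top]=I/d$, only the linear-in-$u_{t-j}$ term of the expansion survives, giving
\begin{align*}
    \E[\tilde g_t \mid \F_{t-m}] = \sum_{j=0}^{m-1} Y_{t-j}^\top (G^{[j]})^\top \,\nabla q_t\!\left(B_t+\sum_{i=0}^{m-1} G^{[i]}Y_{t-i}\,x_{t-i}\right).
\end{align*}
This equals $\nabla \bar f_t(x_t)$ exactly when $x_{t-i}=x_t$ for all $i$, and otherwise differs by a bias term linear in the drifts $x_{t-i}-x_t$ (since $\nabla q_t$ is affine), which is controlled via the iterate-movement bound from (I) together with the norm bounds in Assumption~\ref{assumption:convolution-invertibility-modulus}. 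Next, the ONS-style inequality $\bar f_t(x_t)-\bar f_t(x^*)\le \nabla\bar f_t(x_t)^\top (x_t-x^*) - \tfrac{\alpha}{2}\|x_t-x^*\|^2_{H_t}$ (from $\nabla^2\bar f_t \succeq \alpha H_t$), combined with the preconditioned Pythagorean identity for the update, causes the curvature terms to cancel against the $\tfrac{\eta\alpha}{2}H_t$ growth of $\hat A_t$, leaving $\tfrac{1}{2\eta}\|x_1-x^*\|^2_{\hat A_1}+\tfrac{\eta}{2}\sum_t \E\|\tilde g_t\|^2_{\hat A_t^{-1}}$ plus bias and delay slacks. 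The variance is bounded by taking expectation over the $u_{t-j}$'s (only diagonal cross-terms survive) and using $\hat A_{t-1-j}\preceq \hat A_t$. Summing all contributions and tuning $\eta=\tilde\Theta(1/(\alpha\sqrt T))$ delivers the claimed $\tilde O((\beta/\alpha)B^*\sqrt T)$ bound.

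The main obstacles I anticipate are two coupled sources of slack. The first is the $m$-step delay in the update, which shifts indices in the ONS telescoping so that $-\tfrac{\alpha}{2}\|x_t-x^*\|^2_{H_t}$ must be matched against $\|x_{t+1}-x^*\|^2_{H_{t-m+1}}$ rather than $\|x_{t+1}-x^*\|^2_{H_t}$; this requires showing iterates and the preconditioner are slowly varying on the scale of $m$. The second is the gradient-estimator bias from $x_{t-i}\ne x_t$, whose control also depends on slow iterate movement. Both are handled using Assumption~\ref{assumption:convolution-invertibility-modulus}: the operator-norm bounds on $G^{[i]}$ and $Y_t$ limit the propagation of drifts through the affine memory, and the exponential-decay tail $\sum_{i\ge m}\|G^{[i]}\|_{\mathrm{op}}\le R_G/T$ absorbs the truncation error at memory length $m=\poly(\log T)$.
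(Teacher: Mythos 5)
Your decomposition $f_t(y_{t-m+1:t}) = [f_t(y_{t-m+1:t}) - \bar f_t(y_t)] + [\bar f_t(y_t) - \bar f_t(x_t)] + [\bar f_t(x_t) - \bar f_t(x^*)]$ differs from the paper's, which instead first removes the exploration perturbations \emph{inside the memory}: perturbation loss $f_t(y_{t-m+1:t}) - f_t(x_{t-m+1:t})$, then movement cost $f_t(x_{t-m+1:t}) - \bar f_t(x_t)$, then underlying regret. The reordering matters: the paper's movement cost compares only the slowly varying $x$'s ($\|x_t-x_{t-1}\|=O(\eta\sqrt m\, dB^*)$), so Lipschitzness gives $O(\eta m^2 T)=\tilde O(\sqrt T)$. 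Your term (I) compares $y$'s, whose pairwise distances $\|y_{t-j}-y_t\|$ contain the exploration increments $\hat A^{-1/2}(u_{t-j}-u_t)$ of size $\Theta(1/\sqrt m)$ that do \emph{not} shrink with $\eta$. Your stated bound $L\sum_{j}\|y_{t-j}-y_t\|$ therefore sums to $\Theta(L\sqrt m\,T)=\tilde\Omega(T)$, not $\tilde O(\sqrt T)$. The Lipschitz step is too crude: one must Taylor-expand the quadratic $f_t$ around an $\F_{t-m}$-measurable point, so that $\E[u_s\mid\F_{t-m}]=0$ kills the first-order perturbation contribution, and then bound the surviving second-order term.

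That surviving term is where the real work sits, and it is entirely absent from your proposal. After the expansion, the diagonal Hessian blocks $[\nabla^2 f_t]_{ii} = Y_{t-m+i}^{\top}(G^{[m-i]})^{\top}Q_t G^{[m-i]}Y_{t-m+i}$ are controlled in terms of $Y^{\top}Y$, \emph{not} in terms of $H_t = G_t^{\top}G_t$ (which is a sum over $i$, not a block-diagonal sum). So the $\log\det$ telescoping against $\hat A_t - \hat A_{t-1} = \tfrac{\eta\alpha}{2}H_t$ does not apply directly; one needs to relate $\sum_t Y_t^{\top}Y_t$ to $\sum_t H_t$, which is exactly the role of the convolution-invertibility modulus $\kappa(G)$ in Assumption~\ref{assumption:convolution-invertibility-modulus}, used via Lemma~\ref{lem:YtHt-inequality} inside a ``blocking'' argument (Proposition~\ref{prop:perturbation-loss}). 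Your sketch never invokes $\kappa(G)$ at all: the only place you cite Assumption~\ref{assumption:convolution-invertibility-modulus} is for operator-norm bounds controlling drift, not for the invertibility modulus. Since this is the load-bearing technical ingredient that makes the affine-memory structure useful, the proposal cannot close without it. (Your term (II) and the bias/delay handling in (III) are broadly consistent with the paper's Sections~\ref{sec:movement-cost} and \ref{sec:ons-regret-known}.)
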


\paragraph{Proof overview.} We decompose the regret into losses incurred by exploration perturbation, movement cost due to estimating the instantaneous loss with the unary form of loss evaluated at the current iterate, and the regret of bandit Newton step without memory. 
\begin{align*}
f_t(y_{t-m+1:t})-\bar{f}_t(x)= \underbrace{f_t(y_{t-m+1:t})-f_t(x_{t-m+1:t})}_{(\text{perturbation loss})}+\underbrace{f_t(x_{t-m+1:t})-\bar{f}_t(x_t)}_{(\text{movement cost})}+\underbrace{\bar{f}_t(x_t)-\bar{f}_t(x)}_{(\text{no-memory regret})}
\end{align*}
The affine memory structure together with regularity conditions allow us to bound each of the above terms by $\tilde{O}(\sqrt{T})$.

\subsection{Reduction from the Bandit LQ problem}
\label{subsec:reduction-control}

The \texttt{BNS-AM} algorithm and its regret guarantee for BQO-AM problems described in the previous section has significant applications in bandit LQ problems. In particular, recall the bandit LQ problem described in \cref{sec:control-prelims}. The bandit LQ problem can be indeed formulated as a BQO-AM problem up to negligible truncation errors. In particular, $\exists f_t$ satisfying Assumptions~\ref{assumption:affine-memory},\ref{assumption:diameter-grad-bound},\ref{assumption:adversary-adaptivity},\ref{assumption:convolution-invertibility-modulus} such that the control cost function $c_t$ can be expressed by $f_t$. The reduction is formally proved in Section~\ref{sec:reduction-control} and \ref{sec:regualrity-conditions-known}. This reduction makes it possible to directly adapt the \texttt{BNS-AM} algorithm to the control setting. The control algorithm, which we refer to as Newton Bandit Perturbation Controller (\texttt{NBPC}), is specified in Section~\ref{sec:nbpc-algorithm} with the following control regret guarantee. An extension of the result to unknown systems is included in Appendix~\ref{appendix:unknown}.

\begin{theorem} [\texttt{NBPC} regret, Algorithm~\ref{alg:bandit-control-known}]
\label{thm:nbpc-regret}
For $T,d_{\x},d_{\uv},d_{\y}\in\mathbb{N}$, consider a partially observable linear dynamical system (\cref{eq:lds}) with dynamics $(A,B,C)$ and state, control, and observation dimensions $d_{\x}, d_{\uv}, d_{\y}\in\mathbb{N}$ respectively, and a sequence of cost functions $\{c_t\}_{t=1}^T$ satisfying Assumptions~\ref{assumption:bounded-dynamics-and stability},\ref{assumption:adversary-adaptivity-control},\ref{assumption:control-loss-curvature}. Suppose the controller \texttt{NBPC} is run with inputs ($\M(m,R_{\M}), \eta, T, (A,B,C), K, \alpha$) with $m=\poly(\log T)$, DRC policy class $\M(m,R_{\M})$  (Definition~\ref{def:dac}), $\eta=\tilde{\Theta}\left(\frac{1}{\alpha\sqrt{T}}\right)$, strongly stabilizing controller $K$, and $\alpha>0$ given by Assumption~\ref{assumption:control-loss-curvature}. Then, \texttt{NBPC} satisfies the following regret guarantee:
\begin{align*}
\E[\cregret_T(\texttt{NBPC})]=\E\left[\sum_{t=1}^T c_t(\y_t,\uv_t)-\min_{M\in\M(m,R_{\M})}\sum_{t=1}^T c_t(\y_t^M,\uv_t^M)\right]\le \tilde{O}\left(\frac{\beta^2}{\alpha}\sqrt{T}\right). 
\end{align*}
\end{theorem}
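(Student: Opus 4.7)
The plan is to prove Theorem~\ref{thm:nbpc-regret} by reducing the bandit LQ control problem to the BQO-AM framework of Section~\ref{subsec:bqo-am-prelims} and then invoking Theorem~\ref{thm:bqo-m-regret}. Concretely, I would parametrize the learner's policy via DRC matrices $M \in \M(m,R_\M)$ and express the resulting observations and controls as affine functions of $M$ up to an exponentially small truncation error. Because the system is strongly stable in the sense of Assumption~\ref{assumption:bounded-dynamics-and stability}, with $A+BKC = HLH^{-1}$ satisfying $\|L\|\le 1-\gamma$, the effect of DRC iterates more than $m=\poly(\log T)$ steps in the past decays as $(1-\gamma)^m \le 1/T$. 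Hence the cost at time $t$ depends, up to $O(1/T)$ error, only on the last $m$ DRC iterates, matching the memory structure assumed in Section~\ref{subsec:BQO-AM-algo}.

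Given this affine dependence, I would write $\y_t \approx B_t^{\y} + \sum_{i=0}^{m-1} G^{\y,[i]} Y_{t-i}^{\y} M_{t-i}$ and similarly for $\uv_t$, where $B_t^{\y}, B_t^{\uv}$ collect the effects of disturbances $\w_t, \e_t$ together with the nominal controller $K$, and the $G^{[i]}$ encode the Markov parameters of the $K$-closed-loop system. Substituting into $c_t(\y_t, \uv_t) = \y_t^\top Q_t \y_t + \uv_t^\top R_t \uv_t$ then yields exactly the affine memory form of Assumption~\ref{assumption:affine-memory}, with $q_t$ quadratic and $\alpha I \preceq \text{Hess}(q_t) \preceq \beta I$ inherited from Assumption~\ref{assumption:control-loss-curvature}. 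Crucially, the quantities $H_t = G_t^\top G_t$ and $G_t = \sum_i G^{[i]} Y_{t-i}$ required by Line~5 of Algorithm~\ref{alg:bqo-memory} are computable from the known $(A,B,C)$ and past $\ynat$-values, so \texttt{NBPC} can be realized as \texttt{BNS-AM} acting on this reduced problem.

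Next, I would verify that Assumption~\ref{assumption:convolution-invertibility-modulus} holds in the reduction: exponential decay of $\|G^{[i]}\|_{\mathrm{op}}$ (and hence the $\ell_1$-operator-norm bound and the tail bound $\sum_{i \geq m} \|G^{[i]}\|_{\mathrm{op}} \leq R_G/T$) follows from strong stability, while the convolution invertibility-modulus $\kappa(G) = \Omega(1)$ follows from the controllability/observability structure of $(A,B,C)$ under $K$. The diameter and gradient bounds in Assumption~\ref{assumption:diameter-grad-bound} follow from bounding $\|\y_t\|, \|\uv_t\|$ in terms of $R_\M, R_{\w,\e}, \kappa_A, \kappa_B, \kappa_C, \kappa$, giving $B, L = \tilde{O}(\beta)$ after absorbing polylog/constant system factors. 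The $(t-m)$-adaptivity in Assumption~\ref{assumption:adversary-adaptivity} matches Assumption~\ref{assumption:adversary-adaptivity-control} since the disturbance and cost at time $t$ are $\FM_{t-m}$-measurable after conditioning on the past $m$ iterates. Theorem~\ref{thm:bqo-m-regret} then yields regret $\tilde{O}((\beta/\alpha) B^* \sqrt{T})$ with $B^* = B + (L + \beta\sqrt{m})\sqrt{m} = \tilde{O}(\beta)$, producing the advertised $\tilde{O}((\beta^2/\alpha) \sqrt{T})$ bound.

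The main obstacle I expect is rigorously establishing the positive convolution invertibility-modulus $\kappa(G) = \Omega(1)$ for the $G^{[i]}$ arising from the partially observable LQ reduction through a non-trivial $C$ matrix; this condition is what lets the preconditioner $\hat{A}_t$ accumulate meaningful curvature despite the restricted DRC parametrization. A secondary technical issue is carefully bookkeeping the truncation error incurred when replacing the full counterfactual trajectory under a non-stationary DRC sequence by its $m$-step affine approximation (the movement/perturbation decomposition in the proof overview of Theorem~\ref{thm:bqo-m-regret}), ensuring that strong stability plus $m = \poly(\log T)$ controls these errors to $o(\sqrt{T})$ rather than inflating the final bound.
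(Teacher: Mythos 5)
Your overall strategy matches the paper exactly: parametrize in the DRC class, express the cost as an affine-memory quadratic, verify the regularity conditions, and invoke Theorem~\ref{thm:bqo-m-regret}. The paper carries this out in Sections~\ref{sec:reduction-control}--\ref{sec:nbpc-algorithm}, where Lemma~\ref{lem:regularity-bounds} packages the constants $R_G,\tilde R_G,R_Y,B,D,L,\kappa(G)$ so that the BQO-AM bound gives $\tilde O\bigl(\tfrac{\beta}{\alpha}B^*\sqrt T\bigr)$ with $B^*=\tilde O(\beta)$, and the final proof decomposes the control regret into a $2mB$ burn-in, the BQO-AM memory regret against $\bar f_t(\embed(M^*))$, and the comparator truncation error $\sum_t\bar f_t(\embed(M^*))-c_t(\y_t^{M^*},\uv_t^{M^*})$, which is bounded in \cref{eq:comparator-loss-known} by $LR_GR_YD = \tilde O(1)$.

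The one genuine gap is your claim that $\kappa(G)=\Omega(1)$ ``follows from the controllability/observability structure of $(A,B,C)$ under $K$.'' Theorem~\ref{thm:nbpc-regret} does not assume controllability or observability, so a proof hinging on those would establish a weaker statement. In fact $\kappa(G)\ge\tfrac14\min\{1,\kappa^{-2}\}$ is not a controllability statement at all: the paper cites Lemma 3.1 of \citet{simchowitz2020making}, which exploits the specific monic structure of the DRC-to-output map --- the control $\uv_t$ depends on $M_t$ through a direct identity-coefficient pass-through (the $\uv$-row of the lifted Markov operator starts with $I$), so the forward convolution $u\mapsto(\sum_iG^{[i]}u_{\cdot-i})$ is lower-bounded in $\ell_2$ regardless of the system's reachability/observability properties. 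You correctly flag this step as the hard point, but the mechanism you propose would not close it; the right mechanism is structural in the parametrization, not spectral in $(A,B,C)$.

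A secondary, more minor omission: your sketch discusses truncation error for the learner's trajectory but does not explicitly identify the comparator-side truncation term $\sum_t\bar f_t(\embed(M^*))-c_t(\y_t^{M^*},\uv_t^{M^*})$ as a separate piece of the regret decomposition. That piece is necessary because Theorem~\ref{thm:bqo-m-regret} compares against the unary surrogate $\bar f_t$, not against the true counterfactual cost; in the paper this is dispatched in one line via the tail bound $\sum_{i\ge m}\|G^{[i]}\|_{\mathrm{op}}\le R_G/T$, giving an $\tilde O(1)$ contribution.
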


\section{Lower Bound for BCO-M}
\label{sec:bco-m}
As mentioned in the previous section, BCO-M is hard for general convex loss functions. In this section, we turn our attention to the general problem of BCO-M. 
We are interested in a lower bound under Assumption~\ref{assumption:adversary-adaptivity}.
We show a regret lower bound of $\tilde{\Omega}(T^{2/3})$ for this setting even for quadratic and smooth loss functions and $m=2$. This result holds even if we allow some degree of improper learning.

Inspired by the work of~\cite{dekel2014bandits}, where they showed a $\tilde{\Omega}(T^{2/3})$-regret lower bound for multi-armed bandits with switching costs, even if the switching costs part of the loss functions are given to the learner through a full information feedback model. To establish a lower bound in our setting, we note that \cite{sun2023optimal} showed that for strongly convex quadratic functions it is possible to attain $\tilde{O}(\sqrt{T})$ regret upper bound. The main relaxation of assumption in our setting is that we remove the strong convexity assumption, thus allowing quadratic loss of the form $(x_t-x_{t-1})^2$, which has a non-invertible Hessian. This form of quadratic losses has an intuitive connection to the switching costs analyzed by~\cite{dekel2014bandits}. 

However, directly reducing from the hard case constructed by~\cite{dekel2014bandits} is insufficient: (1) we are interested in a continuous decision space, and the switching cost at $0$ is non-smooth, and (2) if we are interested in quadratic switching costs, then the switching cost decays much faster around $0$, making direct reductions from the hard instance for discrete set inferior to the lower bound of $T^{2/3}$. However, with a modified instance, we are able to recover the $T^{2/3}$ lower bound, even in the setting where we allow improper learning.

\begin{theorem} [BCO-M lower bound]
\label{thm:lower-bound}
There exists a sequence of loss functions such that the regret incurred by any (possibly randomized) online bandit quadratic optimization with memory length $m\ge 2$ algorithm $\A$ is at least $\tilde{\Omega}(T^{2/3})$.
\end{theorem}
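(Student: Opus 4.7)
My plan is to adapt the multi-scale random walk construction of \cite{dekel2014bandits} from the discrete switching-cost setting to the continuous convex decision set with smooth quadratic memory term required here. The construction exhibits a family of loss sequences parametrized by a hidden Bernoulli variable $\chi\in\{-1,+1\}$, against which any algorithm must either incur linear regret for failing to identify $\chi$, or pay a quadratic movement cost to gather enough information to do so. Balancing these two forms of regret at a signal scale $\epsilon=\Theta(T^{-1/3})$ yields the desired $\tilde\Omega(T^{2/3})$ lower bound.

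Concretely, I would take memory length $m=2$, decision set $\K=[-1,1]$, signal parameter $\epsilon=\Theta(T^{-1/3})$, and a fixed $\rho=\Theta(1)$, draw $\chi$ uniformly in $\{-1,+1\}$, and set
\begin{align*}
f_t(x_{t-1}, x_t) \;=\; (\chi\epsilon + \xi_t)\, x_t \;+\; \rho (x_t - x_{t-1})^2,
\end{align*}
where $\{\xi_t\}$ is the multi-scale random walk of \cite{dekel2014bandits} whose cumulative variance on any window $W$ is $\Theta(W)$. The loss is quadratic and smooth in $(x_{t-1},x_t)$ with bounded Hessian, the benchmark $\min_{x\in\K}\sum_t\bar f_t(x)$ is attained at $x^*=-\chi$ with a gap of $\Theta(\epsilon T) = \Theta(T^{2/3})$ from the wrong sign, the $(t-m)$-adaptivity of \cref{assumption:adversary-adaptivity} is automatic since $\chi$ and $\{\xi_t\}$ are drawn obliviously, and improper learning is harmless since the linear term of $f_t$ grows unboundedly outside $\K$ so any improper play is strictly worse than its nearest projection onto $\K$. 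The lower bound then follows from a two-regime argument: any algorithm that fails to identify $\chi$ with probability at least $3/4$ from $T$ bandit observations incurs regret $\Omega(T\epsilon)=\Omega(T^{2/3})$ against the optimal $\pm 1$; any algorithm that does identify $\chi$ must, by Pinsker's inequality applied to the observation model's KL divergence of $O(\epsilon^2 \sum_t x_t^2 / \sigma^2)$, have $\sum_t x_t^2 = \Omega(1/\epsilon^2)$, which combined with the random walk's structure on short windows forces exploration at macroscopic amplitudes, contributing $\Omega(\rho/\epsilon^2) = \Omega(T^{2/3})$ in movement regret.

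The principal obstacle, as the paper itself notes, is the smoothness of the quadratic memory cost: a single movement of size $\delta$ contributes only $\rho\delta^2$, so the naive reduction from the discrete hard instance of \cite{dekel2014bandits} gives only $\Omega(\sqrt T)$. The critical modification is to use the multi-scale random walk to force macroscopic exploration: small-amplitude plays accrue essentially no information about $\chi$ over short windows because the cumulative noise grows at the same rate as any putative signal, so any useful exploration must involve moves of size $\Theta(1)$, whose quadratic cost is $\Theta(\rho)$ per session. The step that requires the most care is ruling out that the algorithm can ``smuggle'' information through many small movements of size $o(1)$: the Fisher-information scaling as $x_t^2$ (not $|x_t|$), combined with a Cauchy--Schwarz or telescoping argument linking cumulative squared plays $\sum_t x_t^2$ to cumulative squared movements $\sum_t (x_t-x_{t-1})^2$, is what yields the required lower bound on cumulative movement cost matching the discrete switching lower bound up to logarithmic factors.
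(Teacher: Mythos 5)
Your high-level strategy — a hidden Rademacher variable, the multi-scale random walk of Dekel et al., balancing identification cost against movement cost at scale $\eps=\Theta(T^{-1/3})$ — is the right shape, and you correctly identify the principal obstacle (quadratic movement cost makes slow drifts cheap). However, your construction has a fatal flaw in the \emph{signal} structure, and your proposed repair does not fix it.

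You place the signal as $\chi\eps\, x_t$ (with the random-walk noise either multiplying or added; either way the issue persists). This signal is nonzero whenever $x_t\neq 0$, so the algorithm can gain information about $\chi$ \emph{without moving at all}. Concretely, applying the standard chain-rule/subtraction analysis to the multi-scale random walk, the per-step KL between the two hypotheses is proportional to $\eps^2(x_t-x_{\rho(t)})^2/\sigma^2$, and the quantity $x_t-x_{\rho(t)}$ is \emph{not} a movement term: when $\rho(t)=0$ (which happens at each $t=2^i$) it is simply $x_t$, and more generally $t-\rho(t)$ can be as large as $T/2$. Reindexing the sum over cuts, the coefficient of each squared increment $(x_s-x_{s-1})^2$ is $\sum_{t\in\mathrm{cut}(s)}2^{v_2(t)}$, which can be $\Theta(T)$ rather than $\Theta(\log T)$. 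Balancing this against $\eps T$ yields a lower bound of only $\tilde\Omega(T^{1/3})$, not $T^{2/3}$. Your suggested fix — a Cauchy–Schwarz/telescoping argument linking $\sum_t x_t^2$ to $\sum_t(x_t-x_{t-1})^2$ — cannot work: playing $x_t\equiv 1$ gives $\sum_t x_t^2 = T$ while $\sum_t(x_t-x_{t-1})^2 = 0$, so no such inequality exists.

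The paper's construction is different precisely here: the signal is $\eps\ell^*(x_t - x_{t-2})$, i.e.\ a \emph{difference} of iterates, and the loss is treated as an $\F_{t-2}$-adaptive adversary (the dependence on $x_{t-2}$ is allowed by Assumption~\ref{assumption:adversary-adaptivity}, and cancels against the comparator in the regret). Because the signal is itself a movement quantity, the KL per step becomes proportional to $\eps^2\big[(x_t-x_{t-2})-(x_{\rho(t)}-x_{\rho(t)-2})\big]^2/\sigma^2$, which splits into four movement terms $(x_s-x_{s-1})^2$ with each appearing at most $O(\log T)$ times across $t$ (bounded by the width of the random walk tree). Only then is the total KL bounded by $\tilde{O}(\eps^2 M/\sigma^2)$ with $M$ the actual movement cost, and the two-regime balance gives $\tilde\Omega(T^{2/3})$. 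Your observation that ``the switching cost decays much faster around $0$'' is exactly the obstruction the paper identifies, but the cure is to make the signal vanish under constant play, not to change the noise process; the multi-scale random walk is already used by Dekel et al.\ and by itself does not close the gap.

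Minor additional points: the paper sets $\sigma = 1/(2\log T)$ so that $\max_t|\bar n_t|$ is $O(1)$ w.h.p.\ (keeping the losses bounded while letting $\eps$ be nearly $T^{-1/3}$), and the improper-learning slack is handled by pushing the comparator to $\delta$ or $1-\delta$ rather than by coercivity of the linear term — your argument that improper play is ``strictly worse than its nearest projection'' would need the constraint-set geometry to be argued more carefully.
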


\paragraph{Proof idea.} We briefly sketch the idea of the loss construction here. The loss function at time $t$ consists of three parts: a scaled linear component with a random sign, a random process indexed at $t$, and a moving cost between two consecutive iterates. The best strategy depends on the random sign, which is chosen by the adversary ahead of time. For a sufficiently small scale on the linear component, the learner needs to move sufficiently to learn the random sign from bandit feedback and incur the corresponding moving cost, leading to the $\tilde{\Omega}(T^{2/3})$ lower bound. 

A matching upper bound for quadratic and smooth functions in the presence of $\F_{t-m}$-adaptive adversary can also be established. We believe that the improper leaning algorithm in \citep{cassel2020bandit} can also be generalized to scenarios where the adversary is allowed to be $\F_{t-m}$-adaptive. However, since this was not spelled out explicitly, we give an alternative first-order proper learning algorithm for completeness in Section~\ref{sec:bco-m-upper-bound}.
\section{Conclusions, discussions, and future work}

In this paper, we considered a bandit version of the Online Newton Step algorithm that attains near-optimal regret bounds for a large class of convex loss functions we call $\kappa$-convex with a low dimension dependence. 
In addition to the application to bandit convex optimization, we showed how our methods close an open problem in online control and separated the difficulty of bandit LQR/LQG and the general BCO-M problem. 
Many interesting open questions remain: 
\begin{itemize}[leftmargin=*]
\setlength{\itemsep}{1pt}
\item \textbf{Proper BCO for $\kappa$-convex losses.} \texttt{BNS} is improper. This prohibits applications to problems where proper learning is required (e.g. portfolio selection). 
It is interesting to see whether there exists efficient proper learning algorithm that achieves a $O(\sqrt{T})$ regret with reasonable dependence on the dimension for the class of $\kappa$-convex problems. 


\item \textbf{Extension to nonstochastic control with general cost functions.} It is worth exploring if tighter bounds are achievable for bandit linear control with general convex loss functions with adversarial perturbations.

\item \textbf{Lower bound for oblivious adversary.} If the adversary is oblivious in BCO-M problems, we are interested in whether $\Omega(T^{2/3})$ regret lower bound can be achieved. 
\end{itemize}


\acks{EH and JS gratefully acknowledge funding from the National Science Foundation, the Office of Naval Research, and Open Philanthropy.}

\bibliography{main}

\newpage
\tableofcontents
\newpage

\appendix
\section{Discussion on $\kappa$-convexity}
\label{sec:kappa-convex}
In this section, we provide a more elaborated discussion on the curvature assumption of $\kappa$-convexity (Definition~\ref{def:kappa-c}, Section~\ref{sec:contributions-bco}). It is easy to see that $\kappa$-convexity is implied by linearity (by taking $H=0$ in Definition~\ref{def:kappa-c}) and by strong convexity together with smoothness (by taking $H=I$, and $c,C$ to be the strong convexity and smoothness parameters). 

A common relaxation of strong convexity used in the literature of Newton step based algorithms to obtain optimal rates is exp-concavity, which requires strong convexity in the direction of the gradient. The class of functions that satisfy exp-concavity is formally given by the following definition. 
\begin{definition}[Exp-concave functions, \citep{hazan2022oco}]
A convex function $f:\mathbb{R}^n\rightarrow\mathbb{R}$ is $\alpha$-exp-concave over a convex compact set $\K\subset\mathbb{R}^n$ if $g:\K\rightarrow\mathbb{R}$ given by $g(x)=e^{-\alpha f(x)}$ is concave. Equivalently, if $f$ is twice differentiable, $\nabla^2 f(x)\succeq \alpha\nabla f(x)\nabla f(x)^{\top}$ holds for all $x\in\K$. 
\end{definition}

The motivation for considering exp-concave functions goes beyond their appealing curvature properties, which enable optimal regret guarantees. Many popular losses that do not satisfy strong convexity are exp-concave, such as the squared loss used in linear regression, logistic loss for classification, and logarithmic loss for portfolio optimization. 

Unfortunately, the conditions of exp-concavity and $\kappa$-convexity are not directly comparable: linear functions do not satisfy exp-concavity, while $\kappa$-convexity requires smoothness assumption and uniform bounds on the Hessian over $\K$. Luckily, $\kappa$-convexity does capture a rich class of functions whose curvature properties lie between convexity and strong convexity, encompassing the most popular exp-concave loss considered in practice. To see this, first we establish Observation~\ref{obs:kappa-c-expressive-d}, which states $\kappa$-convexity for the class of functions that can be written as a composition of an affine function and a strongly convex and smooth function. 

\begin{observation}
\label{obs:kappa-c-expressive-d}
Let $\K\subset\mathbb{R}^n$ be a convex compact set. For $n,d\in\mathbb{N}$, $\alpha,\beta>0$,
consider the class of functions composed of an affine function and a strongly convex and smooth function
\begin{align}
\label{eq:nice-losses}
\mathcal{L}_{\alpha,\beta,n,d}(\K)=\{f:\mathbb{R}^n\rightarrow\mathbb{R}\mid &f(x)=g\circ\ell (x),  \ \ell\in\mathcal{C}^2:\mathbb{R}^n\rightarrow\mathbb{R}^d, \ \nabla^2\ell=0, \nonumber \\
&g\in\mathcal{C}^2:\mathbb{R}^d\rightarrow\mathbb{R}, \ \alpha I_d\preceq \nabla^2 g(z)\preceq \beta I_d, \ \forall z\in\ell(\K)\}.
\end{align}
$f\in\mathcal{L}_{\alpha,\beta,n,d}(\K)$ is $\kappa$-convex on $\K$ with $\kappa=\frac{\beta}{\alpha}$ and $H=\nabla\ell\nabla\ell^{\top}$. 
\end{observation}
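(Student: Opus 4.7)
The plan is to carry out a direct chain-rule calculation and conclude with a normalization step. Since $\nabla^2 \ell = 0$, the map $\ell$ is affine, so I can write $\ell(x) = Ax + b$ for a constant $A \in \mathbb{R}^{d \times n}$ and $b \in \mathbb{R}^d$. Convexity of $f = g \circ \ell$ on $\K$ follows from convexity of $g$ (implied by strong convexity) together with the affineness of $\ell$. The curvature of $\ell$ contributes nothing to the chain rule, so I would use the clean identity $\nabla^2 f(x) = A^\top \nabla^2 g(\ell(x))\, A$ as the workhorse of the proof.

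Next, I would invoke the strong convexity and smoothness of $g$ on $\ell(\K)$ in the form $\alpha I_d \preceq \nabla^2 g(z) \preceq \beta I_d$, and conjugate by $A$. For any $v \in \mathbb{R}^n$, setting $u = Av$ and applying the two-sided bound to $u$ yields $\alpha\, v^\top A^\top A v \leq v^\top \nabla^2 f(x) v \leq \beta\, v^\top A^\top A v$, i.e., $\alpha\, A^\top A \preceq \nabla^2 f(x) \preceq \beta\, A^\top A$ uniformly for $x \in \K$. Identifying $A^\top A$ with $\nabla \ell (\nabla \ell)^\top$ in the paper's notation, this already gives the desired sandwich with $c = \alpha$, $C = \beta$, and ratio $C/c = \beta/\alpha = \kappa$.

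The only mild bookkeeping issue, which I flag as the sole obstacle, is that Definition~\ref{def:kappa-c} requires the normalization $H \preceq I$, whereas $A^\top A$ can have arbitrary operator norm. I would simply rescale: replace $H$ by $A^\top A / \|A^\top A\|_{\mathrm{op}}$ and absorb the scalar $\|A^\top A\|_{\mathrm{op}}$ into both $c$ and $C$. The ratio $C/c$ is invariant under this rescaling, so the claimed value $\kappa = \beta/\alpha$ is preserved while the sandwich $cH \preceq \nabla^2 f(x) \preceq CH$ continues to hold and $H \preceq I$ is now satisfied. The degenerate case $A = 0$, in which $f$ is constant, is dispatched trivially by taking $H = 0$. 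No step involves anything beyond elementary linear algebra, which is why I do not expect any genuine technical difficulty.
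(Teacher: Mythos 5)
Your proof is correct and follows the argument the paper clearly intends (the paper states this observation without proof): since $\nabla^2\ell = 0$ forces $\ell(x) = Ax + b$, the chain rule gives $\nabla^2 f(x) = A^\top \nabla^2 g(\ell(x)) A$, and conjugating $\alpha I_d \preceq \nabla^2 g \preceq \beta I_d$ by $A$ yields the sandwich $\alpha A^\top A \preceq \nabla^2 f(x) \preceq \beta A^\top A$ with $A^\top A = \nabla\ell(\nabla\ell)^\top$ and ratio $\beta/\alpha$. You are also right to flag the normalization $H \preceq I$, which the paper's statement glosses over; your rescaling of $H$ by $\|A^\top A\|_{\mathrm{op}}$ (absorbing the scalar into $c$ and $C$, leaving $\kappa$ unchanged) is the correct fix, and indeed the paper's downstream examples implicitly impose $\|A\|_{\mathrm{op}} \le 1$ (e.g.\ $\|x_t\|_2 \le 1$ in logistic regression) so that no rescaling is needed there.
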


In Observation~\ref{obs:examples}, we further note that many popular loss functions of interests are described by the function class in Observation~\ref{obs:kappa-c-expressive-d}. 

\begin{observation}
\label{obs:examples}
The following loss functions belong to $\mathcal{L}_{\alpha,\beta,n,d}(\K)$ defined in \cref{eq:nice-losses} for any convex compact domain $\K\subset\mathbb{R}^n$ and bounded $\mathcal{X}\subset\mathbb{R}^n, \mathcal{Y}\subset\mathbb{R}$:
\begin{itemize}
\item (Squared loss in linear regression) $w\mapsto (y-w^{\top}x)^2$, $x\in\mathcal{X}, y\in\mathcal{Y}$.
\item (Logistic loss in classification) $w\mapsto \log(1+\exp(-y\cdot x^{\top}w))$, $x\in\mathcal{X}, y\in\mathcal{Y}$.
\item (Logarithmic loss in portfolio optimization) $w\mapsto -\log(w^{\top}x)$, $x\in\mathcal{X}$.
\end{itemize}
\end{observation}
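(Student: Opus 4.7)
The plan is to verify, for each of the three losses, an explicit affine map $\ell$ and a scalar strongly convex and smooth $g$ such that the loss equals $g\circ \ell$, and then read off uniform Hessian bounds on $\ell(\K)$. In all three cases the codomain of $\ell$ will be one-dimensional ($d=1$), so the condition $\alpha I_d \preceq \nabla^2 g(z) \preceq \beta I_d$ reduces to $\alpha \le g''(z) \le \beta$ for $z$ in a bounded real interval, which we can derive from the assumed boundedness of $\K$, $\mathcal{X}$, $\mathcal{Y}$.

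First I would set up each decomposition. For the squared loss, take $\ell(w) = y - w^{\top}x$, which is affine in $w$ (so $\nabla^2\ell=0$), and $g(z)=z^2$ with $g''(z)=2$ everywhere; thus $\alpha=\beta=2$ and the membership is immediate. For the logistic loss, take $\ell(w) = -y\cdot x^{\top} w$, affine in $w$, and $g(z) = \log(1+e^z)$; then $g''(z) = e^z/(1+e^z)^2 = \sigma(z)(1-\sigma(z))$, which is positive and analytic on all of $\mathbb{R}$. For the logarithmic loss, take $\ell(w) = w^{\top}x$, affine in $w$, and $g(z) = -\log z$ with $g''(z) = 1/z^2$, positive on $(0,\infty)$.

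Next I would justify uniform upper and lower bounds on $g''$ over the image $\ell(\K)$. Since $\K$, $\mathcal{X}$, $\mathcal{Y}$ are bounded, $\ell(\K)$ is a bounded interval in each case: say $\ell(\K)\subseteq[a,b]$. For the logistic loss, $g''$ is continuous and strictly positive on the compact interval $[a,b]$, so it attains a strictly positive minimum $\alpha$ and a finite maximum $\beta$, giving $\kappa = \beta/\alpha$ depending on the ambient bounds. For the logarithmic loss, the standard convention in portfolio optimization assumes $x$ lies in the simplex (or a suitable bounded positive cone) and $\K$ is chosen so that $w^{\top}x\ge \delta>0$ for all admissible $(w,x)$; then $\ell(\K)\subseteq[\delta, \|x\|_2\,\mathrm{diam}(\K)]$, and $g''(z)=1/z^2$ is again bounded between $1/\beta'^2$ and $1/\alpha'^2$ on this interval. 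I would note this positivity-of-$w^{\top}x$ assumption explicitly, since it is implicit in the statement that the logarithmic loss is well-defined and convex.

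Finally, I would invoke Observation~\ref{obs:kappa-c-expressive-d} to conclude that each loss is $\kappa$-convex on $\K$ with $\kappa = \beta/\alpha$ and the rank-one PSD matrix $H = \nabla\ell\,\nabla\ell^{\top} = xx^{\top}$ (up to the sign factor $y$, which drops out upon the outer product). The only nontrivial step is the logarithmic case, where one must be careful to restrict to a domain on which $w^{\top}x$ is bounded away from zero; this is the main (mild) obstacle, and it is standard. The squared and logistic cases are essentially by inspection once the decomposition is written down.
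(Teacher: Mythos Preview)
Your proposal is correct and is precisely the natural verification the paper expects; the paper itself does not give a standalone proof of this observation, treating it as immediate. Indeed, the paper later carries out essentially your logistic-loss Hessian computation in the applications section (bounding $\nabla^2 f_t(w)$ between $\tfrac{e^{-D}}{(1+e^{-D})^2}\,xx^\top$ and $\tfrac{1}{2}\,xx^\top$) and simply asserts $\kappa=1$ for the squared loss, so your decompositions and compactness argument match the intended reasoning.
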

In \cref{sec:applications}, we will further discuss the implications of our improved bandit algorithm on the three examples described in Observation~\ref{obs:examples}, including comparisons with previous results in these problems.

\section{Applications of Bandit Newton Step } \label{sec:applications}
In this section, we illustrate the applicability of our bandit Newton method (Algorithm~\ref{alg:simple-bqo}) to a few problems where it achieves optimal regret guarantees. First, we acknowledge that since the algorithm in \citet{bubeck2021kernel} works for a richer class of loss functions than ours, applying their algorithm will also give the optimal regret bounds in terms of $T$ in these applications. However, the high polynomial dependence on the dimension ($d^{9.5}$) and the $\mathrm{poly}(d,T)$ amortized running time make the algorithm by \cite{bubeck2021kernel} impractical in many real-world applications. Therefore, for simplicity, we are going to exclude the comparison with \cite{bubeck2021kernel} in this section.

\subsection{Online Logistic Regression}
\label{sec:logistic-regression}
Logistic regression is a widely utilized and effective technique for performing classification tasks. For simplicity, we consider binary classification. In this framework, the goal is to predict the probability that a given input point belongs to one of two possible classes. These classes are typically represented by the label set $\Y=\{-1,1\}$. The foundational assumption of logistic regression is that the log-odds, or the logarithm of the odds ratio between the two classes, can be linearly modeled by the input features. Mathematically, this relationship is expressed as:
\begin{align*}
\log\left(\frac{\mathbb{P}(Y=1\mid X=x)}{1-\mathbb{P}(Y=1\mid X=x)}\right)=w^{\top}x+b,
\end{align*}
where $x$ denotes the feature vector, $w$ represents the weight vector, and $b$ is the bias term. This formulation provides a linear decision boundary in the feature space.

In the online setting of logistic regression, the learning process is sequential and adaptive. At each time step $t$, an adversary presents a new instance, consisting of a feature vector and a label $(x_t,y_t)\in\mathbb{R}^d\times \{-1,1\}$. Subsequently, the learning algorithm selects a linear predictor $w_t$ from a set $\mathcal{W}=\{w\in\mathbb{R}^d:\|w\|_2\le D\}$, where $\|w\|_2\le D$ ensures that the chosen predictor does not have excessively large weights, thereby controlling the model's complexity. 
The performance of the predictor $w_t$ is then evaluated based on the incurred loss, which, for logistic regression, is defined as:
\begin{align*}
f_t(w_t)=f(w_t;x_t, y_t)=\log(1+\exp(-y_t\cdot x_t^{\top}w_t)).
\end{align*}
In the bandit framework, where the sequence of labeled data may be confidential and not directly observable by the learner, the learner is only exposed to a scalar classification cost, denoted as $f_t(w_t)$.
It is often assumed for simplicity that $\|x_t\|_2\le 1$, $\forall t$, in which case the Hessian of $f_t$ satisfies
\begin{align*}
\frac{e^{-D}}{(1+e^{-D})^2}x_tx_t^{\top}\preceq \nabla^2 f_t(w_t) = \frac{\exp(-y_t\cdot x_t^\top w_t)} {(1 + \exp(-y_t\cdot x_t^\top w_t))^2} \cdot x_tx_{t}^{\top}\preceq \frac{1}{2}x_tx_t^{\top}.
\end{align*}
This shows that $f_t$ is $\kappa$-convex with $\kappa=\exp(D)$, making it suitable to apply Algorithm~\ref{alg:simple-bqo}. 

\begin{corollary}[Regret bound for bandit logistic regression] 
\label{cor:logistic-regression-regret}
Suppose a learner is playing over a decision set $\mathcal{W}$ with diameter bound $D$ against an oblivious adversary picking a labeled vector $(x_t,y_t)\in\mathbb{R}^d\times \{-1,1\}$ with $\|x_t\|_2\le 1$, then Algorithm~\ref{alg:simple-bqo} guarantees
\begin{align*}
\E[\regret_T]=\E\left[\sum_{t=1}^T f(w_t; x_t, y_t) - \min_{w\in\mathcal{W}}\sum_{t=1}^T f(w; x_t,y_t)\right]\le O(d^{2.5}e^{2D}\sqrt{T}).
\end{align*}
\end{corollary}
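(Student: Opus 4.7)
The plan is to derive Corollary~\ref{cor:logistic-regression-regret} as a direct instantiation of Theorem~\ref{thm:expected_regret_bound}, by verifying that bandit logistic regression satisfies Assumptions~\ref{assumption:oblivious-adversary}, \ref{assumption:curvature}, and~\ref{assumption:bounded-range-and-grads} with parameters that yield the claimed bound. Obliviousness of the adversary is given by hypothesis, so the substantive work reduces to checking $\kappa$-convexity and bounding function values and gradients, and then invoking the guarantee of \texttt{BNS}.

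First, I would read off $\kappa$-convexity from the Hessian computation already displayed in the statement. Writing $\sigma$ for the sigmoid, differentiating twice gives $\nabla^2 f_t(w) = \sigma(y_t x_t^\top w)(1 - \sigma(y_t x_t^\top w))\, x_t x_t^\top$; together with $|y_t x_t^\top w| \le D$ on $\mathcal{W}$ and $\|x_t\|_2 \le 1$, taking $H_t = x_t x_t^\top \preceq I$, $c = e^{-D}/(1+e^{-D})^2$ and $C = 1/2$ certifies Assumption~\ref{assumption:curvature} with $\kappa = C/c = (1+e^{-D})^2/(2 e^{-D}) = O(e^{D})$. For Assumption~\ref{assumption:bounded-range-and-grads}, the bound $|y_t x_t^\top w| \le D$ gives $|f_t(w)| \le \log(1 + e^D) = O(D)$, so $B = O(D)$, while the gradient $\nabla f_t(w) = -y_t \sigma(-y_t x_t^\top w)\, x_t$ has norm at most $\|x_t\|_2 \le 1$, so $L = O(1)$. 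Substituting into $B^{*} = B + \sqrt{2}(L + \sqrt{2}\,C)$ yields $B^{*} = O(D)$.

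The last step is to plug these parameters, together with $\mathrm{diam}(\mathcal{W}) = 2D$ and $\kappa' = \kappa$, into the general regret expression of Theorem~\ref{thm:expected_regret_bound}, and choose $\eta$ of order $1/(d^{3/2} B^{*} \sqrt{T \log(dT^2)})$ as prescribed there (with constants adapted for $\mathrm{diam}(\mathcal{W}) \neq \sqrt{d}$). Inspection of the three summands shows that the first scales as $\tilde{O}(D^2 d^{3/2} B^{*} \sqrt{T})$, the second as $\tilde{O}(d^{1/2} B^{*} \sqrt{T})$, and the dominant third term $2 d \kappa \kappa' \log(\cdot)/\eta$ evaluates to $\tilde{O}(d^{5/2} \kappa^{2} B^{*} \sqrt{T}) = \tilde{O}(d^{5/2} e^{2D} \sqrt{T})$; polynomial factors of $D$ arising from $B^{*}$ and from $\mathrm{diam}^2$ are absorbed into the $O(\cdot)$ hiding $D$-dependent constants. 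Because the entire derivation is verification plus substitution, I foresee no conceptual obstacle; the only care required is tracking how the diameter $2D$ and the value bound $B = O(D)$ interact with the theorem's formula so that the $\kappa^{2} = e^{2D}$ factor is the one that surfaces in the final rate.
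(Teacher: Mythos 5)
Your proposal is correct and is essentially the same instantiation the paper carries out: compute the logistic Hessian to certify $\kappa$-convexity with $H_t = x_t x_t^\top$, $c = e^{-D}/(1+e^{-D})^2$, $C=1/2$, hence $\kappa = O(e^D)$, check $B=O(D)$ and $L=O(1)$, and then plug into Theorem~\ref{thm:expected_regret_bound} with $\kappa'=\kappa$ so that the $d^{2.5}\kappa^2 B^*\sqrt{T}$ term yields $O(d^{2.5} e^{2D}\sqrt{T})$ after absorbing polynomial factors in $D$ into the exponential. One small point worth noting (which the paper also glosses over) is that Assumption~\ref{assumption:curvature} requires the Hessian bounds on $\K+\ball^d$, not just $\K$, so one should really take $|y_t x_t^\top w|\le D+1$ there; this only multiplies $\kappa$ by a constant and does not affect the conclusion.
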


Let's take a moment to compare the result of Corollary~\ref{cor:logistic-regression-regret} with existing results. To the best of our knowledge, this is the \textit{first} bandit logistic regression algorithm that works against adversarially chosen labeled vectors that achieves an \textit{optimal} regret bound in $T$ and $d^{5/2}$ dependence on $d$. 
Table~\ref{tab:logistic_regression_comparisons} summarized some of the state-of-art results in the field. First, let's focus on the logistic bandits setting. \cite{hazan2011newtron} and \cite{foster2018logistic} both studied the problem which they refer to as bandit multi-class prediction, which we refer to as the semi-bandit feedback model in Table~\ref{tab:logistic_regression_comparisons}. In this setting, although the learner accesses loss through bandit feedback, the vector $x_t$ is revealed to the learner. The more general case where both the vector and the label are given to the learner through bandit feedback was studied only under stochastic assumption on the input vector, in which case $\tilde{O}(e^D\sqrt{T})$ and $\tilde{O}(\sqrt{T})$ bounds were achieved by \cite{abeille2017linear} and \cite{dong2019performance} for frequentist and Bayesian regret bounds, respectively.

One question remaining is regarding the exponential dependence on $D$ in the bound of Corollary~\ref{cor:logistic-regression-regret}. To shed some light on this question, we consider the full information setting. \cite{hazan2007logarithmic} showed that running online newton step gives the optimal dependence ($\log T$) in $T$ but suffers an exponential factor in terms of the diameter $D$, leading to the open problem of whether a $\tilde{O}(\mathrm{poly}(D))$ regret bound is attainable in \cite{mcmahan2012open}. Later, \cite{hazan2014bandit} showed a negative result that for $d\ge 2$, the regret for this problem is at least $\tilde{\Omega}(e^D\vee \sqrt{DT})$ if the algorithm is proper, i.e. $w_t\in\mathcal{W}$. \cite{foster2018logistic} showed that when allowing improper learning, $\tilde{O}(1)$-regret bound is attainable. It will be an interesting open problem to investigate whether the exponential dependence on $D$ can be dropped in logistic bandits. 

\begin{table*}[hbt!]
	\centering
	\small 
	{
		\begin{tabular}{c||c|c|>{\centering\arraybackslash}p{1cm}|c|c|c} 
			\hline
			Paper & Feedback & Advers. & Proper & Regret & Comp. & Note  
			\\ [0.5ex]
			\hline\hline
            \cite{hazan2007logarithmic}  & full  &   $\checkmark$ & $\checkmark$  & $\tilde{O}(e^D)$ & $O(d^2)$
			\\\hline
            \cite{hazan2014logistic}  & full  &   $\times$ & $\checkmark$  & $\tilde{\Omega}(e^{D}\vee \sqrt{DT})$ & --
			\\\hline
			\cite{foster2018logistic}  & full  &  $\checkmark$ & $\times$  & $\tilde{O}(1)$  & $\mathrm{poly}(d,T)$
			\\\hline
            \cite{hazan2011newtron}  & semi-bandit  & $\checkmark$ & $\checkmark$ & $\tilde{O}(e^D \wedge DT^{2/3})$  & $O(d^2)$
			\\\hline
            \cite{foster2018logistic}  & semi-bandit  & $\checkmark$ & $\times$ & $\tilde{O}(e^D \wedge \sqrt{T})$ & $\mathrm{poly}(d,T)$
			\\\hline
            \cite{dong2019performance}  & bandit  & $\times$ & $\checkmark$ & $\tilde{O}(\sqrt{T})$  & $\mathrm{poly}(d)$ & Bayesian
			\\\hline
             \cite{faury2022jointly}  & bandit  & $\times$ & $\checkmark$ & $\tilde{O}(e^D \vee \sqrt{T})$  & $O(d^2)$ & frequentist
			\\\hline
			\textbf{Corollary~\ref{cor:logistic-regression-regret}} &  bandit & $\checkmark$ & $\times$ & $O(e^{2D}\sqrt{T})$& $O(d^{2})$
			\\\hline
		\end{tabular}
	}
 \caption{Comparison with relevant prior works for online logistic regression. $\tilde{O}, \tilde{\Omega}$ in the regret column hide all parameters other than $D,T$ and logarithmic factors in $D,T$.}
	\label{tab:logistic_regression_comparisons}
\end{table*}

\subsection{Online Linear Regression with Limited Observations}
Consider the standard online learning setting for linear regression problems. At each time $t$, the learner is supplied with a vector $x_t\in\mathcal{X}\subset\mathbb{R}^d$ and is asked to output a weight vector $w_t\in\mathcal{W}\subset\mathbb{R}^d$ such that the learner's label prediction for $x_t$ is $\hat{y}_t=w_t^{\top}x_t\in \mathbb{R}$. After the learner outputs $w_t$, the adversary reveals the true label $y_t$, to which the learner suffers loss $f(w_t;x_t)=(\hat{y}_t-y_t)^2$. 

Previous works such as \citep{cesa2011efficient, hazan2012linear, bullins2016limits} have considered this framework in the setting of limited observations. In the limited observation setting, the learner observes only a subset of the attributes in $x_t$. We consider a more generalized version of the problem, where the learner does not necessarily observe $(x_t,y_t)$ at all but simply a scalar loss of $f(w_t;x_t)\in\mathbb{R}_+$. Note that the loss is $\kappa$-convex with $\kappa=1$.

\begin{corollary} [Regret bound for bandit linear regression]
Suppose a learner is playing over a decision set $\mathcal{W}$ with diameter bound $D_{\mathcal{W}}$ against an oblivious adversary picking a labeled vector $(x_t,y_t)$, then Algorithm~\ref{alg:simple-bqo} guarantees
\begin{align*}
\E[\regret_T]=\E\left[\sum_{t=1}^T f(w_t;x_t)-\min_{w\in\mathcal{W}}\sum_{t=1}^T f(w;x_t)\right]\le O(d^{2.5}\sqrt{T}).
\end{align*}
\end{corollary}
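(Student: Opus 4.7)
The plan is to apply Theorem~\ref{thm:expected_regret_bound} (the main \texttt{BNS} regret bound) directly to the squared loss, after verifying that Assumptions~\ref{assumption:curvature} and \ref{assumption:bounded-range-and-grads} hold with favorable constants. The oblivious adversary assumption (Assumption~\ref{assumption:oblivious-adversary}) is immediate from the statement, and the compactness/boundedness of $\mathcal{X}$ and $\mathcal{Y}$ is implicit in the problem setup (cf.\ Observation~\ref{obs:examples}). After $\sqrt{d}$-diameter normalization, the result will follow by plugging $\kappa=1$ and constant-order $B^*$ into the theorem.

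First, I would establish $\kappa$-convexity. The instantaneous loss is $f_t(w) = (w^\top x_t - y_t)^2$ with constant Hessian $\nabla^2 f_t(w) = 2\, x_t x_t^\top$, independent of $w$. Normalizing so that $\|x_t\|_2 \le 1$ and taking $H_t = x_t x_t^\top$, we have $0 \preceq H_t \preceq I$ and $2 H_t \preceq \nabla^2 f_t(w) \preceq 2 H_t$, so Assumption~\ref{assumption:curvature} holds with $c = C = 2$ and $\kappa = 1$. This is the key structural observation: unlike logistic regression, the squared loss is exactly $1$-conditioned along the data direction, which is why the $\kappa^2$ factor in Theorem~\ref{thm:expected_regret_bound} does not inflate the bound.

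Second, I would verify Assumption~\ref{assumption:bounded-range-and-grads}. Using the diameter bound $D_{\mathcal{W}}$, the normalization $\|x_t\|_2 \le 1$, and the assumed boundedness of $|y_t|$, the residual $|w^\top x_t - y_t|$ is bounded by an absolute constant, giving a loss bound $B$ and gradient bound $L = 2|w^\top x_t - y_t|\cdot\|x_t\|_2$ that are both $O(1)$ in $d$ and $T$. Consequently $B^\ast = B + \sqrt{2}(L + \sqrt{2}C)$ is a constant.

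Finally, I would invoke Theorem~\ref{thm:expected_regret_bound} with $\kappa = \kappa' = 1$ and $\mathrm{diam}(\mathcal{K}) = \sqrt{d}$ (which can be arranged after rescaling, absorbing $D_{\mathcal{W}}$ into the big-$O$), choosing $\eta = \Theta(1/(d^{3/2} B^\ast \sqrt{T \log(dT^2)}))$ at the boundary of the admissible range. The three summands in the theorem's regret expression, namely the diameter term $\mathrm{diam}(\mathcal{K})^2/(2\eta)$, the exploration term $3\eta d^2 (B^\ast)^2 T$, and the log-determinant term $2d\kappa\kappa'\log(1 + \eta C T/\kappa')/\eta$, all evaluate to $\tilde{O}(d^{5/2}\sqrt{T})$, yielding the claimed bound. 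The whole argument is essentially a substitution, with no substantive obstacle beyond the routine verification of $\kappa$-convexity with $\kappa=1$; there is no analogue of the exponential $e^{2D}$ factor encountered in the logistic case.
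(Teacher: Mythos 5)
Your proposal is correct and follows the same approach as the paper: the paper observes in the sentence preceding the corollary that the squared loss is $\kappa$-convex with $\kappa=1$ (since $\nabla^2 f(w;x_t) = 2x_tx_t^\top$ exactly, so $c=C$), and then the bound is immediate from Theorem~\ref{thm:expected_regret_bound} with $\kappa=\kappa'=1$ and constant $B^*$. Your write-up simply spells out the routine verification of Assumptions~\ref{assumption:curvature} and \ref{assumption:bounded-range-and-grads} that the paper leaves implicit.
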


\ignore{  
\subsection{Bandit PCA and Eigenvector Learning}

The problem of online spectral learning was initially considered in the context of variance minimization \citep{warmuth2006online,garber2015online,allen2017follow}.
In this problem, the learner predicts a vector $x_t$, and obtains a loss according to a quadratic function, which is not necessarily convex, $x_t^\top M x_t$. 
}

\ignore{
\subsection{Bandit Learning of Quantum States}

\eh{literature: shaddow tomography of Scott, previous online learning of quantum states. The bandit version of this problem is called "Quantum process learning", see paper by Huang and Preskill (\cite{huang2023learning})}

In the online version of quantum tomography, also called online learning of quantum states, a learner iteratively predicts a quantum state over $d$ cubits $\omega_t \in \mathcal{C}^{2^d \times 2^d} \ , \ \omega_t = \omega_t^\dagger \  , \ \omega_t \succeq 0 \ , \  \trace(\omega_t) = 1$. The learner then suffers a loss according to an adversarially chosen measurement matrix $E_t \in \mathcal{C}^{2^d \times 2^d}$, which is a Hemitian matrix with eigenvalues in $[0,1]$. The loss is given by 
$$ \ell_t(\omega_t) = (w_t \bullet E_t - b_t)^2 , $$
where $b_t$ usually represents the outcome of a measurement with an underlying quantum state. 

The loss functions of online quantum state learning are $\kappa$-convex, since the Hessian is given by the matrix $E_t E_t^\top$. The bandit version of this problem can thus be solved with optimal regret, and via our efficient Newton-based algorithm. 
}

\ignore{
\subsection{Applications of BCO algorithm}
\subsection{Generalized Linear Bandits}
\paragraph{Stochastic Setting.} \cite{filippi2010parametric}
\paragraph{Adversarial Setting.} Linear bandits~\cite{bubeck2012towards} \eh{this seems to be a paper about BLO?}
\newpage
}

\section{Online Newton Step in Full Information Setting}
In this section, we provide the preliminaries on the online Newton step (\texttt{ONS}) algorithm in the full information setting. This will serve as a building block to derive the regret guarantee of our proposed Newton step based bandit algorithm (\cref{alg:simple-bqo}). 

Let $\K\subset\mathbb{R}^d$ be a convex compact set. \texttt{ONS} chooses each iterate through Newton-step descent with a preconditioner $A_t$ set to be the scaled cumulative Hessian of the loss functions received so far.
\begin{algorithm}
\caption{Online Newton Step (\texttt{ONS})}
\label{alg:ons_full_information}
\begin{flushleft}
  {\bf Input:} convex compact set $\K\subset\mathbb{R}^d$, step size $\eta>0$, Hessian multiplier $\kappa'>0$, time horizon $T\in\mathbb{N}$.
\end{flushleft}
\begin{algorithmic}[1]
\STATE Initialize: $x_1 \in\K$, $A_0 = I $.
\FOR{$t = 1, \ldots, T$}
\STATE Play $x_t$, observe $f_t$.
\STATE  Update $A_{t} = A_{t-1} + \frac{\eta}{\kappa'} \nabla^2f_t(x_t) $, and compute $x_{t+1}$
\begin{align*}
    &x_{t+1} = \prod_{\K}^{A_t} \left[ x_t - \eta A_t^{-1}  \nabla f_t(x_t)  \right]
\end{align*}
\ENDFOR 
\end{algorithmic}
\end{algorithm}

The following theorem bounds the regret of \texttt{ONS} against any single $x\in\K$, as a function of the step size $\eta$, the diameter of $\K$, and the curvature parameter of the sequence of loss functions.

\begin{theorem}[\texttt{ONS} Full Information Regret]
\label{thm:ons_full_info_regret}
Suppose that the Online Newton Step (Algorithm~\ref{alg:ons_full_information}) with input $(\K,\eta,\kappa',T)$ applied to a  sequence of loss functions $\{f_t\}_{t=1}^T$ that are twice differentiable. Moreover, suppose $\{A_t\}_{t=1\dots T}$, the cumulative Hessians, are invertible. 
Then, Algorithm~\ref{alg:ons_full_information}  guarantees the following regret upper bound for any $x\in \K$, 
\[
\sum_{t=1}^T f_t(x_t) - f_t(x) \leq \frac{\text{diam}(\K)^2}{2\eta} + \sum_{t=1}^T \frac{\eta}{2} \norm{\nabla f_t(x_t)}_{A_t^{-1}}^2 - \sum_{t=1}^T\Delta_t(x;\kappa').
\]
Here, $\Delta_t(x;\kappa')$ is defined as
\begin{align*}
\Delta_t(x;\kappa') \coloneqq f_t(x) - f_t(x_t) - \inp{\nabla f_t(x_t)}{x - x_t} - \frac{1}{2\kappa'}(x-x_t)^{\top}\nabla^2f_t(x_t)(x-x_t).
\end{align*}
\end{theorem}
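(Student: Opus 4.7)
The plan is to follow the standard \texttt{ONS} template, but carefully track how the scaling factor $\kappa'$ in the preconditioner update interacts with the second-order Taylor residual $\Delta_t(x;\kappa')$ that has been built into the statement. The three ingredients are: (i) a Taylor-like identity that converts the per-step regret $f_t(x_t) - f_t(x)$ into a linear term plus a quadratic term plus $\Delta_t$; (ii) a projection/descent inequality in the time-varying norm $\|\cdot\|_{A_t}$ that bounds the linear term; (iii) a telescoping argument that absorbs the quadratic term exactly using the way $A_t$ is updated.

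First I would rewrite the per-step regret as
\begin{align*}
f_t(x_t) - f_t(x) = \langle \nabla f_t(x_t), x_t - x\rangle - \tfrac{1}{2\kappa'}(x-x_t)^\top \nabla^2 f_t(x_t)(x-x_t) - \Delta_t(x;\kappa'),
\end{align*}
which is nothing but the definition of $\Delta_t(x;\kappa')$ rearranged. Next I would use that $x_{t+1}$ is the $A_t$-norm projection of $x_t - \eta A_t^{-1}\nabla f_t(x_t)$ onto $\K$; since projections in the $A_t$-norm are non-expansive in the $A_t$-norm, expanding $\|x_{t+1} - x\|_{A_t}^2 \le \|x_t - \eta A_t^{-1}\nabla f_t(x_t) - x\|_{A_t}^2$ yields
\begin{align*}
\langle \nabla f_t(x_t), x_t - x\rangle \le \tfrac{1}{2\eta}\bigl(\|x_t - x\|_{A_t}^2 - \|x_{t+1} - x\|_{A_t}^2\bigr) + \tfrac{\eta}{2}\|\nabla f_t(x_t)\|_{A_t^{-1}}^2.
\end{align*}

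The main work is then the telescoping of $\sum_t (\|x_t - x\|_{A_t}^2 - \|x_{t+1} - x\|_{A_t}^2)$. Because the norm itself changes with $t$, the telescope is not direct; I would use Abel summation to rewrite it as
\begin{align*}
\|x_1 - x\|_{A_0}^2 - \|x_{T+1} - x\|_{A_T}^2 + \sum_{t=1}^T (x_t - x)^\top (A_t - A_{t-1})(x_t - x),
\end{align*}
and then substitute the update rule $A_t - A_{t-1} = (\eta/\kappa')\nabla^2 f_t(x_t)$. The key cancellation is that this generates a sum of $\tfrac{1}{2\kappa'}(x_t-x)^\top \nabla^2 f_t(x_t)(x_t - x)$ terms, which exactly cancels the corresponding quadratic term coming from step (i) above. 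What remains is $\|x_1 - x\|^2/(2\eta)$ (using $A_0 = I$), which I bound by $\text{diam}(\K)^2/(2\eta)$, the telescoping boundary term $-\|x_{T+1}-x\|_{A_T}^2/(2\eta) \le 0$ which I discard, the per-step norm $(\eta/2)\|\nabla f_t(x_t)\|_{A_t^{-1}}^2$, and the residual $-\sum_t \Delta_t(x;\kappa')$.

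The main obstacle is getting the quadratic cancellation to work out exactly, which is why the Hessian coefficient in the preconditioner update ($\eta/\kappa'$) and the coefficient inside $\Delta_t$ ($1/(2\kappa')$) must match. The invertibility assumption on $A_t$ is only needed so that the update and the $\|\cdot\|_{A_t^{-1}}^2$ term are well defined; the argument itself is algebraic and does not require any curvature assumption on $\{f_t\}$ beyond being twice differentiable, which is consistent with how the statement keeps $\Delta_t(x;\kappa')$ as a generic leftover term rather than bounding it.
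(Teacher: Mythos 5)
Your proposal is correct and follows essentially the same route as the paper's proof: rearranging the definition of $\Delta_t$, applying the Pythagorean/projection inequality in the $A_t$-norm, and absorbing the quadratic correction term via the update rule $A_t = A_{t-1} + (\eta/\kappa')\nabla^2 f_t(x_t)$. The only cosmetic difference is that the paper performs this absorption locally at each step (rewriting $\|x_t-x\|_{A_t}^2 = \|x_t-x\|_{A_{t-1}}^2 + (\eta/\kappa')\|x_t-x\|_{\nabla^2 f_t(x_t)}^2$ before summing, which telescopes directly), whereas you defer it and do it globally via Abel summation over the full sum; the underlying algebra is identical.
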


\begin{proof}
    For any $x\in \K$, we have that
\begin{align*}
    \norm{x_{t+1} - x}_{A_t}^2 &\stackrel{(a)}{\le} \norm{x_t - \eta A_t^{-1}\nabla f_t(x_t) - x}_{A_t}^2 \\
    &= \norm{x_t - x}_{A_t}^2 - 2\eta \inp{A_t^{-1}\nabla f_t(x_t)}{x_t - x}_{A_t} + \eta^2 \norm{A_t^{-1}\nabla f_t(x_t)}_{A_t}^2 \\
    &= \norm{x_t - x}_{A_{t}}^2 - 2\eta  \inp{\nabla f_t(x_t)}{x - x_t} + \eta^2 \norm{\nabla f_t(x_t)}_{A_t^{-1}}^2  \\
    &= \norm{x_t - x}_{A_{t-1}}^2 + 2\eta \left(\frac{1}{2\kappa'}\norm{x_t - x}_{\nabla^2f_t(x_t)}^2  + \inp{\nabla f_t(x_t)}{x - x_t}\right) + \eta^2 \norm{\nabla f_t(x_t)}_{A_t^{-1}}^2.
\end{align*}
Where $(a)$ is due to the projection property.
Now, consider the second term in the right hand side of the last equality. By definition of $\Delta_t(x;\kappa')$, we can rewrite it as
\begin{align*}
     \frac{1}{2\kappa'}\norm{x_t - x}_{\nabla^2f_t(x_t)}^2  + \inp{\nabla f_t(x_t)}{x - x_t} 
     &= f_t(x) - f_t(x_t) - \Delta_t(x;\kappa')
\end{align*}
Substituting this in the previous display, and rearranging the terms gives us
\[
f_t(x_t) - f_t(x) \leq \frac{\norm{x_t - x}_{A_{t-1}}^2 - \norm{x_t - x}_{A_{t}}^2}{2\eta} + \frac{\eta}{2} \norm{\nabla f_t(x_t)}_{A_t^{-1}}^2 - \Delta_t(x;\kappa').
\]
Summing this over $t = 1 \dots T$ gives us 
\[
\sum_{t=1}^T f_t(x_t) - f_t(x) \leq \frac{\norm{x_1 - x}_2^2}{2\eta} + \sum_{t=1}^T \frac{\eta}{2} \norm{\nabla f_t(x_t)}_{A_t^{-1}}^2 - \sum_{t=1}^T\Delta_t(x;\kappa').
\]
We now use the fact that $\norm{x_1 - x}_2 \leq \text{diam}(\K)$ to obtain the desired bound.
\end{proof}

Using the regret bound established in \cref{thm:ons_full_info_regret}, we will derive the regret inequality for the bandit Newton based algorithm through a reduction from the full information setting.

\section{Regret of Bandit Newton Method for Improper Learning (Section~\ref{sec:improper-learning})}
In this section, we first provide a result which converts any online second order algorithm in full information setting to one that uses stochastic gradients and Hessians estimators in the bandit setting. In Section~\ref{sec:apx_improper_bco_proof}, we rely on this result to bound the regret of Algorithm~\ref{alg:simple-bqo}.

We first begin by formally defining the class of second order online convex optimization (OCO) algorithms - the family of regret minimization algorithms for which this reduction works - in Definition~\ref{def:second-order-oco}. 

\begin{definition} [Second order OCO algorithm]
\label{def:second-order-oco}
Let $\A$ be a deterministic online convex optimization algorithm on $\K\subset\mathbb{R}^d$ receiving an arbitrary sequence of $T\in\mathbb{N}$ twice differentiable loss functions $f_1,\dots,f_T:\mathbb{R}^d\rightarrow\mathbb{R}$ and producing decisions $x_1\leftarrow \A(\emptyset), \dots, x_t\leftarrow\A(f_1,\dots,f_{t-1})$. $\A$ is called a second order online algorithm if the following holds:
\begin{itemize}
\item Let $\hat{f}_t:\mathbb{R}^d\rightarrow\mathbb{R}$ be the quadratic function defined as
\begin{align*}
\hat{f}_t(x)=\frac{1}{2}x^{\top}\nabla^2 f_t(x_t)x +\nabla f_t(x_t)^{\top}x-x_t^{\top}\nabla^2f_t(x_t)x.
\end{align*}
Then $\forall t\in[T]$:
\begin{align*}
\A(f_1,\dots,f_{t-1})=\A(\hat{f}_1,\dots,\hat{f}_{t-1}).
\end{align*}
\end{itemize}
\end{definition}

By Definition~\ref{def:second-order-oco}, it is clear that the iterates $x_t$ of a second order OCO algorithm $\A$ is completely determined by the gradients and Hessians of the loss functions of the previous iterations at the corresponding decision points. Therefore, we can rewrite
\begin{align*}
x_t\leftarrow \A(\nabla f_1(x_1),\dots, \nabla f_{t-1}(x_{t-1}), \nabla^2 f_1(x_1),\dots, \nabla^2 f_{t-1}(x_{t-1})).
\end{align*}

We consider a formal reduction from any second order OCO algorithm to an algorithm in the bandit setting. 

\begin{lemma}[Bandit Reduction]
\label{lem:stoc_ons_full_info_regret}
Let $\K\subset \mathbb{R}^d$ be a convex compact set. Let $\A$ be a second order online algorithm (Definition~\ref{def:second-order-oco}) on $\K$ that ensures a regret bound with respect any $x\in\K$ of the following form for any sequence of twice differentiable loss functions $\{f_t\}_{t=1}^T$ over a time horizon $T\in\mathbb{N}$:
    \[
    \sum_{t=1}^T f_t(x_t) - f_t(x) \leq B_{\A, x}(f_1, \dots f_T).
    \]
    Define the points $\{x_t\}_{t=1}^T$ as: $x_1 \leftarrow \A(\emptyset),$ $x_t \leftarrow \A(\tilde{\nabla}_1, \dots \tilde{\nabla}_{t-1}, \tilde{H}_1, \dots \tilde{H}_{t-1})$, i.e. $x_t$ is given by the output of $\A$ given gradients $\tilde{\nabla}_1,\dots,\tilde{\nabla}_{t-1}$ and Hessians $\tilde{H}_1,\dots, \tilde{H}_{t-1}$, where $\tilde{\nabla}_t, \tilde{H}_t$ are (conditionally) unbiased estimators of the gradient and Hessian of $f_t$ at $x_t$, respectively, i.e.
    \begin{align*}
    \EBB[\tilde{\nabla}_t|x_1,f_1,\dots x_t, f_t] &= \nabla f_t(x_t),\\
    \EBB[\tilde{H}_t|x_1,f_1,\dots x_t, f_t] &= \nabla^2 f_t(x_t).
    \end{align*}
    Define the stochastic approximations $h_t:\K\rightarrow\mathbb{R}$ of $f_t:\K\to\RBB$ as follows:
    \[
    h_t(x) = \inp{\tilde{\nabla}_t}{x} + \frac{1}{2}(x-x_t)^{\top}\tilde{H}_t (x-x_t) 
    \]
    Then, the following holds for any $x \in \K$:
    \[
    \sum_{t=1}^T \EBB\left[f_t(x_t) - f_t(x)\right] \leq \EBB\left[B_{\A, x}(h_1, \dots h_T)\right] - \sum_{t=1}^T \EBB[\Delta_t(x)],
    \]
    where $\Delta_t(x)$ is the error in second order Taylor series expansion of $f_t$ which is defined as
    \[
    \Delta_t(x) \coloneqq f_t(x) - f_t(x_t) - \inp{\nabla f_t(x_t)}{x-x_t} -  \frac{1}{2}(x-x_t)^{\top}\EBB[\tilde{H}_t|x_1, f_1, \dots x_t, f_t] (x-x_t).
    \]
\end{lemma}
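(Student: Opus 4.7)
The plan is a straightforward reduction: play the deterministic full-information bound of $\A$ against the random quadratic surrogates $h_t$, then match moments using unbiasedness. The proof has three steps.

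First, observe by direct differentiation that $\nabla h_t(x_t) = \tilde{\nabla}_t$ and $\nabla^2 h_t \equiv \tilde{H}_t$. Combined with Definition~\ref{def:second-order-oco}, this says that the iterates $x_t$ obtained by feeding $\A$ the stochastic gradients and Hessians coincide (pathwise) with the iterates that $\A$ would produce when run on the loss sequence $\{h_t\}$ itself. Consequently, the assumed deterministic regret bound applies pathwise:
\[
\sum_{t=1}^T \bigl(h_t(x_t) - h_t(x)\bigr) \leq B_{\A,x}(h_1, \ldots, h_T).
\]

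Second, unpack the surrogate to get
\[
h_t(x_t) - h_t(x) = \inp{\tilde{\nabla}_t}{x_t - x} - \tfrac{1}{2}(x - x_t)^\top \tilde{H}_t (x - x_t).
\]
Condition on the sigma-algebra $\FM_t$ generated by $x_1, f_1, \ldots, x_t, f_t$, with respect to which both $x_t$ and the fixed comparator $x$ are measurable. Applying the unbiasedness assumptions $\EBB[\tilde{\nabla}_t \mid \FM_t] = \nabla f_t(x_t)$ and $\EBB[\tilde{H}_t \mid \FM_t] = \nabla^2 f_t(x_t)$, together with the tower property, yields
\[
\EBB\bigl[h_t(x_t) - h_t(x)\bigr] = \EBB\bigl[\inp{\nabla f_t(x_t)}{x_t - x} - \tfrac{1}{2}(x - x_t)^\top \nabla^2 f_t(x_t) (x - x_t)\bigr].
\]

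Third, rearrange the very definition of $\Delta_t(x)$ to obtain the algebraic identity
\[
\inp{\nabla f_t(x_t)}{x_t - x} - \tfrac{1}{2}(x - x_t)^\top \nabla^2 f_t(x_t) (x - x_t) = f_t(x_t) - f_t(x) + \Delta_t(x).
\]
Summing over $t$, taking total expectation on the pathwise regret bound from step one, and substituting the two identities above gives
\[
\sum_{t=1}^T \EBB[f_t(x_t) - f_t(x)] + \sum_{t=1}^T \EBB[\Delta_t(x)] \leq \EBB\bigl[B_{\A,x}(h_1,\ldots,h_T)\bigr],
\]
which is the stated inequality after rearranging.

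The only subtlety, and the one point that requires care, is the filtration bookkeeping in step two: we must verify that $x_t$ is $\FM_t$-measurable so the linear and quadratic forms in $\tilde{\nabla}_t, \tilde{H}_t$ can be replaced by their conditional means while treating $x_t$ as a constant. This follows because $x_t$ is a deterministic function of $\tilde{\nabla}_1, \tilde{H}_1, \ldots, \tilde{\nabla}_{t-1}, \tilde{H}_{t-1}$, which in turn are measurable with respect to $\FM_{t-1} \subseteq \FM_t$. Once this is in place, the proof is essentially algebraic and proceeds along the three steps above.
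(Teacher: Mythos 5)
Your proof is correct and follows the same three-step structure as the paper's argument: pathwise reduction to $\A$ run on the surrogates $h_t$, conditional-expectation matching via unbiasedness, and algebraic rearrangement through the definition of $\Delta_t(x)$. One incidental slip in your closing remark: the justification that $x_t$ is $\FM_t$-measurable because the $\tilde\nabla_s,\tilde H_s$ with $s<t$ are $\FM_{t-1}$-measurable is off (those estimators carry sampling randomness not captured by $\sigma(x_1,f_1,\ldots,x_{t-1},f_{t-1})$); the measurability of $x_t$ with respect to $\FM_t$ is simply immediate because $x_t$ is explicitly one of the generators of $\FM_t$, so the conclusion stands.
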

\begin{proof}
    Observe that from the definition of $h_t$, we have $\nabla h_t(x_t) = \tilde{\nabla}_t, \nabla^2 h_t(x_t) = \tilde{H}_t$. So, deterministically applying a second order algorithm $\A$ on $h_t$ is equivalent to stochastically applying $\A$ on functions $f_t$. So by the regret assumption on $\A$, we have
    \begin{align}
    \label{eqn:stochastic_ons_regret_intd}
        \sum_{t=1}^T h_t(x_t) - h_t(x) \leq B_{\A, x}(h_1, \dots h_T).
    \end{align}
    Next, note that
    \begin{align*}
        \EBB[h_t(x_t)] \stackrel{(a)}{=}  \EBB\left[\inp{\EBB[\tilde{\nabla}_t|x_1,f_1,\dots x_t, f_t]}{x_t}\right] = \EBB\left[\inp{\nabla f_t(x_t)}{x_t}\right],
    \end{align*}
    where we used the fact that $\tilde{\nabla}_t$ is an unbiased estimate of the true gradient in $(a)$. A similar argument shows that
    \begin{align*}
        \EBB[h_t(x)] = \EBB\left[\inp{\nabla f_t(x_t)}{x}\right] + \frac{1}{2}\EBB\left[(x-x_t)^{\top}\EBB[\tilde{H}_t|x_1, f_1, \dots x_t, f_t] (x-x_t)\right].
    \end{align*}
    So, we have
    \begin{align*}
        \EBB[h_t(x_t) - h_t(x)] &= \EBB\left[\inp{\nabla f_t(x_t)}{x_t-x} \right] - \frac{1}{2}\EBB\left[(x-x_t)^{\top}\EBB[\tilde{H}_t|x_1, f_1, \dots x_t, f_t] (x-x_t)\right]\\
        & = \EBB[f_t(x_t) - f_t(x) + \Delta_t(x)].
    \end{align*}
    The Lemma now follows by taking expectations in Equation~\eqref{eqn:stochastic_ons_regret_intd}.
\end{proof}

Together with Theorem~\ref{thm:ons_full_info_regret}, we are almost ready to establish the regret guarantee for \cref{alg:simple-bqo}. Note that \cref{alg:simple-bqo} uses unbiased Hessian estimators $\tilde{H}_t$ (Line~\ref{line:est}) that is not necessarily positive semidefinite. However, we will show that the preconditioner $\tilde{A}_t$, the cumulative Hessian, concentrates well around its mean, making the operation in Line~\ref{line:newton-update} of \cref{alg:simple-bqo} well-defined. 

\subsection{Concentration of Cumulative Hessian Estimate}

The following lemma shows that the cumulative Hessian estimators concentrates around its mean with high probability.

\begin{lemma} [Concentration of cumulative Hessian estimate]
\label{lem:concentration-cumulative-hessian}
Consider a sequence of functions $\{f_t\}_{t=1}^T$ that satisfies Assumption~\ref{assumption:oblivious-adversary}, Assumption\ref{assumption:curvature}, and Assumption~\ref{assumption:bounded-range-and-grads}. $\forall t\in[T]$, define the smoothed function $\smoothfBB{t}$ of $f_t$ as
\[
\smoothfBB{t}(x) \coloneqq \E_{u\sim \ball, v \sim \ball}\left[f_t\left(x + \frac{1}{2} \tilde{A}_{t-1}^{-\frac{1}{2}}(u+v)\right)\Big| \FM_{t-1}\right],
\]
where $\mathcal{F}_t$ denotes the filtration generated by the algorithm's possible randomness up to time $t$. 
Let $\tilde{A}_t=\tilde{A}_{t-1}+\frac{\eta}{\kappa'}\tilde{H}_{t}$, and $A_{t}=A_{t-1}+\frac{\eta}{\kappa'}\nabla^2\smoothfBB{t}(x_t)$, where $A_0=\tilde{A}_{0}=I$. Let $B^*=B+\sqrt{2}(L+\sqrt{2}C)$. Suppose $T = \tilde{\Omega}(d)$, and $\eta\le \kappa'(24 d^{3/2}B^*\kappa\sqrt{T\log(dT^2)})^{-1}$, $\forall t$, with probability at least $1-\frac{t}{T^2}$, for every $s\le t$, 
\begin{align*}
\|I-A_{s}^{-\frac{1}{2}}\tilde{A}_sA_{s}^{-\frac{1}{2}}\|_2\le \frac{1}{2}.
\end{align*}
\end{lemma}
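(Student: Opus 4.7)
The plan is to proceed by strong induction on $t$, with the inductive step reduced to a concentration statement for a matrix martingale via a Freedman-type inequality. The central observation is that, for any fixed $s \le t$,
$A_s^{-1/2}(\tilde{A}_s - A_s)A_s^{-1/2} = \sum_{r=1}^s D_r$,
where the differences
$D_r = \frac{\eta}{\kappa'}\,A_s^{-1/2}\bigl(\tilde{H}_r - \nabla^2 \smoothfBB{r}(x_r)\bigr) A_s^{-1/2}$
form a matrix martingale sequence adapted to $\{\FM_{r-1}\}$. These are indeed mean-zero conditional on $\FM_{r-1}$ because a Stokes/Flaxman-type identity (analogous to the one invoked for the gradient in the sketch of Theorem~\ref{thm:expected_regret_bound}) gives $\EBB[\tilde{H}_r \mid \FM_{r-1}] = \nabla^2 \smoothfBB{r}(x_r)$. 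So controlling $\|I - A_s^{-1/2}\tilde{A}_s A_s^{-1/2}\|_2$ reduces to controlling the operator norm of this martingale sum.

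Next I would use the inductive hypothesis to derive the deterministic bounds required by Freedman. Assuming the conclusion at all earlier rounds, one has the PSD sandwich $\tfrac{1}{2}A_{r-1} \preceq \tilde{A}_{r-1} \preceq \tfrac{3}{2}A_{r-1}$ for $r \le s$. Combined with $A_{r-1} \succeq I$, this yields $\|\tilde{A}_{r-1}^{-1/2}\|_{\mathrm{op}} \le \sqrt{2}$ and therefore $\|y_r - x_r\|_2 \le \sqrt{2}$. A second-order Taylor expansion of $f_r$ around the nearest point in $\K$, using Assumption~\ref{assumption:bounded-range-and-grads} for the value and gradient bounds and the upper Hessian bound $\nabla^2 f_r \preceq CH_r \preceq CI$ from Assumption~\ref{assumption:curvature}, then gives $|f_r(y_r)| \le B^*$. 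Since $A_{r-1} \preceq A_s$ and $\tilde{A}_{r-1} \preceq \tfrac{3}{2}A_{r-1}$, one further obtains $\|A_s^{-1/2}\tilde{A}_{r-1}^{1/2}\|_{\mathrm{op}}^2 = \|A_s^{-1/2}\tilde{A}_{r-1}A_s^{-1/2}\|_{\mathrm{op}} \le \tfrac{3}{2}$. Plugging these into the explicit form of $\tilde{H}_r$ from line~\ref{line:est} of Algorithm~\ref{alg:simple-bqo} yields an almost-sure bound $\|D_r\|_{\mathrm{op}} \lesssim \eta d^2 B^*/\kappa'$, with the same bound (up to constants) transferring to the conditional second moment $\|\EBB[D_r^2 \mid \FM_{r-1}]\|_{\mathrm{op}}$ via Jensen.

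With these estimates in hand, I would apply the matrix Freedman inequality to $\sum_{r=1}^s D_r$. The almost-sure increment bound is $R \lesssim \eta d^2 B^*/\kappa'$ and the cumulative predictable variance is at most $\sigma^2 \lesssim s(\eta d^2 B^*/\kappa')^2$. Setting the target deviation $\epsilon = \tfrac{1}{2}$ and inserting the stated constraint $\eta \le \kappa'(24 d^{3/2} B^* \kappa \sqrt{T\log(dT^2)})^{-1}$, both the variance and the multiplicative-increment contributions in the Freedman exponent are controlled, and the failure probability at step $s$ is at most $1/T^2$. A union bound over $s \le t$ yields the claimed $t/T^2$. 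On the good event, the resulting bound $\|I - A_s^{-1/2}\tilde{A}_s A_s^{-1/2}\|_2 \le \tfrac{1}{2}$ reinstates the PSD sandwich needed at the next inductive step, closing the induction.

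The main obstacle is the circular dependence inherent in such inductive concentration arguments: the almost-sure bound on $\|D_r\|_{\mathrm{op}}$ and the Taylor control on $|f_r(y_r)|$ both require $\tilde{A}_{r-1}$ to already be close to $A_{r-1}$, which is precisely what the induction aims to establish. The resolution is to invoke the inductive hypothesis only at earlier rounds $r < s$ when bounding the parameters fed into Freedman, and then let Freedman deliver the conclusion at $s$. A secondary but crucial technical point, needed to obtain the stated $d^{3/2}$ (rather than $d^2$) factor in the step-size requirement, is the use of monotonicity $A_{r-1} \preceq A_s$ in the estimate $\|A_s^{-1/2}\tilde{A}_{r-1}^{1/2}\|_{\mathrm{op}} = O(1)$; without this cancellation one would accumulate additional dimension factors from the norms of $\tilde{A}_{r-1}^{1/2}$ growing over time.
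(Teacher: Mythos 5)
There is a genuine gap at the heart of the proposal, and it concerns the choice of normalization in the matrix martingale.

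You define $D_r = \frac{\eta}{\kappa'}A_s^{-1/2}\bigl(\tilde{H}_r - \nabla^2 \smoothfBB{r}(x_r)\bigr)A_s^{-1/2}$ and assert that $\{D_r\}_{r\le s}$ is a martingale difference sequence with respect to $\{\FM_{r-1}\}$. It is not. For $r < s$, the normalizer $A_s$ is only $\FM_{s-1}$-measurable: it depends on $\nabla^2\smoothfBB{j}(x_j)$ for $j \le s$, and hence on the random draws $v_{j,1}, v_{j,2}$ for $j > r$. Therefore $D_r$ is not $\FM_r$-measurable, and $\EBB[D_r \mid \FM_{r-1}] \ne 0$ in general (one cannot pull $A_s^{-1/2}$ outside the conditional expectation). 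Freedman's inequality cannot be applied to this sequence. This is precisely why the paper normalizes by the \emph{deterministic} matrix $L_t = I + \frac{c\eta}{\kappa'}\sum_{s\le t}H_s$, which is fixed in advance because the adversary is oblivious and the $H_s$ are part of the fixed loss functions. The martingale differences $Z_s = L_t^{-1/2}(\EBB[\tilde{H}_s\mid\FM_{s-1}]-\tilde{H}_s)L_t^{-1/2}$ are then $\FM_s$-measurable and conditionally mean-zero. Passing from the $L_t$-normalized bound back to the $A_t$-normalized one uses $L_t \preceq A_t \preceq U_t \preceq \kappa L_t$, which is exactly where the $\kappa$ factors in the paper's increment and variance bounds arise. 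By normalizing with $A_s$ directly you apparently avoid these $\kappa$ factors, but this "saving" is an artifact of the invalid martingale structure.

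A secondary issue is the variance estimate. You say the almost-sure bound $\|D_r\|_{\mathrm{op}} \lesssim \eta d^2 B^*/\kappa'$ "transfers to the conditional second moment via Jensen," giving $\sigma^2 \lesssim T(\eta d^2 B^*/\kappa')^2$. This squares the worst-case increment, which is $O(d^4)$, and feeding it into Freedman would require $\eta \lesssim \kappa'/(d^2 B^*\sqrt{T\log(dT^2)})$ — a factor $d^{1/2}$ worse than the stated $\eta\le \kappa'(24 d^{3/2}B^*\kappa\sqrt{T\log(dT^2)})^{-1}$. The paper gets the extra $d^{1/2}$ by computing $\EBB[Z_s^2\mid\FM_{s-1}]$ explicitly, using $\EBB[v_{s,1}v_{s,2}^\top v_{s,1}v_{s,2}^\top] = d^{-2}I$ and $\EBB[v_{s,1}v_{s,2}^\top v_{s,2}v_{s,1}^\top] = d^{-1}I$, which drops the variance to $O(d^3)$; the crude Jensen bound does not capture this cancellation. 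Your remark about the monotonicity $A_{r-1}\preceq A_s$ is correct and indeed necessary to avoid yet further blow-up, but it does not address this loss.
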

\begin{proof}
First observe that using Stokes' theorem~\citep{flaxman2004online}, we have
\[
\E[\tilde{H}_t\mid \FM_{t-1}] = \nabla^2 \smoothfBB{t}(x_t)
\]
where $\FM_t$ denotes the filtration generated by $\{v_{s,1},v_{s,2}\}_{s=1}^{t}$. 

\paragraph{Proof by induction.} Base case $t=0$ is given by construction. Suppose the inequalities hold with probability at least $1-\frac{t-1}{T^2}$ for any $0\le s\le t-1$. Recall that $f_t$ is assumed to be bounded by $B$ and $L$-Lipschitz over $\K$ and $C$-smooth. Note that $|f_t(y_t)|$ is bounded by
\begin{align*}
|f_t(y_t)|&\le |f_t(x_t)|+\left|f_t\left(x_t+\frac{1}{2}\tilde{A}_{t-1}^{-\frac{1}{2}}(v_{t,1}+v_{t,2})\right)-f_t(x_t)\right|\\
&\le B +  \left\|\nabla f_t\left(x_t+\frac{1}{2}\tilde{A}_{t-1}^{-\frac{1}{2}}(v_{t,1}+v_{t,2})\right)\right\|_2 \left\|\frac{1}{2}\tilde{A}_{t-1}^{-\frac{1}{2}}(v_{t,1}+v_{t,2})\right\|_2\\
&\le B+\left(L+C\left\|\frac{1}{2}\tilde{A}_{t-1}^{-\frac{1}{2}}(v_{t,1}+v_{t,2})\right\|_2\right)\left\|\frac{1}{2}\tilde{A}_{t-1}^{-\frac{1}{2}}(v_{t,1}+v_{t,2})\right\|_2\\
&\le B+(L+C\|\tilde{A}_{t-1}^{-\frac{1}{2}}\|_2)\|\tilde{A}_{t-1}^{-\frac{1}{2}}\|_2.
\end{align*} 
By induction hypothesis, with probability at least $1-\frac{t-1}{T^2}$, $\tilde{A}_{t-1}\succeq \frac{1}{2}A_{t-1} = \frac{1}{2}(I + \eta/\kappa' \sum_{s=1}^{t-1}\E_{s-1}[\tilde{A}_s])\succeq \frac{1}{2}I$, and thus the above inequality implies $|f_t(y_t)|\le B+\sqrt{2}\left(L+\sqrt{2}C\right)=:B^*$. 
Next, define $L_t$ and $U_t$ as follows
\[
L_t \coloneqq I + \frac{c\eta}{\kappa'} \sum_{s=1}^t H_s , \quad U_t \coloneqq I + \frac{C\eta}{\kappa'} \sum_{s=1}^t H_s.
\]
Observe that from our definition of $\kappa$-convex losses, $A_t$ can be lower and upper bounded as follows
\[
L_t \preceq A_t \preceq U_t.
\]
Now consider the following
\begin{align*}
    \|I-A_{t}^{-\frac{1}{2}}\tilde{A}_tA_{t}^{-\frac{1}{2}}\|_2 &= \| A_{t}^{-\frac{1}{2}}(A_{t} - \tilde{A}_{t})A_{t}^{-\frac{1}{2}}\|_2\\
    & \leq \|A_{t}^{-\frac{1}{2}}L_t^{1/2}\|_2^2\|L_{t}^{-\frac{1}{2}}(A_{t} - \tilde{A}_{t})L_{t}^{-\frac{1}{2}}\|_2\\
    & \stackrel{(a)}{\leq} \|L_{t}^{-\frac{1}{2}}(A_{t} - \tilde{A}_{t})L_{t}^{-\frac{1}{2}}\|_2,
\end{align*}
where $(a)$ follows from the fact that $L_t \preceq A_t$. This shows that it suffices to bound $\|L_{t}^{-\frac{1}{2}}(A_{t} - \tilde{A}_{t})L_{t}^{-\frac{1}{2}}\|_2$. Next, consider the following
\begin{align*}
L_{t}^{-\frac{1}{2}}(A_{t} - \tilde{A}_{t})L_{t}^{-\frac{1}{2}}&=\frac{\eta}{\kappa'} \sum_{s=1}^t L_{t}^{-\frac{1}{2}}(\E[\tilde{H}_s\mid \FM_{s-1}]-\tilde{H}_s)L_{t}^{-\frac{1}{2}},
\end{align*}
where $Z_s=L_{t}^{-\frac{1}{2}}(\E[\tilde{H}_s\mid \FM_{s-1}]-\tilde{H}_s)L_{t}^{-\frac{1}{2}}$ forms a martingale sequence  with respect to the filtration $\FM_{s}$. We  rely on matrix Freedman inequality to bound $\|\sum_{s=1}^t Z_s\|$~\citep{tropp2011freedman}. To do this, we derive bounds for the first and second moments of $Z_s$.

\paragraph{Bounding $Z_s$.} We first show that $Z_s$ is a bounded random variable. To see this, note that by definition of $\tilde{H}_t$, and the fact that  $-2I\preceq v_{t,1}v_{t,2}^{\top}+v_{t,2}v_{t,1}^{\top}\preceq 2I$, we have
\begin{align*}
&-4d^2B^*\tilde{A}_{s-1}\preceq \tilde{H}_s\preceq 4d^2B^*\tilde{A}_{s-1},\\
&-4d^2B^*\tilde{A}_{s-1}\preceq \E[\tilde{H}_s\mid \FM_{s-1}]\preceq 4d^2B^*\tilde{A}_{s-1}.
\end{align*}
Thus, with probability at least $1-\frac{t-1}{T^2}$, $\forall s\le t$,
\begin{align*}
Z_s&\preceq 8d^2B^*L_{t}^{-\frac{1}{2}}\tilde{A}_{s-1}L_{t}^{-\frac{1}{2}}\\
&\preceq 8d^2B^* L_{s-1}^{-\frac{1}{2}}\tilde{A}_{s-1}L_{s-1}^{-\frac{1}{2}}\\
&\stackrel{(a)}{\preceq} 8d^2B^* \kappa A_{s-1}^{-\frac{1}{2}}\tilde{A}_{s-1}A_{s-1}^{-\frac{1}{2}}\\
&\stackrel{(b)}{\preceq} 12d^2B^*\kappa I,
\end{align*}
and 
\begin{align*}
Z_s&\succeq -4d^2B^*L_t^{-\frac{1}{2}} \tilde{A}_{s-1} L_t^{-\frac{1}{2}}\\
&\succeq -4d^2B^* L_{s-1}^{-\frac{1}{2}}\tilde{A}_{s-1}L_{s-1}^{-\frac{1}{2}}\\
&\stackrel{(c)}{\succeq} -4d^2B^*\kappa A_{s-1}^{-\frac{1}{2}}\tilde{A}_{s-1}A_{s-1}^{-\frac{1}{2}}\\
& \stackrel{(d)}{\succeq} -6d^2B^*\kappa I,
\end{align*}
where $(a), (c)$ follow from the definition of $L_t, U_t$ and the fact that the losses are $\kappa$-convex, and $(b), (d)$ follow from the induction hypothesis that $\|I-A_{s}^{-\frac{1}{2}}\tilde{A}_sA_{s}^{-\frac{1}{2}}\|_2\le \frac{1}{2}$ and  $\tilde{A}_s$ is PSD $, \forall s\le t-1$ w.h.p.

\paragraph{Bounding $2^{nd}$ moments of $Z_s$.} Next, we bound the second moments of $Z_s$
\begin{align*}
    \EBB[Z_s^2|\FM_{s-1}] \stackrel{(a)}{\preceq} \EBB[L_{t}^{-\frac{1}{2}}\tilde{H}_sL_t^{-1}\tilde{H}_sL_{t}^{-\frac{1}{2}}|\FM_{s-1}],
\end{align*}
where $(a)$ follows from the fact that for any random matrix $X$: $\EBB[(X-\EBB[X])X-\EBB[X])^\top]\preceq \EBB[XX^\top].$ Continuing
\begin{align*}
    &\EBB[L_{t}^{-\frac{1}{2}}\tilde{H}_sL_t^{-1}\tilde{H}_sL_{t}^{-\frac{1}{2}}|\FM_{s-1}] \\
    &= L_t^{-1/2}\tilde{A}_{s-1}^{1/2} \EBB[4d^4f_s(y_s)^2(v_{s,1}v_{s,2}^\top +v_{s,2}v_{s,1}^\top )\tilde{A}_{s-1}^{1/2} L_t^{-1}\tilde{A}_{s-1}^{1/2}(v_{s,1}v_{s,2}^\top +v_{s,2}v_{s,1}^\top )|\FM_{s-1}]\tilde{A}_{s-1}^{1/2} L_t^{-1/2}\\
    &\stackrel{(a)}{\preceq} L_t^{-1/2}\tilde{A}_{s-1}^{1/2} \EBB[6d^4f_s(y_s)^2(v_{s,1}v_{s,2}^\top +v_{s,2}v_{s,1}^\top )(v_{s,1}v_{s,2}^\top +v_{s,2}v_{s,1}^\top )|\FM_{s-1}]\tilde{A}_{s-1}^{1/2} L_t^{-1/2},\\
    &\stackrel{(b)}{\preceq} 6d^4(B^*)^2 \kappa L_t^{-1/2}\tilde{A}_{s-1}^{1/2} \EBB[(v_{s,1}v_{s,2}^\top +v_{s,2}v_{s,1}^\top )(v_{s,1}v_{s,2}^\top +v_{s,2}v_{s,1}^\top )|\FM_{s-1}]\tilde{A}_{s-1}^{1/2} L_t^{-1/2},
\end{align*}
where $(a)$ follows from the fact that $\tilde{A}_{s-1}^{1/2} L_t^{-1}\tilde{A}_{s-1}^{1/2} \preceq 1.5\kappa I$ w.h.p, and $(b)$ follows from the fact that $f_s(y_s) \leq B^*$. Next, we rely on the facts that $\EBB[v_{s,1}v_{s,2}^\top v_{s,1}v_{s,2}^\top] = d^{-2}I,$ and $\EBB[v_{s,1}v_{s,2}^\top v_{s,2}v_{s,1}^\top] = d^{-1}I$ to obtain
\begin{align*}
    &\EBB[L_{t}^{-\frac{1}{2}}\tilde{H}_sL_t^{-1}\tilde{H}_sL_{t}^{-\frac{1}{2}}|\FM_{s-1}] \\
    &\preceq 12d^3(B^*)^2(1+d^{-1})\kappa L_t^{-1/2}\tilde{A}_{s-1}L_t^{-1/2}\\
    & \stackrel{(a)}{\preceq} 36d^3(B^*)^2\kappa^2,
\end{align*}
where $(a)$  follows from the fact that $ L_t^{-1/2}\tilde{A}_{s-1}L_t^{-1/2} \preceq 1.5\kappa I$ w.h.p.
\paragraph{Matrix Freedman.} With the above bounds on the martingale sequence, we can bound $L_{t}^{-\frac{1}{2}}(A_{t} - \tilde{A}_{t})L_{t}^{-\frac{1}{2}}$ by matrix Freedman inequality (Lemma~\ref{lem:matrix-freedman}) \footnote{Freedman's inequality requires the first and second moment bounds to hold almost surely. Observe that we only have these bounds in high probability. However, there is a standard workaround to this (see for instance \citet{suggala2021efficient}), where one can create an alternate martingale that satisfies these bounds a.s., and is exactly equal to the original martingale w.h.p.}
\begin{align*}
\|I-\tilde{A}_{t}^{-\frac{1}{2}}A_t\tilde{A}_{t}^{-\frac{1}{2}}\|_2 &\leq \|L_{t}^{-\frac{1}{2}}(A_{t} - \tilde{A}_{t})L_{t}^{-\frac{1}{2}}\|_2 \\
& \leq \frac{12\eta \kappa d^{3/2}B^*\sqrt{t\log(dT^2)}}  {\kappa'} + \frac{8\eta \kappa d^2B^*\log(dT^2)}{\kappa'}. 
\end{align*}
The RHS of the above inequality is less than $1/2$ for our choice of $\eta$.
\end{proof}

\ignore{
\begin{lemma} [Concentration of cumulative Hessian estimate]
\label{lem:concentration-cumulative-hessian-revised}
Let $\tilde{A}_t=\tilde{A}_{t-1}+\frac{\eta}{\kappa'}\tilde{H}_{t}$, and $A_{t}=A_{t-1}+\frac{\eta}{\kappa'}H_t$, where $A_0=\tilde{A}_{0}=I$. Let $B^*=B+\sqrt{2}(G+\sqrt{2}C)$. Suppose $\eta\le \min\{c,1\}\kappa'(2\sqrt{T}(C+6d^2B^*)\log(dT^2))^{-1}$, $\forall t$, with probability at least $1-\frac{t}{T^2}$, for every $s\le t$, 
\begin{align*}
\|I-A_{s}^{-\frac{1}{2}}\tilde{A}_sA_{s}^{-\frac{1}{2}}\|_2\le \frac{1}{2}.
\end{align*}
\end{lemma}
\begin{proof}
First observe that using Stokes' theorem~\citep{flaxman2004online}, we have
\[
\E[\tilde{H}_t\mid \FM_{t-1}] = \nabla^2 \smoothfBB{t}(x_t),
\]
where $\FM_t$ denotes the filtration generated by $\{v_{s,1},v_{s,2}\}_{s=1}^{t}$, and
\[
\smoothfBB{t}(x) \coloneqq \E_{u\sim \ball, v \sim \ball}\left[f_t\left(x + \frac{1}{2} \tilde{A}_{t-1}^{-\frac{1}{2}}(u+v)\right)\right].
\]
We now prove the Lemma by induction. Base case $t=0$ is given by construction. Suppose the inequalities hold with probability at least $1-\frac{t-1}{T^2}$ for any $0\le s\le t-1$. Recall that $f_t$ is assumed to be be bounded by $B$ and $G$-Lipschitz over $\K$ and $C$-smooth. Note that $|f_t(y_t)|$ is bounded by
\begin{align*}
|f_t(y_t)|&\le |f_t(x_t)|+\left|f_t\left(x_t+\frac{1}{2}\tilde{A}_{t-1}^{-\frac{1}{2}}(v_{t,1}+v_{t,2})\right)-f_t(x_t)\right|\\
&\le B +  \left\|\nabla f_t\left(x_t+\frac{1}{2}\tilde{A}_{t-1}^{-\frac{1}{2}}(v_{t,1}+v_{t,2})\right)\right\|_2 \left\|\frac{1}{2}\tilde{A}_{t-1}^{-\frac{1}{2}}(v_{t,1}+v_{t,2})\right\|_2\\
&\le B+\left(G+C\left\|\frac{1}{2}\tilde{A}_{t-1}^{-\frac{1}{2}}(v_{t,1}+v_{t,2})\right\|_2\right)\left\|\frac{1}{2}\tilde{A}_{t-1}^{-\frac{1}{2}}(v_{t,1}+v_{t,2})\right\|_2\\
&\le B+(G+C\|\tilde{A}_{t-1}^{-\frac{1}{2}}\|_2)\|\tilde{A}_{t-1}^{-\frac{1}{2}}\|_2.
\end{align*}
By induction hypothesis, with probability at least $1-\frac{t-1}{T^2}$, $\tilde{A}_{t-1}\succeq \frac{1}{2}A_{t-1}\succeq \frac{1}{2}I$, and thus the above inequality implies $|f_t(y_t)|\le B+\sqrt{2}\left(G+\sqrt{2}C\right)=:B^*$. Note that
\begin{align*}
cI-A_{t}^{-\frac{1}{2}}\tilde{A}_tA_{t}^{-\frac{1}{2}}&\preceq \frac{\eta}{\kappa'} \sum_{s=1}^t A_{t}^{-\frac{1}{2}}\left(\E[\tilde{H}_s\mid \FM_{s-1}]-\tilde{H}_s\right)A_{t}^{-\frac{1}{2}}\\
&\preceq\frac{\eta}{\kappa'}  \left(\sum_{s=1}^tZ_s\right),
\end{align*}
and 
\begin{align*}
CI-A_{t}^{-\frac{1}{2}}\tilde{A}_tA_{t}^{-\frac{1}{2}}&\succeq \frac{\eta}{\kappa'} \sum_{s=1}^t A_{t}^{-\frac{1}{2}}\left(\E[\tilde{H}_s\mid \FM_{s-1}]-\tilde{H}_s\right)A_{t}^{-\frac{1}{2}}\\
&\succeq \frac{\eta}{\kappa'}  \left(\sum_{s=1}^tZ_s\right),
\end{align*}
where $Z_s=A_t^{-\frac{1}{2}}(\E[\tilde{H}_s\mid \FM_{s-1}]-\tilde{H}_s)A_t^{-\frac{1}{2}}$ forms a martingale sequence with respect to the filtration $\FM_{s}$. We have thus
\begin{align*}
\frac{\eta}{\kappa'}\left(\sum_{s=1}^tZ_s\right)-\max\{C-1,0\}I\preceq I-A_{t}^{-\frac{1}{2}}\tilde{A}_tA_{t}^{-\frac{1}{2}}\preceq \frac{\eta}{\kappa'}\left(\sum_{s=1}^tZ_s\right)+\max\{1-c,0\}I.
\end{align*}
Note that for symmetric matrices $A,B,C$ satisfying $C\preceq A\preceq B$, we have $\|A\|_2\le \max\{\|B\|_2,\|C\|_2\}$. Thus, we focus on bounding the spectral norm of $\sum_{s=1}^t Z_s$. 

By definition of $\tilde{H}_t$, since $-2I\preceq v_{t,1}v_{t,2}^{\top}+v_{t,2}v_{t,1}^{\top}\preceq 2I$,
\begin{align*}
-4d^2B^*\tilde{A}_{t-1}\preceq \tilde{H}_t\preceq 4d^2B^*\tilde{A}_{t-1}.
\end{align*}
Thus, with probability at least $1-\frac{t-1}{T^2}$, $\forall s\le t$,
\begin{align*}
Z_s\preceq A_t^{-\frac{1}{2}}(CI+4d^2B^*\tilde{A}_{s-1})A_t^{-\frac{1}{2}}\preceq C I + 4d^2B^* A_{s-1}^{-\frac{1}{2}}\tilde{A}_{s-1}A_{s-1}^{-\frac{1}{2}}\preceq (C + 6d^2B^*) I,
\end{align*}
and 
\begin{align*}
Z_s\succeq A_t^{-\frac{1}{2}}\left(-4d^2B^*\tilde{A}_{s-1}\right)A_t^{-\frac{1}{2}}\succeq -4d^2B^*A_{s-1}^{-\frac{1}{2}}\tilde{A}_{t}A_{s-1}^{-\frac{1}{2}}\succeq -6d^2B^*I.
\end{align*}
With the bound on the martingale sequence, we use matrix Azuma
inequality (Lemma~\ref{lem:matrix-azuma}).
By matrix Azuma’s inequality, assuming the inequality holds for $s\le t - 1$, with probability at least $1-\frac{1}{T^2}$,
\begin{align*}
\left\|I-A_{t}^{-\frac{1}{2}}\tilde{A}_tA_{t}^{-\frac{1}{2}}\right\|_2&\le \max\{C-1, 1-c, 0\}+\frac{4\eta}{\kappa'}\sqrt{t\log(dT^2)}(C+6d^2B^*)\\
&\le \max\{C, 2\}.
\end{align*}
\end{proof}
}


\begin{lemma} [Matrix Freedman, Theorem 1.2 in \citep{tropp2011freedman}] 
\label{lem:matrix-freedman}
Consider a self-adjoint matrix martingale $\{Y_k\}_{k\ge 0}$ of dimension $d$. Let $\{X_k\}_{k\ge 1}$ be the difference sequence of $\{Y_k\}_{k\ge 0}$. Assume that the difference sequence is uniformly bounded almost surely: $\exists R$ such that
\begin{align*}
\lambda_{\max}(X_k)\le R \ \ \ \text{almost surely} \ \ \ \forall k.
\end{align*}
Define the predictable quadratic variation process
\begin{align*}
W_k=\sum_{j=1}^k \E[X_j^2\mid\F_{j-1}], \ \ \ \forall k.
\end{align*}
Then $\forall \eps\ge 0, \sigma^2>0$,
\begin{align*}
\mathbb{P}\left(\exists k\ge 0: \lambda_{\max}(Y_k)\ge \eps \  \text{and }\|W_k\|_2\le \sigma^2\right)\le d\cdot\exp\left(-\frac{\eps^2/2}{\sigma^2+R\eps/3}\right).
\end{align*}
    
\end{lemma}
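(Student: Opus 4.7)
The final statement is the cited matrix Freedman inequality (Tropp's Theorem 1.2), so my plan is to sketch a self-contained proof following the matrix Laplace transform method pioneered by Ahlswede--Winter and refined in Tropp's work, specialized to martingales via a predictable bound on the cumulant generating function.

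The plan is to reduce everything to tail bounds on a scalar trace-exponential supermartingale. For a fixed $\theta > 0$, define
\begin{equation*}
Z_k(\theta) \eqdef \trace\exp\!\bigl(\theta Y_k - g(\theta) W_k\bigr), \qquad g(\theta) \eqdef \frac{e^{\theta R} - \theta R - 1}{R^2},
\end{equation*}
with the convention $Z_0(\theta) = d$. The first step is to show that $\{Z_k(\theta)\}_{k\ge 0}$ is a supermartingale with respect to $\{\F_k\}$. Concretely, I would condition on $\F_{k-1}$ and write
\begin{equation*}
\EBB[Z_k(\theta)\mid \F_{k-1}] = \EBB\bigl[\trace\exp\bigl(\theta Y_{k-1} - g(\theta)W_{k-1} + \theta X_k - g(\theta)\EBB[X_k^2\mid\F_{k-1}]\bigr)\mid\F_{k-1}\bigr].
\end{equation*}
The key tool here is Lieb's concavity theorem, which states that $A \mapsto \trace\exp(H + \log A)$ is concave on positive definite $A$ for fixed self-adjoint $H$. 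Applying Jensen's inequality with $H = \theta Y_{k-1} - g(\theta)W_{k-1} - g(\theta)\EBB[X_k^2\mid\F_{k-1}]$ yields
\begin{equation*}
\EBB[Z_k(\theta)\mid\F_{k-1}] \leq \trace\exp\!\bigl(\theta Y_{k-1} - g(\theta)W_k + \log\EBB[e^{\theta X_k}\mid\F_{k-1}]\bigr).
\end{equation*}

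The second step is a scalar-to-matrix cumulant bound. For any self-adjoint $X$ with $\lambda_{\max}(X) \le R$, the scalar inequality $e^{\theta x} \le 1 + \theta x + g(\theta) x^2$ valid on $(-\infty, R]$ transfers to matrices via the spectral mapping, giving $e^{\theta X} \preceq I + \theta X + g(\theta) X^2$. Taking conditional expectations and using the martingale property $\EBB[X_k\mid\F_{k-1}] = 0$, then applying $\log(I + A) \preceq A$ for $A \succ -I$, I obtain $\log \EBB[e^{\theta X_k}\mid\F_{k-1}] \preceq g(\theta)\EBB[X_k^2\mid\F_{k-1}]$. Substituting into the display above and using that $\trace\exp$ is monotone in the Loewner order yields $\EBB[Z_k(\theta)\mid\F_{k-1}] \le Z_{k-1}(\theta)$, so $\{Z_k(\theta)\}$ is indeed a supermartingale.

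The third step is to extract the stated tail bound. Define the stopping time $\tau_\eps \eqdef \inf\{k : \lambda_{\max}(Y_k) \ge \eps\}$ and the good event $\A_{\sigma^2} \eqdef \{\|W_{\tau_\eps}\|_2 \le \sigma^2\}$. On $\{\tau_\eps < \infty\} \cap \A_{\sigma^2}$, we have the deterministic bound $Z_{\tau_\eps}(\theta) \ge \exp(\theta\eps - g(\theta)\sigma^2)$ because the largest eigenvalue of $\theta Y_{\tau_\eps} - g(\theta)W_{\tau_\eps}$ is at least $\theta\eps - g(\theta)\sigma^2$ and the trace-exponential dominates its largest eigenvalue. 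Combining Markov's inequality with the optional stopping inequality $\EBB[Z_{\tau_\eps \wedge N}(\theta)] \le d$, letting $N \to \infty$, gives
\begin{equation*}
\Pr\bigl[\exists k\ge 0:\ \lambda_{\max}(Y_k)\ge \eps,\ \|W_k\|_2 \le \sigma^2\bigr] \leq d \cdot \exp\bigl(-\theta\eps + g(\theta)\sigma^2\bigr).
\end{equation*}
Finally, I would optimize over $\theta > 0$: using the standard bound $g(\theta) \le \theta^2/(2(1 - R\theta/3))$ on $\theta \in (0, 3/R)$ and choosing $\theta = \eps/(\sigma^2 + R\eps/3)$ delivers the Bernstein-type exponent $\exp(-\eps^2/2(\sigma^2 + R\eps/3))$ claimed in the lemma.

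The main obstacle is the supermartingale step, which crucially uses Lieb's concavity theorem; an elementary proof of Lieb's theorem itself is delicate (requiring, e.g., the Epstein concavity theorem or interpolation arguments), but once taken as a black box the remaining calculation is routine Bernstein optimization. A secondary nuisance is the careful handling of the stopping time $\tau_\eps$ and the fact that $\|W_k\|_2$ appears only on the conditioning event; this is why the bound is written as a joint probability over all $k$ rather than a fixed-time statement.
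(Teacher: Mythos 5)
The paper does not prove this lemma; it cites it as Theorem 1.2 of Tropp (2011), so there is no internal proof to compare against. Your sketch correctly reproduces Tropp's own matrix Laplace transform argument: the trace-exponential supermartingale $Z_k(\theta)$ with the Freedman weight $g(\theta) = (e^{\theta R}-\theta R-1)/R^2$, the supermartingale step via Lieb's concavity theorem combined with the matrix transfer of the scalar bound $e^{\theta x}\le 1+\theta x+g(\theta)x^2$ on $(-\infty,R]$, operator monotonicity of $\log$ and $\log(I+A)\preceq A$, the stopped-process Markov argument (valid because $W_k$ is nondecreasing in the Loewner order, so the event $\{\exists k:\lambda_{\max}(Y_k)\ge\eps,\ \|W_k\|_2\le\sigma^2\}$ coincides with $\{\tau_\eps<\infty,\ \|W_{\tau_\eps}\|_2\le\sigma^2\}$), and the Bernstein optimization $g(\theta)\le \theta^2/(2(1-R\theta/3))$ with $\theta=\eps/(\sigma^2+R\eps/3)$. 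All steps check out.
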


\begin{corollary}
\label{cor:f_bound}
Let $B^* = B+\sqrt{2}(L+\sqrt{2}C)$. With probability at least $1-\frac{1}{T}$, the $y_t$ played by Algorithm~\ref{alg:simple-bqo} satisfies
\begin{align*}
|f_t(y_t)|\le B^*, \ \forall t\in[T]. 
\end{align*}
\end{corollary}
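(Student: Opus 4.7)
The corollary follows essentially as a direct consequence of \cref{lem:concentration-cumulative-hessian} combined with the Lipschitz/smoothness assumption on $f_t$. My plan is to first invoke \cref{lem:concentration-cumulative-hessian} at the final time $t=T$, which, under the stated hypothesis on $\eta$, guarantees that the event
\begin{align*}
\mathcal{E} \coloneqq \left\{ \forall\, s \in [T], \ \|I - A_s^{-1/2} \tilde{A}_s A_s^{-1/2}\|_2 \le \tfrac{1}{2} \right\}
\end{align*}
holds with probability at least $1 - T/T^2 = 1 - 1/T$. On $\mathcal{E}$, the multiplicative bound immediately implies $\tilde{A}_s \succeq \tfrac{1}{2} A_s \succeq \tfrac{1}{2} I$ for every $s$, since $A_s = I + (\eta/\kappa') \sum_{r \le s} \nabla^2 \smoothfBB{r}(x_r) \succeq I$ by $\kappa$-convexity. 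Consequently $\|\tilde{A}_{s-1}^{-1/2}\|_2 \le \sqrt{2}$ for all $s \in [T]$.

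Next, I would bound $|f_t(y_t)|$ pointwise on the event $\mathcal{E}$ by combining the range bound, the gradient bound, and the smoothness consequence of $\kappa$-convexity (which gives $\|\nabla^2 f_t\|_{\mathrm{op}} \le C$ via $0 \preceq H_t \preceq I$). Writing $y_t - x_t = \tfrac{1}{2}\tilde{A}_{t-1}^{-1/2}(v_{t,1} + v_{t,2})$ and noting $\|y_t - x_t\|_2 \le \|\tilde{A}_{t-1}^{-1/2}\|_2 \le \sqrt{2}$, the mean value theorem and a $C$-Lipschitz gradient yield
\begin{align*}
|f_t(y_t)| &\le |f_t(x_t)| + \left\| \nabla f_t\big(x_t + \theta(y_t - x_t)\big) \right\|_2 \|y_t - x_t\|_2 \\
&\le B + \big(L + C \|y_t - x_t\|_2\big)\|y_t - x_t\|_2 \\
&\le B + \big(L + \sqrt{2} C\big)\sqrt{2} \;=\; B^*,
\end{align*}
for some $\theta \in [0,1]$, which is exactly the claimed bound.

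The main non-routine point is that the above Lipschitz argument at the point $y_t$ requires the gradient bound from \cref{assumption:bounded-range-and-grads} to extend slightly outside $\K$; this is accommodated by \cref{assumption:curvature}, which is stated on $\K + \ball^d$, so together with the smoothness constant $C$ we recover the needed estimate $\|\nabla f_t(\xi)\|_2 \le L + \sqrt{2}C$ for any $\xi$ at $\ell_2$-distance at most $\sqrt{2}$ from $x_t \in \K$. Since $\mathcal{E}$ holds with probability at least $1-1/T$ and the deterministic bound $|f_t(y_t)| \le B^*$ is a consequence of $\mathcal{E}$ holding simultaneously for all $t$, the corollary follows without any additional union bound.
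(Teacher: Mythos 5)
Your proof is correct and takes essentially the same route as the paper: both invoke Lemma~\ref{lem:concentration-cumulative-hessian} (at $t=T$) to get $\tilde{A}_{s}\succeq\tfrac12 A_s\succeq\tfrac12 I$ for all $s$ on an event of probability $\ge 1-1/T$, then bound $|f_t(y_t)|$ by a mean-value/Taylor expansion around $x_t$ using $\|y_t-x_t\|_2\le\|\tilde{A}_{t-1}^{-1/2}\|_2\le\sqrt2$ together with the range, gradient, and smoothness bounds. In fact the paper inlines exactly this pointwise bound inside the proof of the concentration lemma and then refers back to it in the corollary, so your argument is a faithful reconstruction.
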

\begin{proof}
Note that in the proof of Lemma~\ref{lem:concentration-cumulative-hessian}, we have shown that $|f_t(y_t)|\le B^*$ if $\|I-A_{t-1}^{-\frac{1}{2}}\tilde{A}_{t-1}A_{t-1}^{-\frac{1}{2}}\|_2\le \frac{1}{2}$. Thus,
\begin{align*}
\mathbb{P}\left(|f_t(y_t)|\le B^*, \ \forall t\in[T]\right)\ge \mathbb{P}\left(\|I-A_{t}^{-\frac{1}{2}}\tilde{A}_{t}A_{t}^{-\frac{1}{2}}\|_2\le \frac{1}{2}, \ \forall t\in[T]\right)\ge 1-\frac{1}{T}. 
\end{align*}
\end{proof}

\subsection{Concentration of Cumulative Hessian Estimate [Improved $\kappa$ dependence]}
\label{sec:improved_kappa_dependence}

In this section, we provide an alternate analysis for Hessian concentration that improves the dependence of regret on $\kappa$, but at the cost of worse dependence on $d$. In particular, this analysis leads to a regret of 
$\tilde{O}(d^3\kappa B^*\sqrt{T}),$ instead of $\tilde{O}(d^{2.5}\kappa^2 B^* \sqrt{T})$ stated in Theorem~\ref{thm:expected_regret_bound}. This analysis relies on empirical Freedman's inequality~\citep{zimmert2022return} instead of the matrix Freedman used in the previous section.
\begin{lemma} [Concentration of cumulative Hessian estimate]
\label{lem:concentration-cumulative-hessian-improved-kappa}
Consider a sequence of functions $\{f_t\}_{t=1}^T$ that satisfies Assumption~\ref{assumption:oblivious-adversary}, Assumption~\ref{assumption:curvature}, and Assumption~\ref{assumption:bounded-range-and-grads}. $\forall t\in[T]$, define the smoothed function $\smoothfBB{t}$ of $f_t$ as
\[
\smoothfBB{t}(x) \coloneqq \E_{u\sim \ball, v \sim \ball}\left[f_t\left(x + \frac{1}{2} \tilde{A}_{t-1}^{-\frac{1}{2}}(u+v)\right)\Big| \FM_{t-1}\right],
\]
where $\mathcal{F}_t$ denotes the filtration generated by the algorithm's possible randomness up to time $t$. 
Let $\tilde{A}_t=\tilde{A}_{t-1}+\frac{\eta}{\kappa'}\tilde{H}_{t}$, and $A_{t}=A_{t-1}+\frac{\eta}{\kappa'}\nabla^2\smoothfBB{t}(x_t)$, where $A_0=\tilde{A}_{0}=I$. Let $B^*=B+\sqrt{2}(L+\sqrt{2}C)$. Suppose $T = \Omega(d^2)$, and $\eta\le \kappa'\left(200d^2\sqrt{T\log{dTB^*}}\right)^{-1}$, $\forall t$, with probability at least $1-\frac{t}{T^2}$, for every $s\le t$, 
\begin{align*}
\|I-A_{s}^{-\frac{1}{2}}\tilde{A}_sA_{s}^{-\frac{1}{2}}\|_2\le \frac{1}{2}.
\end{align*}
\end{lemma}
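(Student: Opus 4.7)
The plan is to mirror the inductive structure of Lemma \ref{lem:concentration-cumulative-hessian}, but to replace matrix Freedman (Lemma \ref{lem:matrix-freedman}) with the empirical Freedman inequality of \citet{zimmert2022return}. The reason matrix Freedman gives a $\kappa^2$ dependence in the step-size constraint (and hence in the regret) is that one must upper bound the conditional variance $\EBB[Z_s^2|\FM_{s-1}]$ uniformly, which introduces one factor of $\kappa$ from $L_t^{-1/2}\tilde{A}_{s-1}L_t^{-1/2}\preceq O(\kappa) I$ inside the second moment and another factor when relating the sandwiching matrix to $\tilde{A}$. The empirical version controls the deviation in terms of the realized quadratic variation $\sum_s Z_s^2$, so the $\kappa$ inflation only enters through the almost-sure increment bound, not through the variance term, yielding an overall gain of one factor of $\kappa$ at the expense of an extra factor of $d^{1/2}$ in the increment bound.

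Concretely, I would induct on $t$ with base case $\tilde{A}_0=A_0=I$ trivially satisfying the bound. For the inductive step, the induction hypothesis $\tilde{A}_{s-1}\preceq \tfrac{3}{2}A_{s-1}$ for $s\le t-1$ gives $\tilde{A}_{s-1}^{-1}\preceq 2I$, so the same calculation as in the proof of Lemma \ref{lem:concentration-cumulative-hessian} shows $|f_s(y_s)|\le B^*$ for all $s\le t$ on the good event. As before, set $L_t\coloneqq I+(c\eta/\kappa')\sum_{s=1}^t H_s$, write
\begin{align*}
\|I-A_t^{-1/2}\tilde{A}_t A_t^{-1/2}\|_2\leq \|L_t^{-1/2}(A_t-\tilde{A}_t)L_t^{-1/2}\|_2 = \left\|\tfrac{\eta}{\kappa'}\sum_{s=1}^{t} Z_s\right\|_2,
\end{align*}
where $Z_s \coloneqq L_t^{-1/2}\bigl(\EBB[\tilde{H}_s|\FM_{s-1}] - \tilde{H}_s\bigr)L_t^{-1/2}$ is a matrix martingale difference with respect to $\FM_s$.

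The key step is deriving two bounds of a different flavor than in the previous proof. First, the almost-sure increment bound: using $-4d^2 B^* \tilde{A}_{s-1}\preceq \tilde{H}_s\preceq 4d^2 B^*\tilde{A}_{s-1}$ and the induction hypothesis, I get $\|Z_s\|_{\mathrm{op}}\leq O(d^2 B^*\kappa)$. Second, for the variance, rather than taking expectations I would apply the empirical Freedman inequality to the matrix martingale (or, equivalently, to the scalar martingale $u^\top\bigl(\sum_s Z_s\bigr)u$ for each $u$ in a $1/4$-net of the sphere and union bounding over the net of size $9^d$). The empirical version guarantees that with probability $1-1/T^2$,
\begin{align*}
\left\|\sum_{s=1}^t Z_s\right\|_2 \leq O\!\left(\sqrt{V_t \log(dT)} + d^2 B^*\kappa \log(dT)\right),\qquad V_t\coloneqq \left\|\sum_{s=1}^{t} Z_s^2\right\|_2.
\end{align*}
Crucially, $V_t$ is bounded not by its worst-case expectation but by a pathwise bound: expanding $Z_s^2$ and using $\|L_t^{-1/2}\tilde{A}_{s-1} L_t^{-1/2}\|_2\le O(\kappa)$ together with $\|\tilde{H}_s\|_2\le 4d^2 B^* \|\tilde{A}_{s-1}\|_2$ and concentration of $v_{s,1}v_{s,2}^\top$ products shows $V_t\leq O(d^3 (B^*)^2\kappa t)$ with no extra $\kappa$ factor. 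Combining,
\begin{align*}
\tfrac{\eta}{\kappa'}\left\|\sum_{s=1}^t Z_s\right\|_2 \leq O\!\left(\tfrac{\eta}{\kappa'}\cdot d^{3/2}B^* \sqrt{\kappa t\log(dT)} + \tfrac{\eta}{\kappa'}\cdot d^2 B^*\kappa\log(dT)\right),
\end{align*}
which is $\leq 1/2$ under the choice $\eta\leq \kappa'(200 d^2\sqrt{T\log(dTB^*)})^{-1}$ when $T=\Omega(d^2)$, completing the induction via a union bound over $s\le t$.

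The main obstacle I anticipate is handling the matrix-valued martingale in the empirical Freedman framework, since \citet{zimmert2022return} state their bound in scalar form. I would resolve this by passing through an $\epsilon$-net argument on the unit sphere as sketched above, paying a $\log 9^d=O(d)$ factor in the log term (absorbed into the $\log(dT)$), which is negligible. The rest of the argument, including relating $L_t$ to $A_t$ via $\kappa$-convexity and verifying $|f_t(y_t)|\le B^*$ on the good event, is a direct adaptation of the previous lemma.
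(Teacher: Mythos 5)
Your high-level plan matches the paper's: induct on $t$, show $|f_s(y_s)|\le B^*$ on the good event, and replace matrix Freedman with the empirical Freedman inequality of \citet{zimmert2022return}, passing to scalars via an $\epsilon$-net over $\sphere^{d-1}$. But the way you normalize the martingale loses exactly the $\kappa$-saving that makes this lemma an improvement.

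You keep $Z_s = L_t^{-1/2}\bigl(\E[\tilde{H}_s|\FM_{s-1}]-\tilde{H}_s\bigr)L_t^{-1/2}$, as in the matrix-Freedman proof. Because $L_t$ is the $cH_s$-lower envelope while $\tilde{A}_{s-1}$ tracks the $CH_s$-upper envelope, controlling $L_t^{-1/2}\tilde{A}_{s-1}L_t^{-1/2}$ costs a factor $\kappa$; this $\kappa$ then appears once in your increment bound and once in your quadratic variation $V_t$. Your deviation bound therefore carries a $\sqrt{\kappa}$: plugging $\eta \le \kappa'\bigl(200d^2\sqrt{T\log(dTB^*)}\bigr)^{-1}$ into $\frac{\eta}{\kappa'}d^{3/2}B^*\sqrt{\kappa T\log(dT)}$ leaves a residual of order $B^*\sqrt{\kappa/d}$, which is \emph{not} below $1/2$ under the lemma's hypotheses ($\kappa$ can be arbitrarily large, e.g.\ $e^{2D}$ in logistic regression). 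So the proof as written does not establish the stated concentration with the stated step size.

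The paper's proof avoids this by not normalizing with $L_t$ at all. For each fixed $u$ in the net it applies the scalar empirical Freedman inequality to the \emph{raw} martingale $u^\top\tilde{H}_s u$. From $-4d^2B^*\tilde{A}_{s-1}\preceq\tilde{H}_s\preceq 4d^2B^*\tilde{A}_{s-1}$ and the induction hypothesis $\tilde{A}_{s-1}\preceq\tfrac32 A_{s-1}$, one gets the $\kappa$-free increment bound $|u^\top\tilde{H}_su|\le 8d^2B^*\|u\|^2_{A_{s-1}}$ and the $\kappa$-free second-moment bound $\E[(u^\top\tilde{H}_su)^2\,|\,\FM_{s-1}]\le 64d^3(B^*)^2\|u\|^4_{A_{s-1}}$. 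The crucial observation is the monotonicity $A_{s-1}\preceq A_{t-1}$ for $s\le t$, which lets both of these accumulate into a single $g\,\|u\|^2_{A_{t-1}}$ with
\[
g \;=\; \Bigl(96 d^2\sqrt{t\log\tfrac{dTB^*}{\delta}} + 32 d^3\log\tfrac{dTB^*}{\delta}\Bigr)B^*,
\]
entirely free of $\kappa$. Extending from the net to all of $\sphere^{d-1}$ then yields $\sum_s\tilde{H}_s \in \sum_s\E_{s-1}[\tilde{H}_s] \pm gA_{t-1}$, hence $\bigl(1-\tfrac{\eta g}{\kappa'}\bigr)A_t\preceq\tilde{A}_t\preceq\bigl(1+\tfrac{\eta g}{\kappa'}\bigr)A_t$ using $A_{t-1}\preceq A_t$; this is what the stated $\eta$ (together with $T=\Omega(d^2)$) drives below a multiplicative $1/2$. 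The step you are missing is precisely to express the Freedman increments and variance in the $A_{s-1}$-norm and exploit monotonicity of the $A_s$, rather than squeezing everything through $L_t$, which is what forces the residual $\sqrt{\kappa}$ into your bound.
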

\begin{proof}
Similar to the proof of Lemma~\ref{lem:concentration-cumulative-hessian}, we rely on induction to prove the result. 
\paragraph{Proof by induction.} Base case $t=0$ is given by construction. Suppose the inequalities hold with probability at least $1-\frac{t-1}{T^2}$ for any $0\le s\le t-1$. Recall that $f_t$ is assumed to be bounded by $B$ and $L$-Lipschitz over $\K$ and $C$-smooth. Note that $|f_t(y_t)|$ is bounded by
\begin{align*}
|f_t(y_t)|&\le |f_t(x_t)|+\left|f_t\left(x_t+\frac{1}{2}\tilde{A}_{t-1}^{-\frac{1}{2}}(v_{t,1}+v_{t,2})\right)-f_t(x_t)\right|\\
&\le B +  \left\|\nabla f_t\left(x_t+\frac{1}{2}\tilde{A}_{t-1}^{-\frac{1}{2}}(v_{t,1}+v_{t,2})\right)\right\|_2 \left\|\frac{1}{2}\tilde{A}_{t-1}^{-\frac{1}{2}}(v_{t,1}+v_{t,2})\right\|_2\\
&\le B+\left(L+C\left\|\frac{1}{2}\tilde{A}_{t-1}^{-\frac{1}{2}}(v_{t,1}+v_{t,2})\right\|_2\right)\left\|\frac{1}{2}\tilde{A}_{t-1}^{-\frac{1}{2}}(v_{t,1}+v_{t,2})\right\|_2\\
&\le B+(L+C\|\tilde{A}_{t-1}^{-\frac{1}{2}}\|_2)\|\tilde{A}_{t-1}^{-\frac{1}{2}}\|_2.
\end{align*} 
By induction hypothesis, with probability at least $1-\frac{t-1}{T^2}$, $\tilde{A}_{t-1}\succeq \frac{1}{2}A_{t-1}\succeq \frac{1}{2}I$, and thus the above inequality implies $|f_t(y_t)|\le B+\sqrt{2}\left(L+\sqrt{2}C\right)=:B^*$. 

Next, consider the following
\begin{align*}
    \tilde{A}_t &= \tilde{A}_{t-1} + \frac{\eta}{\kappa'}\tilde{H}_t = \tilde{A}_{t-1}^{1/2}\left(I + \frac{2d^2 f_t(y_t)\eta}{\kappa'}(v_{t,1}v_{t,2}^{\top}+v_{t,2}v_{t,1}^{\top})\right)\tilde{A}_{t-1}^{1/2}
\end{align*}
For our choice of $\eta$, it is easy to verify that $\frac{1}{2}\preceq \left(I + \frac{2d^2 f_t(y_t)\eta}{\kappa'}(v_{t,1}v_{t,2}^{\top}+v_{t,2}v_{t,1}^{\top})\right) \preceq \frac{3}{2}.$ So $\frac{1}{2}\tilde{A}_{t-1} \preceq \tilde{A}_t \preceq \frac{3}{2}\tilde{A}_{t-1}$. This shows that $\tilde{A}_t$ is invertible, and is very close to $\tilde{A}_{t-1}$, with high probability. We use this inequality often in the subsequent analysis. We now show that the following holds with probability at least $1-\delta$
\[
\sum_{s=1}^t\E_{s-1}[\tilde{H}_s]- g A_{t-1}\preceq\sum_{s=1}^t\tilde{H}_s \preceq \sum_{s=1}^t\E_{s-1}[\tilde{H}_s] +  g A_{t-1},
\]
where $g = \left(96d^2\sqrt{t\log\frac{dTB^*}{\delta}} + 32d^{3}\log\frac{dTB^*}{\delta}\right)B^*$.
Let $\C_{\epsilon}$ be a finite cover of $\sphere = \{x\in\real^d | \|x\|_2 = 1\}$ such that for any $v\in \sphere$ there exists a $u \in \C_{\epsilon}$ such that $\|v-u\|_2 \leq \epsilon.$ By \citet{vershynin2018high}, there exists a $\C_{\epsilon}$ such that $|\C_{\epsilon}| \leq \left(\frac{2}{\epsilon}+1\right)^d.$ Consider any $u\in \C_{\epsilon}$, and the corresponding sequence of random variables $\{u^T\tilde{H}_su\}_{s=1}^t$.  We use Freedman's inequality to bound 
$|\sum_{s=1}^t u^T(\tilde{H}_s-\E_{s-1}[\tilde{H}_s])u|$. To do this, we derive bounds for the first and second moments of $u^T\tilde{H}_su$
\paragraph{Bounding $1^{st}$ moment of $u^T\tilde{H}_su$.}
From the definition of $\tilde{H}_s$, and the fact that  $-2I\preceq v_{t,1}v_{t,2}^{\top}+v_{t,2}v_{t,1}^{\top}\preceq 2I$, we have the following, which holds with high probability
\begin{align*}
    |u^T\tilde{H}_su| \leq 4d^2 B^*\|u\|_{\tilde{A}_{s-1}}^2 \stackrel{(a)}{\leq} 8d^2 B^*\|u\|_{A_{s-1}}^2 \stackrel{(b)}{<} \infty.
\end{align*}
Here, inequality $(a)$ follows from the fact that $\|I-A_{s}^{-\frac{1}{2}}\tilde{A}_sA_{s}^{-\frac{1}{2}}\|_2\le \frac{1}{2}$ $, \forall s\le t-1$ w.h.p. And inequality $(b)$ follows from the fact that the loss functions $f_t$ are smooth.
\paragraph{Bounding $2^{nd}$ moment of $u^T\tilde{H}_su$.} Next, we bound the second moments of $u^T\tilde{H}_su$
\begin{align*}
    &\EBB[(u^T\tilde{H}_su)^2|\FM_{s-1}] = \EBB[u^T\tilde{H}_suu^T\tilde{H}_su|\FM_{s-1}]\\
    &= u^T\tilde{A}_{s-1}^{1/2} \EBB[4d^4f_s(y_s)^2(v_{s,1}v_{s,2}^\top +v_{s,2}v_{s,1}^\top )\tilde{A}_{s-1}^{1/2} uu^T\tilde{A}_{s-1}^{1/2}(v_{s,1}v_{s,2}^\top +v_{s,2}v_{s,1}^\top )|\FM_{s-1}]\tilde{A}_{s-1}^{1/2} u\\
    &\stackrel{(a)}{\leq} \|u\|^2_{\tilde{A}_{s-1} }u^T\tilde{A}_{s-1}^{1/2}\EBB[4d^4f_s(y_s)^2(v_{s,1}v_{s,2}^\top +v_{s,2}v_{s,1}^\top )(v_{s,1}v_{s,2}^\top +v_{s,2}v_{s,1}^\top )|\FM_{s-1}]\tilde{A}_{s-1}^{1/2} u,\\
    &\stackrel{(b)}{\leq}  4d^4(B^*)^2\|u\|^2_{\tilde{A}_{s-1} }u^T\tilde{A}_{s-1}^{1/2}\EBB[(v_{s,1}v_{s,2}^\top +v_{s,2}v_{s,1}^\top )(v_{s,1}v_{s,2}^\top +v_{s,2}v_{s,1}^\top )|\FM_{s-1}]\tilde{A}_{s-1}^{1/2} u,
\end{align*}
where $(a)$ follows from the fact $\tilde{A}_{s-1}^{1/2} uu^T\tilde{A}_{s-1}^{1/2} \preceq\|u\|^2_{\tilde{A}_{s-1} } I$, and $(b)$ follows from the fact that $f_s(y_s) \leq B^*$ w.h.p. Next, we rely on the facts that $\EBB[v_{s,1}v_{s,2}^\top v_{s,1}v_{s,2}^\top] = d^{-2}I,$ and $\EBB[v_{s,1}v_{s,2}^\top v_{s,2}v_{s,1}^\top] = d^{-1}I$ to obtain
\begin{align*}
    &\EBB[(u^T\tilde{H}_su)^2|\FM_{s-1}] \leq 8d^3(1+d^{-1})(B^*)^2 \|u\|^4_{\tilde{A}_{s-1} } \leq 16d^3(B^*)^2 \|u\|^4_{\tilde{A}_{s-1} } \leq 64d^3(B^*)^2 \|u\|^4_{A_{s-1}}.
\end{align*}
\paragraph{Empirical Freedman's Inequality.} Applying the empirical Freedman's inequality from Lemma~\ref{lem:empirical-freedman} on $\{u^T\tilde{H}_su\}_{s=1}^t$ gives us the following bound, which holds with probability at least $1-\delta$
\[
\Big|\sum_{s=1}^tu^T(\tilde{H}_s-\E_{s-1}[\tilde{H}_s])u\Big| \leq \left(48d^{3/2}\sqrt{t\log\frac{dTB^*}{\delta}} + 16d^2\log\frac{dTB^*}{\delta}\right)B^*\|u\|^2_{A_{t-1}}.
\]
Taking a union bound over all $u\in\C_{\epsilon}$, we get the following which holds with probability at least $1-\delta$, for any $u \in \C_{\epsilon}$
\[
\Big|\sum_{s=1}^tu^T(\tilde{H}_s-\E_{s-1}[\tilde{H}_s])u\Big| \leq \left(48d^{3/2}\sqrt{t\log\frac{dTB^*|\C_{\epsilon}|}{\delta}} + 16d^2\log\frac{dTB^*|\C_{\epsilon}|}{\delta}\right)B^*\|u\|^2_{A_{t-1}}.
\]
We now show that the above bound also holds for any $u\in \sphere,$ for appropriate choice of $\epsilon$. Consider any $u \in \sphere$ and let $v$ be the closest point to $u$ that lies in $\C_{\epsilon}$.
\begin{align*}
    &\sum_{s=1}^t\left[u^T(\tilde{H}_s-\E_{s-1}[\tilde{H}_s])u - v^T(\tilde{H}_s-\E_{s-1}[\tilde{H}_s])v\right]  \\
    &\quad = \sum_{s=1}^t(v-u)^T(\tilde{H}_s-\E_{s-1}[\tilde{H}_s])(v-u) + \sum_{s=1}^t2(u-v)^T(\tilde{H}_s-\E_{s-1}[\tilde{H}_s])v\\
    &\quad \leq \sum_{s=1}^t\Big|(v-u)^T(\tilde{H}_s-\E_{s-1}[\tilde{H}_s])(v-u)\Big| + \sum_{s=1}^t2\Big|(u-v)^T(\tilde{H}_s-\E_{s-1}[\tilde{H}_s])v\Big|.
\end{align*}
Using similar arguments as used for bounding the first moment of $u^T\tilde{H}_su$, where we showed that $|u^T\tilde{H}_su| \leq 8d^2B^*\|u\|^2_{A_{t-1}}$, we get
\begin{align*}
    &\sum_{s=1}^t\left[u^T(\tilde{H}_s-\E_{s-1}[\tilde{H}_s])u - v^T(\tilde{H}_s-\E_{s-1}[\tilde{H}_s])v\right]  \\
    &\quad \leq 16td^2B^*\|v-u\|^2_{A_{t-1}} + 32td^2B^*\|v-u\|_{A_{t-1}}\|v\|_{A_{t-1}}\\
    &\quad \stackrel{(a)}{=} O\left(t^{3/2}d^2B^*\epsilon^2 + t^{3/2}d^2B^*\epsilon\right),
\end{align*}
where $(a)$ follows from the fact that $\|A_{t-1}\|_2 = O\left(1 + \frac{\eta t}{\kappa'}\right)$. Choosing $\epsilon = \frac{g}{t^2d^2},$ for some appropriate constant $g$, we have the following which holds with probability at least $1-\delta$
\[
\forall u\in \sphere, \quad \Big|\sum_{s=1}^tu^T(\tilde{H}_s-\E_{s-1}[\tilde{H}_s])u\Big| \leq \left(96d^{2}\sqrt{t\log\frac{dTB^*}{\delta}} + 32d^{3}\log\frac{dTB^*}{\delta}\right)B^*\|u\|^2_{A_{t-1}}.
\]
This shows that, with probability at least $1-\delta$
\[
\sum_{s=1}^t\E_{s-1}[\tilde{H}_s]- g A_{t-1}\preceq\sum_{s=1}^t\tilde{H}_s \preceq \sum_{s=1}^t\E_{s-1}[\tilde{H}_s] +  g A_{t-1},
\]
where $g = \left(96d^2\sqrt{t\log\frac{dTB^*}{\delta}} + 32d^{3}\log\frac{dTB^*}{\delta}\right)B^*$. Rewriting this equation gives us
\[
A_t- \frac{\eta g}{\kappa'} A_{t-1}\preceq \tilde{A}_t \preceq A_t +  \frac{\eta g}{\kappa'} A_{t-1},
\]
Since $A_{t-1} \preceq A_t$, we get
\[
\left(1 - \frac{\eta g}{\kappa'}\right)A_t \preceq \tilde{A}_t \preceq \left(1 + \frac{\eta g}{\kappa'}\right)A_t.
\]
For our choice of $\eta \leq \kappa'\left(200d^2\sqrt{T\log{\frac{dTB^*}{\delta}}}\right)^{-1}, T = \Omega(d^2)$, we get
\[
\frac{1}{2}A_t \preceq \tilde{A}_t \preceq \frac{3}{2}A_t.
\]
This finishes the proof of the Lemma.
\end{proof}

\ignore{
\begin{lemma} [Concentration of cumulative Hessian estimate]
\label{lem:concentration-cumulative-hessian-revised}
Let $\tilde{A}_t=\tilde{A}_{t-1}+\frac{\eta}{\kappa'}\tilde{H}_{t}$, and $A_{t}=A_{t-1}+\frac{\eta}{\kappa'}H_t$, where $A_0=\tilde{A}_{0}=I$. Let $B^*=B+\sqrt{2}(G+\sqrt{2}C)$. Suppose $\eta\le \min\{c,1\}\kappa'(2\sqrt{T}(C+6d^2B^*)\log(dT^2))^{-1}$, $\forall t$, with probability at least $1-\frac{t}{T^2}$, for every $s\le t$, 
\begin{align*}
\|I-A_{s}^{-\frac{1}{2}}\tilde{A}_sA_{s}^{-\frac{1}{2}}\|_2\le \frac{1}{2}.
\end{align*}
\end{lemma}
\begin{proof}
First observe that using Stokes' theorem~\citep{flaxman2004online}, we have
\[
\E[\tilde{H}_t\mid \FM_{t-1}] = \nabla^2 \smoothfBB{t}(x_t),
\]
where $\FM_t$ denotes the filtration generated by $\{v_{s,1},v_{s,2}\}_{s=1}^{t}$, and
\[
\smoothfBB{t}(x) \coloneqq \E_{u\sim \ball, v \sim \ball}\left[f_t\left(x + \frac{1}{2} \tilde{A}_{t-1}^{-\frac{1}{2}}(u+v)\right)\right].
\]
We now prove the Lemma by induction. Base case $t=0$ is given by construction. Suppose the inequalities hold with probability at least $1-\frac{t-1}{T^2}$ for any $0\le s\le t-1$. Recall that $f_t$ is assumed to be be bounded by $B$ and $G$-Lipschitz over $\K$ and $C$-smooth. Note that $|f_t(y_t)|$ is bounded by
\begin{align*}
|f_t(y_t)|&\le |f_t(x_t)|+\left|f_t\left(x_t+\frac{1}{2}\tilde{A}_{t-1}^{-\frac{1}{2}}(v_{t,1}+v_{t,2})\right)-f_t(x_t)\right|\\
&\le B +  \left\|\nabla f_t\left(x_t+\frac{1}{2}\tilde{A}_{t-1}^{-\frac{1}{2}}(v_{t,1}+v_{t,2})\right)\right\|_2 \left\|\frac{1}{2}\tilde{A}_{t-1}^{-\frac{1}{2}}(v_{t,1}+v_{t,2})\right\|_2\\
&\le B+\left(G+C\left\|\frac{1}{2}\tilde{A}_{t-1}^{-\frac{1}{2}}(v_{t,1}+v_{t,2})\right\|_2\right)\left\|\frac{1}{2}\tilde{A}_{t-1}^{-\frac{1}{2}}(v_{t,1}+v_{t,2})\right\|_2\\
&\le B+(G+C\|\tilde{A}_{t-1}^{-\frac{1}{2}}\|_2)\|\tilde{A}_{t-1}^{-\frac{1}{2}}\|_2.
\end{align*}
By induction hypothesis, with probability at least $1-\frac{t-1}{T^2}$, $\tilde{A}_{t-1}\succeq \frac{1}{2}A_{t-1}\succeq \frac{1}{2}I$, and thus the above inequality implies $|f_t(y_t)|\le B+\sqrt{2}\left(G+\sqrt{2}C\right)=:B^*$. Note that
\begin{align*}
cI-A_{t}^{-\frac{1}{2}}\tilde{A}_tA_{t}^{-\frac{1}{2}}&\preceq \frac{\eta}{\kappa'} \sum_{s=1}^t A_{t}^{-\frac{1}{2}}\left(\E[\tilde{H}_s\mid \FM_{s-1}]-\tilde{H}_s\right)A_{t}^{-\frac{1}{2}}\\
&\preceq\frac{\eta}{\kappa'}  \left(\sum_{s=1}^tZ_s\right),
\end{align*}
and 
\begin{align*}
CI-A_{t}^{-\frac{1}{2}}\tilde{A}_tA_{t}^{-\frac{1}{2}}&\succeq \frac{\eta}{\kappa'} \sum_{s=1}^t A_{t}^{-\frac{1}{2}}\left(\E[\tilde{H}_s\mid \FM_{s-1}]-\tilde{H}_s\right)A_{t}^{-\frac{1}{2}}\\
&\succeq \frac{\eta}{\kappa'}  \left(\sum_{s=1}^tZ_s\right),
\end{align*}
where $Z_s=A_t^{-\frac{1}{2}}(\E[\tilde{H}_s\mid \FM_{s-1}]-\tilde{H}_s)A_t^{-\frac{1}{2}}$ forms a martingale sequence with respect to the filtration $\FM_{s}$. We have thus
\begin{align*}
\frac{\eta}{\kappa'}\left(\sum_{s=1}^tZ_s\right)-\max\{C-1,0\}I\preceq I-A_{t}^{-\frac{1}{2}}\tilde{A}_tA_{t}^{-\frac{1}{2}}\preceq \frac{\eta}{\kappa'}\left(\sum_{s=1}^tZ_s\right)+\max\{1-c,0\}I.
\end{align*}
Note that for symmetric matrices $A,B,C$ satisfying $C\preceq A\preceq B$, we have $\|A\|_2\le \max\{\|B\|_2,\|C\|_2\}$. Thus, we focus on bounding the spectral norm of $\sum_{s=1}^t Z_s$. 

By definition of $\tilde{H}_t$, since $-2I\preceq v_{t,1}v_{t,2}^{\top}+v_{t,2}v_{t,1}^{\top}\preceq 2I$,
\begin{align*}
-4d^2B^*\tilde{A}_{t-1}\preceq \tilde{H}_t\preceq 4d^2B^*\tilde{A}_{t-1}.
\end{align*}
Thus, with probability at least $1-\frac{t-1}{T^2}$, $\forall s\le t$,
\begin{align*}
Z_s\preceq A_t^{-\frac{1}{2}}(CI+4d^2B^*\tilde{A}_{s-1})A_t^{-\frac{1}{2}}\preceq C I + 4d^2B^* A_{s-1}^{-\frac{1}{2}}\tilde{A}_{s-1}A_{s-1}^{-\frac{1}{2}}\preceq (C + 6d^2B^*) I,
\end{align*}
and 
\begin{align*}
Z_s\succeq A_t^{-\frac{1}{2}}\left(-4d^2B^*\tilde{A}_{s-1}\right)A_t^{-\frac{1}{2}}\succeq -4d^2B^*A_{s-1}^{-\frac{1}{2}}\tilde{A}_{t}A_{s-1}^{-\frac{1}{2}}\succeq -6d^2B^*I.
\end{align*}
With the bound on the martingale sequence, we use matrix Azuma
inequality (Lemma~\ref{lem:matrix-azuma}).
By matrix Azuma’s inequality, assuming the inequality holds for $s\le t - 1$, with probability at least $1-\frac{1}{T^2}$,
\begin{align*}
\left\|I-A_{t}^{-\frac{1}{2}}\tilde{A}_tA_{t}^{-\frac{1}{2}}\right\|_2&\le \max\{C-1, 1-c, 0\}+\frac{4\eta}{\kappa'}\sqrt{t\log(dT^2)}(C+6d^2B^*)\\
&\le \max\{C, 2\}.
\end{align*}
\end{proof}
}


\begin{lemma} [Strengthened Freedman's inequality~\citep{zimmert2022return}] 
\label{lem:empirical-freedman}
Let $\{X_t\}_{t=1,2 \dots }$ be a martingale difference sequence w.r.t a filtration $\FM_1 \subseteq \FM_2 \subseteq \dots $ such that $\E[X_t|\FM_t] = 0$ and assume $\E[|X_t||\FM_t] < \infty$ a.s. Then with probability at least $1-\delta$
\[
\Big|\sum_{t=1}^TX_t\Big| \leq 3 \sqrt{V_T\log\left(\frac{2\max\{U_T, \sqrt{V_T}\}}{\delta}\right)} + 2U_T\log\left(\frac{2\max\{U_T, \sqrt{V_T}\}}{\delta}\right),
\]
where $V_T = \sum_{t=1}^T\E_{t-1}[X_t^2], U_T = \max\{1, \max_{t\in[T]}X_t\}$.
\end{lemma}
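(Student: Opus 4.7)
The plan is to deduce this empirical/data-dependent version of Freedman's inequality from the classical (non-adaptive) Freedman inequality by a peeling argument over dyadic scales of $V_T$ and $U_T$. The classical version guarantees that if $\sum_{t=1}^T \E_{t-1}[X_t^2] \leq V$ and $|X_t| \leq U$ deterministically, then
\begin{align*}
\Pr\!\left[\Big|\textstyle\sum_{t=1}^T X_t\Big| \geq \sqrt{2V\log(2/\delta')} + \tfrac{2}{3}U\log(2/\delta')\right] \leq \delta'.
\end{align*}
The obstacle is that in our setting $V_T$ and $U_T$ are themselves random and only known in hindsight, so we cannot apply this directly at a single threshold.

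First, I would remove the randomness of the thresholds via a stopping time. For each fixed $V \geq 1, U \geq 1$, define $\tau_{V,U} = \min\{t : \max_{s\leq t} X_s > U \text{ or } \sum_{s \leq t}\E_{s-1}[X_s^2] > V\}$ and consider the truncated sequence $\widetilde X_t = X_t \mathbf{1}\{t \leq \tau_{V,U}\}$, which is still a martingale difference with deterministic bounds $V$ and $U$. Classical Freedman applied to $\widetilde X_t$ then gives the tail bound above; moreover on the event $\{U_T \leq U,\, V_T \leq V\}$ we have $\tau_{V,U} \geq T$, so the truncated sum agrees with $\sum_{t=1}^T X_t$.

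Second, I would peel. Using that $U_T, V_T \geq 1$ (with the convention $V = \max\{1, V_T\}$), set $U^{(j)} = 2^j$ and $V^{(k)} = 2^k$ for integers $j,k \geq 0$. Apply the stopped Freedman bound at each pair $(V^{(k+1)}, U^{(j+1)})$ with failure probability $\delta_{j,k} = c\,\delta/((j+1)^2(k+1)^2)$ where $c$ is chosen so that $\sum_{j,k \geq 0} \delta_{j,k} \leq \delta$. By a union bound, the event that all of these stopped inequalities hold simultaneously has probability at least $1 - \delta$.

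On this good event, take $j^*, k^*$ such that $U^{(j^*)} \leq U_T < U^{(j^*+1)}$ and $V^{(k^*)} \leq V_T < V^{(k^*+1)}$, so that the $(j^*,k^*)$-th stopped sum coincides with $\sum_{t=1}^T X_t$. This yields
\begin{align*}
\Big|\textstyle\sum_{t=1}^T X_t\Big| \leq \sqrt{4V_T \log(2/\delta_{j^*,k^*})} + \tfrac{4}{3}U_T \log(2/\delta_{j^*,k^*}),
\end{align*}
and since $j^* \leq \log_2 U_T$ and $k^* \leq \log_2 V_T$, we have $\log(1/\delta_{j^*,k^*}) = \log(1/\delta) + O(\log\log \max\{U_T, \sqrt{V_T}\})$, which is dominated by $\log(2\max\{U_T,\sqrt{V_T}\}/\delta)$. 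The main obstacle, and the bulk of the technical work, is the constant-tracking in this last step: the factor-of-two blow-ups from the dyadic peeling (which push $\sqrt{2}$ and $2$ through the classical Freedman constants) and the $\log\log$ overhead from the prior $\delta_{j,k}$ need to be balanced carefully so that the final bound collapses into the clean form $3\sqrt{V_T \log(\cdot)} + 2 U_T \log(\cdot)$ rather than a looser expression with extra additive terms.
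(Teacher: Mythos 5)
The paper does not prove this lemma --- it is imported verbatim from Zimmert and Lattimore (2022) --- so there is no in-paper proof for your sketch to be compared against. Judged on its own merits, your dyadic-peeling-plus-stopping-time reduction to classical Freedman is the standard route to this kind of empirical/adaptive Bernstein bound, and it is plausibly close in spirit to the argument in the cited source, so the overall plan is reasonable.

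There is, however, a genuine gap. Your prelude invokes classical Freedman under the two-sided hypothesis $|X_t|\le U$ and aims for a two-sided conclusion $|\sum_t X_t|\le\cdots$, yet the lemma's $U_T=\max\{1,\max_{t\in[T]}X_t\}$ controls only the positive part of the increments, and your stopping rule $\max_{s\le t}X_s>U$ is likewise one-sided. One-sided boundedness $X_t\le U$ yields Bernstein/Freedman control only for the upper tail of $\sum_t X_t$; the lower tail is not controlled by $V_T$ and $\max_t X_t$ alone, because increments bounded above can have arbitrarily heavy left tails and $\E[e^{-\lambda X_t}\mid\mathcal F_{t-1}]$ is not bounded. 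Either the result should be read as a one-sided bound on $\sum_t X_t$ (this matches the form in the cited reference) and you should prove exactly that from one-sided Freedman, or your $U_T$ and your peeling rule must track $\max_t|X_t|$. As written, the argument promises a two-sided conclusion from a one-sided input, which does not go through. Two secondary issues: the stopping time $\tau_{V,U}$ is not predictable in the $X_s>U$ coordinate, so the final truncated increment $\widetilde X_{\tau_{V,U}}=X_{\tau_{V,U}}$ can itself exceed $U$ and classical Freedman does not literally apply to $\widetilde X_t$ as you defined it (the usual fix is to clip $X_t\wedge U$, which introduces a negative drift that only helps the upper tail, or to run the exponential supermartingale argument directly on the event $\{U_T\le U,\,V_T\le V\}$); and the constant bookkeeping you defer is precisely where the numbers $3$ and $2$ come from --- dyadic rounding costs a factor $\sqrt 2$ in the variance term and $2$ in the range term, plus the $\log\log$ overhead from the prior $\delta_{j,k}$, so verifying that everything collapses into $3\sqrt{V_T\log(\cdot)}+2U_T\log(\cdot)$ is the substance of the proof, not a loose end.
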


\subsection{Proof of Theorem~\ref{thm:expected_regret_bound}}
\label{sec:apx_improper_bco_proof}
Let $\ball = \{x:\|x\|_2 \leq 1\}, \sphere=\{x:\|x\|_2 = 1\}$ be the unit ball and unit sphere in $\mathbb{R}^d$. We define smoothed functions $\smoothfBB{t}, \smoothfBS{t}, \smoothfSS{t}$ as follows
\begin{align*}
&\smoothfBB{t}(x) \coloneqq \E_{u\sim \ball, v \sim \ball}\left[f_t\left(x + \frac{1}{2} \tilde{A}_{t-1}^{-\frac{1}{2}}(u+v)\right)\Big| \FM_{t-1}\right]\\
&\smoothfBS{t}(x) \coloneqq \E_{u\sim \ball, v \sim \sphere}\left[f_t\left(x + \frac{1}{2} \tilde{A}_{t-1}^{-\frac{1}{2}}(u+v)\right)\Big| \FM_{t-1}\right]\\
&\smoothfSS{t}(x) \coloneqq \E_{u\sim \sphere, v \sim \sphere}\left[f_t\left(x + \frac{1}{2} \tilde{A}_{t-1}^{-\frac{1}{2}}(u+v)\right)\Big| \FM_{t-1}\right].
\end{align*}
Observe that
\begin{align*}
\E[f_t(y_t)\mid \FM_{t-1}]=\smoothfSS{t}(x_t),
\end{align*}
where $\FM_t$ denotes the filtration generated by $\{v_{s,1},v_{s,2}\}_{s=1}^{t}$. 
By Stokes' theorem~\citep{flaxman2004online}, we have that the gradient and Hessian estimators constructed in Algorithm~\ref{alg:simple-bqo} satisfy 
\begin{align*}
    &\E[\tilde{\nabla}_t\mid \FM_{t-1}] = \nabla \smoothfBS{t}(x_t),\quad \E[\tilde{H}_t\mid \FM_{t-1}] = \nabla^2 \smoothfBB{t}(x_t).
\end{align*}
Observe that the Hessian and gradients estimated in Algorithm~\ref{alg:simple-bqo} are not of the same function. Interestingly, despite this mismatch, we can derive $\sqrt{T}$ regret of the algorithm. Define $A_t \coloneqq A_{t-1} + \frac{\eta}{\kappa'} \nabla^2\smoothfBB{t}(x_t)$, with $A_0=I$. Throughout the proof, we assume that the cumulative Hessian concentrates well; that is, the following holds
$$\|A_t^{-\frac{1}{2}}(A_t - \tilde{A}_t)A_t^{-\frac{1}{2}}\|_2 \leq \frac{1}{2}.$$
This assumption is formally proved in Lemma~\ref{lem:concentration-cumulative-hessian}, which says that the above stated inequality holds simultaneously for all $t\in[T]$ with probability at least $1-\frac{1}{T}$. Therefore, we can without loss of generality assume that the above inequality holds deterministically by suffering an additional constant in the regret bound. 


\noindent For any $x \in \K$, we can decompose the regret as the following:
\begin{align*}
    & \quad \sum_{s=1}^{T}\E[f_s(y_s) -  f_s(x)] \\
    & = \sum_{s=1}^{T}\E[\smoothfSS{s}(x_s) -  f_s(x)]\\
    & = \sum_{s=1}^{T}\E[\smoothfBS{s}(x_s) -  f_s(x)] + \E[\smoothfSS{s}(x_s) - \smoothfBS{s}(x_s)]\\
    & =  \underbrace{\sum_{s=1}^{T}\E[\smoothfBS{s}(x_s) -  \smoothfBS{s}(x)]}_{T_1}  +  \underbrace{\sum_{s=1}^{T}\E[\smoothfBS{s}(x) -  f_s(x)]}_{T_2} +  \underbrace{\sum_{s=1}^{T}\E[\smoothfSS{s}(x_s) - \smoothfBS{s}(x_s)]}_{T_3} ,
\end{align*}
where we will bound $T_1,T_2$, and $T_3$ separately.
\paragraph{Bounding $T_1$.} To bound this term, we rely on Lemma~\ref{lem:stoc_ons_full_info_regret}. In particular, we  instantiate it with the ONS algorithm described in  Algorithm~\ref{alg:ons_full_information}, and use $\Tilde{\Delta}_t, \tilde{H}_t$ as the stochastic estimates of $\nabla \smoothfBS{t}(x_t), \nabla^2 \smoothfBS{t}(x_t)$. This gives us the following bound 
\begin{align*}
    T_1 \leq \frac{\text{diam}(\K)^2}{2\eta} + \sum_{t=1}^T \frac{\eta}{2} \EBB\left[\norm{\tilde{\nabla}_t}_{\tilde{A}_t^{-1}}^2\right] - \sum_{t=1}^T\EBB\left[\Delta_t(x)\right],
\end{align*}
where $\Delta_t(x)$ is defined as
\[
\Delta_t(x) \coloneqq \smoothfBS{t}(x) - \smoothfBS{t}(x_t) - \inp{\nabla \smoothfBS{t}(x_t)}{x-x_t} -  \frac{1}{2\kappa'}(x-x_t)^{\top}\EBB[\tilde{H}_t|\FM_{t-1}] (x-x_t).
\]
For our choice of $\kappa' (\ge \kappa)$, and our assumption on the Hessian of $f_t$, $\Delta_t(x)$ is always greater than $0$. This is because
\begin{align*}
\smoothfBS{t}(x) - \smoothfBS{t}(x_t) - \inp{\nabla \smoothfBS{t}(x_t)}{x - x_t}&\ge \frac{c}{2}\|x-x_{t}\|_{H_t}^2\ge \frac{1}{2\kappa'}(x-x_t)^\top \EBB[\tilde{H}_t|\FM_{t-1}](x-x_t).
\end{align*}
Next, we bound $\norm{\tilde{\nabla}_t}_{\tilde{A}_t^{-1}}$:
\begin{align*}
\norm{\tilde{\nabla}_t}_{\tilde{A}_t^{-1}}^2&=2d^2f_t(y_t)^2 v_{t,1}^{\top} \tilde{A}_{t-1}^{\frac{1}{2}}\tilde{A}_{t}^{-1}\tilde{A}_{t-1}^{\frac{1}{2}}v_{t,1}\\
&\stackrel{(a)}{\le} 6d^2f_t(y_t)^2v_{t,1}^{\top}A_{t-1}^{\frac{1}{2}}A_{t}^{-1}A_{t-1}^{\frac{1}{2}}v_{t,1}\\
&\stackrel{(b)}{\le} 6d^2(B^*)^2,
\end{align*}
where $(a)$ follows from the fact that $\frac{1}{2}A_t\preceq \tilde{A}_{t} \preceq \frac{3}{2}A_t$, and $(b)$ follows from Corollary~\ref{cor:f_bound}. Substituting this in the above upper bound for $T_1$ gives us
\[
T_1 \leq \frac{\text{diam}(\K)^2}{2\eta} + 3\eta d^2(B^*)^2 T.
\]
\paragraph{Bounding $T_2$.}
We now upper bound $T_2$ by $\tilde{O}(\frac{1}{\eta}).$ To see this, consider the following
\begin{align*}
    T_2 &= \sum_{t=1}^T\E[\smoothfBS{t}(x) -  f_t(x)] \\
    &= \sum_{t=1}^T\E_{u_t\sim \ball, v_t\sim \sphere}\left[f_t\left(x + \frac{1}{2}\tilde{A}_{t-1}^{-\frac{1}{2}}(u_t+v_t)\right) -  f_t(x)\right] \\
    & = \sum_{t=1}^T\E_{u_t\sim \ball, v_t\sim \sphere}\left[\frac{1}{2}\nabla f_t(x)^{\top}\tilde{A}_{t-1}^{-\frac{1}{2}}(u_t+v_t)+\frac{1}{8}(u_t+v_t)^{\top}\tilde{A}_{t-1}^{-\frac{1}{2}}\nabla^2 f_t(x(u_t,v_t))\tilde{A}_{t-1}^{-\frac{1}{2}}(u_t+v_t)\right] \\
    & = \frac{1}{8}\sum_{t=1}^T \E_{u_t\sim \ball, v_t\sim \sphere}[(u_t+v_t)^T\tilde{A}_{t-1}^{-\frac{1}{2}}\nabla^2f_t(x(u_t, v_t)) \tilde{A}_{t-1}^{-\frac{1}{2}}(u_t+v_t)],
\end{align*}
where the last equality follows from the fact that conditioning on $\FM_{t-1}$, the first-order term vanishes. Here, $x(u_t,v_t)$ is a point on the line connecting $x$ and $x + \frac{1}{2}\tilde{A}_{t-1}^{-\frac{1}{2}}(u_t+v_t)$. Next, observe that $\nabla^2f_t(x(u_t, v_t)) \preceq C H_t \preceq \frac{C}{c}\E[\tilde{H}_t\mid \FM_{t-1}] \preceq \kappa \E[\tilde{H}_t\mid \FM_{t-1}]$. Substituting this in the previous display gives us
\begin{align*}
    T_2  & \leq \frac{\kappa}{8}\sum_{t=1}^T \E[(u_t+v_t)^T\tilde{A}_{t-1}^{-\frac{1}{2}}\E[\tilde{H}_t\mid \FM_{t-1}] \tilde{A}_{t-1}^{-\frac{1}{2}}(u_t+v_t)].
\end{align*}
Next, since by concentration of Hessian estimate accumulates, we have $\tilde{A}_t\succeq \frac{1}{2}A_t$ and the fact that 
\begin{align*}
A_{t-1}=A_t- \frac{\eta}{\kappa'} \E[\tilde{H}_t\mid \FM_{t-1}]\succeq A_t-c\eta I\succeq \frac{1}{2}A_t,
\end{align*}
we bound $T_2$ as 
\begin{align*}
    T_2 &\leq  \frac{\kappa}{4}\sum_{t=1}^T \E[(u_t+v_t)^{\top}A_{t-1}^{-\frac{1}{2}}\E[\tilde{H}_t\mid \FM_{t-1}] A_{t-1}^{-\frac{1}{2}}(u_t+v_t)]\\
    &\leq  \frac{\kappa}{2}\sum_{t=1}^T \E[(u_t+v_t)^{\top}A_{t}^{-\frac{1}{2}}\E[\tilde{H}_t\mid \FM_{t-1}] A_{t}^{-\frac{1}{2}}(u_t+v_t)].
\end{align*}
Continuing, we have with $|\cdot|$ denoting the determinant of a square matrix,
\begin{align*}
    T_2 & \le \frac{\kappa\kappa'}{2\eta }\sum_{t=1}^T \E[(u_t+v_t)^{\top}A_{t}^{-\frac{1}{2}} (A_{t} - A_{t-1}) A_{t}^{-\frac{1}{2}}(u_t+v_t)]\\
    & = \frac{\kappa\kappa'}{\eta}\sum_{s=1}^T \E[\trace(A_{t}^{-\frac{1}{2}} (A_{t} - A_{t-1}) A_{t}^{-\frac{1}{2}})]\\
    & \stackrel{(c)}{\le} \frac{\kappa\kappa'}{\eta}\E\left[\log\frac{|A_T|}{|A_0|}\right] \\
    & \stackrel{(d)}{\le} \frac{\kappa\kappa' d\log(1+ \eta C T/\kappa')} {\eta},
\end{align*}
where inequality $(c)$ follows 
from the fact that for $A,B\succeq 0$, 
$$A^{-1}\cdot (A-B)\le \log\frac{|A|}{|B|},$$ 
and inequality $(d)$ follows from the fact that the since the largest eigenvalue $\lambda_{\max}(A_T)$ of $A_T$ satisfies $\lambda_{\max}(A_T)\le 1+ \eta CT/\kappa'$,
\begin{align*}
\log |A_T|\le \log\left(\lambda_{\max}(A_T)^d\right)\le d \log(1+\eta CT/\kappa').
\end{align*}
\paragraph{Bounding $T_3$.} 
\begin{align*}
    T_3 &= \sum_{s=1}^{T}\E[\smoothfSS{s}(x_s) - \smoothfBS{s}(x_s)]\\
    & \stackrel{(a)}{\leq} \sum_{s=1}^{T}\E[\smoothfSS{s}(x_s) - f_s(x_s)],
\end{align*}
where $(a)$ follows from the convexity of $f_s$:
\begin{align*}
    \smoothfBS{s}(x_s) &= \EBB_{u_s\sim \ball, v_s\sim \sphere}\left[f_s\left(x_s+\frac{1}{2}\tilde{A}_{t-1}^{-\frac{1}{2}}(u_s+v_s)\right)\right] \\
    & \ge f_s\left(x_s + \frac{1}{2}\EBB_{u_s\sim \ball, v_s\sim \sphere}\left[\tilde{A}_{t-1}^{-\frac{1}{2}}(u_s+v_s)\right]\right)\\
& = f_s(x_s)
\end{align*}
Using similar arguments as $T_2$ to bound $\E[\smoothfSS{s}(x_s) - f_s(x_s)]$, we obtain
\begin{align*}
    T_3 \leq  \frac{\kappa\kappa' d\log(1+ \eta C T/\kappa')} {\eta}.
\end{align*}
Combining the bounds for $T_1, T_2, T_3$ gives us the required regret bound.

\section{Regret of \texttt{BNS-AM} for BQO-AM problems (Section~\ref{subsec:BQO-AM-algo})}
In this section, we prove Theorem~\ref{thm:bqo-m-regret}. We first make two observations on \texttt{BNS-AM}. \texttt{BNS-AM} does improper learning, as the decisions $y_t$'s do not necessarily lie within $\K$ (Line~\ref{line:y-m} in \cref{alg:bqo-memory}). We bound the function value and gradient evaluated at $y_t$'s in Remark~\ref{rmk:improper-bounds}. The dependence of the iterates are explained in Remark~\ref{remark:filtration-and-independence}.

\begin{remark}[Value and gradient bound of $y_t$'s]
\label{rmk:improper-bounds}
Since Algorithm~\ref{alg:bqo-memory} does improper learning, it is essential to show that the loss and gradient at each of the $y_t$ played by the algorithm is bounded. Note that $\forall t$,
\begin{align*}
\|\nabla f_t(y_{t-m+1:t})\|_2&\le \|\nabla f_t(x_{t-m+1:t})\|_2+\beta \|y_{t-m+1:t}-x_{t-m+1:t}\|_2\le L+\beta \sqrt{m}, \\
|f_t(y_{t-m+1:t})|&\le |f_t(x_{t-m+1:t})|+|\nabla f_t(y_{t-m+1:t})^{\top}(y_{t-m+1:t}-x_{t-m+1:t})| \\
&\le B+(L+\beta\sqrt{m})\sqrt{m}. 
\end{align*}
We denote $B^*=B+(L+\beta\sqrt{m})\sqrt{m}$. 
\end{remark}

\begin{remark} [Filtration and independence from delay]
\label{remark:filtration-and-independence}
Denote $\F_t=\sigma(\{u_s\}_{s\le t})$ to be the filtration generated by Algorithm~\ref{alg:bqo-memory} random sampling step. Then by Assumption~\ref{assumption:adversary-adaptivity}, $f_t, H_t, \hat{A}_t$ are $\F_{t-m}$-measurable. By the delayed updates in Algorithm~\ref{alg:bqo-memory}, $x_t$ is $\F_{t-m}$-measurable. $u_t, y_t,\tilde{g}_t$ are $\F_t$-measurable. 
\end{remark}
We now prove Theorem~\ref{thm:bqo-m-regret}.
Following usual procedure of bounding with-memory regret in bandit setting, we note that the regret can be decomposed into three terms, which we will bound separately:
\begin{align*}
\E[\regret_T(x)]&=\underbrace{\E\left[\sum_{t=m}^T f_t(y_{t-m+1:t})-f_t(x_{t-m+1:t})\right]}_{(\text{perturbation loss})}+\underbrace{\E\left[\sum_{t=m}^T f_t(x_{t-m+1:t})-\bar{f}_{t}(x_t)\right]}_{(\text{movement cost})}\\
& \ \ \ \ \ +\underbrace{\E\left[\sum_{t=m}^T \bar{f}_{t}(x_{t})-\bar{f}_{t}(x)\right]}_{(\text{underlying regret})}.
\end{align*}
We will decompose the proof as the following: First, we will show some useful properties of the gradient estimator $\tilde{g}_t$. Then, we will bound the three terms above separately. 

\subsection{Properties of the gradient estimator}

The gradient estimator constructed in Line~\ref{line:grad-est-m}, \cref{alg:bqo-memory} is a conditionally unbiased estimator of the divergence of the loss function $f_t$ evaluated at $(x_{t-m+1},\dots,x_t)$. Together with the smoothness assumption on $f_t$ and stability of the algorithm, this implies that the bias incurred by using $\tilde{g}_t$ as an estimator of $\nabla \bar{f}_t(x_t)$ is small. 


\begin{lemma} [Gradient estimator]
\label{lem:grad-est}
$\forall t\ge m$, the conditional expectation of the gradient estimator $\tilde{g}_t$ constructed in Line~\ref{line:grad-est-m}, \cref{alg:bqo-memory} is given by
\begin{align*}
\E[\tilde{g}_t\mid \F_{t-m}]=\sum_{i=1}^m [\nabla f_t(x_{t-m+1:t})]_i,
\end{align*}
where $[\nabla f_t(x_{t-m+1:t})]_i$ denotes the $i$-th $d$-vector in the $dm$-vector $\nabla f_t(x_{t-m+1:t})$.
\end{lemma}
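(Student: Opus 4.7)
The plan is to compute the conditional expectation directly by exploiting the affine--quadratic structure of $f_t$ together with the spherical symmetry of the exploration noise. Conditioning on $\F_{t-m}$, Remark~\ref{remark:filtration-and-independence} and Assumption~\ref{assumption:affine-memory} imply that $x_{t-i}$, $\hat{A}_{t-1-j}$, $G^{[i]}$, $Y_{t-i}$, $B_t$, and $q_t$ are all deterministic, while the only randomness lives in the i.i.d.\ samples $u_{t-m+1},\dots,u_t\sim\sphere^{d-1}$. Substituting $y_{t-j}=x_{t-j}+\hat{A}_{t-1-j}^{-1/2}u_{t-j}$ into the inner affine argument of $q_t$, I would first rewrite
\[
f_t(y_{t-m+1:t}) \;=\; q_t\Big(C_t+\tsum_{j=0}^{m-1} M_j u_{t-j}\Big),
\]
where $M_j\defeq G^{[j]}Y_{t-j}\hat{A}_{t-1-j}^{-1/2}$ and $C_t\defeq B_t+\sum_{i=0}^{m-1}G^{[i]}Y_{t-i}x_{t-i}$. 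Since $q_t$ is quadratic, its Taylor expansion about $C_t$ is exact, so $f_t(y_{t-m+1:t})$ becomes a polynomial of total degree two in the $u_{t-j}$'s.

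The next step is to integrate term-by-term against $\hat{A}_{t-1-k}^{1/2}u_{t-k}$ for each $k\in\{0,\dots,m-1\}$. The constant (degree-zero) piece contributes $q_t(C_t)\hat{A}_{t-1-k}^{1/2}\E[u_{t-k}]=0$. For the linear piece $\nabla q_t(C_t)^{\top}\sum_j M_j u_{t-j}$, independence combined with $\E[u_{t-j}]=0$ kills every cross term $j\ne k$, while the diagonal term uses the sphere identity $\E[u_{t-k}u_{t-k}^{\top}]=\tfrac{1}{d}I$ together with the cancellation $\hat{A}_{t-1-k}^{1/2}\hat{A}_{t-1-k}^{-1/2}=I$ to yield $\tfrac{1}{d}(G^{[k]}Y_{t-k})^{\top}\nabla q_t(C_t)$. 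Every monomial coming from the quadratic piece is trilinear in the $u$'s, and all odd moments on $\sphere^{d-1}$ vanish by the symmetry $u\mapsto-u$, so this piece contributes nothing. Multiplying by $d$ and summing over $k$ then gives
\[
\E[\tilde{g}_t\mid \F_{t-m}] \;=\; \tsum_{k=0}^{m-1}(G^{[k]}Y_{t-k})^{\top}\nabla q_t(C_t).
\]

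To match this with $\sum_{i=1}^{m}[\nabla f_t(x_{t-m+1:t})]_i$, I would apply the chain rule to $f_t(y_{t-m+1:t})=q_t(B_t+\sum_{i=0}^{m-1}G^{[i]}Y_{t-i}y_{t-i})$: the $i$-th block $y_{t-m+i}$ appears only through the term with index $i'=m-i$, so $[\nabla f_t(x_{t-m+1:t})]_i=(G^{[m-i]}Y_{t-m+i})^{\top}\nabla q_t(C_t)$. Reindexing $k=m-i$ identifies the estimator sum above with $\sum_{i=1}^{m}[\nabla f_t(x_{t-m+1:t})]_i$, which is the claim. The main delicacy is the bookkeeping needed to confirm that the whitening factor $\hat{A}_{t-1-j}^{1/2}$ in $\tilde{g}_t$ precisely cancels the covariance $\hat{A}_{t-1-j}^{-1/2}$ used when generating $y_{t-j}$, so that the cancellation at the diagonal term produces a pure $(G^{[k]}Y_{t-k})^{\top}\nabla q_t(C_t)$; once that alignment is in place, the argument reduces to a one-line sphere-moment computation for each $k$.
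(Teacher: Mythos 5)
Your proof is correct and follows essentially the same route as the paper. The paper packages the computation as a generic one-point estimator identity for a quadratic function $q$ on $\RBB^{dm}$ (with the stacked random vector $v_t=[u_{t-m+1},\dots,u_t]$ and the block-diagonal scaling $M_t$), then takes block sums; you instead unfold the affine-memory structure $q_t(C_t+\sum_j M_j u_{t-j})$ and integrate block-by-block against each $\hat{A}_{t-1-k}^{1/2}u_{t-k}$, but the substance — exactness of the second-order Taylor expansion, vanishing odd sphere moments, $\E[uu^{\top}]=\tfrac{1}{d}I$, independence across blocks, and the chain-rule matching $[\nabla f_t]_i=(G^{[m-i]}Y_{t-m+i})^{\top}\nabla q_t(C_t)$ — is identical.
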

\begin{proof}
The proof follows similarly to the proof of Lemma C.3 in \citep{sun2023optimal}. In particular, we note that for a real-valued quadratic function on $\mathbb{R}^n$ $q(x)=\frac{1}{2}x^{\top} Ax+b^{\top}x+c$, a symmetric positive-definite matrix $M\in\mathbb{R}^{d\times d}$, a filtration $\F$, and a random unit vector $u\in\mathbb{R}^n$ satisfying (1) $u$ is symmetric with zero first and third moments conditioning on $\F$, (2) $u$ satisfies $\E[uu^{\top}\mid \F]=\frac{r}{n}I_{n\times n}$, and (3) $A, b, c, x$ are $\F$-measurable, then
\begin{align*}
\E\left[q(x+Mu)M^{-1}u\mid \F\right]&=\frac{1}{2}\E\left[(x+Mu)^{\top}A(x+Mu)M^{-1}u\mid \F\right]+\E\left[b^{\top}(x+Mu)M^{-1}u\mid \F\right]\\
& \ \ \ \ \ +\E[cM^{-1}u\mid \F]\\
&=\frac{1}{2}\E[x^{\top}AMuM^{-1}u\mid \F]+\frac{1}{2}\E[u^{\top}MAxM^{-1}u\mid \F]+\E[b^{\top}MuM^{-1}u\mid \F]\\
&=\frac{1}{2}M^{-1}\E[uu^{\top}\mid \F]MA^{\top}x+\frac{1}{2}M^{-1}\E[uu^{\top}\mid \F]MAx+M^{-1}\E[uu^{\top}\mid\F] Mb\\
&=\frac{r}{2n}(A+A^{\top})x+\frac{r}{n}b\\
&=\frac{r}{n}\nabla q(x).
\end{align*}
Consider the following matrix $M_t\in\mathbb{R}^{dm\times dm}$:
\begin{align*}
M_t =\begin{bmatrix}
\hat{A}_{t-m}^{-\frac{1}{2}} & 0 & \dots & 0 \\
0 & \hat{A}_{t-m+1}^{-\frac{1}{2}} & \dots & 0 \\
\dots & \dots & \dots & \dots \\
0 & 0 & \dots & \hat{A}_{t-1}^{-\frac{1}{2}}
\end{bmatrix}.
\end{align*}
By Remark~\ref{remark:filtration-and-independence}, $f_t, M_t$ is $\F_{t-m}$-measurable, and the concatenated vector $v_t=[u_{t-m+1},\dots u_t]$ satisfies that the first and third moments of $v_t$ is $0$ conditioning on $\F_{t-m}$, and since $u_{t-m+1},\dots,u_t$ are independent conditioning on $\F_{t-m}$, 
\begin{align*}
\E[v_t(v_t)^{\top}\mid \F_{t-m}]=\frac{1}{d}I_{dm\times dm}.
\end{align*}
By taking $r=m$, we have that 
\begin{align*}
\nabla f_t(x_{t-m+1:t})=\E[df_t(y_{t-m+1:t})M_t^{-1}v_t\mid \F_{t-m}]. 
\end{align*}
We note that by definition of the gradient estimator $\tilde{g}_{t}$,
\begin{align*}
\sum_{i=1}^m [\nabla f_t(x_{t-m+1:t})]_i&=\E\left[df_t(y_{t-m+1:t})\sum_{i=1}^m [M_t^{-1}v_t]_i\mid \F_{t-m}\right]\\
&=\E\left[df_t(y_{t-m+1:t})\sum_{s=t-m+1}^t\hat{A}_{s}^{\frac{1}{2}}u_s\mid \F_{t-m}\right]\\
&=\E\left[\tilde{g}_{t}\mid \F_{t-m}\right]. 
\end{align*}
\end{proof}

\subsection{Bounding perturbation loss} 
\label{sec:perturbation-loss-known}
We start with the perturbation loss and prove the following proposition:
\begin{proposition}
\label{prop:perturbation-loss}
The perturbation loss is bounded by
\begin{align*}
\E\left[\sum_{t=m}^T f_t(y_{t-m+1:t})-f_t(x_{t-m+1:t})\right]\le \beta R_G\left[\frac{10mR_GR_Y}{\kappa(G)}\sqrt{T} + \frac{4md\log(\sigma T) }{\eta\alpha\kappa(G)} +  dR_Y^2\sqrt{T}\right].
\end{align*}
\end{proposition}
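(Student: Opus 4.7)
Since $q_t$ is quadratic, the perturbation difference admits the exact second-order expansion
\[
f_t(y_{t-m+1:t})-f_t(x_{t-m+1:t}) \;=\; \nabla q_t(w_t)^{\top}\Delta_t + \tfrac{1}{2}\Delta_t^{\top} Q_t \Delta_t,
\]
where $w_t=B_t+\sum_{i=0}^{m-1}G^{[i]}Y_{t-i}x_{t-i}$ and $\Delta_t=\sum_{i=0}^{m-1}G^{[i]}Y_{t-i}\hat{A}_{t-i-1}^{-1/2}u_{t-i}$. By Remark~\ref{remark:filtration-and-independence}, both $\nabla q_t(w_t)$ and each of the matrices $Y_{t-i},\hat{A}_{t-i-1}$ for $i\in\{0,\dots,m-1\}$ are $\F_{t-m}$-measurable, while every $u_{t-i}$ appearing in $\Delta_t$ is drawn independently of $\F_{t-m}$ and uniformly from $\sphere^{d-1}$. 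Hence $\E[\Delta_t\mid\F_{t-m}]=0$, so the entire linear term contributes zero to the expected perturbation loss and it suffices to bound $\tfrac{\beta}{2}\E[\sum_t\|\Delta_t\|^2]$.

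Expanding $\|\Delta_t\|^2$ and conditioning on $\F_{t-m}$, cross terms $i\neq j$ vanish by independence of the $u$'s while the diagonal terms satisfy $\E[uu^{\top}]=I/d$, yielding $\E[\|\Delta_t\|^2\mid\F_{t-m}]=\tfrac{1}{d}\sum_i\mathrm{tr}(\hat{A}_{t-i-1}^{-1}Y_{t-i}^{\top}G^{[i]\top}G^{[i]}Y_{t-i})$. Summing over $t$, re-indexing $s=t-i$, and using $\sum_i G^{[i]\top}G^{[i]}\preceq R_G^2 I$ yields $\E[\sum_t\|\Delta_t\|^2]\le\tfrac{R_G^2}{d}\sum_s\E[\mathrm{tr}(\hat{A}_{s-1}^{-1}Y_s^{\top}Y_s)]$. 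To control the right-hand side, the plan is to lower bound $\hat{A}_{s-1}$ in terms of $\sum_{n\le s-1}Y_n^{\top}Y_n$ via the convolution invertibility modulus (Assumption~\ref{assumption:convolution-invertibility-modulus}). Applied to the truncated sequence $v_n=Y_n x\cdot\mathbf{1}\{n\le s-1\}$, it gives $\sum_{n\le s-1}H_n\succeq\kappa(G)\sum_{n\le s-1}Y_n^{\top}Y_n - C_b I$, where the boundary correction $C_b=O(mR_G^2R_Y^2)$ arises from the $O(m)$ convolution edge terms whose support straddles the truncation window. For $\eta$ in the stated regime this correction is absorbable into $\tfrac{m}{2}I$, producing $\hat{A}_{s-1}\succeq\tilde{B}_{s-1}:=\tfrac{m}{2}I+\tfrac{\eta\alpha\kappa(G)}{2}\sum_{n\le s-1}Y_n^{\top}Y_n$, after which the standard log-det telescoping $\mathrm{tr}(\tilde{B}_{s-1}^{-1}(\tilde{B}_s-\tilde{B}_{s-1}))\le\log\det(\tilde{B}_s/\tilde{B}_{s-1})$ yields $\sum_s\mathrm{tr}(\tilde{B}_{s-1}^{-1}Y_s^{\top}Y_s)\le\tfrac{2d\log(\sigma T)}{\eta\alpha\kappa(G)}$. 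This chain accounts for the middle term $\tfrac{4md\log(\sigma T)}{\eta\alpha\kappa(G)}$ in the stated bound.

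The main obstacle, and the source of the two explicit $\sqrt{T}$ contributions $\tfrac{10mR_GR_Y\sqrt{T}}{\kappa(G)}$ and $dR_Y^2\sqrt{T}$, is treating the boundary effects that the log-determinant argument does not fully absorb: the $O(m)$ prefix/suffix convolution terms isolated above, the first $m$ iterations during which $\hat{A}_t$ is still near its initialization $mI$, and any residual between $\mathrm{tr}(\hat{A}_{s-1}^{-1}Y_s^{\top}Y_s)$ and $\mathrm{tr}(\tilde{B}_{s-1}^{-1}Y_s^{\top}Y_s)$. I would estimate these residuals directly via the a.s.\ bound $\|\Delta_t\|\le R_GR_Y/\sqrt{m}$ (which follows from $\hat{A}_t\succeq mI$) combined with a Cauchy--Schwarz step across the $T$ rounds, converting per-round contributions of order $R_GR_Y$ (resp.\ $dR_Y^2$) into an $O(\sqrt{T})$ total. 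Multiplying the three pieces by the quadratic prefactor $\beta R_G$ then produces the stated three-term bound.
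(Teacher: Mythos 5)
Your Taylor expansion, the cancellation of the first-order term by $\F_{t-m}$-measurability, and the reduction to $\tfrac{\beta}{2}\E[\sum_t\|\Delta_t\|^2]\le \tfrac{\beta R_G^2}{2d}\sum_s\E[\mathrm{tr}(\hat A_{s-1}^{-1}Y_s^\top Y_s)]$ all match the start of the paper's argument. From there you diverge genuinely: you apply Lemma~\ref{lem:YtHt-inequality} \emph{globally}, once per prefix $s-1$, to show $\hat A_{s-1}\succeq \tilde B_{s-1}$ and then telescope a single log-determinant. The paper instead partitions $[m,T]$ into $J\approx T/\tau$ blocks, applies the lemma block-by-block (so the $O(mR_G^2R_Y^2)$ boundary penalty is paid $J$ times), handles a $\tau$-long suffix separately, and optimizes $\tau=\lfloor\sqrt T\rfloor$; that trade-off is precisely where the two $\sqrt T$ terms in the Proposition come from. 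Your global application pays the boundary penalty exactly once, and it is indeed absorbable into the initial $mI/2$ whenever $\eta\lesssim (\alpha R_G^2 R_Y^2)^{-1}$, which holds since $\eta=\tilde\Theta(1/(\alpha\sqrt T))$. So, done carefully, your route does not just reproduce the middle term; it in fact yields a strictly smaller bound of order $\beta R_G^2\log(\sigma T)/(\eta\alpha\kappa(G))$ with no $\sqrt T$ remainders, and this still proves the (weaker) stated inequality.

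Where your write-up goes wrong is the final paragraph. The $\sqrt T$ contributions you try to justify are artifacts of the paper's blocking decomposition, not residuals of the log-determinant argument; under your global application they simply do not arise (the early rounds $s<m$ contribute $O(1)$ since $\hat A_{s-1}=mI$, and the single boundary correction has already been absorbed). More seriously, the proposed ``Cauchy--Schwarz across the $T$ rounds'' does not supply a valid bound: a sum of $T$ per-round quantities each of fixed size $O(1)$ (such as your a.s.\ bound $\|\Delta_t\|^2\le R_G^2R_Y^2/m$) is $O(T)$, and Cauchy--Schwarz on such a sum gives $O(T)$ again, not $O(\sqrt T)$. There are also two minor slips worth fixing for completeness: the lemma produces only a $\tfrac{\kappa(G)}{2}$ coefficient, so $\tilde B_{s-1}$ should carry $\tfrac{\eta\alpha\kappa(G)}{4}$ rather than $\tfrac{\eta\alpha\kappa(G)}{2}$; and the telescoping uses $\mathrm{tr}(\tilde B_s^{-1}(\tilde B_s-\tilde B_{s-1}))\le\log\det(\tilde B_s\tilde B_{s-1}^{-1})$, so you need the extra step $\tilde B_{s-1}^{-1}\preceq 2\tilde B_s^{-1}$ (valid because $\tilde B_s-\tilde B_{s-1}\preceq\tfrac{\eta\alpha\kappa(G)R_Y^2}{4}I\preceq\tfrac{m}{2}I\preceq\tilde B_{s-1}$) at the cost of a factor of two. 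With those repairs and with the last paragraph replaced by the observation that no residual $\sqrt T$ terms survive, your argument closes cleanly and is arguably preferable to the blocking proof.
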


\begin{proof}
Note that by the quadratic assumption on $f_t$, we have by second-order Taylor expansion:
\begin{align*}
\sum_{t=m}^T \E[f_t(y_{t-m+1:t})-f_t(x_{t-m+1:t})]&=\sum_{t=m}^T \E[\nabla f_t(x_{t-m+1:t})^{\top}\hat{A}_{t-m:t-1}^{-\frac{1}{2}}u_{t-m+1:t}]\\
& \ \ \ \ \ +\frac{1}{2} \sum_{t=m}^T \E[u_{t-m+1:t}^{\top}\hat{A}_{t-m:t-1}^{-\frac{1}{2}}\nabla^2 f_t \hat{A}_{t-m:t-1}^{-\frac{1}{2}}u_{t-m:1:t}].
\end{align*}
The first order term is $0$ since $\forall t$, $f_t, x_{t-m+1},\dots, x_t, \hat{A}_{t-m},\dots,\hat{A}_{t-1}$ are $\F_{t-m}$-measurable, and thus
\begin{align*}
\E[\nabla f_t(x_{t-m+1:t})^{\top}\hat{A}_{t-m:t-1}^{-\frac{1}{2}}u_{t-m+1:t}]=\E[\nabla f_t(x_{t-m+1:t})^{\top}\hat{A}_{t-m:t-1}^{-\frac{1}{2}}\E[u_{t-m+1:t}\mid \F_{t-m}]]=0. 
\end{align*}
The second term can be bounded as following. Note that by denoting $[\nabla^2 f_t]_{ij}$ as the $ij$-th $d\times d$ block of $\nabla^2 f_t$, we have
\begin{align*}
& \quad u_{t-m+1:t}^{\top}\hat{A}_{t-m:t-1}^{-\frac{1}{2}}\nabla^2 f_t \hat{A}_{t-m:t-1}^{-\frac{1}{2}}u_{t-m+1:t}\\
&=[\hat{A}_{t-m}^{-\frac{1}{2}}u_{t-m+1},\dots,\hat{A}_{t-1}^{-\frac{1}{2}}u_t]^{\top}\nabla^2 f_t[\hat{A}_{t-m}^{-\frac{1}{2}}u_{t-m+1},\dots,\hat{A}_{t-1}^{-\frac{1}{2}}u_t]\\
&=\sum_{i,j=1}^m u_{t-m+i}^{\top}\hat{A}_{t-m+i-1}^{-\frac{1}{2}}[\nabla^2 f_t]_{ij}\hat{A}_{t-m+j-1}^{-\frac{1}{2}}u_{t-m+j}.
\end{align*}
By independence between $u_s, u_t$ for $s\ne t$ and taking the expectation, we have
\begin{align*}
& \quad \E[u_{t-m+1:t}^{\top}\hat{A}_{t-m:t-1}^{-\frac{1}{2}}\nabla^2 f_t \hat{A}_{t-m:t-1}^{-\frac{1}{2}}u_{t-m:1:t}]\\
&=\sum_{i=1}^m \E\left[u_{t-m+i}^{\top}\hat{A}_{t-m+i-1}^{-\frac{1}{2}}[\nabla^2 f_t]_{ii}\hat{A}_{t-m+i-1}^{-\frac{1}{2}}u_{t-m+i}\right]\\
&\le \sum_{i=1}^m \E\left[\trace(\hat{A}_{t-m+i-1}^{-\frac{1}{2}}[\nabla^2 f_t]_{ii}\hat{A}_{t-m+i-1}^{-\frac{1}{2}})\right]\\
&\le \E\left[\trace\left(\hat{A}_{t-m}^{-\frac{1}{2}}\sum_{i=1}^m [\nabla^2 f_t]_{ii}\hat{A}_{t-m}^{-\frac{1}{2}}\right)\right],
\end{align*}
where the last steps follow from that $\hat{A}_{s}\preceq \hat{A}_{t}$, $\forall s\le t$, and $\trace(\cdot)$ is linear. Note that since $f_t$ is quadratic, $[\nabla^2 f_t]_{ii}=\nabla^2_{y_{t-m+i}} f_t$, where $\nabla^2_{y_{t-m+i}} f_t$ is the Hessian of $f_t$ w.r.t. $y_{t-m+i}$. $\forall 1\le i\le m-1$, by smoothness assumption on $Q_t$,
\begin{align}
\label{eq:diag-hessian}
[\nabla^2 f_t]_{ii}=\nabla^2_{y_{t-m+i}} f_t=Y_{t-m+i}^{\top}(G^{[m-i]})^{\top}Q_tG^{[m-i]}Y_{t-m+i}\preceq \beta\tilde{R}_{G}Y_{t-m+i}^{\top}Y_{t-m+i}.
\end{align}

Denote $\sigma=\max_{t\in[T]}\lambda_{\max}(H_t)\le R_G^2R_Y^2$. For simplicity, assume that $\sigma\ge 1$ and $\eta\alpha\sigma m\le 2$, which will be satisfied by our choice of $\eta$ and $m$ for large enough $T$. Summing over all iterations, we have
\begin{align*}
& \quad \frac{1}{2} \sum_{t=m}^T \E[u_{t-m+1:t}^{\top}\hat{A}_{t-m:t-1}^{-\frac{1}{2}}\nabla^2 f_t \hat{A}_{t-m:t-1}^{-\frac{1}{2}}u_{t-m:1:t}]\\
&\le \frac{1}{2}\sum_{t=m}^T \E\left[\trace\left(\hat{A}_{t-m}^{-\frac{1}{2}}\sum_{i=1}^m [\nabla^2 f_t]_{ii}\hat{A}_{t-m}^{-\frac{1}{2}}\right)\right]\\
&\le \frac{\beta\tilde{R}_{G}}{2}\sum_{t=m}^{T}\E\left[\trace\left(\hat{A}_{t-m}^{-\frac{1}{2}}\left(\sum_{s=t-m+1}^t Y_{s}^{\top}Y_{s}\right)\hat{A}_{t-m}^{-\frac{1}{2}}\right)\right]\\
&\le\beta\tilde{R}_{G}\sum_{t=m}^{T}\E\left[\trace\left(\hat{A}_{t}^{-\frac{1}{2}}\left(\sum_{s=t-m+1}^tY_{s}^{\top}Y_{s}\right) \hat{A}_{t}^{-\frac{1}{2}}\right)\right],
\end{align*}
where th second inequality follows from \cref{eq:diag-hessian}, and the third inequality follows from that since $\hat{A}_t\succeq mI$, $\forall t$, we have that $\forall s < t$,
\begin{align}
\label{eqn:preconditioner-inequality}
\hat{A}_t=\hat{A}_s+\frac{\eta\alpha}{2}\sum_{r=s+1}^t H_r\preceq \hat{A}_s+\frac{\eta\alpha\sigma m(t-s)}{2}I\preceq \left(1+\frac{\eta\alpha\sigma(t-s)}{2}\right)\hat{A}_s,
\end{align}
which implies that assuming $\eta\alpha\sigma m\le 2$, we have $\hat{A}_{t}\preceq \max\{2, \eta\alpha\sigma m\}\hat{A}_{t-m}\preceq 2\hat{A}_{t-m}$. 

Here, we use a similar ``blocking" technique used in \cite{simchowitz2020making} to bound this term. In particular, for some $\tau\in\mathbb{Z}_{++}$ to be determined later (we will take $\tau=\lfloor\sqrt{T}\rfloor$), consider endpoints $k_j=\tau(j-1)+m$, $j=1,\dots,J$, where $J=\lfloor \frac{T-m}{\tau}\rfloor$. Then, we can rewrite the sum on the right hand side as
\begin{align*}
\sum_{t=m}^{T}\trace\left(\hat{A}_{t}^{-\frac{1}{2}}\left(\sum_{s=t-m+1}^t Y_{s}^{\top}Y_{s}\right)\hat{A}_{t}^{-\frac{1}{2}}\right)&=\sum_{j=1}^{J}\sum_{t=k_j}^{k_{j+1}-1}\trace\left(\left(\sum_{s=t-m+1}^t Y_{s}^{\top}Y_{s}\right)\cdot \hat{A}_{t}^{-1}\right)\\
& \ \ \ \ \ +\sum_{t=J\tau+m}^T \trace\left(\left(\sum_{s=t-m+1}^t Y_{s}^{\top}Y_{s}\right)\cdot \hat{A}_{t}^{-1}\right)\\
&\le_{(1)}\sum_{j=1}^{J}\sum_{t=k_j}^{k_{j+1}-1}\trace\left(\left(\sum_{s=t-m+1}^t Y_{s}^{\top}Y_{s}\right)\cdot \hat{A}_{t}^{-1}\right)\\
& \ \ \ \ \ +\tau dR_Y^2,
\end{align*}
where $(1)$ follows from that $T-m-J\tau<\tau$ and since $\hat{A}_t\succeq mI$ and $\trace(A)\le d\|A\|_2$ for $A\in\mathbb{R}^{d\times d}$ such that $A\succeq 0$ is symmetric,
\begin{align*}
\trace\left(\left(\sum_{s=t-m+1}^t Y_{s}^{\top}Y_{s}\right)\cdot \hat{A}_{t}^{-1}\right)\le \frac{d}{m}\sum_{s=t-m+1}^t\|Y_s^{\top}Y_s\|_2\le dR_Y^2. 
\end{align*}
To bound the first term on the right hand side, since $\hat{A}_t\succeq \hat{A}_{k_j}$ for $t\ge k_j$, we have
\begin{align*}
\sum_{j=1}^{J}\sum_{t=k_j}^{k_{j+1}-1}\trace\left(\left(\sum_{s=t-m+1}^t Y_{s}^{\top}Y_{s}\right)\cdot \hat{A}_{t}^{-1}\right)&\le\frac{4m}{\kappa(G)}\underbrace{\sum_{j=1}^{J}\sum_{t=k_j}^{k_{j+1}-1}\left(\frac{\kappa(G)}{4m}\trace\left(\sum_{s=t-m+1}^t Y_{s}^{\top}Y_{s}\right)\cdot \hat{A}_{k_{j}}^{-1}\right)}_{(a)},
\end{align*}
To bound $(a)$, we further decompose $(a)$ as
\begin{align*}
(a)=\underbrace{\sum_{j=1}^{J}\sum_{t=k_j}^{k_{j+1}-1}\trace\left(\left(-\frac{H_t}{2}+\frac{\kappa(G)}{4m}\sum_{s=t-m+1}^t Y_{s}^{\top}Y_{s}\right)\cdot \hat{A}_{k_{j}}^{-1}\right)}_{(i)}+\underbrace{\frac{1}{2}\sum_{j=1}^{J}\trace\left(\left(\sum_{t=k_j}^{k_{j+1}-1} H_t\right)\cdot \hat{A}_{k_{j}}^{-1}\right)}_{(ii)}.
\end{align*}
To see the bound on the first term, we use Proposition 4.8 from \cite{simchowitz2020making}. We include the proof of this result in Section~\ref{sec:proof-ytht-inequality} for completion.
\begin{lemma}[Proposition 4.8 in \cite{simchowitz2020making}]
\label{lem:YtHt-inequality}
$\forall Y_1,\dots,Y_T$, we have that:
\begin{align*}
\sum_{t=m}^T H_t\succeq \frac{\kappa(G)}{2}\sum_{t=1}^T Y_t^{\top}Y_t-5mR_GR_YI. 
\end{align*}
\end{lemma}

If we let $\tilde{Y}_t=Y_{t+k_j-m}$, and $\tilde{H}_t=H_{t+k_j-m}$, then Lemma~\ref{lem:YtHt-inequality} implies that
\begin{align*}
\sum_{t=k_j}^{k_{j+1}-1}\left(\frac{\kappa (G)}{4m}\sum_{s=t-m+1}^t Y_s^{\top}Y_s\right)&\preceq \frac{\kappa(G)}{4}\sum_{t=k_j-m+1}^{k_{j+1}-1}Y_t^{\top}Y_t\\
&\preceq \frac{\kappa(G)}{4}\sum_{t=1}^{\tau+m-1}\tilde{Y}_t^{\top}\tilde{Y_t}\\
&\preceq \sum_{t=m}^{\tau+m-1}\frac{\tilde{H}_t}{2}+\frac{5mR_GR_Y}{2}I\\
&=\sum_{t=k_j}^{k_{j+1}-1}\frac{H_t}{2}+\frac{5mR_GR_Y}{2}I.
\end{align*}
Thus, $(i)$ is bounded by
\begin{align*}
(i)&=\sum_{j=1}^J\underbrace{\left[\left(\sum_{t=k_j}^{k_{j+1}-1}-\frac{H_t}{2}+\frac{\kappa(G)}{4m}\sum_{s=t-m+1}^t Y_{s}^{\top}Y_{s}\right)-\frac{5mR_GR_Y}{2}I\right]}_{\preceq 0}\cdot \hat{A}_{k_{j}}^{-1}+\frac{5mR_GR_Y}{2}\sum_{j=1}^J\trace(\hat{A}_{k_{j}}^{-1})\\
&\le \frac{5dR_GR_Y}{2}\left\lfloor\frac{T}{\tau}\right\rfloor.
\end{align*}
Since by Eq.~\ref{eqn:preconditioner-inequality}, $\hat{A}_{k_{j+1}-1}\preceq \max\{2, \eta\alpha\sigma \tau\} \hat{A}_{k_j}$, $(ii)$ is bounded by 
\begin{align*}
(ii)&=\frac{1}{2}\sum_{j=1}^{J}\left(\sum_{t=k_j}^{k_{j+1}-1} H_t\right)\cdot \hat{A}_{k_{j}}^{-1}\\
&=\frac{1}{2\eta \alpha}\sum_{j=1}^{J}\left(\hat{A}_{k_{j+1}-1}-\hat{A}_{k_j-1}\right)\cdot \hat{A}_{k_{j}}^{-1}\\
&\le \frac{\max\{2, \eta\alpha\sigma \tau\}}{2\eta\alpha}\sum_{j=1}^{J}\left(\hat{A}_{k_{j+1}-1}-\hat{A}_{k_j-1}\right)\cdot \hat{A}_{k_{j+1}-1}^{-1}\\
&\le \frac{\max\{2, \eta\alpha\sigma \tau\}}{2\eta\alpha}\sum_{j=1}^{J}\log\left(\frac{|\hat{A}_{k_{j+1}-1}|}{|\hat{A}_{k_{j}-1}|}\right)\\
&\le \frac{\max\{2, \eta\alpha\sigma \tau\}}{2\eta\alpha}\log(|\hat{A}_T|)\\
&\le_{(2)} \frac{\max\{2, \eta\alpha\sigma \tau\}}{2\eta\alpha} d\log(\sigma T),
\end{align*}
where (2) follows from $|\hat{A}_T|\le \|\hat{A}_T\|_2^d$. 


Combining, we have
\begin{align*}
(\text{perturbation loss})\le \beta R_G\left[\frac{4m}{\kappa(G)}\left(\frac{5dR_GR_Y}{2}\left\lfloor\frac{T}{\tau}\right\rfloor+ \frac{\max\{2, \eta\alpha\sigma \tau\}}{2\eta\alpha} d\log(\sigma T)\right) + \tau dR_Y^2\right],
\end{align*}
by setting $\tau=\lfloor \sqrt{T}\rfloor$ and assuming $\eta\le \frac{2}{\alpha \sigma \sqrt{T}}\le \frac{2}{\alpha \sigma \tau}$, we have
\begin{align*}
(\text{perturbation loss})\le \beta R_G\left[\frac{10mR_GR_Y}{\kappa(G)}\sqrt{T} + \frac{4md\log(\sigma T) }{\eta\alpha\kappa(G)} +  dR_Y^2\sqrt{T}\right].
\end{align*}
\end{proof}

\subsection{Bounding movement cost}
\label{sec:movement-cost}
The movement cost depends on the stability of the algorithm and is bounded by the following lemma:
\begin{lemma}
\label{lem:movement-cost}
The movement cost is bounded by
\begin{align*}
\E\left[\sum_{t=m}^T f_t(x_{t-m+1:t})-\bar{f}_{t}(x_t)\right]\le  \eta dB^*Lm^{2}T. 
\end{align*}
\end{lemma}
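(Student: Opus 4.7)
The plan is to control the movement cost by bounding the step-to-step displacements $\|x_{s+1}-x_s\|_2$ and then invoking the Lipschitz continuity of $f_t$ on the product space. The first step is to use the bounded-gradient assumption (Assumption~\ref{assumption:diameter-grad-bound}), which gives $|f_t(x_{t-m+1:t})-\bar f_t(x_t)| \le L \,\|(x_{t-m+1},\ldots,x_t)-(x_t,\ldots,x_t)\|_2 = L\sqrt{\sum_{i=0}^{m-1}\|x_{t-i}-x_t\|_2^2}$. This reduces the problem to bounding the consecutive iterate movements over the past $m$ rounds.

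Next, I would bound the single-step displacement $\|x_{s+1}-x_s\|_2$. Because $x_s\in\K$, we have $x_s=\Pi_\K^{\hat A_{s-m+1}}[x_s]$, so by non-expansiveness of the projection in the norm $\|\cdot\|_{\hat A_{s-m+1}}$,
\[
\|x_{s+1}-x_s\|_{\hat A_{s-m+1}} \le \eta\,\|\hat A_{s-m+1}^{-1}\tilde g_{s-m+1}\|_{\hat A_{s-m+1}} = \eta\sqrt{\tilde g_{s-m+1}^\top \hat A_{s-m+1}^{-1}\tilde g_{s-m+1}}.
\]
Since $\hat A_{s-m+1}\succeq mI$, this converts to the Euclidean bound $\|x_{s+1}-x_s\|_2 \le (\eta/\sqrt m)\sqrt{\tilde g_{s-m+1}^\top \hat A_{s-m+1}^{-1}\tilde g_{s-m+1}}$.

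The main step is then to estimate $\sqrt{\tilde g_{s-m+1}^\top \hat A_{s-m+1}^{-1}\tilde g_{s-m+1}}$. Plugging in the definition $\tilde g_{s-m+1} = d f_{s-m+1}(y_\cdot)\sum_{j=0}^{m-1}\hat A_{s-m-j}^{1/2}u_{s-m+1-j}$, using $|f_{s-m+1}(y_\cdot)|\le B^*$ from Remark~\ref{rmk:improper-bounds}, the monotonicity $\hat A_{s-m-j}\preceq \hat A_{s-m+1}$ (which yields $\|\hat A_{s-m+1}^{-1/2}\hat A_{s-m-j}^{1/2}\|_{\mathrm{op}}\le 1$), and the triangle inequality with $\|u_\cdot\|_2=1$, I get $\sqrt{\tilde g_{s-m+1}^\top \hat A_{s-m+1}^{-1}\tilde g_{s-m+1}} \le dB^*m$. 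Combining this with the previous display yields $\|x_{s+1}-x_s\|_2 \le \eta dB^*\sqrt m$, and hence by the triangle inequality $\|x_{t-i}-x_t\|_2 \le i\eta dB^*\sqrt m$ for $0\le i\le m-1$.

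Finally, substituting these bounds back in: $\sqrt{\sum_{i=0}^{m-1}\|x_{t-i}-x_t\|_2^2} \le \eta dB^*\sqrt m\cdot\sqrt{\sum_{i=0}^{m-1}i^2} \le \eta dB^* m^2$, so the per-round movement cost is at most $\eta dB^*Lm^2$; summing over $t=m,\ldots,T$ gives the claimed bound $\eta dB^*Lm^2 T$. The only subtle step is the weighted projection argument together with the operator-monotonicity $\hat A_r\preceq \hat A_s$ for $r\le s$; everything else is triangle inequality and plugging in definitions.
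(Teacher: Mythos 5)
Your proof is correct and follows essentially the same route as the paper's: bound the single-step displacement $\|x_{s+1}-x_s\|_{\hat A_{s-m+1}}$ via non-expansiveness of the weighted projection (the paper's Generalized Pythagorean Theorem), bound $\|\tilde g_{s-m+1}\|_{\hat A_{s-m+1}^{-1}}\le dB^*m$ using $|f|\le B^*$ and the monotonicity $\hat A_r\preceq \hat A_s$ for $r\le s$, convert to Euclidean norm via $\hat A_t\succeq mI$, and finish with the Lipschitz bound on $f_t$. The only cosmetic difference is in the final accumulation step — you use a direct triangle inequality $\|x_{t-i}-x_t\|_2\le i\eta dB^*\sqrt m$, while the paper applies Cauchy–Schwarz over the telescoping differences — but both give the same $\eta dB^*m^2$ per-round bound.
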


\begin{proof}
The movement cost is bounded by the Lipschitz constant $L$ of $f_t$'s and the Euclidean distances between neighboring iterates.
\begin{lemma} [Generalized Pythagorean Theorem]
\label{lem:generalized-pythagorean}
For any positive definite, symmetric matrix $A$ and its induced norm $\|\cdot\|_{A}$ over $\mathbb{R}^d$, let $C$ be a convex, closed, nonempty subset of $\mathbb{R}^d$, $y\in\mathbb{R}^d$, and $x=\Pi_{C}(y)$ w.r.t. $\|\cdot\|_A$. Then $\forall z\in C$,
\begin{align*}
\|x-z\|_A\le \|y-z\|_A.
\end{align*}
\end{lemma}

Note that by the above lemma, take $y=x_{t-1}-\eta\hat{A}_{t-m}^{-1}\tilde{g}_{t-m}$, $x=x_t$, $z=x_{t-1}$, and $\|\cdot\|=\|\cdot\|_{\hat{A}_{t-m}}$, we have that $\forall t$,
\begin{align*}
\|x_t-x_{t-1}\|_{\hat{A}_{t-m}}^2&\le \|\eta\hat{A}_{t-m}^{-1}\tilde{g}_{t-m}\|_{\hat{A}_{t-m}}^2=\eta^2 \|\tilde{g}_{t-m}\|_{\hat{A}_{t-m}^{-1}}^2,
\end{align*}
where since $\hat{A}_{t-m}\succeq \hat{A}_{t-m-1}$,
\begin{align*}
\|\tilde{g}_{t-m}\|_{\hat{A}_{t-m}^{-1}}^2&\le d^2 (B^*)^2\sum_{i,j=0}^{m-1}u_{t-i}^{\top} \hat{A}_{t-m-1-i}^{\frac{1}{2}}\hat{A}_{t-m}^{-1}\hat{A}_{t-m-1-j}^{\frac{1}{2}}u_{t-j}\\
&\le d^2(B^*)^2m^2\left\|\hat{A}_{t-m-1}^{\frac{1}{2}}\hat{A}_{t-m}^{-1}\hat{A}_{t-m-1}^{\frac{1}{2}}\right\|_{\mathrm{op}}\\
&\le d^2(B^*)^2m^2,
\end{align*}
Thus,
\begin{align*}
\|x_t-x_{t-1}\|_2\le \frac{1}{\sqrt{m}}\|x_t-x_{t-1}\|_{\hat{A}_{t-m}}\le \sqrt{m}\eta dB^*.
\end{align*}
By the Lipschitz assumption on $f_t$ over $\K$, the movement cost is bounded by
\begin{align*}
\sum_{t=m}^T \E[f_t(x_{t-m+1:t})-\bar{f}_t(x_t)]&\le L\sum_{t=m}^T \E[\|(x_{t-m+1},\dots, x_t)-(x_t,\dots, x_t)\|_2]\\
&= L \sum_{t=m}^T \E\left[\left(\sum_{s=t-m+1}^{t-1}\|x_s-x_t\|_2^2\right)^{\frac{1}{2}}\right]\\
&\le  L \sum_{t=m}^T \E\left[ \left(\sum_{s=t-m+1}^{t-1}(t-s)\sum_{r=s+1}^{t}\|x_r-x_{r-1}\|_2^2\right)^{\frac{1}{2}}\right]\\
&\le \eta dB^*Lm^{2}T. 
\end{align*}
\end{proof}

\subsection{Bounding underlying regret} 
\label{sec:ons-regret-known}
The underlying regret can be bounded by the following lemma:
\begin{lemma}
\label{lem:bound-underlying-regret}
The underlying regret is bounded by
\begin{align*}
\E\left[\sum_{t=m}^T \bar{f}_{t}(x_{t})-\bar{f}_{t}(x)\right]\le \frac{mD^2}{2\eta}+2\eta \sigma \max\{\alpha,1\} d^2(B^*)^2m^3T+\eta \beta \tilde{R}_{G}R_Y^2 d^2 B^* D m^{\frac{9}{2}}T+mB. 
\end{align*}
\end{lemma}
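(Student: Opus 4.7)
The plan is to adapt the Online Newton Step (ONS) regret analysis (Theorem~\ref{thm:ons_full_info_regret}) to the delayed, stochastic, memory-biased update rule of Algorithm~\ref{alg:bqo-memory}. First I would apply the Generalized Pythagorean theorem (Lemma~\ref{lem:generalized-pythagorean}) to the projection step, obtaining for each $t$:
\[
\|x_{t+1}-x\|_{\hat{A}_{t-m+1}}^2 \le \|x_t-x\|_{\hat{A}_{t-m+1}}^2 - 2\eta\iprod{\tilde{g}_{t-m+1}}{x_t - x} + \eta^2\|\tilde{g}_{t-m+1}\|_{\hat{A}_{t-m+1}^{-1}}^2.
\]
Telescoping over $t = m,\dots,T$, using $\hat{A}_s - \hat{A}_{s-1} = \tfrac{\eta\alpha}{2}H_s$ and the initialization $\hat{A}_0 = mI$, $x_m\in\K$, produces
\[
\sum_t \iprod{\tilde{g}_{t-m+1}}{x_t - x} \le \frac{mD^2}{2\eta} + \frac{\alpha}{4}\sum_t\|x_t-x\|_{H_{t-m+1}}^2 + \frac{\eta}{2}\sum_t\|\tilde{g}_{t-m+1}\|_{\hat{A}_{t-m+1}^{-1}}^2,
\]
which already supplies the $mD^2/(2\eta)$ term of the lemma. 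Exact Taylor expansion of the quadratic $\bar{f}_t$, together with $\alpha H_t \preceq \nabla^2\bar{f}_t \preceq \beta H_t$, then gives $\bar{f}_t(x_t) - \bar{f}_t(x) \le \iprod{\nabla\bar{f}_t(x_t)}{x_t - x} - \tfrac{\alpha}{2}\|x_t-x\|_{H_t}^2$, and the $-\tfrac{\alpha}{2}\|x_t-x\|_{H_t}^2$ curvature terms combine with the $+\tfrac{\alpha}{4}\|x_t-x\|_{H_{t-m+1}}^2$ from the ONS telescoping, up to an $O(mB)$ boundary contribution coming from the first $m$ rounds, during which $\tilde{g}_1,\dots,\tilde{g}_{m-1}$ are zero by initialization, the iterate is stuck at $x_m$, and the instantaneous regret is bounded by $|\bar{f}_t|\le B$ (Remark~\ref{rmk:improper-bounds}).

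The crux of the proof is then to bound the bias $\E\lrbra{\sum_t\iprod{\nabla\bar{f}_t(x_t) - \tilde{g}_{t-m+1}}{x_t-x}}$, which expresses the gap between the ideal gradient at the current iterate and the delayed stochastic estimate the algorithm actually uses. I would split this gap into a \emph{temporal shift} (from $(\nabla\bar{f}_t(x_t), x_t)$ to $(\nabla\bar{f}_{t-m+1}(x_{t-m+1}), x_{t-m+1})$) and a \emph{within-step memory drift} between $\nabla\bar{f}_{t-m+1}(x_{t-m+1})$ and $\E[\tilde{g}_{t-m+1}\mid \F_{t-2m+1}]=\sum_i [\nabla f_{t-m+1}(x_{(t-2m+2):(t-m+1)})]_i$, as given by Lemma~\ref{lem:grad-est}. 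Each piece is controlled by combining the inter-block smoothness of $f_t$ (which, using Assumption~\ref{assumption:affine-memory} and the regularity in Assumption~\ref{assumption:convolution-invertibility-modulus}, yields $\|[\nabla^2 f_t]_{ij}\|_{\mathrm{op}} = O(\beta\tilde{R}_G R_Y^2)$-type bounds) with the iterate-stability estimate $\|x_{t+1}-x_t\|_2 \le \sqrt{m}\eta dB^*$ obtained in the proof of Lemma~\ref{lem:movement-cost}. Iterating the movement bound over $O(m)$ drift steps and pairing with $\|x_t - x\|_2 \le D$ gives the $\eta\beta\tilde{R}_G R_Y^2 d^2 B^* D m^{9/2} T$ term in the statement.

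Finally, I would bound $\sum_t \|\tilde{g}_{t-m+1}\|_{\hat{A}_{t-m+1}^{-1}}^2$ using $|f_t|\le B^*$, $\|u_{\cdot}\|_2=1$, the PSD ordering $\hat{A}_{s-1-j} \preceq \hat{A}_s$, and Eq.~\eqref{eqn:preconditioner-inequality}: Cauchy--Schwarz across the $m$ sphere samples constituting $\tilde{g}_s$ gives $\|\tilde{g}_s\|_{\hat{A}_s^{-1}}^2 = O(d^2(B^*)^2 m^2)$, and an extra factor of $m\sigma\max\{\alpha,1\}$ comes from carefully tracking the cross-preconditioner interactions---which simultaneously absorbs the $\tfrac{\alpha}{4}\|x_t-x\|_{H_{t-m+1}}^2$ residual left over from the telescoping---to yield the $2\eta\sigma\max\{\alpha,1\}d^2(B^*)^2 m^3 T$ term after multiplication by $\eta/2$ and summation over $T$. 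The main obstacle is the bias control in the previous paragraph: the delay-plus-memory structure couples an $(m-1)$-step temporal lag with a within-step memory drift, and both must be controlled simultaneously using only the smoothness of $f_t$ and iterate stability, in such a way that the combined error contributes only a $\tilde{O}(\sqrt{T})$ term to the regret rather than growing linearly with $T$.
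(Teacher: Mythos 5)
Your plan mirrors the paper's proof in all essential respects: the Generalized Pythagorean inequality applied to the delayed projection, telescoping against the initialization $\hat A_0 = mI$ to supply the $mD^2/(2\eta)$ term, the strong-convexity bound $\bar f_t(x_t)-\bar f_t(x)\le\nabla\bar f_t(x_t)^\top(x_t-x)-\tfrac{\alpha}{2}\|x_t-x\|_{H_t}^2$ arising from $\nabla^2\bar f_t\succeq\alpha H_t$, bias control through Lemma~\ref{lem:grad-est} together with the iterate-stability estimate $\|x_s-x_{s-1}\|_2\le\sqrt m\,\eta d B^*$ from Lemma~\ref{lem:movement-cost}, and the bound $\|\tilde g_s\|_{\hat A_s^{-1}}^2\le d^2(B^*)^2m^2$. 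One subtlety you should spell out when writing it up: the Pythagorean expansion naturally produces the pair $\bigl(\tilde g_{t-m+1}^\top(x_t-x),\ \tfrac{\alpha}{4}\|x_t-x\|_{H_{t-m+1}}^2\bigr)$ pivoted at $x_t$, but $\tilde g_{t-m+1}$ is only conditionally unbiased, via Lemma~\ref{lem:grad-est}, with respect to $\F_{t-2m+1}$, so the pivot must be moved to the $\F_{t-2m+1}$-measurable point $x_{t-m+1}$ before the tower property and the strong-convexity step can be applied; the paper carries out this shift inside the preconditioner norm using $\tfrac12\|a+b\|^2\le\|a\|^2+\|b\|^2$ (which simultaneously lifts the curvature coefficient from $\alpha/4$ to $\alpha/2$, exactly matching the strong-convexity bonus), and it is the resulting correction terms $\tfrac{\alpha}{2}\|x_{t-m}-x_{t-1}\|_{H_{t-m}}^2$ and $\tilde g_{t-m}^\top(x_{t-m}-x_{t-1})$—not any cross-preconditioner slack in $\|\tilde g\|_{\hat A^{-1}}^2$—that contribute the extra $m\sigma\max\{\alpha,1\}$ factor appearing in the $2\eta\sigma\max\{\alpha,1\}d^2(B^*)^2m^3T$ term.
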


\begin{proof}
By projection onto convex set, we have that $\forall x\in\K$,
\begin{align*}
\|x_t-x\|_{\hat{A}_{t-m}}^2&\le \|x_{t-1}-x-\eta\hat{A}_{t-m}^{-1}\tilde{g}_{t-m}\|_{\hat{A}_{t-m}}^2\\
&= \|x_{t-1}-x\|_{\hat{A}_{t-m-1}}^2+\frac{1}{2}\|x_{t-1}-x\|_{\eta \alpha H_{t-m}}^2-2\eta \tilde{g}_{t-m}^{\top}(x_{t-1}-x)+\eta^2 \|\tilde{g}_{t-m}\|_{\hat{A}_{t-m}^{-1}}^2\\
&\le \|x_{t-1}-x\|_{\hat{A}_{t-m-1}}^2+\|x_{t-m}-x\|_{\eta\alpha H_{t-m}}^2+\|x_{t-m}-x_{t-1}\|_{\eta\alpha H_{t-m}}^2-2\eta \tilde{g}_{t-m}^{\top}(x_{t-m}-x)\\
& \ \ \ \ \ + 2\eta\tilde{g}_{t-m}^{\top}(x_{t-m}-x_{t-1})+\eta^2\|\tilde{g}_{t-m}\|_{\hat{A}_{t-m}^{-1}}^2,
\end{align*}
where the last inequality follows from that for a square matrix $H\succeq 0$, $\|x+y\|_{H}^2\le 2(\|x\|_{H}^2+\|y\|_{H}^2)$. 
Rearranging, we have
\begin{align*}
& \quad \tilde{g}_{t-m}^{\top}(x_{t-m}-x)-\frac{1}{2}\|x_{t-m}-x\|_{\alpha H_{t-m}}^2\\
&\le \frac{1}{2\eta}(\|x_{t-1}-x\|_{\hat{A}_{t-m-1}}^2-\|x_t-x\|_{\hat{A}_{t-m}}^2)+\frac{\alpha}{2}\|x_{t-m}-x_{t-1}\|_{H_{t-m}}^2+\tilde{g}_{t-m}^{\top}(x_{t-m}-x_{t-1})\\
& \quad +\frac{\eta}{2}\|\tilde{g}_{t-m}\|_{\hat{A}_{t-m}^{-1}}^2.
\end{align*}
We can bound the terms on the right hand side as the following: by using the bounds on $\|x_t-x_{t-1}\|_2$ and $\|\tilde{g}_{t}\|_{\hat{A}_{t}^{-1}}$ in Section~\ref{sec:movement-cost}, we have
\begin{align*}
\frac{\alpha}{2}\|x_{t-m}-x_{t-1}\|_{H_{t-m}}^2&\le \frac{\alpha\sigma m}{2}\sum_{s=t-m+1}^{t-1}\|x_{s}-x_{s-1}\|_2^2\le \frac{1}{2}\eta^2\alpha\sigma d^2 (B^*)^2 m^3,\\
\tilde{g}_{t-m}^{\top}(x_{t-m}-x_{t-1})&\le \|\tilde{g}_{t-m}\|_{\hat{A}_{t-m}^{-1}}\|x_{t-m}-x_{t-1}\|_{\hat{A}_{t-m}}\\
&\le \|\tilde{g}_{t-m}\|_{\hat{A}_{t-m}^{-1}}\sum_{s=t-m+1}^{t-1}\|x_s-x_{s-1}\|_{\hat{A}_{t-m}}\\
&\le 2\|\tilde{g}_{t-m}\|_{\hat{A}_{t-m}^{-1}}\sum_{s=t-m+1}^{t-1}\|x_s-x_{s-1}\|_{\hat{A}_{s-m}}\\
&\le 2\eta d^2(B^*)^2m^3.
\end{align*}
Therefore, substituting these bounds into the right hand side of the inequality, we have
\begin{align*}
\tilde{g}_{t-m}^{\top}(x_{t-m}-x)-\frac{1}{2}\|x_{t-m}-x\|_{\alpha H_{t-m}}^2&\le \frac{1}{2\eta}(\|x_{t-1}-x\|_{\hat{A}_{t-m-1}}^2-\|x_t-x\|_{\hat{A}_{t-m}}^2)+\frac{1}{2}\eta^2\alpha\sigma d^2 (B^*)^2 m^3\\
& \ \ \ \ \ +2\eta d^2(B^*)^2m^3+\frac{1}{2}\eta d^2(B^*)^2 m^2\\
&\le \frac{1}{2\eta}(\|x_{t-1}-x\|_{\hat{A}_{t-m-1}}^2-\|x_t-x\|_{\hat{A}_{t-m}}^2)\\
& \ \ \ \ \ +2\eta \sigma \max\{\alpha,1\} d^2(B^*)^2m^3. 
\end{align*}
Note that since by Lemma~\ref{lem:grad-est},
\begin{align*}
\E[\tilde{g}_{t-m}\mid \F_{t-2m}]=\sum_{i=1}^m \nabla_i f_{t-m}(x_{t-2m+1:t-m}).
\end{align*}
When bounding perturbation loss, we showed that $\forall t\in[T]$, $\forall i\in[m]$,
\begin{align*}
[\nabla^2 f_t]_{ii}\preceq \beta\tilde{R}_{G}Y_{t-m+i}^{\top}Y_{t-m+i}\preceq \beta\tilde{R}_{G}R_Y^2I_{d\times d}.
\end{align*}
Thus, since $\nabla^2 f_t\succeq 0$, we have
\begin{align*}
\|\nabla^2 f_t\|_2\le \trace(\nabla^2 f_t)=\sum_{i=1}^m \trace([\nabla^2 f_t]_{ii})\le \sum_{i=1}^m d \|[\nabla^2 f_t]_{ii}\|_2\le dm\beta\tilde{R}_{G}R_Y^2. 
\end{align*}
By Lemma A.2 in \cite{gradu2020non}, $\forall x$,
\begin{align*}
\nabla \bar{f}_{t}(x)=\sum_{i=0}^{m-1} \nabla_i f_t(x,\dots,x).
\end{align*}
By smoothness of $f_{t-m}$,
\begin{align*}
\|\E[\tilde{g}_{t-m}\mid \F_{t-2m}]-\nabla \bar{f}_{t-m}(x_{t-m})\|_2&\le \sqrt{m}\|\nabla f_{t-m}(x_{t-2m+1:t-m})-\nabla f_{t-m}(x_{t-m},\dots,x_{t-m})\|_2\\
&\le d\beta\tilde{R}_{G}R_Y^2 m^2 \sum_{s=t-2m+1}^{t-m-1}\|x_s-x_{t-m}\|_2\\
&\le d\beta\tilde{R}_{G}R_Y^2 m^2 \sum_{s=t-2m+1}^{t-m-1}\sum_{r=s+1}^{t-m}\|x_s-x_{s-1}\|_2\\
&\le \eta \beta \tilde{R}_{G}R_Y^2 d^2 B^* m^{\frac{9}{2}}. 
\end{align*}
Thus,
\begin{align*}
\E\left[\nabla \bar{f}_{t-m}(x_{t-m})^{\top}(x_{t-m}-x)-\frac{1}{2}\|x_{t-m}-x\|_{\alpha H_{t-m}}^2\right]&\le \frac{1}{2\eta}\E\left[\|x_{t-1}-x\|_{\hat{A}_{t-m-1}}^2-\|x_t-x\|_{\hat{A}_{t-m}}^2\right]\\
& \ \ \ \ \ +\E[(\nabla \bar{f}_{t-m}(x_{t-m})-\tilde{g}_{t-m})^{\top}(x_{t-m}-x)]\\
& \ \ \ \ \ + 2\eta \sigma \max\{\alpha,1\} d^2(B^*)^2m^3\\
&\le \frac{1}{2\eta}\E\left[\|x_{t-1}-x\|_{\hat{A}_{t-m-1}}^2-\|x_t-x\|_{\hat{A}_{t-m}}^2\right]\\
& \ \ \ \ \ +2\eta \sigma \max\{\alpha,1\} d^2(B^*)^2m^3\\
& \ \ \ \ \ +\eta \beta \tilde{R}_{G}R_Y^2 d^2 B^* Dm^{\frac{9}{2}}. 
\end{align*}
Since $\nabla^2\bar{f}_t=G_t^{\top}Q_tG_t\succeq \alpha H_t$, summing over all iterations, we have that $\forall x\in\K$,
\begin{align*}
\E\left[\sum_{t=m}^{T}\bar{f}_t(x_t)-\bar{f}_t(x)\right]&\le \sum_{t=2m}^{T} \E\left[\bar{f}_{t-m}(x_{t-m})-\bar{f}_{t-m}(x)\right]+mB\\
&\le \sum_{t=2m}^{T} \E\left[\nabla \bar{f}_{t-m}(x_{t-m})^{\top}(x_{t-m}-x)-\frac{1}{2}\|x_{t-m}-x\|_{\alpha H_{t-m}}^2\right]+mB\\
&\le \frac{1}{2\eta} \sum_{t=2m}^T \E\left[\|x_{t-1}-x\|_{\hat{A}_{t-m-1}}^2-\|x_t-x\|_{\hat{A}_{t-m}}^2\right]+2\eta \sigma \max\{\alpha,1\} d^2(B^*)^2m^3T\\
& \ \ \ \ \ +\eta \beta \tilde{R}_{G}R_Y^2 d^2 B^* m^{\frac{9}{2}}T+mB\\
&\le \frac{1}{2\eta}\E\left[\|x_{2m-1}-x\|_{\hat{A}_{m-1}}^2\right] +2\eta \sigma \max\{\alpha,1\} d^2(B^*)^2m^3T\\
& \ \ \ \ \ +\eta \beta \tilde{R}_{G}R_Y^2 d^2 B^* m^{\frac{9}{2}}T+mB\\
&\le \frac{mD^2}{2\eta}+2\eta \sigma \max\{\alpha,1\} d^2(B^*)^2m^3T+\eta \beta \tilde{R}_{G}R_Y^2 d^2 B^* Dm^{\frac{9}{2}}T+mB. 
\end{align*}
\end{proof}

\subsection{Assembling regret bound}

Combining bounds in Section~\ref{sec:perturbation-loss-known}, Section~\ref{sec:movement-cost}, Section~\ref{sec:ons-regret-known} and using that $\sigma\le R_G^2R_Y^2$, the overall expected regret w.r.t. any $x\in\K$ is given by
\begin{align*}
\E[\regret_T(x)]&\le  \beta R_G\left[\frac{10mR_GR_Y}{\kappa(G)}\sqrt{T} + \frac{4md\log(\sigma T) }{\eta\alpha\kappa(G)} +  dR_Y^2\sqrt{T}\right] +  \eta dB^*Lm^{2}T \\
& \ \ \ \ \ +\frac{mD^2}{2\eta}+2\eta \sigma \max\{\alpha,1\} d^2(B^*)^2m^3T+\eta \beta \tilde{R}_{G}R_Y^2 d^2 B^* D m^{\frac{9}{2}}T+mB. 
\end{align*}
By taking $\eta=\tilde{\Theta}\left(\frac{1}{\alpha\sqrt{T}}\right)$, we have
\begin{align*}
\E[\regret_T(x)]=\tilde{O}\left(\frac{\beta}{\alpha}B^*\sqrt{T}\right),
\end{align*}
where $\tilde{O}(\cdot)$ hides logarithmic factors in $T$ and all natural parameters $R_G, \tilde{R}_G, R_Y, \kappa(G), B, L, D, d$.

\subsection{Proofs of Technical Lemmas}

\subsubsection{Proof of Lemma~\ref{lem:YtHt-inequality}}
\label{sec:proof-ytht-inequality}
Fix a sequence of $Y_1,\dots,Y_T$ and $G$. Let $u$ be any unit vector, and by definition of $H_t$. Denote $u_t=Y_tv$, and let $Y_t=0$, $\forall t\le 0$ and $t>T$. Then, 
\begin{align*}
v^{\top}\left(\sum_{t=m}^T H_t\right)v&=\sum_{t=m}^T \left\|\sum_{i=0}^{m-1}G^{[i]}u_{t-i}\right\|_2^2\\
&\ge \sum_{t=1}^{T+m-1}\left\|\sum_{i=0}^{m-1}G^{[i]}u_{t-i}\right\|_2^2-2mR_G^2R_Y^2\\
&\ge_{(1)} \frac{1}{2} \sum_{t=1}^{T+m-1}\left\|\sum_{i=0}^{\infty}G^{[i]}u_{t-i}\right\|_2^2-\sum_{t=1}^{T+m-1}\left\|\sum_{i=m}^{\infty}G^{[i]}u_{t-i}\right\|_2^2-2mR_G^2R_Y^2\\
&\ge_{(2)} \frac{1}{2} \sum_{t=1}^{T+m-1}\left\|\sum_{i=0}^{\infty}G^{[i]}u_{t-i}\right\|_2^2-\frac{R_G^2R_Y^2}{T}-4mR_G^2R_Y^2\\
&=_{(3)}\frac{1}{2} \sum_{t=1}^{\infty}\left\|\sum_{i=0}^{t}G^{[i]}u_{t-i}\right\|_2^2-5mR_G^2R_Y^2\\
&\ge_{(4)} \frac{\kappa(G)}{2}\sum_{t=1}^{T}\|Y_tv\|_2^2-5mR_G^2R_Y^2\\
&=\frac{\kappa(G)}{2}v^{\top}\left(\sum_{t=1}^T Y_t^{\top}Y_t\right)-5mR_G^2R_Y^2.
\end{align*}
where we use the inequality $\|x-y\|_2^2\ge \frac{1}{2}\|x\|_2^2-\|y\|_2^2$ in $(1)$, the decaying assumption on $G$ such that $\sum_{i=m}^{\infty}\|G^{[i]}\|_{\mathrm{op}}\le \frac{R_G}{T}$ in $(2)$, $Y_t=0$ $\forall t\le 0$ and $t>T$ in $(3)$, and definition of $\kappa(G)$ in Assumption~\ref{assumption:convolution-invertibility-modulus} in $(4)$.

\subsubsection{Proof of Lemma~\ref{lem:generalized-pythagorean}}
Consider the functions $f_{y}(v)=\|v-y\|_A^2$ on $C$ and $g_{x,z}(t)=f(tz+(1-t)x)$ defined on $t\in[0,1]$. Since $C$ is convex, we have $tz+(1-t)x\in C$, $\forall t\in[0,1]$. By assumption, $x=\Pi_{C}(y)$, $g_{x,z}$ attains minimum at $t=0$. $g_{x,z}'(0)=2(z-x)^{\top}A(x-y)\ge 0$. Then,
\begin{align*}
\|y-z\|_{A}^2=\|y-x+x-z\|_A^2=\|y-x\|_A^2+\|x-z\|_A^2+2(x-y)^{\top}A(z-x)\ge \|x-z\|_A^2. 
\end{align*}

\section{Regret of \texttt{NBPC} (Section~\ref{subsec:reduction-control})}

\subsection{Reduction from LQ control to BQO-AM}
\label{sec:reduction-control}
Note that by letting $G^{[i]}=\begin{bmatrix}C \\ 
KC\end{bmatrix}(A+BKC)^{i-1}B\in\mathbb{R}^{(d_{\y}+d_{\uv})\times d_{\uv}}$, we have that the observation-control pair $(\y_t,\uv_t)$ reached by playing DRC matrices $(M_1,\dots, M_{t-1})$ can be expressed as
\begin{align*}
\begin{bmatrix}
\y_t\\
\uv_t
\end{bmatrix}=\begin{bmatrix}
\ynat_t\\
K\ynat_t
\end{bmatrix}+\sum_{i=1}^t G^{[i]}(\uv_{t-i}-K\y_{t-i})=\begin{bmatrix}
\ynat_t\\
K\ynat_t
\end{bmatrix}+\sum_{i=1}^t G^{[i]}\left(\sum_{j=0}^{m-1}M_{t-i}^{[j]}\ynat_{t-i-j}\right). 
\end{align*}
The sequence of matrices $G=\{G^{[i]}\}_{i\ge 1}$ is called the Markov operator. For simplicity, we assume that the system dynamics and the stabilizing linear policy $K$ is known in this section. With these knowledge, the above equation implies that the signals $\y_t^K$ can be directly computed by the learner from the observations. 

Moreover, let $d=md_{\y}d_{\uv}$, and let $\embed:(\mathbb{R}^{d_{\uv}\times d_{\y}})^{m}\rightarrow\mathbb{R}^d$ denote the natural embedding of a DRC controller $M$ in $\mathbb{R}^d$, with inverse $\embed^{-1}$, i.e. for $M^{[0]},\dots,M^{[m-1]}\in\mathbb{R}^{d_{\uv}\times d_{\y}}$, $ k \in [m-1]$, $i\in [d_{\uv}]$, $j\in [d_{\y}]$,
\begin{align*}
\embed(M^{[0]},\dots, M^{[m-1]})_{kd_{\uv}d_{\y}+(i-1)d_{\y}+j}=M^{[k]}_{ij}.
\end{align*}

Denote $\embed_{\y}:(\mathbb{R}^{d_{\y}})^{m}\rightarrow \mathbb{R}^{d_{\uv}\times d}$ such that $\forall M^{[0]},\dots,M^{[m-1]}\in\mathbb{R}^{d_{\uv}\times d_{\y}}$,
\begin{align*}
\embed_{\y}(\ynat_{t-m+1},\dots,\ynat_{t})\embed(M^{[0]},\dots,M^{[m-1]})=\sum_{j=0}^{m-1}M^{[j]}\ynat_{t-j}.
\end{align*}
We shorthand $Y_t=\embed_{\y}(\ynat_{t-m+1},\dots,\ynat_{t})$, and $\embed(M)=\embed(M^{[0]},\dots,M^{[m-1]})$. An explicit formulation of $\embed_{\y}$ is given by
\begin{align*}
Y_{i,j+1:j+d_{\y}}=\ynat_{t-k+1} \ \ \ \text{if } \ \ \ j=(k-1)d_{\uv}d_{\y}+(i-1)d_{\y}, \ \ \ \forall i\in[d_{\uv}], k\in[m]. 
\end{align*}
Therefore, for any controller that plays DRC policies, the cost function can be re-written as
\begin{align*}
c_t(\y_t,\uv_t)&=\left(B_t+\sum_{i=0}^{m-1}G^{[i]}Y_{t-i}\embed(M_{t-i})\right)^{\top}\begin{bmatrix}
Q_t & \mathbf{0}_{d_{\y}\times d_{\uv}}\\
\mathbf{0}_{d_{\uv\times d_{\y}}} & R_t
\end{bmatrix}\left(B_t+\sum_{i=0}^{m-1}G^{[i]}Y_{t-i}\embed(M_{t-i})\right)\\
&=:f_t(\embed(M_{t-m+1}),\dots,\embed(M_t)),
\end{align*}
where $B_t=(\ynat_t,K\ynat_t)+\sum_{i=m}^t G^{[i]}Y_{t-i}\embed(M_{t-i})$. Let $\bar{f}_t$ denote its induced unary form, i.e. $\bar{f}_t(\embed(M))=f_t(\embed(M),\dots,\embed(M))$. Note that $f_t$ is an adaptive function of the learners decision before the time $t-m+1$ through $B_t$. $B_t$ is independent of the algorithm's decisions $M_{t-m+1:t}$, and $Y_t$'s are independent of the algorithm's decisions. Therefore, the adversarial model in Assumption~\ref{assumption:adversary-adaptivity} is satisfied. 

\subsection{Regularity conditions}
\label{sec:regualrity-conditions-known}
\begin{lemma}[Bounds on LQR/LQG induced with-memory loss functions] 
\label{lem:regularity-bounds}
$\forall M_1,\dots, M_T\in\M(m,R_{\M})$, 
\begin{itemize}
\item For $m\ge 1+\left(\log\frac{1}{1-\gamma}\right)^{-1}\log T$,
\begin{align*}
&\sum_{i=0}^{\infty} \|G^{[i]}\|_{\mathrm{op}}\le 1+\frac{c\kappa\sqrt{1+\kappa^2}\kappa_C\kappa_B}{\gamma}=: R_G, \ \ \ \sum_{i=m}^{\infty} \|G^{[i]}\|_{\mathrm{op}}\le \frac{R_G}{T},  \\
&\sum_{i=1}^{\infty} \|C(A+BKC)^{i-1}B\|_{\mathrm{op}}\le \frac{c\kappa_C\kappa_B\kappa}{\gamma}=: R_{G_{\y}}, \ \ \ \sum_{i=1}^{\infty} \|C(A+BKC)^{i-1}\|_{\mathrm{op}}\le \frac{c\kappa_C\kappa}{\gamma}=: R_{G_{\y},K},\\
&\max_{1\le i\le m-1}\|(G^{[i]})^{\top}G^{[i]}\|_2\le(1+\kappa^2)\kappa_C^2\kappa_B^2\kappa^2 =: \tilde{R}_G.
\end{align*}
\item $\max_{t\in[T]}\|\ynat_t\|_2\le (R_{G,K}+1)R_{\w,\e}=:R_{\mathrm{nat}}$. 
\item $\max_{t\in[T]}|c_t(\y_t^{M_{1:t-1}}, \uv_{t}^{M_{1:t}})|=\max_{t\in[T]}|f_t(\embed(M_{t-m+1}),\dots,\embed(M_t))|\le \beta R_{\mathrm{nat}}^2((1+R_{G_{\y}}R_{\M})^2(1+2\kappa^2)+2R_{\M}^2)=:B$. 
\item $\max_{M^1,M^2\in\M(m,R_{\M})}\|\embed(M^1)-\embed(M^2)\|_2\le \sqrt{m\max\{d_{\uv},d_{\y}\}} R_{\M}=:D$. 
\item $\max_{t\in[T]}\|Y_t\|_2\le \sqrt{md_{\y}d_{\uv}^2}R_{\mathrm{nat}}=:R_Y$. 
\item $\max_{t\in[T]}\|\nabla f_t(\embed(M_{t-m+1}),\dots,\embed(M_t))\|\le 2\beta \sqrt{m} R_YR_G (R_YR_GD+2R_{\mathrm{nat}})=:L$.
\item $\kappa (G)\ge \frac{1}{4}\min\{1,\kappa^{-2}\}$. 
\end{itemize}
\end{lemma}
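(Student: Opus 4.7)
The plan is to establish each of the seven bullet points through direct operator-norm calculations, relying almost entirely on two ingredients: the strong stability decomposition $A + BKC = HLH^{-1}$ with $\|H\|\|H^{-1}\| \le \kappa$, $\|L\| \le 1-\gamma$, and the explicit form $G^{[i]} = \begin{bmatrix}C \\ KC\end{bmatrix}(A+BKC)^{i-1}B$ for $i\ge 1$, with $G^{[0]} = \begin{bmatrix}0 \\ I\end{bmatrix}$, inherited from the reduction in \cref{sec:reduction-control}. The only step demanding a real idea is the convolution-invertibility-modulus bound; the rest amount to tidy geometric series and Cauchy--Schwarz.

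For the Markov-operator bounds, strong stability gives $\|(A+BKC)^{i-1}\|_{\mathrm{op}} \le \kappa(1-\gamma)^{i-1}$. Factoring out $\|\begin{bmatrix}C\\KC\end{bmatrix}\|_{\mathrm{op}} \le \sqrt{1+\kappa^2}\,\kappa_C$ and $\|B\|_{\mathrm{op}} \le \kappa_B$ yields $\|G^{[i]}\|_{\mathrm{op}} \le \sqrt{1+\kappa^2}\,\kappa_C\kappa_B\kappa(1-\gamma)^{i-1}$ for $i\ge 1$, and summing geometrically together with $\|G^{[0]}\|_{\mathrm{op}}\le 1$ gives $R_G$. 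The tail $\sum_{i\ge m}\|G^{[i]}\|_{\mathrm{op}} \le R_G/T$ holds for the prescribed $m \ge 1 + \log(T)/\log\tfrac{1}{1-\gamma}$. The bounds on $\sum_i \|C(A+BKC)^{i-1}B\|_{\mathrm{op}}$ and $\sum_i \|C(A+BKC)^{i-1}\|_{\mathrm{op}}$ drop the $K$ block and are analogous, while $\tilde R_G$ follows from the worst-case $i$ via $(G^{[i]})^\top G^{[i]} \preceq \|G^{[i]}\|_{\mathrm{op}}^2\, I$.

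The signal, cost, diameter, $Y_t$, and gradient bounds are then routine. For $\ynat_t$: unrolling the closed-loop $\x_{t+1}^K = (A+BKC)\x_t^K + BK\e_t + \w_t$ and invoking $\|(A+BKC)^i\|_{\mathrm{op}} \le \kappa(1-\gamma)^i$ gives $\|\x_t^K\|_2 \le \kappa(\kappa_B\kappa+1)R_{\w,\e}/\gamma$, hence $\|\ynat_t\|_2 \le (R_{G_{\y},K}+1)R_{\w,\e}$. The cost bound follows from $|c_t| \le \beta(\|\y_t\|_2^2 + \|\uv_t\|_2^2)$, with $\|\y_t\|_2$ and $\|\uv_t\|_2$ controlled through the DRC perturbation formula $\begin{bmatrix}\y_t\\ \uv_t\end{bmatrix} = \begin{bmatrix}\ynat_t\\ K\ynat_t\end{bmatrix} + \sum_{i\ge 1} G^{[i]}v_{t-i}$ with $\|v_s\|_2 \le R_{\M} R_{\mathrm{nat}}$. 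The diameter comes from $\|\embed(M^1)-\embed(M^2)\|_2^2 = \sum_k \|M^{1,[k]} - M^{2,[k]}\|_F^2$, bounding Frobenius by operator norm times $\sqrt{\min(d_\uv, d_\y)}$ and invoking $\|M\|_{\ell_1,\mathrm{op}} \le R_{\M}$. For $\|Y_t\|_2$ one bounds operator norm by Frobenius and each nonzero block entry by $R_{\mathrm{nat}}$; the gradient bound is the chain rule applied to $f_t = q_t(B_t + \sum_i G^{[i]} Y_{t-i} y_{t-i})$ with $Q_t \preceq \beta I$, plugging in the preceding bounds.

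The main obstacle is $\kappa(G) \ge \tfrac14\min\{1,\kappa^{-2}\}$. Here I would move to the Fourier domain: Parseval's identity gives $\sum_n \|\sum_i G^{[i]} u_{n-i}\|_2^2 = \tfrac{1}{2\pi}\int_{-\pi}^{\pi}\|\hat G(e^{i\omega})\hat u(\omega)\|_2^2\, d\omega$, reducing the task to lower bounding $\sigma_{\min}(\hat G(z))$ on the unit circle. Setting $P(z) = zC(I - z(A+BKC))^{-1}B$ and using $G^{[0]} = \begin{bmatrix}0\\I\end{bmatrix}$, one computes $\hat G(z) = \begin{bmatrix}P(z)\\ I + KP(z)\end{bmatrix}$, so that for any unit vector $v$, $\|\hat G(z)v\|_2^2 = \|P(z)v\|_2^2 + \|v + KP(z)v\|_2^2$. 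A case split on whether $\|P(z)v\|_2 \le 1/(2\|K\|_{\mathrm{op}})$ closes the argument: in the small case $\|v + KP(z)v\|_2 \ge 1 - \|K\|_{\mathrm{op}}\|P(z)v\|_2 \ge 1/2$, giving $\|\hat G(z)v\|_2^2 \ge 1/4$; in the large case $\|P(z)v\|_2^2 \ge 1/(4\|K\|_{\mathrm{op}}^2)$, giving $\|\hat G(z)v\|_2^2 \ge 1/(4\kappa^2)$. Combining with $\|K\|_{\mathrm{op}} \le \kappa$ yields $\sigma_{\min}(\hat G(z))^2 \ge \tfrac14 \min\{1,\kappa^{-2}\}$ uniformly in $\omega$ and $v$, remarkably independent of the (possibly unstable) open-loop dynamics. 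This transfer-function computation is the one genuinely nontrivial insight; everything else is bookkeeping.
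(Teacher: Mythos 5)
Your argument is correct, and for the first six bullets it is essentially the same calculation the paper gives: geometric series from the $H L H^{-1}$ decomposition for the Markov-operator sums, rank times operator norm to control Frobenius norms for the diameter and $Y_t$ bounds, unrolling the closed loop for $R_{\mathrm{nat}}$, and the chain rule with $\alpha I \preceq Q_t \preceq \beta I$ for the cost and gradient bounds.

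The one genuine divergence is the last bullet. The paper disposes of $\kappa(G) \ge \tfrac14\min\{1,\kappa^{-2}\}$ by citing Lemma~3.1 of \citet{simchowitz2020making}, whereas you supply a self-contained proof: pass to the transfer function $\hat G(z) = \bigl[\begin{smallmatrix}P(z)\\ I + KP(z)\end{smallmatrix}\bigr]$ via Parseval (valid here because one-sided $\ell^2$ sequences embed into $L^2$ of the circle, giving a lower bound by $\inf_{|z|=1}\sigma_{\min}(\hat G(z))^2$), then a case split on $\|P(z)v\|_2$ relative to $1/(2\|K\|_{\mathrm{op}})$ yields the uniform lower bound $\tfrac14\min\{1,\kappa^{-2}\}$. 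This is precisely the content of the cited Simchowitz lemma, so you haven't found a new route so much as inlined the external argument; the benefit is that the proof becomes self-contained, and the observation you flag — that the bound is independent of the open-loop spectral radius — is worth stating explicitly, since it is the structural reason the reduction works under full adversarial noise.

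One small caveat: you write the closed-loop recursion as $\xnat_{t+1} = (A+BKC)\xnat_t + BK\e_t + \w_t$, retaining the $BK\e_t$ feedthrough, whereas the paper's displayed computation of $\|\ynat_t\|_2$ drops that term and unrolls against $\w$ alone. Your version is the literally correct recursion when the stabilizing policy is $\uv_t = K\y_t$ with $\y_t = C\x_t + \e_t$; keeping the $BK\e_t$ term only changes $R_{\mathrm{nat}}$ by an absorbed constant factor, so the conclusion is unaffected, but you should be aware the final constant in $R_{\mathrm{nat}}$ will differ slightly from the paper's.
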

\begin{proof}
By Assumption~\ref{assumption:bounded-dynamics-and stability}, we know that
\begin{align*}
\sum_{i=0}^{\infty}\|G^{[i]}\|_{\mathrm{op}}&\le 1+\sum_{i=1}^{\infty}\sqrt{\|C(A+BKC)^{i-1}B\|_{\mathrm{op}}^2+\|KC(A+BKC)^{i-1}B\|_{\mathrm{op}}^2}\\
&\le 1+c\sqrt{1+\kappa^2}\kappa_C\kappa_B\sum_{i=1}^{\infty}\|HL^{i-1}H^{-1}\|_{2}\\
&\le 1+c\kappa\sqrt{1+\kappa^2}\kappa_C\kappa_B \sum_{i=0}^{\infty} (1-\gamma)^i\\
&=1+\frac{c\kappa\sqrt{1+\kappa^2}\kappa_C\kappa_B}{\gamma}\\
&=R_G,
\end{align*}
where $c$ is some constant possibly depending on the dimension of $H,L$. By similar argument and the choice of $m$, we can establish the other inequalities. Moreover,
\begin{align*}
\max_{1\le i\le m-1}\|(G^{[i]})^{\top}G^{[i]}\|_2\le (1+\kappa^2)\kappa_C^2\kappa_B^2\kappa^2(1-\gamma)^{2(i-i)}\le(1+\kappa^2)\kappa_C^2\kappa_B^2\kappa^2=\tilde{R}_G.  
\end{align*}
Therefore, denote $\xnat_t$ as the would-be state if the linear policy $K$ was played from the beginning of the time, we have
\begin{align*}
\max_{t\in[T]}\|\ynat_t\|_2=\max_{t\in[T]}\left\|C\xnat_t+\e_t\right\|=\max_{t\in[T]}\left\|C\sum_{i=0}^{t-1}(A+BKC)^{i}\w_{t-i}+\e_t\right\|_2\le (R_{G,K}+1)R_{\w,\e}, 
\end{align*}
and
\begin{align*}
\max_{t\in[T]}\left\|\sum_{j=0}^{m-1}M_t^{[j]}\ynat_{t-j}\right\|_2\le \max_{t\in[T]} \|M_t\|_{\ell_1,\mathrm{op}}\max_{t\in[T]}\|\ynat_t\|_2\le R_{\M}R_{\mathrm{nat}},
\end{align*}
which implies that $\max_{t\in[T]}\left\|\y_t^{M_{1:t-1}}\right\|\le R_{\mathrm{nat}}(1+R_{G_{\y}}R_{\M})$. Thus, $c_t$'s are bounded by
\begin{align*}
\max_{t\in[T]} c_t(\y_t^{M_{1:t-1}},\uv_t^{M_{1:t}})&\le \beta\left(\|\y_t^{M_{1:t-1}}\|_2^2+\left\|K\y_t^{M_{1:t-1}}+\sum_{j=0}^{m-1}M_t^{[j]}\ynat_{t-j}\right\|_2^2\right)\\
&\le \beta R_{\mathrm{nat}}^2((1+R_{G_{\y}}R_{\M})^2(1+2\kappa^2)+2R_{\M}^2)\\
&=B. 
\end{align*}
The diameter of $\M(m,R_{\M})$ is given by
\begin{align*}
\max_{M_1,M_2\in\M(m,R_{\M})}\|\embed(M_1)-\embed(M_2)\|_2&\le \sqrt{m} \max_{M_1,M_2\in\M(m,R_{\M})}\max_{0\le j\le m-1}\|M_1^{[j]}-M_2^{[j]}\|_F\\
&\le \sqrt{m\max\{d_{\uv},d_{\y}\}}\max_{M_1,M_2\in\M(m,R_{\M})}\max_{0\le j\le m-1}\|M_1^{[j]}-M_2^{[j]}\|_{\mathrm{op}}\\
&\le \sqrt{m\max\{d_{\uv},d_{\y}\}} R_{\M},
\end{align*}
where the second inequality follows from that $\|A\|_F\le \sqrt{\mathrm{rank}(A)}\sigma_{\max}(A)\le \sqrt{\mathrm{rank}(A)}\|A\|_{\mathrm{op}}$. 

To bound the gradient of $f_t$, denote $\nabla_i f_t$ as the gradient of $f_t$ w.r.t. $\embed(M_{t-m+i})$, and then we have $\forall t\in[T], i\in[m]$,
\begin{align*}
\left\|\nabla_i f_t(\embed(M_{t-m+1}),\dots,\embed(M_t))\right\|_2&\le 2\beta\left\|Y_{t-m+i}^{\top} (G^{[m-i]})^{\top}\left(B_t+\sum_{i=0}^{m-1}G^{[i]}Y_{t-i}\embed(M_{t-i})\right)\right\|_2 \\
&\le 2\beta \max_{t\in[T]}\|Y_t\|_{\mathrm{op}} R_G\left(\max_{t\in[T]}\|B_t\|_2+R_G\max_{t\in[T]}\|Y_t\|_{\mathrm{op}}D\right)
\end{align*}
We can bound $B_t$ and $Y_t$ as follows:
\begin{align*}
&\max_{t\in[T]}\|B_t\|_2=\max_{t\in[T]}\left\|\ynat_t+\sum_{i=m}^t G^{[i]}\left(\sum_{j=0}^{m-1}M_{t-i}^{[j]}\ynat_{t-j}\right)\right\|_2\le R_{\mathrm{nat}}\left(1+\frac{R_GR_{\M}}{T}\right),\\
&\max_{t\in[T]}\|Y_t\|_2\le \sqrt{md_{\y}d_{\uv}^2}\max_{t\in[T]}\|\y_t^{K}\|_2\le \sqrt{md_{\y}d_{\uv}^2}R_{\mathrm{nat}}=:R_Y,
\end{align*}
where the second inequality is given by the definition of $\embed_{\y}$. Thus, assuming $\frac{R_GR_{\M}}{T}\le 1$,
\begin{align*}
\max_{t\in[T]}\|\nabla f_t(\embed(M_{t-m+1}),\dots,\embed(M_t))\|_2&\le \sqrt{m} \max_{t\in[T], 1\le i\le m}\|\nabla_i f_t(\embed(M_{t-m+1}),\dots,\embed(M_t))\|_2\\
&\le 2\beta \sqrt{m} R_YR_G (R_YR_GD+2R_{\mathrm{nat}})\\
&=:L.
\end{align*}
The last inequality on $\kappa(G)\ge \frac{1}{4}\min\{1,\kappa^{-2}\}$ is given by Lemma 3.1 in \cite{simchowitz2020making}. 
\end{proof}

\subsection{\texttt{NBPC}: Newton Bandit Perturbation Controller Algorithm}
\label{sec:nbpc-algorithm}
\begin{algorithm}[H]
\caption{Newton Bandit Perturbation Controller (\texttt{NBPC})}
\label{alg:bandit-control-known}
\begin{flushleft}
{\bf Input:} DRC policy class $\M(m,R_{\M})$, step size $\eta>0$, time horizon $T$, system Markov operator $G$ from system parameters $(A,B,C)$, $(\kappa,\gamma)$-strongly stable linear policy $K$, strong convexity parameter $\alpha>0$.
\end{flushleft}
\begin{algorithmic}[1]
\STATE Initialize: $M_{1}^{[j]}=\dots=M_{m}^{[j]}=\mathbf{0}_{d_{\uv}\times d_{\y}}$, $\forall j\in[m]$, $\tilde{g}_{0:m-1} = \mathbf{0}_{md_{\uv}d_{\y}}$, $ \hat{A}_{0:m-1} = mI_{md_{\uv}d_{\y}\times md_{\uv}d_{\y}}$.
\STATE Sample $\eps_t\sim \sphere^{md_{\uv}d_{\y}-1}$ i.i.d. uniformly at random for $t=1,\dots,m$. 
\STATE Set $\widetilde{M}_t = \embed^{-1}(\embed(M_t) + \hat{A}_{t-1}^{-\frac{1}{2}}\eps_t)$, $t=1,\dots,m$. 
\STATE Play control $\uv_t=K\y_t$, incur cost $c_t(\y_t,\uv_t)$ for $t=1,\dots, m$. 
\FOR{$t = m, \ldots, T$}
\STATE Play control $\uv_t=\uv_t^{\widetilde{M}_t}=K\y_t+\sum_{j=0}^{m-1}\widetilde{M}_t^{[j]}\y_{t-j}^K$, incur cost $c_t(\y_t,\uv_t)=f_t(\embed(\widetilde{M}_{t-m+1:t}))$. 
\STATE System involves as $\x_{t+1}=A\x_t+B\uv_t+\w_t$ and $\y_{t+1}=C\x_{t+1}+\e_{t+1}$. Receive new observation $\y_{t+1}$, compute signal $$\ynat_{t+1}=\y_{t+1}-\sum_{i=1}^{t+1} G^{[i]}\left(\sum_{j=0}^{m-1}\widetilde{M}_{t+1-i}^{[j]}\ynat_{t+1-i-j}\right),$$
where $\forall t\le 0$, $\ynat_t\defeq 0$ and $\widetilde{M}_t\defeq 0$.
\STATE Compute Hessian information matrix $H_t\in\mathbb{R}^{md_{\y}d_{\uv}\times md_{\y}d_{\uv}}$:
\begin{align*}
H_t=G_t^{\top}G_t, \ \ \ G_t=\sum_{i=0}^{m-1}G^{[i]}Y_{t-i}, \ \ \ Y_t=\embed_{\y}(\ynat_{t-m+1},\dots,\ynat_t).
\end{align*}
\STATE Update $\hat{A}_t=\hat{A}_{t-1}+\frac{\eta\alpha}{2}H_t$. 
\STATE Create gradient estimate: $\tilde{g}_{t}=md_{\uv}d_{\y}c_t(\y_t,\uv_t) \sum_{j=0}^{m-1}\hat{A}_{t-1-j}^{\frac{1}{2}}\eps_{t-j}$.
\STATE Update
$ M_{t+1} = \embed^{-1}\left(\prod_{\embed(\M(m,R_{\M})}^{\hat{A}_{t-m+1}} \left[ \embed(M_t) - \eta \hat{A}_{t-m+1}^{-1}  \tilde{g}_{t-m+1}  \right]\right)$.
\STATE Sample $\eps_{t+1}\sim \sphere^{md_{\uv}d_{\y}-1}$ uniformly at random, independent of previous steps.  
\STATE Set $\widetilde{M}_{t+1}=\embed^{-1}(\embed(M_{t+1})+\hat{A}_{t-m+1}^{-\frac{1}{2}}\eps_{t+1})$. 
\ENDFOR
\end{algorithmic}
\end{algorithm}

\subsection{Proof of Theorem~\ref{thm:nbpc-regret}}
Let $R_G, R_Y, B, D, L$ be consistent with the notations in Lemma~\ref{lem:regularity-bounds}, and 
$$M^*=\argmin_{M\in\M(m,R_{\M})}\sum_{t=1}^T c_t(\y_t^M,\uv_t^M).$$  
Note that treating the first $2m-2$ steps as burn-in loss, we have
\begin{align*}
\regret_T(\texttt{NBPC})&\le 2mB+\left(\sum_{t=2m-1}^Tf_t(\embed(\widetilde{M}_{t-m+1:t}))-\bar{f}_t(\embed(M^*))\right)\\
& \ \ \ \ \ +\left(\sum_{t=2m-1}^T\bar{f}_t(\embed(M^*))-c_t(\y_t^{M^*},\uv_t^{M^*})\right).
\end{align*}
By Theorem~\ref{thm:bqo-m-regret} and Lemma~\ref{lem:regularity-bounds}, by taking $\eta=\tilde{\Theta}\left(\frac{1}{\alpha\sqrt{T}}\right)$, the second term is bounded by $\tilde{O}\left(\frac{\beta^2}{\alpha}\sqrt{T}\right)$, where $\tilde{O}$ hides logarithmic terms in $T$ and natural parameters as those derived in Lemma~\ref{lem:regularity-bounds}. The third term can be bounded as the following:
\begin{align}
\label{eq:comparator-loss-known}
\sum_{t=2m-1}^T\bar{f}_t(\embed(M^*))-c_t(\y_t^{M^*},\uv_t^{M^*})
\le L\sum_{t=2m-1}^T \left\|\sum_{i=m}^t G^{[i]}Y_{t-i}(\embed(M^*)-\embed(M_{t-i}))\right\|_2\le L R_GR_YD. 
\end{align}

\section{Extension to control of unknown systems (Section~\ref{subsec:reduction-control})}
\label{appendix:unknown}

Consider the setting the dynamics of the system are unknown to the learner. In this case, the learner needs to first estimate the system dynamics and obtain an estimated Markov operator $\hat{G}$, and then use $\hat{G}$ in place of $G$ in the control algorithm.

This section will be organized as the following: Section~\ref{subsubsec:known-stabilizing-controller} will analyze the regret guarantee if instead of the true Markov operator $G$, the learner receives an estimate $\hat{G}$ satisfying $\|\hat{G}-G\|_{\ell_1,\mathrm{op}}\le \eps_G$. 

\subsection{Known Stabilizing Controller}
\label{subsubsec:known-stabilizing-controller}
First, we consider the case where a $(\kappa, \gamma)$-stabilizing linear policy $K$ is known to the learner, although the learner has no information to the system's dynamics $(A,B,C)$ and therefore cannot compute $G$. \cite{simchowitz2020making} has studied this problem in the full information setting, where they proved a quadratic sensitivity of the control regret to the estimation error $\eps_G$ of $\hat{G}$. The main difference between our setting and and that of \cite{simchowitz2020making} comes from the bandit gradient estimation step and the delayed updates in Algorithm~\ref{alg:bqo-memory} to establish conditional independence. 

We start with the assumption that there exists an estimation algorithm that returns a $\hat{G}$ sufficiently close to the true Markov operator $G$:
\begin{assumption} [Accurate system estimator]
\label{assumption:estimation-markov}
Assume that the learner has access to an estimate $\hat{G}$ of the Markov operator $G$ satisfying
\begin{align*}
\|G-\hat{G}\|_{\ell_1,\mathrm{op}}\le \eps_G, \ \ \ \hat{G}^{[i]}=\mathbf{0}_{(d_{\y}+d_{\uv})\times d_{\uv}}, \forall i\ge m.
\end{align*}
\end{assumption}

\begin{remark}
The system estimation via least-square algorithm used in \citep{simchowitz2020improper, simchowitz2020making, sun2023optimal} satisfies the conditions in Assumption~\ref{assumption:estimation-markov} after $N=O(1/\eps_G^2)$ iterations with high probability (see, for instance, Theorem 6b in \citep{simchowitz2020improper}). Since we pay at most $BN$-regret during the estimation step, $\tilde{O}(\sqrt{T})$ is preserved if $\eps_G$ propagates quadratically in the regret bound (i.e. the learner's regret suffers an additional $\tilde{O}(T\eps_G^2)$ due to estimation error). 
\end{remark}

\paragraph{Preliminaries.} During the course of the algorithm, the learner uses the Markov operator to compute $\hat{\y}_t^K$. Therefore, with an estimated Markov operator, the $\y_t^K$'s computed by the learner are no longer accurate. When reducing to BQO-AM, the loss functions that the learner sees become
\begin{align*}
\hat{F}_t(\embed(M_{t-m+1:t}))=\left(\hat{B}_t+\sum_{i=0}^{m-1}\hat{G}^{[i]}\hat{Y}_{t-i}\embed(M_{t-i})\right)^{\top}\begin{bmatrix}
Q_t & \mathbf{0}_{d_{\y}\times d_{\uv}} \\
\mathbf{0}_{d_{\uv}\times d_{\y}} & R_t
\end{bmatrix}
\left(\hat{B}_t+\sum_{i=0}^{m-1}\hat{G}^{[i]}\hat{Y}_{t-i}\embed(M_{t-i})\right),
\end{align*}
where $\hat{B}_t=(\hat{\y}_t^K, K\hat{\y}_t^K), \hat{Y}_t$ are the counterparts of $B_t, Y_t$ defined in Section~\ref{subsec:reduction-control} using estimates $\hat{G}$ and $\hat{\y}_t^K$. Note that we do not have the additional terms in $\hat{B}_t$ that depends on the learner's past controls since $\hat{G}^{[i]}=\mathbf{0}_{(d_{\y}+d_{\uv})\times d_{\uv}}$ for $i\ge m$. First, we consider a pseudo-loss function:
\begin{align*}
\tilde F_t(\embed(M_{t-m+1:t}))=\left(\tilde{B}_t+\sum_{i=0}^{m-1}G^{[i]}\hat{Y}_{t-i}\embed(M_{t-i})\right)^{\top}\begin{bmatrix}
Q_t & \mathbf{0}_{d_{\y}\times d_{\uv}} \\
\mathbf{0}_{d_{\uv}\times d_{\y}} & R_t
\end{bmatrix}
\left(\tilde{B}_t+\sum_{i=0}^{m-1}G^{[i]}\hat{Y}_{t-i}\embed(M_{t-i})\right),
\end{align*}
where $\tilde{B}_t=(\y_t^K, K\y_t^K)$.
Let $\hat{f}_t,\tilde{f}_t$ denote their induced unary form, respectively. Although the learner has no access to $\tilde{F}_t$ (which requires the knowledge of $G$), Proposition D.8 in \cite{simchowitz2020making} suggests that the control regret for unknown system is implied by the bound for some $\nu>0$ on the following quantity:
\begin{align}
\label{eqn:reduced-control-regret}
\E\left[\sum_{t=m}^T\tilde{F}_t(\embed(\widetilde{M}_{t-m+1:t}))-\tilde{f}_t(\embed(M))+\nu\sum_{t=m}^T\|Y_t(\embed(M_{t-m+1})-\embed(M))\|_2^2\right],
\end{align}
where $\widetilde{M}_t$ is the DRC policy played by the learner at time $t$, and $M_t$ is the projected DRC policy by Algorithm~\ref{alg:bandit-control-known} at time $t$.
Therefore, in the rest of this section, we focus on bounding the quantity in \cref{eqn:reduced-control-regret}.

\subsection{High-level proof}

Throughout this section, we assume that Assumption~\ref{assumption:estimation-markov} holds, i.e. we have access to a sufficiently accurate estimator $\hat{G}$ of the Makrov operator $G$. For a comparator $x\in\embed(\M(m,R_{\M}))$, we denote
\begin{align*}
\widetilde{\regret}_T(x)=\sum_{t=m}^T \tilde{F}_t(y_{t-m+1:t})-\tilde{f}_t(x), \ \ \ \widehat{\regret}_T(x)=\sum_{t=m}^T \hat{F}_t(y_{t-m+1:t})-\hat{f}_t(x),
\end{align*}
where $y_t=\embed(\widetilde{M}_t)$. Similarly, we denote $x_t=\embed(M_t)$. Suppose Algorithm~\ref{alg:bqo-memory} is run with $\hat{F}_t$ with a slightly modified cumulative Hessian $\hat{A}_t=mI+\frac{\eta\alpha}{6}\sum_{s=m}^t\hat{H}_s$, where $\hat{H}_t=\hat{G}_t^{\top}\hat{G}_t$, $\hat{G}_t=\sum_{i=0}^{m-1}\hat{G}^{[i]}\hat{Y}_{t-i}$. We start by noting that $\widehat{\regret}_T(x)$ can be bounded via the regret bound for known systems by treating $\hat{G}$ as the true Markov operator. 

\begin{lemma}
\label{lem:bco-regret-unknown}
$\forall x\in\embed(\M(m,R_{\M}))$, $\E\left[\widehat{\regret}_T(x)\right]\le \tilde{O}(\sqrt{T})$. 
\end{lemma}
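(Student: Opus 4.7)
The plan is to interpret the sequence $\{\hat{F}_t\}_{t=m}^T$ as itself a BQO-AM instance with $(\hat{G}, \hat{Y}_t, \hat{B}_t)$ in the role of $(G, Y_t, B_t)$ and then invoke \cref{thm:bqo-m-regret} directly. By construction $\hat{F}_t$ has exactly the affine-memory form of \cref{assumption:affine-memory}: the inner quadratic $q_t$ inherits the curvature bounds $\alpha I \preceq Q_t \preceq \beta I$ from \cref{assumption:control-loss-curvature}, and the Hessian-information matrix supplied to the algorithm, $\hat{H}_t = \hat{G}_t^\top \hat{G}_t$, is precisely what the preconditioner update in \cref{alg:bqo-memory} expects. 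The factor $\tfrac{1}{6}$ in place of $\tfrac{1}{2}$ in $\hat{A}_t$ only rescales constants in the proof of \cref{thm:bqo-m-regret} and preserves its $\tilde{O}(\sqrt{T})$ guarantee. Adversary adaptivity (\cref{assumption:adversary-adaptivity}) is preserved as well: $\hat{G}$ is frozen before the online game starts, and $\hat{Y}_t$ depends only on the algorithm-independent noise history, so $\F_{t-m}$-measurability of the affine-memory parameters is maintained by the delayed projection in \cref{alg:bqo-memory}.

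The one substantive check is that the regularity conditions of \cref{assumption:convolution-invertibility-modulus} survive the substitution $G \mapsto \hat{G}$. The tail bound is trivial since $\hat{G}^{[i]} = 0$ for $i \geq m$ by \cref{assumption:estimation-markov}. The $\ell_1$-operator and per-coordinate norm bounds follow from the triangle inequality: $\|\hat{G}\|_{\ell_1,\mathrm{op}} \leq R_G + \eps_G$ and $\max_i \|(\hat{G}^{[i]})^\top \hat{G}^{[i]}\|_2 \leq (R_G + \eps_G)^2$. For the convolution invertibility-modulus, writing $\hat{G} = G + \Delta G$ with $\|\Delta G\|_{\ell_1,\mathrm{op}} \leq \eps_G$ and using $\|a+b\|_2^2 \geq \tfrac{1}{2}\|a\|_2^2 - \|b\|_2^2$ together with Young's convolution inequality yields $\kappa(\hat{G}) \geq \tfrac{1}{2}\kappa(G) - O(\eps_G^2) = \Omega(1)$ for $\eps_G$ sufficiently small. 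A parallel argument, using the exponential stability of $A + BKC$ from \cref{assumption:bounded-dynamics-and stability}, gives $\|\hat{Y}_t\|_{\mathrm{op}} = O(1)$, since $\hat{\y}_t^K$ differs from $\y_t^K$ only through an $O(\eps_G)$-sized correction.

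Given these checks, the diameter and gradient bounds of \cref{assumption:diameter-grad-bound} follow from the computations in \cref{lem:regularity-bounds} with constants possibly inflated by $O(\eps_G)$, and applying \cref{thm:bqo-m-regret} with $\eta = \tilde{\Theta}(1/(\alpha\sqrt{T}))$ yields $\E[\widehat{\regret}_T(x)] = \tilde{O}((\beta/\alpha) B^* \sqrt{T}) = \tilde{O}(\sqrt{T})$ for every $x \in \embed(\M(m,R_{\M}))$. The main technical obstacle I anticipate is propagating the estimation error through $\hat{H}_t = \hat{G}_t^\top \hat{G}_t$ in a way compatible with the lower-bound inequality $\nabla^2 \bar{\hat{f}}_t \succeq \alpha \hat{H}_t$ used throughout \cref{sec:ons-regret-known}; this is precisely why the slightly smaller constant $\tfrac{1}{6}$ appears in the definition of $\hat{A}_t$, providing enough slack to absorb the $O(\eps_G)$ perturbation of the Hessian without changing the asymptotic regret.
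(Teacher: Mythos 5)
Your proof takes essentially the same route as the paper's: verify that the regularity constants of \cref{lem:regularity-bounds} carry over to the hat quantities with at most $O(\eps_G)$ inflation, then invoke \cref{thm:bqo-m-regret}. The paper's own proof does exactly this, establishing $R_{\hat G} \le R_G + \eps_G$, $\kappa(\hat G) \ge \tfrac14\min\{1,\kappa^{-2}\}$ (cited from Lemma~3.1 of Simchowitz et al.\ rather than re-derived), and via a self-referential bootstrap (using $\eps_G \le \tfrac{1}{2R_\M}$) the bound $\max_t\|\hat\y_t^K\|_2 \le 2R_\mathrm{nat}$, from which $\hat B \le 4B$, $R_{\hat Y} \le 2R_Y$, $\hat L \le 8L$. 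Your derivation of the modulus bound via $\|a+b\|_2^2 \ge \tfrac12\|a\|_2^2 - \|b\|_2^2$ and Young's convolution inequality is a valid, more elementary alternative, and your observation that the $\tfrac16$ constant in $\hat A_t$ only rescales constants is correct.

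One imprecision worth flagging: you assert that ``$\hat Y_t$ depends only on the algorithm-independent noise history'' to conclude $\F_{t-m}$-measurability of the affine-memory parameters. This is not literally true in the unknown-system setting: $\hat\y_t^K$ is computed recursively from the \emph{observed} $\y_t$ using $\hat G \ne G$, so the residual $\y_t^K - \hat\y_t^K$ carries an $O(\eps_G)$-sized dependence on the learner's past policy, making $\hat Y_t$ (and $\hat B_t$) a priori $\F_{t-1}$-measurable rather than $\F_{t-m}$-measurable. The paper's brief proof of this lemma likewise does not explicitly address this measurability subtlety, so both arguments gloss over the same point; but since you explicitly invoke adaptivity as justification, it is worth noting that the claim as stated is slightly too strong, and a fully rigorous argument would need to show the policy-dependent component of $\hat Y_t, \hat B_t$ is $O(\eps_G)$ and can be absorbed into the error terms rather than relying on exact $\F_{t-m}$-measurability.

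Finally, a minor point: your bound $\max_i \|(\hat G^{[i]})^\top\hat G^{[i]}\|_2 \le (R_G + \eps_G)^2$ is coarser than needed but harmless, and is in any case cleaner than the (apparently typo-ridden) expression $\tilde R_{\hat G} \le \tilde R_G\eps_G(R_{\hat G}+R_G)$ printed in the paper.
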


Lemma~\ref{lem:control-regret-unknown} and Lemma~\ref{lem:regret-negative-movement} relates the regret of controlling an unknown system to $\widehat{\regret}_T(x)$. 
\begin{lemma}
\label{lem:control-regret-unknown}
$\forall \nu>0$, $\exists x^*\in\embed(\M(\frac{m}{3}, \frac{R_{\M}}{2}))$ such that
\begin{align*}
\E\left[\mathrm{ControlRegret}_T\right]\le \E[\widetilde{\regret}_T(x^*)]+\tilde{O}(1)\cdot T\eps_G^2\left(1+\frac{1}{\nu}\right)+\nu\sum_{t=m}^T \E[\|\hat{Y}_t(x_t-x^*)\|_2^2]+\tilde{O}(1),
\end{align*}
where $\mathrm{ControlRegret}_T:=\sum_{t=1}^T c_t(\y_t,\uv_t)-\min_{M\in\M(\frac{m}{3},\frac{R_{\M}}{2})}\sum_{t=1}^T c_t(\y_t^M, \uv_t^M)$.
\end{lemma}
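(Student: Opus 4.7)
The plan is to build a bridge between the true instantaneous cost $c_t(\y_t,\uv_t)$, which the learner incurs, and the pseudo-loss $\tilde F_t$, which is built from the true Markov operator $G$ but the estimated natural signals $\hat{\y}_t^K$, and then to pay for the estimation gap via an $O(\eps_G^2)$ quadratic term. First, by the reduction of \cref{sec:reduction-control} I would write $c_t(\y_t,\uv_t)=F_t(\embed(\widetilde M_{t-m+1:t}))$, where $F_t$ is the ``ideal'' with-memory loss built from the true $G$ and the true signals $\y_t^K$, up to an $\tilde O(1)$-over-horizon truncation error that comes from $\sum_{i\geq m}\|G^{[i]}\|_\mathrm{op}\leq R_G/T$ in \cref{assumption:convolution-invertibility-modulus}; similarly the comparator cost equals $\bar f_t(\embed(M))$ up to the analogous error bounded in \cref{eq:comparator-loss-known}.

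Next, I would bound the pointwise gap $F_t - \tilde F_t$. The two losses share the Markov operator $G$ and the curvature matrix $Q_t$, and differ only through the swap of the true natural signals $\y_t^K$ (equivalently $Y_t$ and the baseline vector in $F_t$) for their estimates $\hat{\y}_t^K$ (equivalently $\hat Y_t$ and $\tilde B_t$). Propagating $\hat G - G$ through the closed-loop recursion for $\hat{\y}_t^K$, and using the geometric decay guaranteed by strong stability in \cref{assumption:bounded-dynamics-and stability}, gives the uniform bound $\max_t\|Y_t-\hat Y_t\|_\mathrm{op}=O(\eps_G)$ and $\max_t\|B_t-\tilde B_t\|_2=O(\eps_G)$. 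Since $F_t$ is a quadratic with $\beta$-bounded Hessian and both arguments live in a bounded set, expanding the difference gives a sum of a linear-in-$\eps_G$ cross term and a quadratic-in-$\eps_G$ term. I would absorb the cross term by Young's inequality, $2ab\leq \nu a^2+b^2/\nu$, writing
\[
F_t(y_{t-m+1:t})-\tilde F_t(y_{t-m+1:t})\;\leq\; \nu\|\hat Y_t(x_t-x^*)\|_2^2 + \tilde O(1)\cdot\eps_G^2\bigl(1+1/\nu\bigr),
\]
and a symmetric bound for the comparator side.

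The role of the shrunken class $\embed(\M(m/3,R_\M/2))$ is twofold. First, any $M^*\in\M(m/3,R_\M/2)$ whose embedding $x^*$ is the minimizer on the RHS of the control-regret definition is strictly inside $\embed(\M(m,R_\M))$ by a constant margin, so $x^*$ is a legitimate comparator for the projected iterates $x_t$ produced by \texttt{NBPC}. Second, the reduced memory length $m/3$ and norm $R_\M/2$ leave enough slack to absorb the exploration perturbation $\hat A_{t-1}^{-1/2}\eps_t$ added to $x_t$ when forming $\widetilde M_t$, so that $\widetilde M_t\in\M(m,R_\M)$ a.s.\ and every quantity above is well-defined. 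Combining the two pointwise bounds with the identity $\sum_t F_t(\cdot)-\bar f_t(x^*)=\widetilde{\regret}_T(x^*)+\sum_t[F_t-\tilde F_t]+\sum_t[\tilde f_t(x^*)-\bar f_t(x^*)]$, together with the $2m$-step burn-in cost absorbed into the $\tilde O(1)$ term, yields the claimed inequality.

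The main obstacle is the careful bookkeeping between the three closely related signal streams $\y_t^K$, $\hat{\y}_t^K$ and $\ynat_t$, and between the matrices $Y_t$, $\hat Y_t$, $G_t$, $\hat G_t$: the learner plays $\widetilde M_t$ using $\hat{\y}_t^K$ while the true dynamics evolve under $G$, so the $\eps_G$ perturbation enters both the predicted state trajectory and the comparator evaluation. The non-trivial point is to show that this perturbation does not compound along the trajectory but stays uniformly $O(\eps_G)$, which is precisely where the $(\kappa,\gamma)$-strong-stability of $A+BKC$ is essential. A secondary technical point is that the extracted negative movement term must be expressed in $\hat Y_t$ (which the learner has) rather than $Y_t$; since $\hat Y_t=Y_t+O(\eps_G)$, the replacement costs only an extra $O(T\eps_G^2)$ that is already included in the $\eps_G^2/\nu$ term.
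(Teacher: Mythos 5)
The key misstep is your very first bridging claim: you write $c_t(\y_t,\uv_t)=F_t(\embed(\widetilde M_{t-m+1:t}))$ up to truncation, where $F_t$ is built from the true natural signals $\y_t^K$ and the matrix $Y_t$. When the system is unknown, this is not the cost the learner actually pays. The controller chooses $\uv_t = K\y_t + \sum_j \widetilde M_t^{[j]}\hat{\y}_{t-j}^K$, so the offset $\uv_t - K\y_t$ equals $\hat Y_t\embed(\widetilde M_t)$ built from the \emph{estimated} signals, and these offsets are then pushed through the \emph{true} dynamics. This yields exactly
\[
c_t(\y_t,\uv_t)=\Big\|\tilde B_t+\sum_{i=0}^{t} G^{[i]}\hat Y_{t-i}\embed(\widetilde M_{t-i})\Big\|_{P_t}^2,
\]
which is, up to the $\sum_{i\ge m}$ tail, precisely $\tilde F_t$ and \emph{not} $F_t$. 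Consequently the loss approximation error $c_t - \tilde F_t$ is a pure truncation term of order $\tilde O(1)$ over the horizon and carries no $\eps_G$ dependence at all. Introducing the auxiliary $F_t$ and charging $c_t - F_t$ by $O(\eps_G)$ per step, as your sketch implicitly requires, would accumulate a $T\eps_G$ term that is linear, not quadratic, in $\eps_G$, and there is no companion squared quantity available on that leg to Young it down. The $\eps_G$ price enters only through the comparator approximation error $\sum_t\big[\tilde f_t(x^*)-\bar f_t(x^*)\big]$, and there is where the $\nu\,\|\hat Y_t(x_t-x^*)\|^2$ slack naturally arises; the paper handles this with Lemma~D.10 of \cite{simchowitz2020making}.

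A second, related gap: your Young inequality display
\[
F_t(y_{t-m+1:t})-\tilde F_t(y_{t-m+1:t})\le \nu\,\|\hat Y_t(x_t-x^*)\|_2^2 + \tilde O(1)\,\eps_G^2\big(1+1/\nu\big)
\]
cannot hold as stated: the left-hand side is evaluated on the learner's trajectory $y_{t-m+1:t}$ and does not depend on the comparator $x^*$ at all, so the term $\|\hat Y_t(x_t-x^*)\|_2^2$ cannot emerge from it. The movement-cost slack must be extracted from the comparator side of the decomposition, where the quantity $x_t - x^*$ genuinely appears (and is then cancelled against the negative curvature term supplied by the regret analysis, cf.\ Lemma~\ref{lem:HtYt-movement-inequality}). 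Your identification of $\tilde B_t$ as estimated is also off: in the paper $\tilde B_t=(\y_t^K, K\y_t^K)$ uses the \emph{true} natural signals, which is what makes $c_t\approx\tilde F_t$ exact up to truncation in the first place.
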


Lemma~\ref{lem:regret-negative-movement} is an analogous result to Lemma 5.1-5.4 in \cite{simchowitz2020making}, providing a relationship between the two quantities defined above. 

\begin{lemma}
\label{lem:regret-negative-movement}
$\forall \nu>0$, the regret for any $x\in\embed(\M(m,R_{\M}))$ for $\tilde{F}_t$ is bounded by
\begin{align*}
\E[\widetilde{\regret}_T(x)]&\le 2B(\E[\widehat{\regret}_T(x)])+\frac{\nu}{m}\sum_{t=m}^T\sum_{i=0}^{m-1}\E\left[ \|\hat{Y}_{t-i}(x_{t}-x)\|_2^2\right]-\frac{\alpha}{12}\sum_{t=m}^T \E\left[\|x_t-x\|_{\tilde{H}_t}^2\right]\\
& \ \ \ \ \ + \left(\beta R_{\hat{Y}}^2+\frac{\alpha R_{\hat{Y}}^2D^2}{3}+\frac{12}{\alpha}\beta^2D^2+\frac{(\beta DR_G)^2m}{\nu}\right)\eps_G^2T,
\end{align*}
where $B(\cdot)$ denotes any upper bound on a given quantity, and $\tilde{H}_t$ is defined analogously to $\hat{H}_t$ for $\tilde{F}_t$'s.
\end{lemma}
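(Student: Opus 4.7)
The plan is to relate $\widetilde{\regret}_T(x)$ to $\widehat{\regret}_T(x)$ by a careful per-round decomposition, using that $\tilde{F}_t$ and $\hat{F}_t$ are quadratic forms that differ only through the substitution $G \leftrightarrow \hat{G}$ (and correspondingly $\tilde{B}_t \leftrightarrow \hat{B}_t$). I would start from the pointwise identity
\begin{align*}
\tilde{F}_t(y_{t-m+1:t}) - \tilde{f}_t(x) &= \bigl[\hat{F}_t(y_{t-m+1:t}) - \hat{f}_t(x)\bigr] + \bigl[\tilde{F}_t(y_{t-m+1:t}) - \hat{F}_t(y_{t-m+1:t})\bigr] + \bigl[\hat{f}_t(x) - \tilde{f}_t(x)\bigr].
\end{align*}
Summing over $t$, the first bracket contributes exactly $\widehat{\regret}_T(x)$, and the task reduces to controlling the remaining two ``perturbation'' sums.

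To bound the perturbation terms, I would write $\tilde{F}_t(y) = \|P_t(\tilde{B}_t + \tilde{G}_t y)\|_2^2$ and $\hat{F}_t(y) = \|P_t(\hat{B}_t + \hat{G}_t y)\|_2^2$, where $P_t$ is the square root of $\mathrm{diag}(Q_t, R_t)$, $\tilde{G}_t = \sum_{i=0}^{m-1} G^{[i]} \hat{Y}_{t-i}$ (acting on the concatenated history), and analogously $\hat{G}_t$. Applying the identity $\|u\|^2 - \|v\|^2 = \langle u+v, u-v\rangle$, each perturbation term becomes a bilinear form in the ``sum'' part (which is bounded in size by the algorithm/comparator quantities and the $R_G, R_{\hat{Y}}, D$ constants) and the ``difference'' part (which is $O(\eps_G)$ in size via Assumption~\ref{assumption:estimation-markov} together with Lemma~\ref{lem:regularity-bounds}). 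At the comparator $x$, this directly produces a contribution bounded by $(\beta R_{\hat{Y}}^2 + \tfrac{\alpha R_{\hat{Y}}^2 D^2}{3}) \eps_G^2 T$ after applying Cauchy-Schwarz and absorbing half of the resulting quadratic slack. At the played iterates $y_{t-m+1:t}$ the sum part involves $(x_t - x)$ measured against $\hat{Y}_{t-i}$, which is precisely the quantity that must become the $\tfrac{\nu}{m}\sum\sum \|\hat{Y}_{t-i}(x_t - x)\|_2^2$ term.

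The crux of the argument is therefore the bilinear term $\sum_{t,i} \langle P_t(\text{sum}), P_t(G^{[i]} - \hat{G}^{[i]}) \hat{Y}_{t-i}(x_t - x)\rangle$. Here I would apply Young's inequality with the free parameter $\nu$:
\begin{align*}
2\langle a, b \rangle \leq \frac{\nu}{m}\|a\|_2^2 + \frac{m}{\nu}\|b\|_2^2,
\end{align*}
taking $a$ proportional to $\hat{Y}_{t-i}(x_t - x)$ and $b$ proportional to the $\eps_G$-scaled factor, which produces the $\tfrac{\nu}{m}$-weighted movement term and the $\tfrac{(\beta D R_G)^2 m}{\nu}\eps_G^2 T$ error term. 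Separately, cross terms that cannot be absorbed into the movement term are handled by a second Young's split with parameter $\alpha/6$: $|\langle c, d\rangle| \leq \tfrac{\alpha}{12}\|c\|_{\tilde{H}_t}^2 + \tfrac{3}{\alpha}\|d\|_{\tilde{H}_t^{-1}}^2$, which carves out the $-\tfrac{\alpha}{12}\|x_t - x\|_{\tilde{H}_t}^2$ reserve on the right-hand side and contributes the $\tfrac{12}{\alpha}\beta^2 D^2 \eps_G^2 T$ error. The remaining ``second-order'' part of the perturbation (quadratic in the $\eps_G$ perturbation but independent of $x_t-x$) is absorbed into the $\beta R_{\hat{Y}}^2 \eps_G^2 T$ contribution. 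The factor of two in $2B(\E[\widehat{\regret}_T(x)])$ arises from one further Young's step where the perturbation at $y_{t-m+1:t}$ introduces a quadratic-in-$\hat{F}_t$-slack term that is absorbed back via $(a+b)^2 \leq 2a^2 + 2b^2$.

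The main obstacle I anticipate is the bookkeeping of the multiple Young's-inequality splits, in particular ensuring that the four different constants in the final error term ($\beta R_{\hat{Y}}^2$, $\tfrac{\alpha R_{\hat{Y}}^2 D^2}{3}$, $\tfrac{12\beta^2 D^2}{\alpha}$, $\tfrac{(\beta D R_G)^2 m}{\nu}$) emerge consistently, and ensuring that the strong-convexity slack $-\tfrac{\alpha}{12}\|x_t - x\|_{\tilde{H}_t}^2$ remains after carving out enough slack to absorb all the cross terms — this is what forces the specific factor $1/12$ rather than something larger. A useful sanity check is the shift-of-reference identity $\nabla^2 \tilde{f}_t = \tilde{G}_t^\top Q_t \tilde{G}_t \succeq \alpha \tilde{H}_t$, which justifies the appearance of the $\tilde{H}_t$-norm on the negative term.
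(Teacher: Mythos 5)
Your starting decomposition
$\tilde{F}_t(y_{t-m+1:t}) - \tilde{f}_t(x) = [\hat{F}_t(y_{t-m+1:t}) - \hat{f}_t(x)] + [\tilde{F}_t - \hat{F}_t](y_{t-m+1:t}) + [\hat{f}_t - \tilde{f}_t](x)$
is algebraically valid, but it is a genuinely different route from the paper, and as sketched it has a gap that I do not see how to close. The paper instead splits $\widetilde{\regret}_T(x)$ into perturbation loss, movement cost, and ``underlying'' ONS regret, and bounds \emph{each component} of the $\tilde{F}$ quantities in terms of the corresponding $\hat{F}$ component plus $\eps_G$-error. The crucial source of the negative term $-\tfrac{\alpha}{12}\sum_t \E\|x_t - x\|^2_{\tilde H_t}$ is the strong-convexity inequality $\bar{f}_t(x_t) - \bar{f}_t(x) \le \nabla \tilde f_t(x_t)^\top(x_t-x) - \tfrac{\alpha}{2}\|x_t-x\|^2_{\tilde H_t}$ (from $\nabla^2 \tilde f_t \succeq \alpha \tilde H_t$), invoked inside the ONS-regret piece. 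This creates a budget of $-\tfrac{\alpha}{2}\|x_t-x\|^2_{\tilde H_t}$ that the paper then partitions: one third is re-expressed in $\hat H_t$ to reconstruct $B(\widehat{\mathrm{ONSRegret}})$, another portion absorbs the Young splits on the gradient-error term $(\nabla\tilde f_t - \nabla\hat f_t)^\top(x_t - x)$, and the residue of $-\tfrac{\alpha}{12}$ is what survives to the statement. Your proposal never invokes this strong-convexity step, so there is no source from which a negative coefficient on $\|x_t-x\|^2_{\tilde H_t}$ could emerge.

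Your account of where the $-\tfrac{\alpha}{12}$ term comes from — ``a second Young's split $|\langle c,d\rangle| \le \tfrac{\alpha}{12}\|c\|^2_{\tilde H_t} + \tfrac{3}{\alpha}\|d\|^2_{\tilde H_t^{-1}}$, which carves out the $-\tfrac{\alpha}{12}\|x_t - x\|^2_{\tilde H_t}$ reserve on the right-hand side'' — is the precise location of the gap. Young's inequality only ever contributes terms with \emph{positive} sign to the right-hand side of an upper bound; it cannot produce a negative coefficient. Re-phrased: Young's inequality is exactly how the paper \emph{spends} the slack, not how it \emph{acquires} it. Without first obtaining a $-\Theta(\alpha)\|x_t-x\|^2_{\tilde H_t}$ slack via the ONS strong-convexity step, the subtraction is unavailable, and without it the bound in the statement (and, more importantly, the subsequent cancellation in Lemma~\ref{lem:HtYt-movement-inequality}) does not hold.

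A secondary but real difficulty with the pointwise route: the perturbation brackets $[\tilde F_t - \hat F_t](y_{t-m+1:t})$ are evaluated at the \emph{played} points $y$, yet the two extra terms in the target bound — $\tfrac{\nu}{m}\sum_i\|\hat Y_{t-i}(x_t - x)\|^2$ and $-\tfrac{\alpha}{12}\|x_t - x\|^2_{\tilde H_t}$ — are expressed in the \emph{projected} iterates $x_t$. The paper's component-wise decomposition is what lets the $x_t$-dependent quantities appear naturally (they live in the ONS-regret piece; the $y$-dependence is confined to the perturbation-loss piece). Your plan bounds $\tilde F_t - \hat F_t$ at $y$ by $(a+b)^2 \le 2a^2 + 2b^2$-style arguments, but it is not clear how those $y$-dependent bounds would convert into $x_t$-dependent bounds without effectively re-deriving the movement/ONS split. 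I would recommend adopting the paper's three-way decomposition explicitly and then comparing the $\tilde F$ and $\hat F$ versions of each component; the strong-convexity step is unavoidable.
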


\begin{lemma}
\label{lem:HtYt-movement-inequality}
For $\nu=\frac{\alpha \kappa(G)}{192}$, we have
\begin{align*}
\frac{2\nu}{m}\sum_{t=m}^T\sum_{i=0}^{m-1}\E\left[ \|\hat{Y}_{t-i}(x_{t}-x)\|_2^2\right]-\frac{\alpha}{12}\sum_{t=m}^T \E\left[\|x_t-x\|_{\tilde{H}_t}^2\right]\le \tilde{O}(\sqrt{T}), 
\end{align*}
and therefore by Lemma~\ref{lem:regret-negative-movement},
\begin{align*}
\E[\widetilde{\regret}_T(x)]&\le 2B(\E[\widehat{\regret}_T(x)])-\frac{\alpha\kappa(G)}{192m}\sum_{t=m}^T\sum_{i=0}^{m-1}\E\left[ \|\hat{Y}_{t-i}(x_{t}-x)\|_2^2\right]+\tilde{O}(1)\cdot \eps_G^2T.
\end{align*}
\end{lemma}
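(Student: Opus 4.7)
The plan is to adapt the blocking technique from the proof of Proposition~\ref{prop:perturbation-loss} to this setting. Partition $\{m,\ldots,T\}$ into consecutive blocks of length $\tau = \lfloor \sqrt{T}\rfloor$ with endpoints $k_j = \tau(j-1)+m$, and approximate the moving iterate $x_t$ within each block by the block-start iterate $x_{k_j}$. On both sides of the target inequality, apply the triangle inequalities $(a+b)^2 \leq 2a^2+2b^2$ (to the LHS) and $(a+b)^2 \geq \tfrac{1}{2} a^2 - b^2$ (to the RHS) to separate ``main terms'' (quadratic forms in the fixed vector $x_{k_j}-x$, which can be pulled outside the inner sum over $t$) from ``movement-error'' terms involving $\|x_t-x_{k_j}\|$ in the Euclidean or $\tilde{H}_t$ norms.

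For the main terms, the key algebraic step is the index-counting identity
\[
 \sum_{t=k_j}^{k_{j+1}-1}\sum_{i=0}^{m-1} \hat{Y}_{t-i}^\top \hat{Y}_{t-i} \;\preceq\; m\sum_{s=k_j-m+1}^{k_{j+1}-1} \hat{Y}_s^\top \hat{Y}_s
\]
(each $s$ appears at most $m$ times in the double sum), followed by Lemma~\ref{lem:YtHt-inequality} applied to the shifted block indices, which upper-bounds the right-hand side by $\frac{2}{\kappa(G)}\sum_{t\in\text{block}}\tilde{H}_t + \frac{10 m R_G R_{\hat{Y}}}{\kappa(G)} I$. Evaluating this bound on $x_{k_j}-x$ and combining with the factor of $2$ from the LHS triangle inequality, the main-term contribution to the LHS equals $\tfrac{2\nu}{m}\cdot\tfrac{2m}{\kappa(G)}\cdot 2 = \tfrac{8\nu}{\kappa(G)}$ times $\|x_{k_j}-x\|^2_{\sum_t \tilde{H}_t}$, while the main-term contribution to the (negated) RHS is $\tfrac{\alpha}{12}\cdot\tfrac{1}{2}=\tfrac{\alpha}{24}$ times the same quantity. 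The choice $\nu=\alpha\kappa(G)/192$ is calibrated precisely so that these two coefficients coincide, and the main terms cancel.

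What remains is a residual composed of (i) per-block boundary terms of total magnitude $J\cdot O(m R_G R_{\hat{Y}} D^2)$, which with $\tau=\sqrt{T}$ and hence $J=O(\sqrt{T})$ fits into $\tilde{O}(\sqrt{T})$; and (ii) movement-error terms of the form $\sum_j \sum_{t\in\text{block}} \bigl(\|x_t-x_{k_j}\|^2_{\tilde{H}_t} + \|x_t-x_{k_j}\|_2^2\bigr)$. To control these I will use the step movement bound $\|x_s-x_{s-1}\|_{\hat{A}_{s-m}} \leq \eta d B^* m$ from Section~\ref{sec:movement-cost} together with the stability relation $\hat{A}_t \preceq 2\hat{A}_s$ for $|t-s|\leq \tau$ (Eq.~\eqref{eqn:preconditioner-inequality}), and pass from the $\hat{A}$-norm to the $\tilde{H}_t$-norm via the pointwise inequality $\tilde{H}_t \preceq O(1/\eta)\hat{A}_t$, which itself follows from the accumulation rule $\hat{A}_t \succeq (\eta\alpha/6)\hat{H}_t$ plus the perturbation bound $\|\tilde{H}_t-\hat{H}_t\|_{\mathrm{op}}=O(\epsilon_G)$ implied by Assumption~\ref{assumption:estimation-markov}.

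The main obstacle is this last movement estimate: the na\"ive bound $\|x_t-x_{k_j}\|_2^2 \leq (t-k_j)^2 m\eta^2 d^2(B^*)^2$ summed over blocks scales as $T\tau^2\eta^2$, which for $\tau=\sqrt{T}$ and $\eta=\tilde{\Theta}(1/(\alpha\sqrt{T}))$ is already borderline; I will need to gain an extra factor of $\eta$ through $\tilde{H}_t\preceq O(1/\eta)\hat{A}_t$ and then exploit the $\hat{A}$-norm step bound (which is tighter by a factor $\sqrt{m}$ than the Euclidean one), so that after substituting the chosen $\tau$ and $\eta$ all movement contributions close inside $\tilde{O}(\sqrt{T})$, yielding the claimed inequality.
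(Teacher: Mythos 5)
Your blocking decomposition, the triangle inequalities, the cancellation of the main terms via Lemma~\ref{lem:YtHt-inequality} and the calibration of $\nu$, and the boundary term accounting all match the paper. The gap is in the movement-error estimate, and it is fatal.

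You propose to bound $\|x_t - x_{k_j}\|^2_{\tilde H_t}$ by passing from $\tilde H_t$ to $\hat A_t$ via the \emph{pointwise} operator inequality $\tilde H_t \preceq O(1/(\eta\alpha))\hat A_t$. That inequality is correct (it follows from $\hat A_t \succeq \tfrac{\eta\alpha}{6}\hat H_t$ and the $\eps_G$-perturbation), but it is too lossy once you sum over $t$. Concretely, with $\|x_s - x_{s-1}\|_{\hat A_{s-m}} \le \eta d B^* m$ and the block stability $\hat A_t \preceq 2\hat A_{s-m}$, your chain gives $\|x_t - x_{k_j}\|^2_{\tilde H_t} \le O\!\bigl(\tfrac{1}{\eta\alpha}\bigr)\cdot\tau^2\cdot\eta^2 d^2 (B^*)^2 m^2 = O(\eta\tau^2 d^2 (B^*)^2 m^2/\alpha)$ \emph{per time step}. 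Summing over $t=m,\ldots,T$ gives $O(\eta\, T\,\tau^2\, d^2 (B^*)^2 m^2/\alpha)$, and with $\tau=\sqrt{T}$, $\eta = \tilde\Theta(1/(\alpha\sqrt{T}))$ this is $\tilde O(T^{3/2}/\alpha^2)$, not $\tilde O(\sqrt T)$. Choosing a different $\tau$ to balance against the $O(T/\tau)$ boundary terms only brings this down to $\tilde O(T^{5/6})$. The ``extra factor of $\eta$'' you identify does not close the budget: you still carry a full factor of $T$ from the outer sum over $t$.

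The paper avoids this by never invoking the pointwise bound. It instead writes $\|x_t-x_{k_j}\|^2_{\tilde H_t} \le \trace(\tilde A_t^{-1/2}\tilde H_t\tilde A_t^{-1/2})\,\|x_t-x_{k_j}\|^2_{\tilde A_t}$, then uses the log-det potential inequality $\trace(\tilde A_t^{-1/2}\tilde H_t\tilde A_t^{-1/2}) \le \tfrac{6}{\eta\alpha}\log\frac{|\tilde A_t|}{|\tilde A_{t-1}|}$ (the same $A^{-1}\cdot(A-B)\le\log\tfrac{|A|}{|B|}$ device used in the proof of $T_2$ in Theorem~\ref{thm:expected_regret_bound}). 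When you sum over $t$, the log-dets telescope to $\log\tfrac{|\tilde A_T|}{|\tilde A_{m-1}|} \le d\log(\tilde\sigma T)$. This replaces your multiplicative $T$ with $d\log(\tilde\sigma T)$, and that is exactly the factor-$T/(d\log T)$ improvement required to land at $\tilde O(\sqrt T)$. The same telescoping device (with $\hat Y_{t-i}^\top\hat Y_{t-i}$ in place of $\tilde H_t$, via the blocking argument of Proposition~\ref{prop:perturbation-loss}) is needed for the $\|\hat Y_{t-i}(x_t - x_{k_j})\|_2^2$ residual as well; an operator-norm pointwise bound there has the same problem. To repair your proof, replace the per-step constant $O(1/\eta)$ with the per-step trace term $\trace(\tilde A_t^{-1/2}\tilde H_t\tilde A_t^{-1/2})$ and telescope its sum.
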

Combining Lemma~\ref{lem:bco-regret-unknown}-\ref{lem:HtYt-movement-inequality}, we conclude that
\begin{align*}
\E\left[\mathrm{ControlRegret}_T\right]\le \tilde{O}(\sqrt{T})+O(T\eps_G^2).
\end{align*}

\subsection{Omitted proofs in Section~\ref{subsubsec:known-stabilizing-controller}}

\subsubsection{Proof of Lemma~\ref{lem:bco-regret-unknown}}
To see the desired regret bound, it suffices to derive analogous bounds to Lemma~\ref{lem:regularity-bounds} to bound the natural parameters necessary for analysis in Section~\ref{sec:bqo-AM}. First, we have by Assumption~\ref{assumption:estimation-markov},
\begin{align*}
R_{\hat{G}}\le R_G+\eps_G, \ \ \ \tilde{R}_{\hat{G}}\le \tilde{R}_G\eps_G(R_{\hat{G}}+R_G), \ \ \ \kappa(\hat{G})\ge \frac{1}{4}\min\{1,\kappa^{-2}\}.
\end{align*}
It is left to establish bounds on $\hat{B}, \hat{L}, R_{\hat{Y}}$. Note that assuming $\eps_G\le \frac{1}{2R_{\M}}$,
\begin{align*}
\max_{t\in[T]}\left\|\sum_{j=0}^{m-1}M_t^{[j]}\hat{\y}_{t-j}^K\right\|_2&\le R_{\M}\max_{t\in[T]}\max_{t-m+1\le s\le t} \|\hat{\y}_s^K\|_2\\
&\le R_{\M}\left(R_{\mathrm{nat}}+\max_{t\in[T]}\max_{t-m+1\le s\le t}\left\|\sum_{i=0}^{s}(G_{\y}^{[i]}-\hat{G}_{\y}^{[i]})\sum_{j=0}^{m-1}M_{s-i}^{[j]}\hat{\y}_{s-i-j}^K\right\|_2\right)\\
&\le R_{\M}\left(R_{\mathrm{nat}}+\eps_G\max_{t\in[T]}\left\|\sum_{j=0}^{m-1}M_t^{[j]}\hat{\y}_{t-j}^K\right\|_2\right),
\end{align*}
and thus $\max_{t\in[T]}\left\|\sum_{j=0}^{m-1}M_t^{[j]}\hat{\y}_{t-j}^K\right\|_2\le 2R_{\M}R_{\mathrm{nat}}$, which implies $\max_{t\in[T]}\|\hat{\y}_t^K\|_2\le 2R_{\mathrm{nat}}$. Therefore, following the analysis of Lemma~\ref{lem:regularity-bounds}, $\hat{B}\le 4B$, $R_{\hat{Y}}\le 2R_Y$, $\hat{L}\le 8L$. 

\subsubsection{Proof of Lemma~\ref{lem:control-regret-unknown}}
Denote 
\begin{align*}
M^*=\argmin_{M\in\M(m,R_{\M})}\sum_{t=1}^T c_t(\y_t^M, \uv_t^M), \  \ \ x^*=\embed(M^*).
\end{align*}
When the system is unknown and the control algorithm is run with an estimated Markov operator $\hat{G}$, we have that
\begin{align*}
c_t(\y_t,\uv_t)=\left\|\tilde{B}_t+\sum_{i=0}^{t} G^{[i]}\hat{Y}_{t-i}\embed(\widetilde{M}_{t-i})\right\|_{P_t}^2.
\end{align*}

The control regret for unknown system with an estimated Markov operator $\hat{G}$ can be decomposed as
\begin{align*}
\mathrm{ControlRegret}_T&=\underbrace{\sum_{t=1}^{2m-2}c_t(\y_t,\uv_t)}_{\le 2m\hat{B}}+\underbrace{\sum_{t=2m-1}^Tc_t(\y_t,\uv_t)-\tilde{F}_t(y_{t-m+1:t})}_{(\text{loss approximation error})}\\
& \ \ \ \ \ + \underbrace{\sum_{t=2m-1}^T\tilde{F}_t(y_{t-m+1:t})-\tilde{f}_t(x^*)}_{\widetilde{\regret}_T(x^*)} + \underbrace{\sum_{t=2m-1}^T \tilde{f}_t(x^*)-\bar{f}_t(x^*)}_{(\text{comparator approximation error})} \\
& \ \ \ \ \ + \underbrace{\sum_{t=2m-1}^T \bar{f}_t(x^*)-\sum_{t=2m-1}^T c_t(\y_t^{M^*},\uv_t^{M^*})}_{(\text{control approximation error})}.
\end{align*}
We bound each of the terms as follows: since
\begin{align*}
\|\Delta_t\|_2:=\left\|\sum_{i=m}^t G^{[i]}\hat{Y}_{t-i}\embed(\widetilde{M}_{t-i})\right\|_2\le \frac{R_GR_{\hat{Y}}\hat{D}}{T},
\end{align*}
thus
\begin{align*}
(\text{loss approximation error})&\le \beta \sum_{t=2m-1}^T\|\Delta_t\|_2\left(2\left\|\tilde{B}_t+\sum_{i=0}^{t}G^{[i]}\hat{Y}_{t-i}x_{t-i}\right\|_2+\|\Delta_t\|_2\right)\\
&=\tilde{O}(1).
\end{align*}
The control approximation error is bounded by $\tilde{O}(1)$ by Eq.~\ref{eq:comparator-loss-known}. By Lemma D.10 in \cite{simchowitz2020making}, 
\begin{align*}
(\text{comparator approximation error})\le \nu\sum_{t=m}^T \|\hat{Y}_t(x_t-x^*)\|_2^2+ \tilde{O}(1)\cdot T\eps_G^2\left(1+\frac{1}{\nu}\right)+\tilde{O}(1).
\end{align*}

\subsubsection{Proof of Lemma~\ref{lem:regret-negative-movement}}
\begin{lemma} [Gradient error]
\label{lem:gradient-error}
Let $c(\beta,\kappa,G,\hat{G},D,\hat{Y})=2\beta R_{\hat{Y}}^2\left(\sqrt{1+\kappa^2}+D(R_G+R_{\hat{G}})\right)$. Then, there holds
\begin{align*}
&\max_{t\in[T]}\max_{M_{t-m+1:t}\in\M(m,R_{\M})}\|\nabla \hat{F}_t(\embed(M_{t-m+1:t})) - \nabla\tilde{F}_t(\embed(M_{t-m+1:t}))\|_2\le c(\beta,\kappa,G,\hat{G},D,\hat{Y})\eps_G\sqrt{m},\\
&\nabla \tilde{f}_t(\embed(M_t))-\nabla \hat{f}_t(\embed(M_t))=2\tilde{G}_t^{\top}P_t(\tilde{G}_t-\hat{G}_t)\embed(M_t)+2(\tilde{G}_t-\hat{G}_t)^{\top} P_t\hat{G}_t\embed(M_t),
\end{align*}
where $P_t=\begin{bmatrix}
Q_t & \mathbf{0}_{d_{\y}\times d_{\uv}}\\
\mathbf{0}_{d_{\uv}\times d_{\y}} & R_t
\end{bmatrix}$. 
\end{lemma}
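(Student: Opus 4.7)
The lemma consists of two separate statements: the first is a uniform operator-norm bound on the gradient discrepancy of the full memoryful losses $\hat F_t$ and $\tilde F_t$, while the second is an exact algebraic identity for the gradient gap of their induced unary forms. Both arguments are direct consequences of the quadratic structure of these losses and of a triangle-inequality decomposition into a ``Markov-operator'' error $\hat G - G$ and a ``signal'' error $\hat B_t - \tilde B_t$.

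For the first claim, define $\hat v_t(z) := \hat B_t + \sum_{j=0}^{m-1} \hat G^{[j]} \hat Y_{t-j} z_{t-j}$ and $\tilde v_t(z) := \tilde B_t + \sum_{j=0}^{m-1} G^{[j]} \hat Y_{t-j} z_{t-j}$, so that $\hat F_t(z) = \hat v_t(z)^\top P_t \hat v_t(z)$ and $\tilde F_t(z) = \tilde v_t(z)^\top P_t \tilde v_t(z)$. The $i$-th block of the gradient is $\nabla_i \hat F_t(z) = 2\hat Y_{t-i}^\top (\hat G^{[i]})^\top P_t \hat v_t(z)$, with the obvious analogue for $\tilde F_t$. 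Applying the splitting $\hat A \hat v - A v = (\hat A - A)\hat v + A(\hat v - v)$, the block-wise difference decomposes into a Markov-error term proportional to $\hat G^{[i]} - G^{[i]}$ and a signal-error term proportional to $\hat v_t - \tilde v_t$. Each is then bounded using $\|\hat Y_{t-i}\|_{\mathrm{op}} \le R_{\hat Y}$, $\|P_t\|_{\mathrm{op}} \le \beta$, and Assumption~\ref{assumption:estimation-markov}.

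The remaining norms $\|\hat v_t\|_2$ and $\|\hat v_t - \tilde v_t\|_2$ are controlled as follows. First, $\|\hat B_t\|_2 = \|(\hat \y_t^K, K \hat \y_t^K)\|_2 \le \sqrt{1+\kappa^2}\, \|\hat\y_t^K\|_2$ and $\|\sum_j \hat G^{[j]} \hat Y_{t-j} z_{t-j}\|_2 \le R_{\hat G} R_{\hat Y} D$; similarly for $\|\tilde v_t\|_2$ with $R_G$. For the signal mismatch, $\hat v_t - \tilde v_t = (\hat B_t - \tilde B_t) + \sum_j (\hat G^{[j]} - G^{[j]}) \hat Y_{t-j} z_{t-j}$, the first term being bounded by controlling the natural-signal error $\hat\y_t^K - \y_t^K$ (the contraction argument already carried out implicitly in the proof of Lemma~\ref{lem:bco-regret-unknown}), and the second by $\eps_G R_{\hat Y} D$. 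Combining gives a per-block bound of order $\beta R_{\hat Y}^2 \eps_G \bigl(\sqrt{1+\kappa^2} + D(R_G + R_{\hat G})\bigr)$, and the final $\sqrt m$ factor emerges from $\|\nabla \hat F_t - \nabla \tilde F_t\|_2 = \bigl(\sum_i \|\nabla_i(\hat F_t - \tilde F_t)\|_2^2\bigr)^{1/2} \le \sqrt m \max_i \|\nabla_i(\hat F_t - \tilde F_t)\|_2$.

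For the second claim, at the diagonal point $z_{t-m+1} = \dots = z_t = x := \embed(M_t)$ the memory sums collapse to $\tilde f_t(x) = (\tilde B_t + \tilde G_t x)^\top P_t (\tilde B_t + \tilde G_t x)$ and $\hat f_t(x) = (\hat B_t + \hat G_t x)^\top P_t (\hat B_t + \hat G_t x)$, where $\tilde G_t := \sum_i G^{[i]} \hat Y_{t-i}$. Direct differentiation yields $\nabla \tilde f_t(x) = 2\tilde G_t^\top P_t(\tilde B_t + \tilde G_t x)$ and analogously for $\hat f_t$, and isolating the quadratic-in-$x$ contribution of their difference via the telescoping $\tilde G_t^\top P_t \tilde G_t - \hat G_t^\top P_t \hat G_t = \tilde G_t^\top P_t(\tilde G_t - \hat G_t) + (\tilde G_t - \hat G_t)^\top P_t \hat G_t$ reproduces the stated expression. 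I expect the main obstacle across both claims to lie in closing the bound on $\hat B_t - \tilde B_t$: the natural-signal estimate $\hat\y_t^K$ is computed by the learner through a recursion that couples back to its own past actions, so controlling $\|\hat\y_t^K - \y_t^K\|_2$ requires a self-consistent geometric argument using the stability of $\hat G$ together with the $\eps_G$ bound of Assumption~\ref{assumption:estimation-markov}, which is precisely where the quadratic propagation of $\eps_G$ in the downstream control-regret bound originates.
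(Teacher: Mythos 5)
Your approach mirrors the paper's --- differentiate the quadratic losses, split the gradient gap into a Markov-operator error and a signal error, bound block-wise, and pay $\sqrt{m}$ for the number of blocks --- and your explicit decomposition of $\hat{v}_t - \tilde{v}_t$ is in fact more careful than the paper's very terse derivation. However, two steps do not close. First, you assert that $\|\hat{\y}_t^K - \y_t^K\|_2$ is controlled by ``the contraction argument already carried out implicitly in the proof of Lemma~\ref{lem:bco-regret-unknown}''; that proof establishes only the absolute bound $\|\hat{\y}_t^K\|_2 \le 2R_{\mathrm{nat}}$, not an error bound $\|\hat{\y}_t^K - \y_t^K\|_2 = O(\eps_G)$, which requires a separate (though similar) recursion that neither you nor the paper carries out. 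Moreover, if the $\hat{B}_t - \tilde{B}_t$ contribution is retained, the constant acquires a summand that is absent from the stated $c(\beta,\kappa,G,\hat{G},D,\hat{Y}) = 2\beta R_{\hat{Y}}^2\bigl(\sqrt{1+\kappa^2}+D(R_G+R_{\hat{G}})\bigr)$; the paper's one-line derivation simply drops this term.

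Second, direct differentiation of the unary forms gives $\nabla\tilde{f}_t(x) - \nabla\hat{f}_t(x) = 2\bigl(\tilde{G}_t^\top P_t \tilde{B}_t - \hat{G}_t^\top P_t \hat{B}_t\bigr) + 2\bigl(\tilde{G}_t^\top P_t\tilde{G}_t - \hat{G}_t^\top P_t\hat{G}_t\bigr)x$, and telescoping the second summand reproduces the stated right-hand side; but the affine summand does not vanish, since $\tilde{B}_t = (\y_t^K, K\y_t^K) \ne (\hat{\y}_t^K, K\hat{\y}_t^K) = \hat{B}_t$ in general. Your phrase ``isolating the quadratic-in-$x$ contribution'' therefore yields only one piece of the gradient difference, not the claimed identity. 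The paper's ``follows directly by taking the gradient'' shares this gap, so you inherit rather than introduce the flaw, but it should be named: either the lemma's stated identity needs an additional affine summand (to be propagated through the downstream bound in Lemma~\ref{lem:regret-negative-movement}), or one needs an argument that the affine gap vanishes, which does not follow from the definitions.
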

\begin{proof}
The second equality follows directly by taking the gradient of the functions. To bound the difference of the gradients of $\hat{F}_t$ and $\tilde{F}_t$, Note that $\forall t$ and $\forall M_{t-m+1:t}$,
\begin{align*}
\|\nabla_i \hat{F}_t(\embed(M_{t-m+1:t})) - \nabla_i \tilde{F}_t(\embed(M_{t-m+1:t}))\|_2&\le 2\beta R_{\hat{Y}}\|(\hat{G}^{[m-i]}-G^{[m-i]})\hat{B}_t\|_2\\
& \ \ \ \ \ +2\beta R_{\hat{Y}}^2 D\left\|\sum_{j=0}^{m-1}(\hat{G}^{[m-i]})^{\top}\hat{G}^{[j]}-(G^{[m-i]})^{\top}\hat{G}^{[j]}\right\|\\
&\le 2\beta R_{\hat{Y}}^2(\sqrt{1+\kappa^2}+D(R_G+R_{\hat{G}}))\eps_G. 
\end{align*}
\end{proof}

Let $y_t=\embed(\widetilde{M}_t)$. We again decompose the regret into perturbation loss, movement cost, and underlying regret and bound each separately. We define the following quantities: denote
\begin{align*}
&\widehat{\mathrm{PerturbLoss}}:=\E\left[\sum_{t=m}^T \hat{F}_t(y_{t-m+1:t})-\hat{F}_t(x_{t-m+1:t})\right], \\ &\widetilde{\mathrm{PerturbLoss}}:=\E\left[\sum_{t=m}^T \tilde{F}_t(y_{t-m+1:t})-\tilde{F}_t(x_{t-m+1:t})\right], \\
&\widehat{\mathrm{MoveCost}}:=\E\left[\sum_{t=m}^T \hat{F}_t(x_{t-m+1:t})-\hat{f}_t(x_{t})\right], \ \widetilde{\mathrm{MoveCost}}:=\E\left[\sum_{t=m}^T \tilde{F}_t(x_{t-m+1:t})-\tilde{f}_t(x_{t})\right], \\
&\widehat{\mathrm{ONSRegret}(x)}:=\E\left[\sum_{t=m}^T \hat{f}_t(x_{t})-\hat{f}_t(x)\right], \ \widetilde{\mathrm{ONSRegret}(x)}:=\E\left[\sum_{t=m}^T \tilde{f}_t(x_{t})-\tilde{f}_t(x)\right].
\end{align*}

\paragraph{Bounding perturbation loss.} Similar to the analysis in Section~\ref{sec:perturbation-loss-known}, we have
\begin{align*}
\widetilde{\mathrm{PerturbLoss}}&= \frac{1}{2}\sum_{t=m}^T\sum_{i=1}^m \E\left[u_{t-m+i}^{\top}\hat{A}_{t-m+i-1}^{-\frac{1}{2}}[\nabla^2 \tilde{F}_t]_{ii}\hat{A}_{t-m+i-1}^{-\frac{1}{2}}u_{t-m+i}\right]\\
&\le \frac{\beta}{2} \sum_{t=m}^T\sum_{i=1}^m \E\left[\left\|G^{[m-i]}\hat{Y}_{t-m+i}\hat{A}_{t-m+i-1}^{-\frac{1}{2}}u_{t-m+i}\right\|_2^2\right]\\
&\le \underbrace{\beta \sum_{t=m}^T\sum_{i=1}^m \E\left[\left\|\hat{G}^{[m-i]}\hat{Y}_{t-m+i}\hat{A}_{t-m+i-1}^{-\frac{1}{2}}u_{t-m+i}\right\|_2^2\right]}_{\le 2B(\widehat{\mathrm{PerturbLoss}})}\\
& \ \ \ \ \ +\beta \sum_{t=m}^T\sum_{i=1}^m \E\left[\left\|\left(\hat{G}^{[m-i]}-G^{[m-i]}\right)\hat{Y}_{t-m+i}\hat{A}_{t-m+i-1}^{-\frac{1}{2}}u_{t-m+i}\right\|_2^2\right].
\end{align*}
The bound on the first term can be derived identically as in Proposition~\ref{prop:perturbation-loss}. The second term can be bounded as
\begin{align*}
\beta \sum_{t=m}^T\sum_{i=1}^m \E\left[\left\|\left(\hat{G}^{[m-i]}-G^{[m-i]}\right)\hat{Y}_{t-m+i}\hat{A}_{t-m+i-1}^{-\frac{1}{2}}u_{t-m+i}\right\|_2^2\right]&\le \eps_G^2\beta R_{\hat{Y}}^2T. 
\end{align*}

\paragraph{Bounding movement cost.} The bound for movement cost is simply the Lipschitz constant multiplying the Euclidean distance between iterates. Therefore, let $\hat{L}$ denote the Lipschitz constant for $\hat{F}_t$, respectively, we have
\begin{align*}
\widetilde{\mathrm{MoveCost}}\le \frac{\max_{t\in[T], M_{t-m+1:t}}\|\nabla \tilde{F}_t(M_{t-m+1:t})\|_2}{\hat{L}}B(\widehat{\mathrm{MoveCost}})\le 2B(\widehat{\mathrm{MoveCost}}),
\end{align*}
where the last inequality follows from Lemma~\ref{lem:gradient-error} and assuming $\eps_G\le (c(\beta,\kappa,G,\hat{G},D,\hat{Y})\sqrt{m})^{-1}L$.

\paragraph{Bounding underlying regret.}
By construction, we have $\alpha\tilde{H}_t\preceq \nabla^2\tilde{f}_t\preceq \beta\tilde{H}_t$ and $\alpha\hat{H}_t\preceq\nabla^2\hat{f}_t\preceq \beta\hat{H}_t$, and thus $\forall x\in\K$,
\begin{align*}
\widetilde{\mathrm{ONSRegret}(x)}&\le \sum_{t=m}^T\E\left[\nabla \tilde{f}_{t}(x_{t})^{\top}(x_{t}-x)-\frac{1}{2}\|x_{t}-x\|_{\alpha\tilde{H}_{t}}^2\right]\\
&=\sum_{t=m}^T\E\left[\nabla \hat{f}_{t}(x_{t})^{\top}(x_{t}-x)-\frac{1}{6}\|x_{t}-x\|_{\alpha\hat{H}_{t}}^2\right] \\
& \ \ \ \  \ + \sum_{t=m}^T\E\left[ (\nabla\tilde{f}_{t}(x_t)-\nabla\hat{f}_{t}(x_{t}))^{\top}(x_{t}-x)\right]+\frac{\alpha}{6}\sum_{t=m}^T\E\left[\|x_t-x\|_{\hat{H}_t-3\tilde{H}_t}^2\right]\\
&\le B(\widehat{\mathrm{ONSRegret}(x)})+\sum_{t=m}^T\E\left[ (\nabla\tilde{f}_{t}(x_t)-\nabla\hat{f}_{t}(x_{t}))^{\top}(x_{t}-x)\right]-\frac{\alpha}{6}\sum_{t=m}^T \E\left[\|x_t-x\|_{\tilde{H}_t}^2\right]\\
& \ \ \ \ \ + \frac{\alpha}{3}\sum_{t=m}^T \E\left[\|(\hat{G}_t-\tilde{G}_t)^{\top}(x_t-x)\|_2^2\right]\\
&\le B(\widehat{\mathrm{ONSRegret}(x)})+\sum_{t=m}^T\E\left[ (\nabla\tilde{f}_{t}(x_t)-\nabla\hat{f}_{t}(x_{t}))^{\top}(x_{t}-x)\right]-\frac{\alpha}{6}\sum_{t=m}^T \E\left[\|x_t-x\|_{\tilde{H}_t}^2\right]\\
& \ \ \ \ \ + \frac{\alpha \eps_G^2R_{\hat{Y}}^2D^2T}{3}.
\end{align*}

By Lemma~\ref{lem:gradient-error}, we have $\forall t$,
\begin{align*}
\frac{1}{2}(\nabla \tilde{f}_t(x_t)-\nabla \hat{f}_t(x_t))^{\top}(x_t-x)&=\underbrace{x_t^{\top}(\tilde{G}_t-\hat{G}_t)^{\top}P_t^{\top}\tilde{G}_t(x_t-x)}_{(1)}+\underbrace{x_t^{\top}\hat{G}_t^{\top}P_t^{\top}(\tilde{G}_t-\hat{G}_t)(x_t-x)}_{(2)}\\
&\le \underbrace{\frac{6}{\alpha}\beta^2D^2\eps_G^2+\frac{\alpha}{24}\|x_t-x\|_{\tilde{H}_t}^2}_{(1)}+\underbrace{\frac{(\beta DR_G)^2m}{2\nu}\eps_G^2+\frac{\nu}{2m}\sum_{i=0}^{m-1}\|\hat{Y}_{t-i}(x_t-x)\|_2^2}_{(2)},
\end{align*}
where the inequality follows by applying Cauchy-Schwarz $u^{\top}v\le \frac{1}{2\lambda}\|u\|_2^2+\frac{\lambda}{2}\|v\|_2^2$ with $\lambda=\frac{\alpha}{12}$ for $(1)$ and $\lambda=\frac{\nu}{m}$ for $(2)$.

The desired inequality follows from combining the bounds.

\subsubsection{Proof of Lemma~\ref{lem:HtYt-movement-inequality}}
Using the same blocking method and definitions of $k_j$'s as in Section~\ref{sec:perturbation-loss-known}, we have
\begin{align*}
\frac{2\nu}{m}\sum_{t=m}^T\sum_{i=0}^{m-1}\E\left[ \|\hat{Y}_{t-i}(x_{t}-x)\|_2^2\right]&\le \frac{2\nu}{m}\sum_{j=1}^J \sum_{t=k_j}^{k_{j+1}-1}\sum_{i=0}^{m-1}\E\left[\|\hat{Y}_{t-i}(x_t-x)\|_2^2\right]+2\nu R_{\hat{Y}}^2D^2\tau\\
&\le \frac{4\nu}{m}\sum_{j=1}^J \sum_{t=k_j}^{k_{j+1}-1}\sum_{i=0}^{m-1}\E\left[\|\hat{Y}_{t-i}(x_{k_j}-x)\|_2^2\right]\\
& \ \ \ \ \ +\frac{4\nu}{m}\sum_{j=1}^J\sum_{t=k_j}^{k_{j+1}-1}\sum_{i=0}^{m-1}\E\left[\|\hat{Y}_{t-i}(x_{t}-x_{k_j})\|_2^2\right]+2\nu R_{\hat{Y}}^2D^2\tau.
\end{align*}
Applying similar analysis as in Section~\ref{sec:perturbation-loss-known}, 
\begin{align*}
\frac{4\nu}{m}\sum_{j=1}^J \sum_{t=k_j}^{k_{j+1}-1}\sum_{i=0}^{m-1}\E\left[\|\hat{Y}_{t-i}(x_{k_j}-x)\|_2^2\right]&\le \frac{8\nu}{\kappa(G)}\sum_{j=1}^J\left[\left(\sum_{t=k_j}^{k_{j+1}-1} \E\left[\|x_{k_j}-x\|_{\tilde{H}_t}^2\right]\right)+5mR_GR_{\hat{Y}}D^2\right]\\
&\le \frac{8\nu}{\kappa(G)}\sum_{j=1}^J \sum_{t=k_j}^{k_{j+1}-1}\E\left[\|x_{k_j}-x\|_{\tilde{H}_t}^2\right]+\frac{40\nu mR_GR_{\hat{Y}}D^2}{\kappa(G)}\left\lfloor \frac{T}{\tau}\right\rfloor.
\end{align*}
On the other hand,
\begin{align*}
\frac{\alpha}{12}\sum_{t=m}^T \E\left[\|x_t-x\|_{\tilde{H}_t}^2\right]&\ge \frac{\alpha}{24}\sum_{j=1}^J\sum_{t=k_j}^{k_{j+1}-1}\E\left[\|x_{k_j}-x\|_{\tilde{H}_t}^2\right]-\frac{\alpha}{12}\sum_{j=1}^J\sum_{t=k_j}^{k_{j+1}-1}\E\left[\|x_{t}-x_{k_j}\|_{\tilde{H}_t}^2\right].
\end{align*}
By choice of $\nu=\frac{\alpha\kappa(G)}{192}$, the left hand side of the desired inequality is upper bounded by
\begin{align*}
\frac{\alpha\kappa(G)}{48m}\underbrace{\sum_{j=1}^J\sum_{t=k_j}^{k_{j+1}-1}\sum_{i=0}^{m-1}\E\left[\|\hat{Y}_{t-i}(x_{t}-x_{k_j})\|_2^2\right]}_{(1)}+\frac{\alpha}{12}\underbrace{\sum_{j=1}^J\sum_{t=k_j}^{k_{j+1}-1}\E\left[\|x_{t}-x_{k_j}\|_{\tilde{H}_t}^2\right]}_{(2)}+2\nu R_{\hat{Y}}^2D^2\tau\\
+\frac{40\nu mR_GR_{\hat{Y}}D^2}{\kappa(G)}\left\lfloor \frac{T}{\tau}\right\rfloor.
\end{align*}
We bound the first two terms. First, note that assuming $\frac{\eta\alpha\eps_GR_{\hat{Y}}T}{6}\le 1$, we have
\begin{align*}
\hat{H}_t&\preceq \tilde{H}_t+\sum_{i=0}^{m-1}(\hat{G}^{[i]}-G^{[i]})\hat{Y}_{t-i}\preceq \tilde{H}_t+\eps_GR_{\hat{Y}}I, \\
\hat{A}_t&=mI+\frac{\eta\alpha}{6}\sum_{s=m}^t \hat{H}_t\preceq mI+\frac{\eta\alpha}{6}\sum_{s=m}^t \tilde{H}_t+\frac{\eta\alpha}{6}\cdot \eps_GR_{\hat{Y}}tI\preceq 2\tilde{A}_t.
\end{align*}
First, we bound $(2)$. Note that
\begin{align*}
\|x_t-x_{k_j}\|_{\tilde{H}_t}^2&\le 2\|\tilde{G}_t\tilde{A}_{t}^{-\frac{1}{2}}\hat{A}_t^{\frac{1}{2}}(x_t-x_{k_j})\|_2^2\\
&\le 2\trace(\tilde{A}_t^{-\frac{1}{2}}\tilde{H}_t\tilde{A}_t^{-\frac{1}{2}})\|x_t-x_{k_j}\|_{\hat{A}_t}^{2}\\
&\le \frac{12}{\eta\alpha} \frac{\log(|\tilde{A}_t|)}{\log(|\tilde{A}_{t-1}|)}\cdot \tau \sum_{s=k_j+1}^t \|x_s-x_{s-1}\|_{\hat{A}_t}^2\\
&\le \frac{12}{\eta\alpha} \frac{\log(|\tilde{A}_t|)}{\log(|\tilde{A}_{t-1}|)}\cdot \tau \max\{2,\eta\alpha\hat{\sigma}(\tau+m)\} \sum_{s=k_j+1}^t \|x_s-x_{s-1}\|_{\hat{A}_{s-m}}^2\\
&\le \frac{12}{\eta\alpha} \frac{\log(|\tilde{A}_t|)}{\log(|\tilde{A}_{t-1}|)}\cdot \tau^2 \max\{2,\eta\alpha\hat{\sigma}(\tau+m)\} \eta^2d^2(\hat{B}^*)^2m^2.
\end{align*}
Now, by letting $\tau=\lfloor \sqrt{T}\rfloor$, and assuming that $\eta \le \frac{1}{\alpha\hat{\sigma}\sqrt{T}}$, we have that
\begin{align*}
\|x_t-x_{k_j}\|_{\tilde{H}_t}^2&\le \frac{24\eta \tau^2d^2(\hat{B}^*)^2m^2}{\alpha}\cdot\frac{\log(|\tilde{A}_t|)}{\log(|\tilde{A}_{t-1}|)}.
\end{align*}
Thus,
\begin{align*}
(2)&\le \sum_{t=m}^T \E\left[\|x_{t}-x_{k_j}\|_{\tilde{H}_t}^2\right]\le \frac{24\eta \tau^2d^2(\hat{B}^*)^2m^2}{\alpha}\sum_{t=m}^T \frac{\log(|\tilde{A}_t|)}{\log(|\tilde{A}_{t-1}|)}\le \frac{24\eta d^3(\hat{B}^*)^2m^2 T}{\alpha}\log(\tilde{\sigma}T). 
\end{align*}

To see the bound on $(1)$: $\forall 0\le i\le m-1$,
\begin{align*}
\|\hat{Y}_{t-i}(x_t-x_{k_j})\|_2^2&= \|\hat{Y}_{t-i}\hat{A}_{t}^{-\frac{1}{2}}\hat{A}_{t}^{\frac{1}{2}}(x_t-x_{k_j})\|_2^2\\
&\le \trace (\hat{A}_{t}^{-\frac{1}{2}} \hat{Y}_{t-i}^{\top}\hat{Y}_{t-i} \hat{A}_{t}^{-\frac{1}{2}})\|x_t-x_{k_j}\|_{\hat{A}_t}^2\\
&\le \trace (\hat{A}_{t-i}^{-\frac{1}{2}} \hat{Y}_{t-i}^{\top}\hat{Y}_{t-i} \hat{A}_{t-i}^{-\frac{1}{2}})\cdot 2\eta^2\tau^2d^2(\hat{B}^*)^2m^2,
\end{align*}
where the last inequality follows similarly from before. Summing over, we have that
\begin{align*}
(1)&\le 2\eta^2\tau^2d^2(\hat{B}^*)^2m^2\sum_{t=m}^T \sum_{i=0}^{m-1} \trace (\hat{A}_{t-i}^{-\frac{1}{2}} \hat{Y}_{t-i}^{\top}\hat{Y}_{t-i} \hat{A}_{t-i}^{-\frac{1}{2}})\\
&\le  4\eta^2\tau^2d^2(\hat{B}^*)^2m \sum_{t=m}^T \trace \left(\hat{A}_{t}^{-\frac{1}{2}} \left(\sum_{i=0}^{m-1}\hat{Y}_{t-i}^{\top}\hat{Y}_{t-i}\right) \hat{A}_{t}^{-\frac{1}{2}}\right).
\end{align*}
By the same analysis as in Section~\ref{sec:perturbation-loss-known},
\begin{align*}
\sum_{t=m}^T \trace \left(\hat{A}_{t}^{-\frac{1}{2}} \left(\sum_{i=0}^{m-1}\hat{Y}_{t}^{\top}\hat{Y}_{t}\right) \hat{A}_{t}^{-\frac{1}{2}}\right)&\le \frac{10mR_{\hat{G}}R_{\hat{Y}}}{\kappa(\hat{G})}\sqrt{T}+\frac{4md\log(\hat{\sigma}T))}{\eta\alpha\kappa(\hat{G})}+dR_{\hat{Y}}^2\sqrt{T}.
\end{align*}
The result follows from taking $\eta=\tilde{\Theta}(\frac{1}{\alpha\sqrt{T}})$, $\tau=\lfloor \sqrt{T}\rfloor$, in which case
\begin{align*}
(1) = \tilde{O}(\sqrt{T}), \ \ \ (2)=\tilde{O}(\sqrt{T}), \ \ \ \ 2\nu R_{\hat{Y}}^2D^2\tau+\frac{40\nu mR_GR_{\hat{Y}}D^2}{\kappa(G)}\left\lfloor \frac{T}{\tau}\right\rfloor=\tilde{O}(\sqrt{T}).
\end{align*}

\subsection{Discussion: unknown stabilizing controllers}
To further generalize the results in the previous sections, we believe that the same regret bound can be attained for unknown system with unknown stabilizing controller $K$. Here, we sketch the idea to arrive at the same regret bound but omit the proof. In this case, we further assume that the system is strongly controllable and observable.

\begin{definition}[Strong controllability and observability] A system $(A,B)$ is $(k, \kappa)$-strongly controllable if the matrix 
\begin{align*}
C_k=\begin{bmatrix}B & AB & \dots & A^{k-1}B\end{bmatrix}\in\mathbb{R}^{d_{\x}\times kd_{\uv}}
\end{align*}
has full row rank, and $\|(C_kC_k^{\top})^{-1}\|_2\le \kappa$. A partially observable system $(A, B, C)$ is observable if the matrix 
\begin{align*}
O=\begin{bmatrix}C^{\top} & (CA)^{\top} & \dots & (CA^{d_{\x}-1})^{\top}\end{bmatrix}^{\top}\in\mathbb{R}^{d_{\x}d_{\y}\times d_{\x}}
\end{align*}
has full column rank. 
\end{definition}

We first need a Markov operator estimator, which we can run a variant of the system estimation algorithm 1 in \cite{chen2021black} to obtain a good estimator for $G_{\y}$. Then, we run the Ho-Kalman algorithm to extract $\hat{A}, \hat{B}, \hat{C}$ that are close to $A, B, C$ up to linear transformations by unitary matrices. With the estimated $\hat{A}, \hat{B}, \hat{C}$, we can run a variant of the controller recovery algorithm in \cite{chen2021black} to obtain a a controller $\hat{K}$. With the $\hat{G}, \hat{K}$, we can run Algorithm~\ref{alg:bandit-control-known}.


\section{BCO-M lower bound: Proof of Theorem~\ref{thm:lower-bound}  (Section~\ref{sec:bco-m})}

In this section, we prove Theorem~\ref{thm:lower-bound}. We start by constructing the sequence of loss functions. On a high level, the loss function $f_t$ has memory length of $2$ and has three different components: a scaled linear loss of the difference between the current iterate and the iterate two steps before, a random variable from a random process indexed at time $t$, and a quadratic moving cost between the current and previous iterates. Concretely, the loss function takes the following form:
\begin{align}
\label{eq:loss-moving}
f_t(x_{t-1},x_t)=\eps\ell^*(x_t-x_{t-2})+\bar{n}_t+(x_t-x_{t-1})^2,
\end{align}
where $\ell^*$ takes values in $\{-1,1\}$ with equal probability and is fixed ahead of time but unknown to the learner; $\eps$ is some scaling factor; $\bar{n}_t$ is a multi-scaled random walk indexed at time $t$. We view $f_t$ as an adaptive loss function to the learner's decision at time $t-2$.

One important distinction we make is the notion of regret in the presence of an adaptive adversary. In the presence of an adaptive adversary described in Assumption~\ref{assumption:adversary-adaptivity}, a function $f_t$ of memory length $m$ is itself a function of the decision $x_{1:t-m}$. Therefore, we can write $f_t(x_{t-m+1:t})=F_t(x_{1:t})$. We are interested in the following regret definition: for $x\in\K$, the regret w.r.t. $x$ is given by
\begin{align}
\label{eq:bco-m-regret}
\sum_{t=1}^T F_t(x_{1:t})-\sum_{t=1}^T F_t(x_{1:t-m},x,\dots,x).
\end{align}
This is the common regret notion in literature (e.g. \cite{auer2002nonstochastic, mcmahan2004online}). \cite{cesa2013online} compares this regret notion with a stronger \textit{policy regret} defined as
\begin{align*}
\sum_{t=1}^T F_t(x_{1:t})-\sum_{t=1}^T F_t(x,\dots,x),
\end{align*}
and showed that the latter has a lower bound of $\Omega(T)$. Therefore, we focus on establishing a lower bound with respect to the regret defined in Eq.~(\ref{eq:bco-m-regret}).

With the notion of regret in \cref{eq:bco-m-regret}, the regret of loss functions in \cref{eq:loss-moving} is given by $\sum_{t=1}^T f_t(x_{t-1},x_t)-\min_{x}\sum_{t=1}^T f_t(x,x)$. The optimal strategy is thus determined by the value of $\ell^*$. In the bandit setting, the learner has no access to $\ell^*$ and for sufficiently small $\eps$, the information regarding the sign of $\ell^*$ through the scaler feedback is limited. The tradeoff between moving decisions to learn $\ell^*$ and incurring moving cost results in the suboptimal bounds derived in \cref{thm:lower-bound}.

This section will be organized as the following: \cref{sec:lower-contruction} spells out the construction of the multi-scaled random walk used in the construction of $f_t$ in \cref{eq:loss-moving} and its useful properties; \cref{sec:bound-lower-loss} establishes the regularity of the constructed loss function (in particular it is bounded with high probability); \cref{sec:lower-proof} proves the regret lower bound in \cref{thm:lower-bound}.

\subsection{Construction of multi-scaled random walk and loss functions}
\label{sec:lower-contruction}
We consider the multi-scaled random walk constructed in \cite{dekel2014bandits}.
\begin{definition} [Multi-scaled random walk] 
\label{def:mrw}
Given a time horizon $T\in\mathbb{Z}_{++}$, let $\xi_1,\dots, \xi_T$ be i.i.d. $N(0,\sigma^2)$ random variables for some $\sigma\in\mathbb{R}_{++}$. The multi-scaled random walk is a random process $\{\bar{n}_{t}\}_{t=1}^T$ defined by
\begin{align*}
\bar{n}_t&=\bar{n}_{\rho(t)}+\xi_t,\\
\rho(t)&=t-\max\{2^i: i\ge 0, \ 2^i \text{ divides }t\}.
\end{align*}
\end{definition}
Given the definition of the process $\{\bar{n}_t\}_{t=1}^T$, consider the sequence of loss functions that is adaptive to the learner's decisions at time $t-2$ for every $t$.
\begin{definition} [Loss instance] 
\label{def:loss-instance}
Consider the $(t-2)$-adaptive loss function $L_t(\cdot;\eps):[0, 1]\rightarrow\mathbb{R}$ parameterized by $\eps>0$ at time $t$: 
\begin{align*}
L_t(x;\eps)&=\eps\ell^*(x-x_{t-2})+\bar{n}_t,
\end{align*}
where $\ell^*$ is a Rademacher random variable chosen before the game starts unknown to the learner, i.e. $\ell^*=\pm 1$ with probability $\frac{1}{2}$ equally.  
\end{definition}
We define the with-memory loss functions as the sum of the loss instance defined in Definition~\ref{def:loss-instance} and the quadratic moving cost between $x_t$ and $x_{t-1}$. 
\begin{definition} [Regret, moving cost, and performance metric]
\label{def:with-memory-loss-and-regret}
The additional moving cost takes the form 
\begin{align*}
M=\sum_{t=1}^T (x_t-x_{t-1})^2.
\end{align*}
We assume that the learner has full access to $M$. Note that this only makes the problem easier. Define the with-memory quadratic loss functions over $[0,1]\times [0,1]$ as
\begin{align*}
f_t(x_{t-1}, x_t)=L_t(x_t,\eps)+(x_t-x_{t-1})^2. 
\end{align*}
Let $\delta>0$ be the improper learning parameter. The regret w.r.t. any $x\in[\delta, 1-\delta]$ is measured by
\begin{align*}
\E[\regret_T(x)]&=\E\left[\sum_{t=1}^T f_t(x_{t-1},x_t)-\sum_{t=1}^T f_t(x,x)\right]\\
&=\E\left[\sum_{t=1}^T L_t(x_t;\eps)+M-\sum_{t=1}^T L_t(x;\eps)\right]. 
\end{align*}
\end{definition} 

\begin{remark}
Note that the $f_t$ defined in Definition~\ref{def:with-memory-loss-and-regret} is quadratic, smooth, and adaptive to the learner's decision at time $t-2$. In particular, this fits the assumption on the adversary model in Section~\ref{sec:bqo-AM}. 
\end{remark}

The following property of the multi-scaled random walk (MRW) constructed in Definition~\ref{def:mrw} is proved in Lemma 2 in \cite{dekel2014bandits}. To summarize the proof, we note that the depth and width defined in Lemma~\ref{lem:depth-mrw} is bounded by the number of $1$'s and $0$'s in the binary representation of $t$ respectively, which are bounded by $\lfloor \log_2 T\rfloor+1$.
\begin{lemma} [Depth and width of MRW, Lemma 2 in \cite{dekel2014bandits}]
\label{lem:depth-mrw}
The parent set of $t$ is defined by $\rho^*(t)=\{\rho(t)\}\cup \rho^*(\rho(t))$, with $\rho^*(1)=\emptyset$. For a fixed $t$, the cut of $t$ is defined by the number of integer $s$ such that $t$ lies between $s$ and its parent, i.e. $\mathrm{cut}(t)=\{s\in[T]: \rho(s)<t\le s\}$. The depth $d(\rho)$ and width $w(\rho)$ of the multi-scaled random walk defined in Definition~\ref{def:mrw} are defined and bounded by
\begin{align*}
d(\rho)&=\max_{t\in[T]}|\rho^*(t)|\le \lfloor\log_2 T\rfloor+1,\\
w(\rho)&=\max_{t\in[T]}|\mathrm{cut}(t)|\le \lfloor\log_2 T\rfloor+1.
\end{align*}
\end{lemma}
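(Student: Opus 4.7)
The plan is to reduce both the depth and the width bound to elementary facts about binary representations, observing that the map $\rho$ acts on integers by erasing the lowest set bit.

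First I would verify the key structural observation: writing any $t\in[T]$ as $t=2^{k}m$ with $m$ odd, we have $\rho(t)=t-2^{k}=2^{k}(m-1)$. In binary, this means $\rho$ flips the lowest $1$-bit of $t$ to $0$ and leaves all other bits unchanged. Iterating $\rho$ therefore strips off set bits one at a time until we reach $0$, so $|\rho^{*}(t)|$ equals the Hamming weight (number of $1$-bits) of $t$. Since any $t\le T$ has at most $\lfloor \log_{2}T\rfloor+1$ bits, its Hamming weight is at most $\lfloor \log_{2}T\rfloor+1$, giving the depth bound $d(\rho)\le \lfloor \log_{2}T\rfloor+1$.

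For the width bound, fix $t\in[T]$ and consider $s\in\mathrm{cut}(t)$, so $\rho(s)<t\le s$. Writing $s=2^{k(s)}m_{s}$ with $m_{s}$ odd, we have $s-\rho(s)=2^{k(s)}$, so the cut condition becomes $s\in[t,\,t+2^{k(s)}-1]$. I would then argue that for each fixed nonnegative integer $k$ there is at most one $s\in\mathrm{cut}(t)$ with $k(s)=k$: such an $s$ must be a multiple of $2^{k}$ lying in the length-$2^{k}$ interval $[t,t+2^{k}-1]$, which contains exactly one multiple of $2^{k}$. Finally, since $s\le T$ forces $2^{k(s)}\le T$ and hence $k(s)\le \lfloor \log_{2}T\rfloor$, the number of admissible values of $k$ is at most $\lfloor \log_{2}T\rfloor+1$, yielding $|\mathrm{cut}(t)|\le \lfloor \log_{2}T\rfloor+1$ as claimed.

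Neither step presents a real obstacle; the only point requiring mild care is the width argument, where one must be precise that the interval $[t,t+2^{k}-1]$ has exactly $2^{k}$ integers and thus exactly one multiple of $2^{k}$, so that "at most one $s$ per level $k$" is rigorous. With the two bounds established, combining them gives the stated inequalities on $d(\rho)$ and $w(\rho)$.
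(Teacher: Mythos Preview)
Your proposal is correct and follows essentially the same route as the paper's sketch: both exploit that $\rho$ clears the lowest set bit in the binary representation, so the depth equals the Hamming weight of $t$, bounded by $\lfloor\log_2 T\rfloor+1$. For the width, the paper's summary phrases the bound as ``number of $0$'s in the binary representation of $t$'' while you count admissible $2$-adic levels $k$ directly and show each contributes at most one $s$ to $\mathrm{cut}(t)$; these are equivalent framings of the same binary-structure argument and yield the same bound.
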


\subsection{Bound on loss functions}
\label{sec:bound-lower-loss}

To establish a meaningful lower bound, we need to have $f_t$ be bounded. It is unclear that $f_t$'s are bounded due to the component of $\bar{n}_t$ in $L_t(x_t,\eps)$. However, by the bound on the depth of $\{\bar{n}_t\}_{t=1}^T$, we argue in this section that by setting $\sigma \sim \frac{1}{\log T} $, we can without loss of generality assume that $f_t\in[-3,3]$. In particular, Lemma~\ref{lem:depth-mrw} implies a tail bound on process $\{\bar{n}_t\}_{t=1}^T$, which directly follows from tail bounds for the sum of i.i.d. Gaussian random variables.
\begin{lemma}
\label{lem:gaussian-bounded}
The random process $\{\bar{n}_t\}_{t=1}^T$ obeys the tail bound that $\forall \gamma\in (0,1)$,
\begin{align*}
\mathbb{P}\left(\max_{1\le t\le T} |\bar{n}_t|>\sigma \cdot  2 \log\left(\frac{T}{\gamma}\right)\right) \le \gamma. 
\end{align*}
\end{lemma}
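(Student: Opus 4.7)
The plan is to exploit the fact that each $\bar{n}_t$ is a sum of only $O(\log T)$ independent Gaussian increments, apply a standard sub-Gaussian tail bound to each coordinate, and then take a union bound over $t\in[T]$.

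First, I would unroll the recursion $\bar{n}_t = \bar{n}_{\rho(t)}+\xi_t$ (with the convention $\bar{n}_0 = 0$) to obtain the explicit representation
\[
\bar{n}_t \;=\; \xi_t + \xi_{\rho(t)} + \xi_{\rho(\rho(t))} + \cdots \;=\; \xi_t + \sum_{s\in \rho^*(t)} \xi_s.
\]
Since the indices appearing in this sum are all distinct and the $\xi_s$ are i.i.d. $N(0,\sigma^2)$, we conclude that $\bar{n}_t$ is a centered Gaussian with variance $(|\rho^*(t)|+1)\sigma^2$. By Lemma~\ref{lem:depth-mrw}, $|\rho^*(t)| \le d(\rho) \le \lfloor \log_2 T\rfloor + 1$, so $\mathrm{Var}(\bar{n}_t) \le (\log_2 T + 2)\sigma^2$ uniformly in $t$.

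Next, I would apply the standard Gaussian tail bound: for $X\sim N(0,\tau^2)$, $\mathbb{P}(|X|>x) \le 2\exp(-x^2/(2\tau^2))$. This yields, for every $t\in[T]$ and $x>0$,
\[
\mathbb{P}\bigl(|\bar{n}_t| > x\bigr) \;\le\; 2\exp\!\left(-\frac{x^2}{2(\log_2 T + 2)\sigma^2}\right).
\]
A union bound over the $T$ time steps gives
\[
\mathbb{P}\bigl(\max_{1\le t\le T}|\bar{n}_t| > x\bigr) \;\le\; 2T\exp\!\left(-\frac{x^2}{2(\log_2 T + 2)\sigma^2}\right).
\]

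Finally, I would choose $x = \sigma \cdot 2\log(T/\gamma)$ and verify algebraically that for this choice the right-hand side is at most $\gamma$. Concretely, one needs $x^2 \ge 2(\log_2 T + 2)\sigma^2 \log(2T/\gamma)$, which follows since $4\log^2(T/\gamma)$ dominates $2(\log_2 T + 2)\log(2T/\gamma)$ for the ranges of $T,\gamma$ under consideration (with room to spare, because the authors' stated constant $2$ is loose relative to the sharper $\sqrt{2(\log_2 T + 2)\log(2T/\gamma)}$ obtained directly). No step here is genuinely difficult; the only substantive ingredient is the depth bound from Lemma~\ref{lem:depth-mrw}, which is what turns a potentially unbounded random walk into one whose variance grows only logarithmically in $T$.
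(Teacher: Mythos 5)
Your proof takes essentially the same route as the paper's: unroll the recursion to write $\bar{n}_t$ as a sum of at most $\lfloor\log_2 T\rfloor+2$ independent Gaussian increments (so $\bar{n}_t \sim N(0,(|\rho^*(t)|+1)\sigma^2)$), apply the Gaussian tail bound to each $t$, and union bound over $t\in[T]$. Both arguments are slightly loose with constants (the paper silently drops the factor of $2$ from the two-sided tail and absorbs the $\log_2 T + 2$ denominator into the requirement that $T$ be sufficiently large, which you make explicit); since the lemma is only used to establish a $\tilde{\Omega}$ bound this looseness is harmless, and your proof is correct.
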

\begin{proof}
By Gaussian tail bound we have
\begin{align*}
&\mathbb{P}\left(\max_{1\le t\le T} |\bar{n}_t|>\sigma \cdot 2 \log\left(\frac{T}{\gamma}\right)\right)\\
&\le \sum_{t=1}^T \mathbb{P}\left(\left|\sum_{s\in\{\rho^*(t)\cup\{t\}\}}\xi_s\right|>\sigma \cdot 2 \log\left(\frac{T}{\gamma}\right) \right)\\
&\le \sum_{t=1}^T \exp\left(-\log\frac{T}{\gamma}\right)\\
&=\gamma,
\end{align*}
since $\sum_{s\in\{\rho^*(t)\cup\{t\}\}}\xi_s$ is $N(0, (|\rho^*(t)|+1)\sigma^2)$ distributed. 
\end{proof}
Note that by definition of $L_t$, Lemma~\ref{lem:gaussian-bounded} implies that by setting
\begin{align*}
\sigma=\frac{1}{2 \log T},
\end{align*}
and $\eps \le 1$, $\gamma=\frac{1}{T}$, we have with probability at least $1-\frac{1}{T}$,
\begin{align*}
\max_{t\in[T]}|L_t(x_t;\eps)|\le 1 + \max_{t\in[T]}|\bar{n}_t|\le 2. 
\end{align*}
Thus, we can without loss of generality assume that $L_t(x_t;\eps)$ is indeed bounded between $[-3,3]$ for all $t$: note that since the addition $\bar{n}_t$ to $L_t(x;\eps)$ cancels for the learner and the comparator, the regret is always bounded by
\begin{align*}
\regret_T=\sum_{t=1}^T \eps \ell^*(x_t-x^*)+M\le (\eps+1)T\le 2T. 
\end{align*}
Let $A$ denote the event when $L_t(x_t;\eps)\in[-3,3]$, $\forall t\in[T]$, then if $\E[\regret_T]=\tilde{\Omega}(T^{\frac{2}{3}})$, since
\begin{align*}
2\gamma T+\E[\regret_T\mid A]\ge \E[\regret_T] = \tilde{\Omega}(T^{\frac{2}{3}}),
\end{align*}
we have that $\E[\regret_T\mid A]=\tilde{\Omega}(T^{\frac{2}{3}})$.

\subsection{$T^{\frac{2}{3}}$-lower bound}
\label{sec:lower-proof}
We prove a lower bound against any randomized algorithm. For any randomized algorithm, the decision random variable $X_t$ played at time $t$ is a random function of the observations $Y_1,\dots,Y_{t-1}$, where $Y_t=L_t(x_t;\eps)$. 
Note that $Y_t$ decomposes as
\begin{align*}
Y_t=Y_{\rho(t)}+\eps\ell^*(X_t-X_{t-2})-\eps\ell^*(X_{\rho(t)}-X_{\rho(t)-2})+\xi_t.
\end{align*}
Then, conditioning on $Y_{1:t-1}$ and $X_{1:t}$,
\begin{align*}
Y_t\mid (Y_{1:t-1}, X_{1:t}, \ell^*=1)&\overset{D}{=}N\left(Y_{\rho(t)}+\eps(X_t-X_{t-2})-\eps(X_{\rho(t)}-X_{\rho(t)-2}), \sigma^2\right),\\
Y_t\mid (Y_{1:t-1},X_{1:t},\ell^*=-1)&\overset{D}{=}N\left(Y_{\rho(t)}-\eps(x_t-x_{t-2})+\eps(x_{\rho(t)}-x_{\rho(t)-2}), \sigma^2\right).
\end{align*}
We are interested in the KL divergence of the distribution $\mathbb{Q}_0^{t\mid t-1}$ and $\mathbb{Q}_1^{t\mid t-1}$ of the random variable $Y_t$. The KL divergence between two uni-variate Gaussian distributions with the same variance $\sigma^2$ and different expectation $\mu_1$, $\mu_2$ is given by 
\begin{align*}
D_{\mathrm{KL}}(N(\mu_1,\sigma^2)\parallel N(\mu_2,\sigma^2))=\frac{(\mu_1-\mu_2)^2}{2\sigma^2}.
\end{align*}
For finite sequence of random variables $\{X_s\}_{s\in S_X}, \{Y_s\}_{s\in S_Y}$ mapping from $(\Omega, \Sigma, \mathbb{P})\rightarrow (\mathbb{R},\mathcal{B}_{\mathbb{R}}, \mathrm{Leb}_{\mathbb{R}})$, we slightly abuse notation and denote as $\mathbb{P}(\{X_s\}_{s\in S_X}\mid \{Y_{s}\}_{s\in S_Y})$ the joint distribution of $\{X_s\}_{s\in S_X}$ conditioning on the sub product $\sigma$-algebra of $\Sigma$ generated by the sequence of random variables $\{Y_s\}_{s\in S_Y}$. We have then
\begin{align}
\label{eq:kl-divergence-single}
&D_{\mathrm{KL}}(\mathbb{P}(Y_t\mid (Y_{1:t-1}, X_{1:t}, \ell^*=1))\parallel \mathbb{P}(Y_t\mid (Y_{1:t-1}, X_{1:t}, \ell^*=-1))) \notag\\
&=\frac{(2\eps(X_t-X_{t-2})-2\eps(X_{\rho(t)}-X_{\rho(t)-2}))^2}{2\sigma^2} \notag\\
&\le \frac{8\eps^2(X_t-X_{t-2})^2+8\eps^2(X_{\rho(t)}-X_{\rho(t)-2})^2}{2\sigma^2} \notag\\
&\le \frac{8\eps^2}{\sigma^2}[(X_t-X_{t-1})^2+(X_{t-1}-X_{t-2})^2+(X_{\rho(t)}-X_{\rho(t)-1})^2+(X_{\rho(t)-1}-X_{\rho(t)-2})^2],
\end{align}
where we use the fact that $(a+b)^2\le 2(a^2+b^2)$. The KL divergence satisfies the chain rule stated in the following lemma.

\begin{lemma}[Chain rule for KL divergence]
\label{lem:KL-divergence-chain}
Let $X,Y$ be two random variables, and let $\mathbb{P}$, $\mathbb{Q}$ be two joint distribution over $X, Y$. Let $\mathbb{P}_X$, $\mathbb{Q}_X$ be the marginal distribution of $X$ under $\mathbb{P}$, $\mathbb{Q}$, respectively, and let $\mathbb{P}_{Y\mid X}$, $\mathbb{Q}_{Y\mid X}$ be the conditional distribution of $Y$ given $X$ under $\mathbb{P}$, $\mathbb{Q}$. Then,
\begin{align*}
D_{\mathrm{KL}}(\mathbb{P}\parallel \mathbb{Q})= D_{\mathrm{KL}}(\mathbb{P}_X\parallel \mathbb{Q}_X)+\E_{X\sim \mathbb{P}_X}[D_{\mathrm{KL}}(\mathbb{P}_{Y\mid X}\parallel \mathbb{Q}_{Y\mid X})].
\end{align*}
\end{lemma}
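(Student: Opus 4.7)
The plan is to prove the chain rule by a direct computation starting from the definition of KL divergence and exploiting the factorization of joint distributions into marginals and conditionals. First I would assume $\mathbb{P}\ll\mathbb{Q}$ (otherwise both sides are $+\infty$ and there is nothing to prove), so that the Radon–Nikodym derivative $d\mathbb{P}/d\mathbb{Q}$ exists. Under this assumption, the disintegration theorem yields $\mathbb{P}_X\ll\mathbb{Q}_X$ and $\mathbb{P}_{Y\mid X=x}\ll\mathbb{Q}_{Y\mid X=x}$ for $\mathbb{P}_X$-almost every $x$, together with the product-rule factorization
\[
\frac{d\mathbb{P}}{d\mathbb{Q}}(x,y)=\frac{d\mathbb{P}_X}{d\mathbb{Q}_X}(x)\cdot\frac{d\mathbb{P}_{Y\mid X=x}}{d\mathbb{Q}_{Y\mid X=x}}(y).
\]

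The second step is to take logarithms and use linearity of expectation. Writing $D_{\mathrm{KL}}(\mathbb{P}\parallel\mathbb{Q})=\E_{(X,Y)\sim\mathbb{P}}\bigl[\log(d\mathbb{P}/d\mathbb{Q})(X,Y)\bigr]$, the factorization above gives
\[
D_{\mathrm{KL}}(\mathbb{P}\parallel\mathbb{Q})=\E_{(X,Y)\sim\mathbb{P}}\!\left[\log\tfrac{d\mathbb{P}_X}{d\mathbb{Q}_X}(X)\right]+\E_{(X,Y)\sim\mathbb{P}}\!\left[\log\tfrac{d\mathbb{P}_{Y\mid X}}{d\mathbb{Q}_{Y\mid X}}(Y)\right].
\]
The first summand depends only on $X$, so by the definition of the marginal it equals $\E_{X\sim\mathbb{P}_X}[\log(d\mathbb{P}_X/d\mathbb{Q}_X)(X)]=D_{\mathrm{KL}}(\mathbb{P}_X\parallel\mathbb{Q}_X)$.

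For the second summand I would invoke the tower property of conditional expectation: conditioning on $X$ and then averaging,
\[
\E_{(X,Y)\sim\mathbb{P}}\!\left[\log\tfrac{d\mathbb{P}_{Y\mid X}}{d\mathbb{Q}_{Y\mid X}}(Y)\right]=\E_{X\sim\mathbb{P}_X}\!\left[\E_{Y\sim\mathbb{P}_{Y\mid X}}\!\left[\log\tfrac{d\mathbb{P}_{Y\mid X}}{d\mathbb{Q}_{Y\mid X}}(Y)\,\Big|\,X\right]\right]=\E_{X\sim\mathbb{P}_X}\!\bigl[D_{\mathrm{KL}}(\mathbb{P}_{Y\mid X}\parallel\mathbb{Q}_{Y\mid X})\bigr].
\]
Adding the two pieces yields the claim. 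The only mildly delicate point is the measurability and integrability bookkeeping in the disintegration step; this is standard and can be taken for granted under the usual assumption of Polish spaces (which holds in our application, where all random variables are real-valued). So the main ``obstacle'' is purely notational rather than substantive, and the proof is essentially a one-line calculation once the Radon–Nikodym factorization is written down.
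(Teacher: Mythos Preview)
Your proof is correct and is the standard argument for the chain rule of KL divergence. The paper itself does not prove this lemma; it is stated as a known fact and used directly, so there is no alternative approach to compare against.
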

By Lemma~\ref{lem:KL-divergence-chain}, we can bound the KL divergence of the joint distribution of $Y_{1:T}, X_{1:T}$.
\begin{align}
&D_{\mathrm{KL}}(\mathbb{P}(Y_{1:T},X_{1:T}\mid \ell^*=1)\parallel \mathbb{P}(Y_{1:T},X_{1:T}\mid \ell^*=-1))\\ 
&=\underbrace{D_{\mathrm{KL}}(\mathbb{P}(Y_{1:T-1},X_{1:T}\mid \ell^*=1)\parallel \mathbb{P}(Y_{1:T-1},X_{1:T}\mid \ell^*=-1))}_{(1)}\\
& \ \ \ \ \ +\E_{Y_{1:T-1}, X_{1:T}\mid \ell^*=1}[D_\mathrm{KL}(\mathbb{P}(Y_T\mid Y_{1:T-1, },X_{1:T},\ell^*=1)\parallel\mathbb{P}(Y_T\mid Y_{1:T-1, },X_{1:T},\ell^*=-1)] \notag,
\end{align}
where $(1)$ can be further decomposed as
\begin{align*}
(1)\\
&=D_{\mathrm{KL}}(\mathbb{P}(Y_{1:T-1},X_{1:T-1}\mid \ell^*=1)\parallel \mathbb{P}(Y_{1:T-1},X_{1:T-1}\mid \ell^*=-1))\\
& \ \ \ \ \ + \E_{Y_{1:T-1},X_{1:T-1}\mid \ell^*=1}[\underbrace{D_{\mathrm{KL}}(\mathbb{P}(X_T\mid Y_{1:T-1}, X_{1:T-1},\ell^*=1)\parallel \mathbb{P}(X_T\mid Y_{1:T-1},X_{1:T-1},\ell^*=-1))}_{(2)}].
\end{align*}
Note that the distribution of $X_T\mid Y_{1:T-1},X_{1:T-1}$ is independent of $\ell^*$ since $X_T$ depends on $\ell^*$ only through $Y_{1:T-1}$, thus we have $(2)=0$. Recursively apply this argument, we have that 
\begin{align}
&D_{\mathrm{KL}}(\mathbb{P}(Y_{1:T},X_{1:T}\mid \ell^*=1)\parallel \mathbb{P}(Y_{1:T},X_{1:T}\mid \ell^*=-1))\\ 
&=\sum_{t=1}^T \E_{Y_{1:t-1}, X_{1:t}\mid \ell^*=1}[D_\mathrm{KL}(\mathbb{P}(Y_t\mid Y_{1:t-1, },X_{1:t},\ell^*=1)\parallel\mathbb{P}(Y_t\mid Y_{1:t-1, },X_{1:t},\ell^*=-1)] \notag \\
&\le \frac{8\eps^2}{\sigma^2}\sum_{t=1}^T\E_{X_{1:t}}[(X_t-X_{t-1})^2+ (X_{t-1}-X_{t-2})^2+(X_{\rho(t)}-X_{\rho(t)-1})^2 \notag \\
& \ \ \ \ \  \ \ \ \ \ \ \ \  \ \ \ \ \ \ +(X_{\rho(t)-1}-X_{\rho(t)-2})^2\mid \ell^*=1] \notag\\
&\le \frac{16\eps^2}{\sigma^2}\sum_{t=1}^T \E_{X_{1:t}}[(X_t-X_{t-1})^2 +(X_{\rho(t)}-X_{\rho(t)-1})^2\mid \ell^*=1] \notag\\
&\le \frac{16(\lfloor \log_2 T\rfloor+2)\eps^2}{\sigma^2}\E_{X_{1:T}}[M\mid \ell^*=1], \label{eqn:shalom}
\end{align}
where the first inequality follows from Eq.~(\ref{eq:kl-divergence-single}). To see the last inequality, note that we can bound
\begin{align*}
&\E_{X_{1:T}}\left[\sum_{t=1}^T (X_{\rho(t)}-X_{\rho(t)-1})^2\mid \ell^*=1\right]\\
&\le \left(\max_{t\in[T]} |\{s\in[T]:\rho(s)=t\}|\right)\E_{X_{1:T}}\left[\sum_{t=1}^T(X_t-X_{t-1})^2\mid \ell^*=1\right]\\
&\le w(\rho)\E_{X_{1:T}}\left[\sum_{t=1}^T(X_t-X_{t-1})^2\mid \ell^*=1\right],
\end{align*}
where the last inequality follows since if for some $t$, $|\{s\in[T]:\rho(s)=t\}|=n\ge 1$, then let $\{s_1,\dots,s_n\}$ be set of all $s\in[T]$ satisfying $\rho(s)=t$ in increasing order. We note that $\mathrm{cut}(s_1)\ge n$ since $\forall i\in\{1,\dots, n\}$, $t=\rho(s_i)=\rho(s_1)<s_1\le s_i$. Therefore, we have $w(\rho)\ge \max_{t\in[T]} |\{s\in[T]:\rho(s)=t\}|$. 

The KL divergence gives a bound on the probability measure of any measurable event $A$ through Pinsker's inequality, stated below:

\begin{lemma}[Pinsker's inequality]
\label{lemma:pinsker}
Let $\mathbb{P}, \mathbb{Q}$ be two probability distributions on a measurable space $(Y,\F)$, then we have
\begin{align*}
\sup_{A\in\F}\{|\mathbb{P}(A)-\mathbb{Q}(A)|\}\le \sqrt{\frac{1}{2}D_{\mathrm{KL}}(\mathbb{P}\parallel \mathbb{Q})}. 
\end{align*}
\end{lemma}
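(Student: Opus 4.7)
The plan is to prove Pinsker's inequality via the textbook two-step reduction: coarsen the problem to the Bernoulli case, then close it with a one-variable convexity calculation. First, I would identify the left-hand side as the total variation distance and observe that the supremum is attained on the Radon--Nikodym set $A^\star := \{y : p(y) > q(y)\}$, where $p, q$ are densities of $\mathbb{P}, \mathbb{Q}$ with respect to any dominating measure (e.g.\ $\mathbb{P} + \mathbb{Q}$). Indeed, for every $A \in \mathcal{F}$, $\mathbb{P}(A) - \mathbb{Q}(A) = \int_A (p - q)\, d\mu \leq \int_{A^\star}(p-q)\, d\mu$, so the supremum equals $\delta := \mathbb{P}(A^\star) - \mathbb{Q}(A^\star)$. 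If $\mathbb{P} \not\ll \mathbb{Q}$ then $D_{\mathrm{KL}}(\mathbb{P}\|\mathbb{Q}) = +\infty$ and the inequality is vacuous, so assume absolute continuity hereafter.

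Next, I would apply the data-processing inequality for KL divergence to the binary coarsening $\phi(y) = \mathbf{1}[y \in A^\star]$. Under $\phi$, the measures $\mathbb{P}, \mathbb{Q}$ push forward to Bernoulli laws with parameters $p := \mathbb{P}(A^\star)$ and $q := \mathbb{Q}(A^\star)$, satisfying $p - q = \delta$. Data processing, which is a direct consequence of the log-sum inequality (equivalently, joint convexity of $(a,b) \mapsto a \log(a/b)$), gives
\[
d(p \| q) \;\leq\; D_{\mathrm{KL}}(\mathbb{P} \| \mathbb{Q}), \qquad d(p\|q) := p \log \tfrac{p}{q} + (1-p)\log\tfrac{1-p}{1-q}.
\]
Thus it suffices to prove the scalar inequality $2(p-q)^2 \leq d(p\|q)$ for all $p, q \in (0,1)$, with the boundary cases $p, q \in \{0,1\}$ handled trivially or by continuity.

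For the scalar inequality, fix $q$ and set $g(p) := d(p\|q) - 2(p-q)^2$. A direct computation gives $g(q) = 0$, $g'(q) = \log(q/q) - \log((1-q)/(1-q)) - 0 = 0$, and
\[
g''(p) \;=\; \frac{1}{p(1-p)} - 4 \;\geq\; 0,
\]
since $p(1-p) \leq \tfrac{1}{4}$ on $[0,1]$. Therefore $g$ is convex with minimum at $p = q$, so $g \geq 0$ globally, yielding $2\delta^2 \leq d(p\|q) \leq D_{\mathrm{KL}}(\mathbb{P}\|\mathbb{Q})$ and hence the claim after taking square roots.

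The proof is essentially frictionless; the only potential obstacle is the bookkeeping around degenerate regimes ($\mathbb{P} \not\ll \mathbb{Q}$ or $p, q \in \{0,1\}$), which are either trivial or absorbed by continuity. The real content is concentrated in the single line $g''(p) = 1/(p(1-p)) - 4 \geq 0$, with the data-processing step doing the work of compressing an arbitrary measurable space down to this one-dimensional calculus exercise.
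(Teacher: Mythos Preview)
Your proof is correct and is the standard textbook argument for Pinsker's inequality. The paper itself does not prove this lemma; it is simply stated as a classical result and invoked as a tool in the lower-bound construction, so there is no ``paper's own proof'' to compare against. Your reduction to the Bernoulli case via data processing followed by the one-line convexity check $g''(p) = 1/(p(1-p)) - 4 \geq 0$ is exactly the usual route and is complete as written.
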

Let $\F$ be the $\sigma$-algebra generated by the random variables $Y_{1:T},X_{1:T}$. Then, Pinsker's inequality applied to \eqref{eqn:shalom},
\begin{align*}
& \quad \sup_{A\in\F} \{|\mathbb{P}(A\mid \ell^*=1)-\mathbb{P}(A\mid \ell^*=-1)|\}\\
&\le \sqrt{\frac{1}{2}D_{\mathrm{KL}}(\mathbb{P}(Y_{1:T},X_{1:T}\mid \ell^*=1)\parallel\mathbb{P}(Y_{1:T},X_{1:T}\mid \ell^*=-1))}\\
&\le \frac{\sqrt{8(\lfloor \log_2 T\rfloor +2)}\eps}{\sigma}\sqrt{\E[M\mid \ell^*=1]}.
\end{align*}
Similar to the above analysis, if we switch $\mathbb{Q}_0$ and $\mathbb{Q}_1$, we will have a symmetrical inequality:
\begin{align*}
& \quad \sup_{A\in\F} \{|\mathbb{P}(A\mid \ell^*=-1)-\mathbb{P}(A\mid \ell^*=1)|\}\\
&\le \sqrt{\frac{1}{2}D_{\mathrm{KL}}(\mathbb{P}(Y_{1:T},X_{1:T}\mid \ell^*=-1)\parallel\mathbb{P}(Y_{1:T},X_{1:T}\mid \ell^*=1))}\\
&\le \frac{\sqrt{8(\lfloor \log_2 T\rfloor +2)}\eps}{\sigma}\sqrt{\E[M\mid \ell^*=-1]}.
\end{align*}
Combining the two inequalities and by concavity of the function $f(x)=\sqrt{x}$ for $x\ge 0$, 
\begin{align*}
& \quad \sup_{A\in\F} \{|\mathbb{P}(A\mid \ell^*=1)-\mathbb{P}(A\mid \ell^*=-1)|\}\\
&\le \frac{\sqrt{8(\lfloor \log_2 T\rfloor +2)}\eps}{\sigma}\left(\frac{1}{2}\sqrt{\E[M\mid \ell^*=1]}+\frac{1}{2}\sqrt{\E[M\mid \ell^*=-1]}\right)\\
&\le \frac{\sqrt{8(\lfloor \log_2 T\rfloor +2)}\eps}{\sigma}\sqrt{\E[M]}.
\end{align*}
Since we view the dependence on $x_{t-2}$ in $L_t$ as an adaptive adversary, and therefore the optimal fixed point $x^*\in[\delta, 1-\delta]$ would be $x^*=\delta$ if $\ell^*=1$ and $x^*=1-\delta$ if $\ell^*=-1$, we can express the regret as
\begin{align*}
\E[\mathrm{Regret}_T]&\ge\left[\sum_{t=1}^T \frac{1}{2} \E[\eps (X_t-\delta)\mid \ell^*=1]+\frac{1}{2}\E[\eps(1-\delta-X_t)\mid \ell^*=-1]\right]+\E[M]\\
&=\frac{\eps T}{2}-\eps\delta T-\frac{\eps}{2}\left[\sum_{t=1}^T\E[X_t\mid \ell^*=1]-\E[X_t\mid \ell^*=-1]\right]+\E[M].
\end{align*}
Denote $Z=\sum_{t=1}^T X_t$. $Z$ is by definition measurable with respect to $\F$. 
Since $X_t\in[0,1]$, $\forall t$, $Z\in[0, T]$. We can further express the sum of differences between the expectation of $x_t$ under $\ell^*=1$ and $\ell^*=-1$ by
\begin{align*}
\sum_{t=1}^T\E[X_t\mid \ell^*=1]-\E[X_t\mid \ell^*=-1]&=\E\left[Z\mid \ell^*=1\right]-\E\left[Z\mid \ell^*=-1\right]\\
&=\int_{0}^T  z(\mathbb{P}(Z\in dz\mid \ell^*=1)-\mathbb{P}(Z\in dz\mid \ell^*=-1))\\
&\le \frac{\sqrt{8(\lfloor\log_2 T\rfloor+2)}\eps}{\sigma}\sqrt{\E[M]} T,
\end{align*}
Therefore, the regret is lower bounded by
\begin{align*}
\E[\regret_T]&\ge\frac{\eps (1-2\delta) T}{2}-\frac{\eps^2 T}{2}\frac{\sqrt{8(\lfloor \log_2 T\rfloor +2)}}{\sigma}\sqrt{\E[M]}+\E[M],
\end{align*}
where the right hand side is a quadratic function of $\sqrt{\E[M]}$ minimized at
\begin{align*}
\sqrt{\E[M]}=\frac{\eps^2 T}{4}\frac{\sqrt{8(\lfloor \log_2 T\rfloor +2)}}{\sigma}
\end{align*}
with the minimal value equal to 
\begin{align*}
\frac{\eps (1-2\delta) T}{2}-\frac{\eps^4 T^2(\lfloor \log_2 T\rfloor +2)}{\sigma^2} \geq \frac{\eps (1- 2 \delta)T}{2} - \frac{2 \eps^4 T^2 \log T}{\sigma^2} . 
\end{align*}
Recall that $\sigma$ is chosen to be 
$ \sigma =\frac{1}{2 \log T}$,
and choosing $\eps = \Theta(T^{-1/3} / \log T) $ 
we have
\begin{align*}
\E[\regret_T]&\ge \frac{\eps (1-2\delta) T}{2}-\frac{2 \eps^4 T^2\log T }{\sigma^2} = \tilde{\Omega}(T^{2/3}) .
\end{align*}


\section{BCO-M upper bound}
\label{sec:bco-m-upper-bound}
In this section, we provide a first-order, proper learning algorithm (\cref{alg:bqo-memory-first-order}) for BCO-M problems that achieves $\tilde{O}(T^{2/3})$ regret upper bound for convex quadratic, smooth functions that can have adaptivity described in Assumption~\ref{assumption:adversary-adaptivity}, which matches the lower bound established in \cref{thm:lower-bound}. The algorithm is based on the FTRL (Follow-the-Regularized-Leader, \cite{shalev2007primal}) algorithm and makes use of self-concordant barriers over compact convex sets to ensure proper learning. 

\subsection{Preliminaries on Self-Concordant Barriers}
First, we introduce self-concordant barriers, which we will make use of in Algorithm~\ref{alg:bqo-memory-first-order}.

\begin{definition} [Self-concordant barrier]
A $\mathcal{C}^3$ function $R(\cdot)$ over a closed convex set $\K\subset\mathbb{R}^d$ with non-empty interior is a $\nu$-self-concordant barrier of $\K$ if it satisfies the following two properties:
\begin{enumerate}
\item (Boundary property) For any sequence $\{x_n\}_{n\in\mathbb{N}}\subset \mathrm{int}(\K)$ such that $\lim_{n\rightarrow\infty}x_n=x\in\partial\K$, $\lim_{n\rightarrow\infty}R(x_n)=\infty$. 
\item (Self-concordant) $\forall x\in\mathrm{int}(\K)$, $h\in\mathbb{R}^d$, 
\begin{enumerate}
\item $|\nabla^3R(x)[h,h,h]|\le 2|\nabla^2 R(x)[h,h]|^{3/2}$.
\item $|\langle \nabla R(x),h\rangle|\le \sqrt{\nu}|\nabla^2 R(x)[h,h]|^{1/2}$. 
\end{enumerate}
\end{enumerate}
\end{definition} 
It is well known that self-concordant functions satisfy the following properties:

\begin{proposition}[Properties of self-concordant functions, \cite{suggala2021efficient}]
$\nu$-self-concordant barriers over $\K$ satisfy the following properties:
\label{prop:self-concordant-barrier}
\begin{enumerate}
\item Sum of two self-concordant functions is self-concordant. Linear and quadratic functions are self-concordant.
\item If $x,y\in\K$ satisfies $\|x-y\|_{\nabla^2 R(x)}<1$, then the following inequality holds:
\begin{align}
(1-\|x-y\|_{\nabla^2 R(x)})^2 \nabla^2 R(x)\preceq \nabla^2 R(y)\preceq \frac{1}{(1-\|x-y\|_{\nabla^2 R(x)})^2}\nabla^2 R(x). \label{eqn:scb-hessian-inequality}
\end{align}
\item The Dikin ellipsoid centered at any point in the interior of $\K$ w.r.t. a self-concordant barrier $R(\cdot)$ over $\K$ is completely contained in $\K$. Namely,
\begin{align}
\{y\in\mathbb{R}^d\mid \|y-x\|_{\nabla^2 R(x)}\le 1\}\subset \K, \ \ \forall x\in \text{int}(\K). \label{eqn:dikin-ellipsoid}
\end{align}
where 
\begin{align*}
    \|v\|_{\nabla^2 R(x)}\defeq\sqrt{v^\top \nabla^2 R(x) v}
\end{align*}
\item $\forall x,y\in\text{int}(\K)$:
\begin{align*}
R(y)-R(x)\le \nu\log\frac{1}{1-\pi_x(y)},
\end{align*}
where $\pi_x(y)\defeq\inf\{t\ge0:x+t^{-1}(y-x)\in\K\}$.
\end{enumerate}
\end{proposition}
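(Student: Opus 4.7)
The four items collected in this proposition are classical facts about self-concordant barriers going back to Nesterov--Nemirovski, so my plan is simply to verify each item in turn using the standard toolkit; item 2 is the technical heart from which items 3 and 4 largely follow. For item 1, closure under addition falls out directly from the definition: writing $R = R_1 + R_2$, applying the triangle inequality to $|\nabla^3 R[h,h,h]|$, and then using the elementary bound $a^{3/2} + b^{3/2} \le (a+b)^{3/2}$ for $a,b \ge 0$, one recovers the required third-derivative bound for $R$ from those for $R_1$ and $R_2$. The corresponding check for the gradient bound in 2(b) is analogous. Linear and quadratic functions are trivially self-concordant since their third derivatives vanish, making 2(a) vacuous.

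For item 2, I would fix a direction $h$ and study $\phi(t) = h^\top \nabla^2 R(x + t(y-x)) h$ on $t \in [0,1]$. Differentiating and invoking self-concordance (after a polarization step that upgrades the diagonal bound $|\nabla^3 R[u,u,u]| \le 2 (u^\top \nabla^2 R\, u)^{3/2}$ to a multilinear version) yields $|\phi'(t)| \le 2 \|y - x\|_{\nabla^2 R(x+t(y-x))} \phi(t)$. Setting $\psi(t) = \phi(t)^{-1/2}$ turns this into a linear differential inequality $|\psi'(t)| \le \|y-x\|_{\nabla^2 R(x+t(y-x))}$, whose integration, combined with a bootstrap that treats $\|y-x\|_{\nabla^2 R(\cdot)}$ as roughly constant along the segment for $\|y-x\|_{\nabla^2 R(x)} < 1$, produces the sandwich bound in \eqref{eqn:scb-hessian-inequality}. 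Item 3 is then essentially a corollary: if $\|y-x\|_{\nabla^2 R(x)} < 1$, item 2 implies that $\nabla^2 R$ (hence $R$) stays finite along the segment from $x$ to $y$, contradicting the boundary blow-up property unless $y \in \mathrm{int}(\K)$; a limiting argument handles the closed Dikin ellipsoid.

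For item 4, I would parameterize the ray from $x$ toward $y$ by $z(s) = x + s(y-x)$ and write $R(y) - R(x) = \int_0^1 \langle \nabla R(z(s)), y - x \rangle\, ds$. Using property 2(b) of the definition to bound $|\langle \nabla R(z(s)), y - x\rangle|$ by $\sqrt{\nu} \, \|y-x\|_{\nabla^2 R(z(s))}$, together with the Hessian sandwich from item 2 to translate $\|y-x\|_{\nabla^2 R(z(s))}$ into a quantity of order $(\pi_x(y) - s)^{-1}$, produces an integrand that integrates to $\nu \log \frac{1}{1-\pi_x(y)}$ after rescaling $s$ to the Minkowski parameter. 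The main obstacle will be the polarization step in item 2 --- converting the diagonal self-concordance bound into the trilinear bound $|\nabla^3 R[u,v,w]| \le 2\|u\|_{\nabla^2 R}\|v\|_{\nabla^2 R}\|w\|_{\nabla^2 R}$ requires some care with symmetrization, but once in hand, the remaining manipulations are standard ODE-type estimates.
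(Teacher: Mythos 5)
The paper itself does not prove Proposition~\ref{prop:self-concordant-barrier}; it cites these classical facts to \cite{suggala2021efficient} (which in turn refers to the standard self-concordance literature of Nesterov--Nemirovski). Your sketches of items 1--3 are essentially the correct textbook route: the triangle inequality plus $a^{3/2} + b^{3/2} \le (a+b)^{3/2}$ for closure under addition, polarization plus the ODE $|\psi'| \le \psi^2$ for the Hessian sandwich, and the blow-up-on-the-boundary argument for the Dikin ellipsoid. (One quibble: your aside about ``the corresponding check for 2(b) is analogous'' is misleading --- the barrier parameter $\nu$ is additive, so the sum of two $\nu$-SCBs is a $2\nu$-SCB, not a $\nu$-SCB. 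Item 1 only claims closure of self-concordance, so this does not sink anything, but it is worth being precise.)

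There is, however, a genuine gap in your sketch for item 4. You propose to bound $|\langle \nabla R(z(s)), y-x\rangle|$ by $\sqrt{\nu}\,\|y-x\|_{\nabla^2 R(z(s))}$ via condition 2(b), and then use the Hessian sandwich from item 2 to control $\|y-x\|_{\nabla^2 R(z(s))}$ by something of order $(1 - s/\pi_x(y))^{-1}$. This does not go through. The Hessian sandwich is a \emph{local} statement, valid only while $\|y-x\|_{\nabla^2 R(x)} < 1$, i.e.\ inside the Dikin ball --- whereas item 4 must hold for any $y \in \mathrm{int}(\K)$, where the local-norm distance $\|y-x\|_{\nabla^2 R(x)}$ can be arbitrarily large even though $\pi_x(y) < 1$. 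Even in the regime where the bootstrap applies, it yields $\psi(t) \le r/(1-tr)$ with $r = \|y-x\|_{\nabla^2 R(x)}$, which has no direct relation to the Minkowski gauge $\pi_x(y)$; and the resulting integrated bound carries a factor $\sqrt{\nu}$, not $\nu$. The actual proof of item 4 goes through the semi-boundedness lemma: for every $x,y\in\mathrm{int}(\K)$ one has $\langle\nabla R(x),y-x\rangle \le \nu$. Applying this along the ray from $x$ through $y$ extended to the boundary at parameter $1/\pi_x(y)$ gives the pointwise estimate $\langle\nabla R(z(t)), y^\ast - x\rangle \le \nu/(1-t)$ (with $y^\ast$ the boundary point), which integrates to $\nu\log\frac{1}{1-\pi_x(y)}$. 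Semi-boundedness is itself a nontrivial consequence of condition 2(b) --- it is not immediate from the Hessian sandwich --- so your outline is missing a key lemma, not just a technical detail.
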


\subsection{BQO-M algorithm and analysis}
We introduce \cref{alg:bqo-memory-first-order}, the assumptions, and guarantees.

\begin{assumption}
\label{assumption:bqo-first-order}
We assume the convex compact constraint set $\K\subset\mathbb{R}^d$ and the convex, quadratic loss function $f_t:\K^m\rightarrow\mathbb{R}$ satisfies Assumption~\ref{assumption:diameter-grad-bound} and \ref{assumption:adversary-adaptivity} in \cref{subsec:bqo-am-prelims}. Moreover, the Hessian is bounded above and satisfies $\nabla^2 f_t\preceq\beta I_{dm}$.
\end{assumption}

\begin{algorithm}[H]
\caption{Bandit Quadratic Optimization with Memory}\label{alg:bqo-memory-first-order}
\begin{flushleft}
  {\bf Input:} convex compact set $\K$, step size $\eta>0$, perturbation parameter $\delta\in[0,1]$, $\alpha$-strongly convex $\nu$-self-concordant barrier $R$ over $\K$, memory length $m\in\mathbb{N}$, time horizon $T\in\mathbb{N}$. 
\end{flushleft}
\begin{algorithmic}[1]
\STATE Initialize: $x_{1}=\dots=x_{m}=\argmin_{x\in\K}R(x)$, $\tilde{g}_{0:m-1} = \mathbf{0}_{d}$, $ A_1=\dots=A_m = \nabla^2R(x_1)$. \label{line:initialize-first-order-bqo-m}
\STATE Sample $u_t\sim \sphere^{d-1}$ i.i.d. uniformly at random for $t=1,\dots,m$. 
\STATE Set $y_t = x_t + \delta A_{t}^{-\frac{1}{2}}u_t$, $t=1,\dots,m$. 
\FOR{$t = m, \ldots, T$}
\STATE Play $y_t$, observe $f_t(y_{t-m+1:t})$.
\STATE Create gradient estimator:
\begin{align*}
\tilde{g}_{t}&=\frac{d}{\delta}f_t(y_{t-m+1:t}) \sum_{j=0}^{m-1}A_{t-j}^{\frac{1}{2}}u_{t-j}\in\mathbb{R}^{d}.
\end{align*}
\STATE Update
$ x_{t+1} = \argmin_{x\in\K} \eta\sum_{s=m}^{t-m+1} \langle \tilde{g}_{s}, x\rangle + R(x)$, $A_{t+1}=\nabla^2 R(x_{t+1})$. \label{line:update-first-order-bqo-m}
\STATE Independently from previous steps, sample $u_{t+1}\sim \sphere^{d-1}$ uniformly at random. 
\STATE Set $y_{t+1}=x_{t+1}+\delta A_{t+1}^{-\frac{1}{2}}u_{t+1}$. 
\ENDFOR
\end{algorithmic}
\end{algorithm}

Before stating the regret guarantee, we note that \cref{alg:bqo-memory-first-order} satisfies two properties: proper learning and delayed dependence, given by the following two remarks. 

\begin{remark} [Correctness]
\label{remark:correctness}
All $y_t$'s played by Algorithm~\ref{alg:bqo-memory-first-order} satisfy that $y_t\in\K$ since $\|y_t-x_t\|_{\nabla^2 R(x_t)}^2=\delta^2 u_t^{\top}A_{t}^{-\frac{1}{2}}A_tA_t^{-\frac{1}{2}}u_t=\delta^2\le 1$. By Eq.~(\ref{eqn:dikin-ellipsoid}) in Proposition~\ref{prop:self-concordant-barrier}, $y_t\in\K$. 
\end{remark}

\begin{remark} [Delayed dependence]
\label{remark:delayed-dependence}
In Algorithm~\ref{alg:bqo-memory-first-order}, $y_t,\tilde{g}_{t}$ are $\F_t$-measurable, and $x_{t},A_{t}$ are $\F_{t-m}$-measurable.
\end{remark}

Similar to \cref{lem:grad-est}, the constructed gradient estimators $\tilde{g}_t$ is a conditionally unbiased estimator of the divergence of $f_t$ evaluated at $(x_{t-m+1},\dots,x_t)$. 

\begin{lemma} [Unbiased gradient estimator]
\label{lem:unbiased-grad-estimator}
$\tilde{g}_{t}$ is a conditionally unbiased estimator of the following statistics: 
\begin{align*}
\E[\tilde{g}_{t}\mid \F_{t-m}]= \nabla\cdot f_t(x_{t-m+1:t}), 
\end{align*}
where $\F_t=\sigma(\{u_s\}_{s\le t})$ is the filtration generated by the algorithm's random sampling up to time $t$, and $\nabla\cdot f_t(x_{t-m+1:t})$ denotes the divergence of $f_t$ evaluated at $x_{t-m+1:t}$. 
\end{lemma}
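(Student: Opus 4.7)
The plan is to mirror the argument used in the proof of Lemma~\ref{lem:grad-est}, specializing it to the setting where the preconditioner is the barrier Hessian $A_t=\nabla^2 R(x_t)$ rather than the cumulative Hessian. The key identity is the following: for a quadratic $q:\RBB^n\to\RBB$, a symmetric positive-definite $M\in\RBB^{n\times n}$, a filtration $\F$, and a random unit vector $v\in\RBB^n$ that is (i) conditionally symmetric with vanishing first and third conditional moments given $\F$, and (ii) satisfies $\E[vv^\top\mid\F]=(r/n) I$, and such that $q,M,x$ are $\F$-measurable, one has
\[
\E\!\left[q(x+Mv)\, M^{-1}v \,\big|\, \F\right] \;=\; \tfrac{r}{n}\,\nabla q(x),
\]
which is exactly the identity established inside the proof of Lemma~\ref{lem:grad-est}.

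First, I would identify the right choice of $q$, $M$, $v$, and $\F$. By Remark~\ref{remark:delayed-dependence}, the iterates $x_{t-m+1},\dots,x_t$ and the preconditioners $A_{t-m+1},\dots,A_t$ are $\F_{t-m}$-measurable, and by Assumption~\ref{assumption:adversary-adaptivity} (quadratic adaptivity), $f_t$ itself is $\F_{t-m}$-measurable. I take $q=f_t$ on $\RBB^{dm}$, $x=x_{t-m+1:t}$, and define the block-diagonal matrix
\[
M_t=\delta\begin{bmatrix}
A_{t-m+1}^{-1/2} & & \\
 & \ddots & \\
 & & A_{t}^{-1/2}
\end{bmatrix}\in\RBB^{dm\times dm},
\]
together with the concatenated vector $v_t=(u_{t-m+1},\dots,u_t)\in\RBB^{dm}$. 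Since the $u_s$ are drawn independently and uniformly from $\sphere^{d-1}$ after each update, conditional on $\F_{t-m}$ the vector $v_t$ is symmetric with vanishing first and third conditional moments, and $\E[v_t v_t^\top\mid\F_{t-m}]=\tfrac{1}{d}I_{dm}$, so the identity applies with $r/n=1/d$.

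Next, I would observe that $x+M_t v_t$ equals precisely the tuple $y_{t-m+1:t}$ (using $y_s=x_s+\delta A_s^{-1/2}u_s$), and that
\[
\sum_{i=1}^{m}[M_t^{-1}v_t]_i \;=\; \frac{1}{\delta}\sum_{j=0}^{m-1}A_{t-j}^{1/2}u_{t-j}.
\]
Therefore $\tilde{g}_t = d\cdot f_t(x+M_t v_t)\cdot\sum_{i=1}^m[M_t^{-1}v_t]_i$. Applying the quadratic identity blockwise,
\[
\E\!\left[\tilde{g}_t\mid\F_{t-m}\right]
=d\sum_{i=1}^{m}\E\!\left[f_t(x+M_tv_t)\,[M_t^{-1}v_t]_i\mid\F_{t-m}\right]
=\sum_{i=1}^{m}[\nabla f_t(x_{t-m+1:t})]_i,
\]
which is exactly the divergence $\nabla\cdot f_t(x_{t-m+1:t})$.

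The argument is almost identical to Lemma~\ref{lem:grad-est}; the only point that requires care is verifying $\F_{t-m}$-measurability of $M_t$ and $x$. This follows because the update rule in line~\ref{line:update-first-order-bqo-m} of Algorithm~\ref{alg:bqo-memory-first-order} uses only the gradient estimators $\tilde{g}_{1},\dots,\tilde{g}_{t-m+1}$, which are $\F_{t-m+1}$-measurable, so $x_{t+1}$ is $\F_{t-m+1}$-measurable and in particular $x_{t-m+1:t}$ and $A_{t-m+1:t}=\nabla^2 R(x_{t-m+1:t})$ are $\F_{t-m}$-measurable as claimed. No further obstacle arises; the quadratic identity is a direct computation that unpacks $f_t(x+M_tv_t)$ into constant, linear, and quadratic pieces in $v_t$, and uses the moment conditions to kill the pieces that are not linear in $M_t^{-1}v_t$.
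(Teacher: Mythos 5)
Your proof is correct and is exactly what the paper intends: the paper's proof of this lemma consists of the single line ``Similar as in the proof of Lemma~\ref{lem:grad-est},'' and you have spelled out that argument faithfully, choosing $M_t=\delta\,\mathrm{diag}(A_{t-m+1}^{-1/2},\dots,A_t^{-1/2})$ and $v_t=(u_{t-m+1},\dots,u_t)$ and verifying the $\F_{t-m}$-measurability of $f_t$, $x_{t-m+1:t}$, and $A_{t-m+1:t}$. No gaps.
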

\begin{proof}
Similar as in the proof of Lemma~\ref{lem:grad-est}.
\end{proof}

Finally, the regret guarantee of \cref{alg:bqo-memory-first-order} is given by the following proposition.

\begin{proposition}
\label{prop:bco-m-upper-bound}
For $d, m, T\in\mathbb{N}$, convex compact set $\K\subset\mathbb{R}^d$ and $\{f_t\}_{t=m}^T$ satisfying Assumption~\ref{assumption:bqo-first-order}, $\alpha$-strongly convex $\nu$-self-concordant barrier function $R$ over $\K$, suppose \cref{alg:bqo-memory-first-order} is run with input ($\K, \eta, \delta,\alpha, R, m, T$).
The regret can be decomposed as
\begin{align*}
\E[\regret_T(x)]&=\underbrace{\sum_{t=m}^T \E[f_t(y_{t-m+1:t})-f_t(x_{t-m+1:t})]}_{(1: \text{exploration  loss})} + \underbrace{\sum_{t=m}^T \E[f_t(x_{t-m+1:t})-\bar{f}_{t}(x_t)]}_{(2: \text{movement loss})}\\
& \ \ \ \ \ +\underbrace{\sum_{t=m}^T\E[\bar{f}_{t}(x_t)-\bar{f}_{t}(x)]}_{(3: \text{underlying regret})},
\end{align*}
and each of the above terms can be bounded as following
\begin{align*}
(1)\le C_1\delta^2 T, \ \ \ (2)\le C_2\frac{\eta^2T}{\delta^2}, \ \ \ (3)\le  C_3\frac{\eta T}{\delta^2}+C_4\frac{1}{\eta}+C_5,
\end{align*}
where $C_1, \dots, C_5$ are given by
\begin{align*}
C_1=\frac{\beta m}{2\alpha}, \ C_2=\frac{8\beta d^2B^2m^5}{\alpha}, \ C_3=16d^2B^2m^3+\frac{4d\beta DBm^4}{\sqrt{\alpha}}, \ C_4=\nu\log T, \ C_5=mB.
\end{align*}
In particular, by setting $\delta=T^{-\frac{1}{6}}$, $\eta=T^{-\frac{2}{3}}$, and $m=\mathrm{poly}(\log T)$, we have $\forall x\in\K$, $\E[\regret_T(x)]\le \tilde{O}(T^{\frac{2}{3}})$. 
\end{proposition}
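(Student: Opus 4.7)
\noindent\textbf{Proof proposal for Proposition~\ref{prop:bco-m-upper-bound}.} The plan is to bound the three pieces of the regret decomposition separately, borrowing the overall structure of the proof of Theorem~\ref{thm:bqo-m-regret} but with two crucial modifications: (a) the preconditioner is now $A_t = \nabla^2 R(x_t)$, a Dikin-type metric that automatically keeps $y_t$ in $\K$ by \eqref{eqn:dikin-ellipsoid}, and (b) the outer update is a \emph{proper} FTRL step against a self-concordant regularizer rather than a Newton step with a cumulative Hessian. Throughout, I will use the delayed-dependence observation of Remark~\ref{remark:delayed-dependence} and the unbiasedness $\E[\tilde g_t \mid \F_{t-m}] = \nabla \bar{f}_t(x_{t-m+1:t}) \cdot \text{(divergence)}$ from Lemma~\ref{lem:unbiased-grad-estimator}.

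\noindent\textbf{Step 1: Exploration loss.} Since $f_t$ is quadratic, the second-order Taylor expansion of $f_t(y_{t-m+1:t}) - f_t(x_{t-m+1:t})$ around $x_{t-m+1:t}$ is exact. Conditioning on $\F_{t-m}$, each $u_s$ is mean zero so the first-order term vanishes; the remaining term is a sum of diagonal blocks $\delta^2 \, u_s^\top A_s^{-1/2}\,[\nabla^2 f_t]_{ii}\, A_s^{-1/2}u_s$. Using the smoothness bound $[\nabla^2 f_t]_{ii}\preceq \beta I$ and the $\alpha$-strong convexity of $R$ (so $A_s\succeq \alpha I$, hence $\|A_s^{-1/2}\|_\mathrm{op} \le 1/\sqrt{\alpha}$), each term is at most $\delta^2 \beta/\alpha$ in expectation, giving $(1) \le \frac{\beta m}{2\alpha}\delta^2 T = C_1 \delta^2 T$.

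\noindent\textbf{Step 2: Stability and movement loss.} The core technical estimate is iterate stability: I claim $\|x_{t+1}-x_t\|_{A_t} \le \eta \|\tilde g_{t-m+1}\|_{A_t^{-1}}$, which is the standard FTRL one-step bound when the regularizer is self-concordant (Equation \eqref{eqn:scb-hessian-inequality} is what makes the local quadratic model valid). Plugging in the explicit form of $\tilde g_t$ and using $\|u_s\|_2=1$, $|f_t|\le B$, and $A_{s}\preceq c \cdot A_t$ for nearby $s$ by \eqref{eqn:scb-hessian-inequality}, I get $\|\tilde g_t\|_{A_t^{-1}} \le \frac{dBm}{\delta}$; converting to Euclidean norm costs another $1/\sqrt{\alpha}$. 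The movement loss follows from the Lipschitz assumption and a telescoping argument essentially identical to Lemma~\ref{lem:movement-cost}: $\sum_t |f_t(x_{t-m+1:t}) - \bar f_t(x_t)| \le L\sum_t\sqrt{m}\sum_{s} \|x_s-x_{s-1}\|_2 \lesssim \eta^2 d^2 B^2 m^5 T/(\alpha \delta^2)$, matching the claimed $C_2$.

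\noindent\textbf{Step 3: Underlying regret via FTRL.} I will apply the standard FTRL regret bound for self-concordant regularizers (see, e.g., \cite{hazan2022oco}): for any $x\in\K$ and any sequence of vectors $\tilde g_t$, $\sum_t \langle \tilde g_t, x_t - x\rangle \le \frac{R(x)-R(x_1)}{\eta} + \eta \sum_t \|\tilde g_t\|_{A_t^{-1}}^2$, with the $R(x)$ gap controlled by $\nu\log T$ after shrinking $x$ slightly toward the analytic center. Taking expectations and using Lemma~\ref{lem:unbiased-grad-estimator}, the left side equals $\E\sum_t \langle \nabla \bar f_t(x_{t-m+1:t}), x_{t-m+1}-x\rangle$ up to a delay shift; passing from this to $\sum_t \bar f_t(x_t) - \bar f_t(x)$ introduces two biases, namely (i) the delay $x_{t-m+1}\to x_t$, bounded by $L$ times stability, i.e.\ $\frac{\eta dBm^{3/2}}{\sqrt\alpha}$ per step, and (ii) the mismatch between $\nabla \bar f_t$ and the divergence at shifted arguments, bounded by smoothness $\beta$ times stability times $D$. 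Convexity then gives $\sum_t\langle \nabla \bar f_t(x_t), x_t-x\rangle \ge \sum_t \bar f_t(x_t)-\bar f_t(x)$. Combining with $\|\tilde g_t\|^2_{A_t^{-1}} \le d^2 B^2 m^2/\delta^2$ from Step 2 and absorbing the first $m$ rounds into $C_5 = mB$ (these contribute at most $mB$ to the regret) yields $(3)\le C_3\eta T/\delta^2 + C_4/\eta + C_5$.

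\noindent\textbf{Step 4: Parameter tuning.} With the three bounds in hand, optimizing $\delta^2 T + \eta T/\delta^2 + 1/\eta$ with $\delta = T^{-1/6}$, $\eta = T^{-2/3}$ balances $C_1 \delta^2 T \asymp C_3\eta T/\delta^2 \asymp C_4/\eta \asymp T^{2/3}$ while the movement term $C_2\eta^2 T/\delta^2 \asymp T^{-1/3}$ is lower order, yielding the $\tilde O(T^{2/3})$ bound. The main obstacle I anticipate is Step 3: carefully handling the two biases arising because $\tilde g_t$ is unbiased for a divergence at the shifted sequence $x_{t-m+1:t}$ rather than $\nabla \bar f_t(x_t)$, and verifying that the self-concordance inequality \eqref{eqn:scb-hessian-inequality} remains valid across neighboring iterates (which requires $\|x_{t+1}-x_t\|_{A_t} \le 1/2$ say, an a priori condition that must be discharged by the same stability bound we are proving).
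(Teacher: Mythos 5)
Your proposal follows the same three-term decomposition and the same Dikin-ellipsoid / self-concordance machinery as the paper, and the overall plan is sound. Two steps, however, need to be tightened before they actually close.

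First, the FTRL step. You invoke the off-the-shelf bound
\[
\sum_t \langle \tilde g_t, x_t - x\rangle \ \le\ \frac{R(x)-R(x_1)}{\eta} + \eta\sum_t \|\tilde g_t\|_{A_t^{-1}}^2,
\]
but this bound applies to the FTRL iterate that has already incorporated $\tilde g_1,\dots,\tilde g_{t-1}$. In \cref{alg:bqo-memory-first-order} the update is
$x_{t+1}=\argmin_{x\in\K}\eta\sum_{s=m}^{t-m+1}\langle\tilde g_s,x\rangle+R(x)$, so $x_t$ has only seen $\tilde g_s$ for $s\le t-m$, i.e., there is an $m$-step delay between the gradient and the iterate it influences. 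The paper handles this via a delay-adapted FTL-BTL inequality (Lemma~\ref{lem:ftl-btl-m}), which controls $\sum_s\langle\tilde g_s,x_{s+m}-x\rangle$, and the mismatch $\sum_s\langle\tilde g_s, x_s - x_{s+m}\rangle$ is then paid for separately by stability; this is where the extra factor of $m$ in $C_3$ (the $16d^2B^2m^3$ piece rather than $m^2$) comes from. Applied verbatim, your off-the-shelf bound controls the wrong sum and would silently drop this term. You already flag the two biases (delay shift and divergence-vs-$\nabla\bar f_t$ mismatch), which is good, but the FTRL telescoping itself also needs the delay-aware variant — this is the genuine technical content of the underlying-regret step.

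Second, your movement-loss step is internally inconsistent. You invoke Lipschitzness and write a bound of the form $L\sum_t\sqrt m\sum_s\|x_s-x_{s-1}\|_2$; with per-step stability $\|x_s-x_{s-1}\|_2\lesssim \eta dBm/(\delta\sqrt\alpha)$ this gives $O(\eta T/\delta)$, which is \emph{linear} in the stability radius, not the quadratic $O(\eta^2 T/\delta^2)$ you then assert "matches the claimed $C_2$." (The paper obtains a quadratic form by applying $\beta$-smoothness and dropping the first-order Taylor term, a step that itself deserves scrutiny, but the Lipschitz bound you wrote is the clean one.) With $\eta=T^{-2/3}$, $\delta=T^{-1/6}$ the resulting $\eta T/\delta = T^{1/2}$ is still lower order, so the final $\tilde O(T^{2/3})$ rate is unaffected — but you should state the movement-loss bound consistently with the argument you actually used, rather than importing the proposition's $C_2\eta^2T/\delta^2$ form.

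Steps 1 and 4, and your recognition that the stability estimate must be established by induction against the a priori condition $\|x_{t+1}-x_t\|_{A_t}\le 1/2$, all match the paper.
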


\begin{proof}
Proposition~\ref{prop:bco-m-upper-bound} can be established similarly as Theorem~\ref{thm:bqo-m-regret}, where we bound each of the decomposed loss as follows.
\paragraph{Perturbation loss.} The perturbation loss can be bounded by
\begin{align*}
\sum_{t=m}^T \E[f_t(y_{t-m+1:t})-f_t(x_{t-m+1:t})]&\le \sum_{t=m}^{T}\delta\E[\nabla f_t(x_{t-m+1:t})^{\top} A_{t-m+1:t}^{-\frac{1}{2}}u_{t-m+1:t}] \\
& \ \ \ \ \ + \frac{\delta^2}{2} \E[u_{t-m+1:t}^{\top}A_{t-m+1:t}^{-\frac{1}{2}}(\beta I)A_{t-m+1:t}^{-\frac{1}{2}}u_{t-m+1:t}]\\
&=_{(1)}\frac{\delta^2\beta}{2}\sum_{t=m}^T\sum_{s=t-m+1}^t \E[u_s^{\top}(\nabla^2 R(x_s))^{-1}u_s]\\
&\le_{(2)} \frac{\delta^2\beta m}{2\alpha} T\\
&=C_1\delta^2 T,
\end{align*}
where (1) follows from the $m$-step delayed dependence and independence of the perturbations,
and (2) follows from the $\alpha$-strong convexity assumption on the self-concordant barrier $R(\cdot)$.

\paragraph{Movement loss.} To bound the movement loss, it is necessary and sufficient to bound the $\ell_2$-distance between neighboring iterates, i.e. $\|x_{t+1}-x_t\|_2$. 

Denote $\Phi_t(x)=\eta\sum_{s=m}^{t-m+1}\langle \tilde{g}_s,x\rangle +R(x)$. By specification of Algorithm~\ref{alg:bqo-memory-first-order},  we have that $x_{t+1}=\argmin_{x\in\K}\Phi_t(x)$. By Taylor's theorem, $\exists \alpha\in[0,1]$ such that $z=\alpha x_t+(1-\alpha)x_{t+1}$ satisfies
\begin{align*}
\frac{1}{2}\|x_t-x_{t+1}\|_{\nabla^2 R(z)}^2&=_{(3)}\Phi_t(x_t)-\Phi_t(x_{t+1})-\nabla \Phi_t(x_{t+1})^{\top}(x_t-x_{t+1})\\
&\le_{(4)} (\Phi_{t-1}(x_t)-\Phi_{t-1}(x_{t+1}))+\eta\langle \tilde{g}_{t-m+1}, x_{t}-x_{t+1} \rangle\\
&\le_{(5)} \eta \|\tilde{g}_{t-m+1}\|_{\nabla^2 R(x_t)}^{*}\|x_t-x_{t+1}\|_{\nabla^2 R(x_t)},
\end{align*}
where (3) follows from $\nabla^2\Phi_t(x)=\nabla^2 R(x)$, $\forall x\in\K$; (4) follows from $\nabla \Phi_t(x_{t+1})^{\top}(x_t-x_{t+1})\ge 0$ by optimality condition; (5) follows from $x_t=\argmin_{x\in\K}\Phi_{t-1}(x)$ and generalized Cauchy-Schwarz. It is clear from this expression that it suffices to bound $\|\tilde{g}_{t-m+1}\|_{\nabla^2 R(x_t)}^*$ since 
\begin{align*}
\|x_t-x_{t+1}\|_2\le \frac{2\eta}{\sqrt{\alpha}}\|\tilde{g}_{t-m+1}\|_{\nabla^2 R(x_t)}^*.
\end{align*}
\begin{lemma}
\label{lem:gradient-est-local-norm-bound}
Provided that $\eta\le \frac{\delta}{4dBm}\left(1-\exp\left(-\frac{\log 2}{m}\right)\right)$,  $\forall t\ge m$, it holds that 
\begin{align*}
\|\tilde{g}_{t-m+1}\|_{\nabla^2 R(x_t)}^*\le \frac{2dBm}{\delta}.
\end{align*}
\end{lemma}
\begin{proof}
The base case follows by construction since $\tilde{g}_1=\dots=\tilde{g}_{m-1}=0$, the claimed inequality holds for any $m\le t\le 2m-2$. 

\paragraph{Induction hypothesis.} Suppose for some $t\ge 2m-1$, $\|\tilde{g}_{s-m+1}\|_{\nabla^2 R(x_s)}^*\le \frac{2dBm}{\delta}$ holds $\forall s\le t-1$.  

\paragraph{Induction.} Consider $\|\tilde{g}_{t-m+1}\|_{\nabla^2 R(x_t)}^*$:
\begin{align}
\left(\|\tilde{g}_{t-m+1}\|_{\nabla^2 R(x_t)}^*\right)^2&\le \frac{d^2B^2}{\delta^2}\sum_{j,k=0}^{m-1}u_{t-j}^{\top}A_{t-j}^{\frac{1}{2}}A_t^{-1}A_{t-k}^{\frac{1}{2}}u_{t-k} \nonumber\\
&\le \frac{d^2B^2m^2}{\delta^2}\max_{0\le j\le m-1}\left\|A_{t-j}^{\frac{1}{2}}A_t^{-1}A_{t-j}^{\frac{1}{2}}\right\|_2. \label{eqn:induction-grad-bound}
\end{align}
By induction hypothesis and choice of $\eta$, $\forall s\le t$, we have
\begin{align*}
\|x_s-x_{s-1}\|_{\nabla^2R(x_{s-1})}\le 2\eta \|\tilde{g}_{s-m}\|_{\nabla^2 R(x_{s-1})}^*\le \frac{4\eta dBm}{\delta}\le 1. 
\end{align*}
Then, by \cref{eqn:scb-hessian-inequality} in Proposition~\ref{prop:self-concordant-barrier}, we have that $\forall s\le t$,
\begin{align*}
A_s\succeq \left(1-\frac{4\eta dBm}{\delta}\right)^2 A_{s-1} \ \Rightarrow \ A_t\succeq \left(1-\frac{4\eta dBm}{\delta}\right)^{2m} A_{t-i}, \ \forall i\in\{0,\dots,m-1\}. 
\end{align*}
By assumption on $\eta$, we have that 
\begin{align*}
A_t\succeq \left(\exp\left(-\frac{\log 2}{m}\right)\right)^{2m}A_{t-i}=\frac{1}{4}A_{t-i}, \ \ \ \forall i\in\{0,\dots,m-1\}.
\end{align*}
Thus, Eq.~(\ref{eqn:induction-grad-bound}) implies that 
\begin{align*}
\|\tilde{g}_{t-m+1}\|_{\nabla^2 R(x_t)}^*\le \frac{2dBm}{\delta}.
\end{align*}
\end{proof}
Lemma~\ref{lem:gradient-est-local-norm-bound} directly implies that $\|x_t-x_{t+1}\|_2\le \frac{4\eta dBm}{\delta\sqrt{\alpha}}$, $\forall t$, which together with the assumption that $f_t$ is $\beta$-smooth, gives that
\begin{align*}
\sum_{t=m}^T \E[f_t(x_{t-m+1:t})-\bar{f}_{t}(x_t)]&\le\frac{\beta}{2}\sum_{t=m}^T\sum_{s=t-m+1}^t\E\|x_s-x_t\|_2^2]\\
&\le\frac{\beta m}{2}\sum_{t=m}^T\sum_{s=t-m+1}^t\sum_{r=s+1}^t \E[\|x_r-x_{r-1}\|_2^2]\\
&\le \frac{\beta m^3 T}{2}\cdot\frac{16\eta^2d^2B^2m^2}{\delta^2\alpha}\\
&=\frac{8\eta^2\beta d^2B^2m^5}{\delta^2\alpha}\\
&=C_2\frac{\eta^2}{\delta^2}T. 
\end{align*}

\paragraph{Underlying regret.} To see the bound on the underlying regret, note that by Lemma~\ref{lem:unbiased-grad-estimator} and analysis in Section~\ref{sec:ons-regret-known}, we have that
\begin{align*}
\left\|\E[\tilde{g}_t\mid \F_{t-m}]-\nabla \bar{f}_{t}(x_{t})\right\|_2\le m\beta \sum_{s=t-m+1}^t\sum_{r=s+1}^{t} \|x_r-x_{r-1}\|_2\le \frac{4\eta d\beta B m^4}{\delta\sqrt{\alpha}}.
\end{align*}
By the above bias bound,
\begin{align*}
& \quad \sum_{t=m}^T\E[\bar{f}_{t}(x_{t})-\bar{f}_t(x)]\\
&\le\sum_{t=m}^{T-m+1}\left(\E\left[\tilde{g}_{t}^{\top}(x_{t}-x)\right]+\E[(\nabla \bar{f}_{t}(x_{t})-\E[\tilde{g}_t\mid\F_{t-m}])^{\top}(x_{t}-x)]\right)+mB\\
&\le \underbrace{\sum_{t=m}^{T-m+1}\E\left[\tilde{g}_{t}^{\top}(x_{t+m}-x)\right]}_{(a)}+\underbrace{\sum_{t=m}^{T-m+1}\E\left[\tilde{g}_{t}^{\top}(x_{t}-x_{t+m})\right]}_{(b)}+\frac{4\eta d\beta DB m^4T}{\delta\sqrt{\alpha}}+mB.
\end{align*}
We will bound (a) and (b). (a) is given by a variant of the FTL-BTL lemma (Lemma 5.2.2, \cite{hazan2022oco}). 
\begin{lemma}[FTL-BTL, with memory]
\label{lem:ftl-btl-m}
$\forall t\ge 2m-2$, $t\in\mathbb{N}$, $\forall x\in\K$, the $x_t,\tilde{g}_t$ given by \cref{alg:bqo-memory-first-order} with step size $\eta>0$ and self-concordant barrier $R(\cdot)$ satisfies the following inequality:
\begin{align}
\sum_{s=m}^{t-m+1}\eta\langle \tilde{g}_{s},x\rangle + R(x)\ge \sum_{s=m}^{t-m+1} \eta\langle \tilde{g}_s, x_{s+m}\rangle + R(x_{m}).\label{eqn:ftl-btl}
\end{align}
\end{lemma}
\begin{proof}
We prove \cref{eqn:ftl-btl} by induction. For $t=2m-2$, the right hand side equals to $R(x_m)$, and $R(x_m)\le R(x)$, $\forall x\in\K$ since $x_m=\argmin_{x\in\K}R(x)$ (Line~\ref{line:initialize-first-order-bqo-m}, \cref{alg:bqo-memory-first-order}). Suppose \cref{eqn:ftl-btl} holds for some $t\ge 2m-2$. Consider $t+1$. We have $\forall x\in\K$, by Line~\ref{line:update-first-order-bqo-m},
\begin{align*}
\sum_{s=m}^{t-m+2} \eta\langle \tilde{g}_s, x_{t+2}\rangle + R(x_{t+2})&\le \sum_{s=m}^{t-m+2}\eta\langle \tilde{g}_{s},x\rangle + R(x).
\end{align*}
By induction hypothesis, we have $\forall x\in\K$,
\begin{align*}
\sum_{s=m}^{t-m+2}\eta\langle \tilde{g}_{s},x\rangle + R(x)&\ge \sum_{s=m}^{t-m+1} \eta\langle \tilde{g}_s, x_{s+m}\rangle + R(x_m)+\eta\langle\tilde{g}_{t-m+2}, x_{t+2}\rangle\\
&=\sum_{s=m}^{t-m+2}\eta \langle \tilde{g}_s, x_{s+m}\rangle + R(x_m).
\end{align*}
\end{proof}

Lemma~\ref{lem:ftl-btl-m} implies a bound on (a): we can without loss of generality assume that the comparator $x$ satisfies $\pi_{2m-1}(x)> 1-\frac{1}{T}$ at a cost of $O(1)$, and then by Proposition~\ref{prop:self-concordant-barrier},
\begin{align*}
(a)&\le \frac{1}{\eta}(R(x)-R(x_m))\le \frac{\nu\log T}{\eta}.
\end{align*}
By bounds established for the local norm of $(x_{t+1}-x_t)$ and $\tilde{g}_t$, we have
\begin{align*}
(b)&\le \sum_{t=m}^{T-m+1} \E\left[\|\tilde{g}_t\|_{\nabla^2 R(x_t)}^*\left(\sum_{i=0}^{m-1}\|x_{t+i}-x_{t+i+1}\|_{\nabla^2 R(x_t)}\right)\right]\le \frac{16\eta d^2B^2m^3T}{\delta^2},
\end{align*}
since $\nabla^2R(x_t)\preceq 4 \nabla^2R(x_{t+i})$ for all $0\le i\le m-1$. Combining the bounds, we have that
\begin{align*}
\sum_{t=m}^T\E[\bar{f}_{t}(x_{t})-\bar{f}_t(x)]&\le \frac{\nu\log T}{\eta}+\frac{16\eta d^2B^2m^3T}{\delta^2}+\frac{4\eta d\beta DB m^4T}{\delta\sqrt{\alpha}}+mB\\
&= C_3\frac{\eta T}{\delta^2} + C_4\frac{1}{\eta} + C_5.
\end{align*}
\end{proof}

\end{document}